\DeclareMathOperator*{\argmin}{arg\,min}
\newcommand\dashedph[1][H]{\setlength{\fboxsep}{0pt}\setlength{\dashlength}{2.2pt}\setlength{\dashdash}{1.1pt} \dbox{\phantom{#1}}}
\newtheorem{assumption}{Assumption}
\newtheorem{proposition*}{Proposition}
\colorlet{myred}{white}
\colorlet{myblue}{blue!20}
\colorlet{mynoise}{yellow!70}
\colorlet{myinv}{green!25}
\colorlet{myrec}{cyan!20}
\colorlet{mysampling}{olive!30}
\colorlet{myblack}{black!10}
\colorlet{myedit}{magenta!30}
\let\emptyset\varnothing
\colorlet{mygold}{yellow!30}
\colorlet{mysilver}{black!10}
\colorlet{mybronze}{white}
\newcommand{\hlc}[2][yellow]{{%
    \colorlet{foo}{#1}%
    \sethlcolor{foo}\hl{#2}}%
}
\newcommand{\mathcolorbox}[2]{\colorbox{#1}{$\displaystyle #2$}}
\renewcommand{\boxed}[2][mymagenta]{
    \colorlet{foo}{#1}%
    \textcolor{foo}{%
\tikz[baseline={([yshift=-1ex]current bounding box.center)}] \node [line width = 0.5mm, rectangle, minimum width=1ex,minimum height=0.75cm,rounded corners,draw] {\normalcolor\m@th$\displaystyle#2$};}}
\newcommand{\nbf}[1]{{\noindent \textbf{#1}}}
\newcommand{\printfnsymbol}[1]{%
  \textsuperscript{\@fnsymbol{#1}}%
}
\newcommand*\samethanks[1][\value{footnote}]{\footnotemark[#1]}
\begin{document}

\title{Eta Inversion: Designing an Optimal Eta Function for Diffusion-based Real Image Editing} 

\titlerunning{Eta Inversion}

\author{Wonjun Kang\thanks{Authors contribute equally.}\inst{1} \and%
Kevin Galim\samethanks\inst{1} \and%
Hyung Il Koo\thanks{Corresponding author.}\inst{1,2}}%

\authorrunning{W. Kang, K. Galim et al.}

\institute{FuriosaAI, Seoul 06036, South Korea\\
\email{\{kangwj1995,kevin.galim,hikoo\}@furiosa.ai}\\
\and
Ajou University, Suwon 16499, South Korea\\
Code: \url{https://github.com/furiosa-ai/eta-inversion}
}

\maketitle

    \begin{figure*}
    \centering
    \bigskip %
        \resizebox{1\linewidth}{!}{%
\begin{tikzpicture}[spy using outlines={rectangle,yellow,magnification=2.7,size=1.5cm, connect spies}]
\tikzstyle{every node}=[font=\footnotesize]
\node at (6,6.1) {\textit{“the statue of liberty holding a \textbf{torch}”} $\rightarrow$ \textit{“the statue of liberty holding a \textbf{flower}”}};
\node at (0,3.7) {\includegraphics[width=2cm]{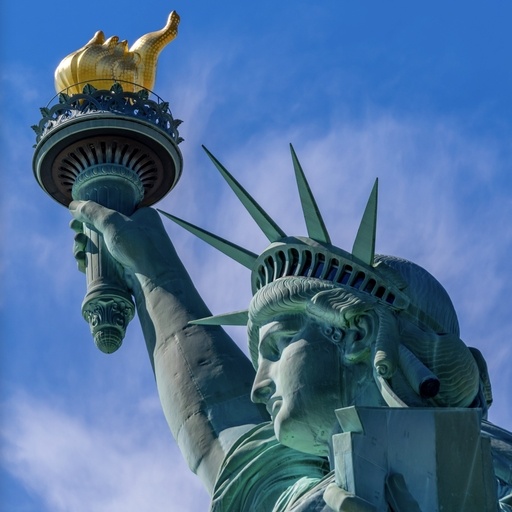}};
\node[fill=black!5, minimum width = 2cm,height = 0.18cm] at (0,2.4) {Real Image};
\spy on (-0.55,4.41) in node [left] at (1.75,4.95);
\node at (3,3.7) {\includegraphics[width=2cm]{figs/teaser/source}};
\node[fill=cyan!10, minimum width = 2cm,height = 0.18cm] at (3,2.4) {Reconstructed};
\spy on (2.45,4.41) in node [left] at (4.75,4.95);
\node at (6,3.7) {\includegraphics[width=2cm]{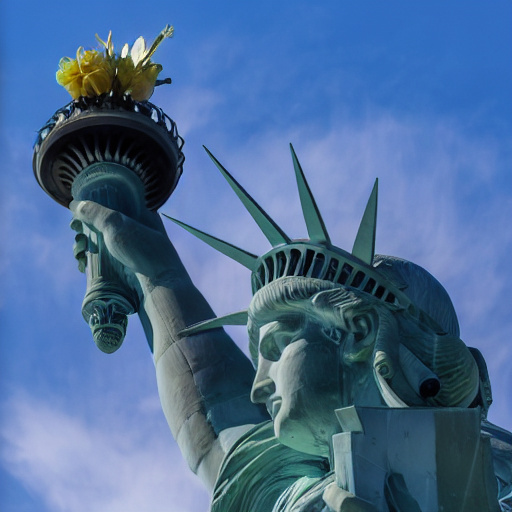}};
\node at (6,2.4) {};
\spy on (5.45,4.41) in node [left] at (7.75,4.95);
\node at (9,3.7) {\includegraphics[width=2cm]{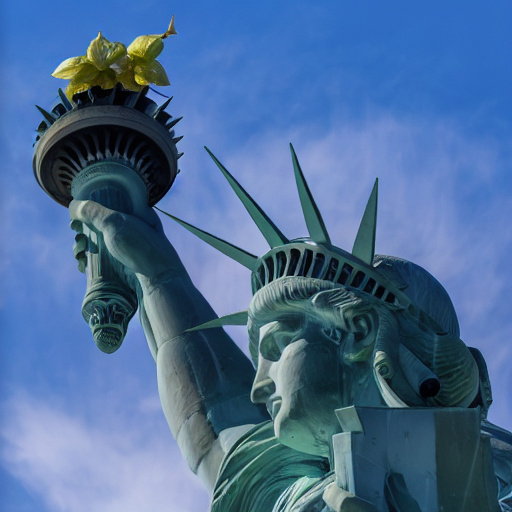}};
\node[fill=magenta!10, minimum width = 8cm,height = 0.18cm] at (9,2.4) {Real Image Editing w/ \textbf{Eta Inversion}};
\node at (9,2.4) {};
\spy on (8.45,4.41) in node [left] at (10.75,4.95);
\node at (12,3.7) {\includegraphics[width=2cm]{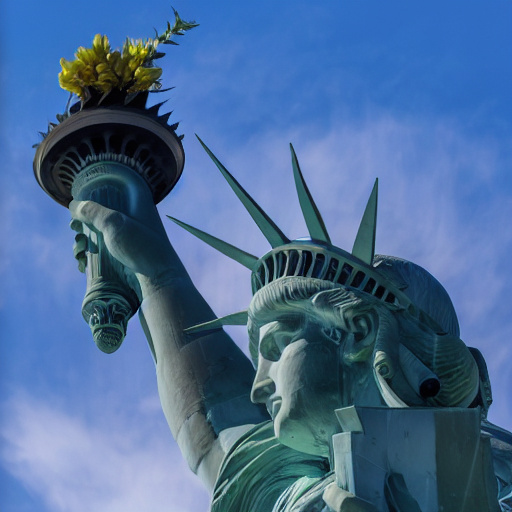}};
\node at (12,2.4) {};
\spy on (11.45,4.41) in node [left] at (13.75,4.95);
\node at (0,0) {\includegraphics[width=2cm]{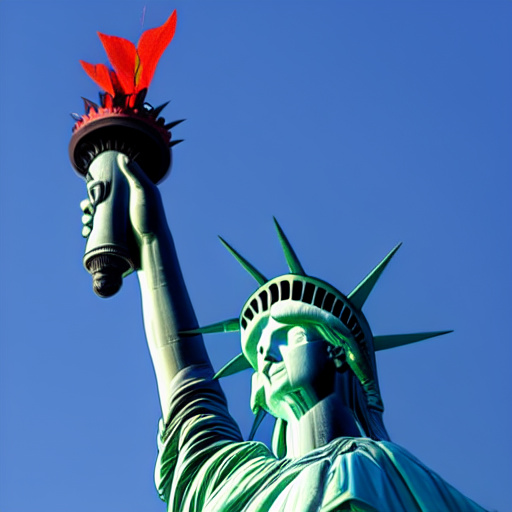}};
\node at (0,-1.3) {DDIM Inv.};
\spy on (-0.55,0.71) in node [left] at (1.75,1.25);
\node at (3,0) {\includegraphics[width=2cm]{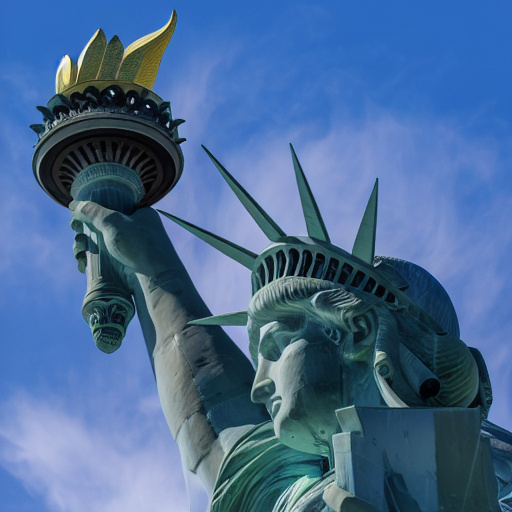}};
\node at (3,-1.3) {Null-text Inv.};
\spy on (2.45,0.71) in node [left] at (4.75,1.25);
\node at (6,0) {\includegraphics[width=2cm]{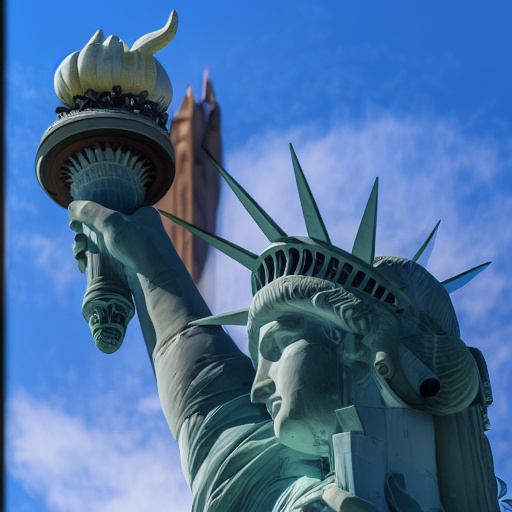}};
\node at (6,-1.3) {Negative Prompt Inv.};
\spy on (5.45,0.71) in node [left] at (7.75,1.25);
\node at (9,0) {\includegraphics[width=2cm]{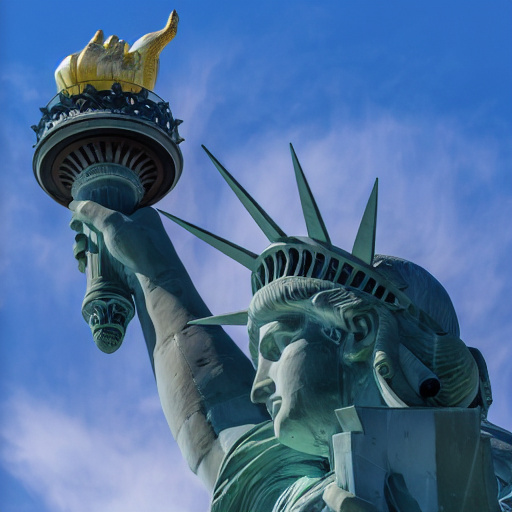}};
\node at (9,-1.3) {EDICT};
\spy on (8.45,0.71) in node [left] at (10.75,1.25);
\node at (12,0) {\includegraphics[width=2cm]{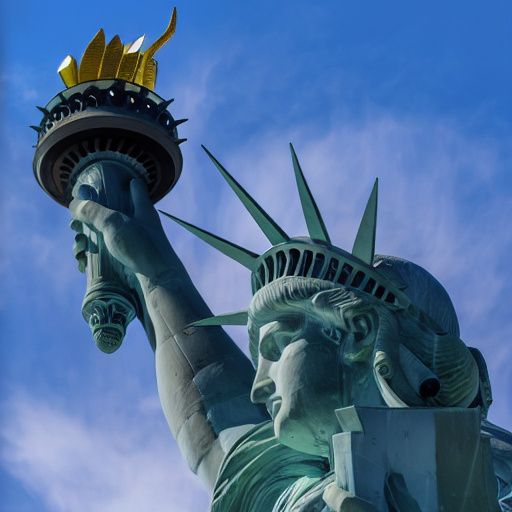}};
\node at (12,-1.3) {Direct Inv.};
\spy on (11.45,0.71) in node [left] at (13.75,1.25);

\end{tikzpicture}
            }%
\caption{Eta Inversion for real image editing. We design an optimal time- and region-dependent $\eta$ function for DDIM sampling\cite{song2020denoising} for superior results. In the example above, existing methods fail to change the torch into a flower or do not preserve the structure, while Eta Inversion creates various plausible results. Tested with PtP \cite{hertz2022prompt}.}
    \label{fig:teaser}
    \end{figure*}

\begin{abstract}
Diffusion models have achieved remarkable success in the domain of text-guided image generation and, more recently, in text-guided image editing. A commonly adopted strategy for editing real images involves inverting the diffusion process to obtain a noisy representation of the original image, which is then denoised to achieve the desired edits. However, current methods for diffusion inversion often struggle to produce edits that are both faithful to the specified text prompt and closely resemble the source image. To overcome these limitations, we introduce a novel and adaptable diffusion inversion technique for real image editing, which is grounded in a theoretical analysis of the role of $\eta$ in the DDIM sampling equation for enhanced editability. By designing a universal diffusion inversion method with a time- and region-dependent $\eta$ function, we enable flexible control over the editing extent. Through a comprehensive series of quantitative and qualitative assessments, involving a comparison with a broad array of recent methods, we demonstrate the superiority of our approach. Our method not only sets a new benchmark in the field but also significantly outperforms existing strategies.
  \keywords{Diffusion Models \and Diffusion Inversion \and Real Image Editing}
\end{abstract}

\section{Introduction}
Text-guided image synthesis \cite{nichol2021glide,saharia2022photorealistic,ramesh2021zero,ramesh2022hierarchical,chang2022maskgit,chang2023muse,kang2023scaling} is one of the essential tasks in computer vision due to its enormous potential for design and art industries. Recent breakthroughs in diffusion models \cite{ho2020denoising, song2020denoising, dhariwal2021diffusion,rombach2022high,podell2023sdxl} drastically increased text-to-image generation performance. Due to the success of diffusion-based image generation, text-guided image editing with diffusion models is also gaining interest in the research community \cite{hertz2022prompt, cao2023masactrl, tumanyan2023plug, parmar2023zero, couairon2022diffedit, meng2021sdedit}. However, editing a real image is challenging and existing methods still struggle to produce consistent high-quality results, yet insufficient to the industry's high demand and interest. 

Given a source image, a source prompt describing that image, and a target prompt describing the desired output image, it is possible to 
invert the diffusion process for the source image and edit the inverse latent according to the target prompt. Similar to GAN inversion~\cite{xia2022gan,richardson2021encoding}, diffusion inversion seeks to identify the latent noise corresponding to a particular image. Unlike GANs~\cite{goodfellow2014generative}, which require a single generation step, diffusion models require many iterative steps, making inversion more challenging.

Despite recent advancements in diffusion inversion~\cite{song2020denoising, mokady2023null, wallace2023edict} and editing methods~\cite{cao2023masactrl, hertz2022prompt, tumanyan2023plug}, proper quantitative evaluation is lacking, particularly studies on all combinations of these techniques. We address this gap by reformulating and integrating existing strategies within a single framework, categorizing existing methods into two distinct groups: perfect reconstruction methods and imperfect reconstruction methods. Using this framework, we conduct a thorough evaluation of all methods under consistent and fair conditions, employing a variety of metrics.

Unlike previous methods that use a fixed $\eta$ value, such as 0 or 1, in the DDIM \cite{song2020denoising} sampling equation, our research explores whether a dynamic $\eta$ function is superior. Consequently, we analyze the role of $\eta$ in diffusion inversion and propose Eta Inversion, a perfect reconstruction method. Eta Inversion utilizes a time- and region-dependent $\eta$ to introduce optimal noise during the backward process, achieving better editing diversity. To our knowledge, we are the first to investigate an optimal time-dependent $\eta$ function to balance editing extent and source image similarity for improved performance. To prevent modifications to the background of the image, we make $\eta$ region-dependent, applying $\eta > 0$ only to specific object regions based on their cross-attention map. Comprehensive experiments validate our findings, demonstrating state-of-the-art performance both quantitatively and qualitatively. Our contributions are:

\begin{itemize}
    \item We formulate a generalized framework for diffusion inversion methods.%
    \item We formally explore the role of $\eta$ in diffusion inversion and real image editing.
    \item We design a time- and region-dependent $\eta$ function to inject optimal real noise and achieve state-of-the-art performance in diffusion inversion.
    \item We provide an extensive benchmark for diffusion inversion by evaluating existing inversion methods using various image editing methods.
\end{itemize}
\label{sec:intro}

\section{Related Work}

\subsection{Diffusion Models for Image Generation and Editing}

Diffusion models offer more stable training and better diversity than GANs~\cite{goodfellow2014generative}, making them a common choice for image generation. Denoising Diffusion Probabilistic Models (DDPM) \cite{ho2020denoising} showcased the capabilities of diffusion models but require about 1000 inference steps for quality images. Denoising Diffusion Implicit Models (DDIM) \cite{song2020denoising} improve this by reducing inference steps to 50, removing stochastic elements from DDPM sampling. Although rooted in Variational Inference, diffusion models can also be viewed as score-based models using Stochastic Differential Equations (SDEs) \cite{song2020score}.
Latent Diffusion Models \cite{rombach2022high} perform denoising in compressed latent space, greatly reducing inference cost and time. Stable Diffusion \cite{rombach2022high} has become a standard for text-to-image generation due to its public availability and impressive performance.

Text-guided image editing methods \cite{brooks2023instructpix2pix,hertz2022prompt,cao2023masactrl,tumanyan2023plug,parmar2023zero, couairon2022diffedit} aim to align an image with a target prompt while maintaining its original structure. We focus on methods that require no additional training or optimization for better flexibility. 
Prompt-to-Prompt (PtP) \cite{hertz2022prompt} edits images by injecting cross-attention maps from the source into the target prompt's denoising process. Similarly, Plug-and-Play (PnP) \cite{tumanyan2023plug} not only injects cross-attention maps but also integrates spatial features. Furthermore, MasaCtrl \cite{cao2023masactrl} focuses on motion editing and employs self-attention maps instead of cross-attention maps.

\subsection{Diffusion Inversion Methods}

To perform real image editing, a noisy image or latent representation must first be obtained via diffusion inversion. DDIM Inversion~\cite{song2020denoising} achieves low error reconstruction in an unconditional image generation setting, but classifier-free guidance~\cite{ho2022classifier} leads to significant differences from the input image. 

To address this, Null-text Inversion (NTI) \cite{mokady2023null} optimizes the null-text embedding $\emptyset_{t}$ for each timestep, reducing the inversion gap but adding computational overhead. Negative Prompt Inversion (NPI) \cite{miyake2023negative} replaces the null-text with the source text embedding, providing a fast, inference-only inversion pipeline. ProxNPI \cite{han2023improving} enhances NPI with regularization and reconstruction guidance, improving accuracy with minimal cost.
EDICT \cite{wallace2023edict} achieves exact inversion via an auxiliary diffusion path but doubles inference time. DDPM Inversion \cite{huberman2023edit} and CycleDiffusion \cite{wu2022unifying} use stored variance noise from the forward path for exact inversion, but the non-normal distribution of this noise affects editing performance. Direct Inversion \cite{ju2023direct} preserves similarity to the source image by replacing latents during denoising with those from the DDIM Inversion forward path, though this may limit the extent of editing. They also provided a dataset for editing evaluation.

Unlike previous methods that use a static $\eta$ value, our contribution lies in enhancing editability by designing an optimal dynamic $\eta$ function, an aspect not previously explored. Furthermore, we are the first to employ real noise injection for real image editing. This innovation allows us to optimally add real Gaussian noise during editing with minimal inference overhead, achieving balanced and precise image editing.

\label{sec:related}

    \begin{table}
    \caption{Table of notation.}
    \centering
        \small
        \centering
        \setlength{\tabcolsep}{4.25pt}
        \resizebox{\textwidth}{!}{\begin{tabular}{@{}p{0.3cm}p{3cm}|p{0.3cm}p{3.5cm}|p{0.3cm}p{4cm}|p{0.3cm}p{6cm}@{}}
            \toprule
             $\dashedph_{t}$  &: \space $\dashedph$ at timestep $t$ & \mathcolorbox{mynoise}{\dashedph} &: \space noise prediction & $\boldsymbol\epsilon_{t,\theta}$ &: \space noise prediction network & $q_{t}$ &: \space marginal distribution of \cref{eq:sde_forward}\\
            $\dashedph^{(s)}$ &: \space $\dashedph$ of source & \mathcolorbox{mysampling}{\dashedph} & : \space sampling & $\boldsymbol{s}_{t,\theta}$ &: \space score estimation network & $p_{t,\eta_t}$ &: \space marginal distribution of \cref{eq:flow_extend_p}\\
            $\dashedph^{(t)}$ &: \space $\dashedph$ of target & \mathcolorbox{myedit}{\dashedph} &: \space editing & $\alpha_{t}$ &: \space noise schedule & $\mathcal{M}_t$ &: \space attention map\\
            $\dashedph^{*}$ &: \space inverted $\dashedph$ & \mathcolorbox{myinv}{\dashedph} &: \space forward path & $\bar{\alpha}_{t}$ &: \space $\prod_{i=1}^{t}\alpha_{i}$ &$\boldsymbol{w}$& : \space standard Wiener process (forward)\\ 
             $\dashedph'$ &: \space reconstructed $\dashedph$ & \mathcolorbox{myrec}{\dashedph} & : \space backward path & $\boldsymbol\epsilon_{\mathrm{add}}$ &: \space additional noise $\sim \mathcal{N}(0,I)$ &$\boldsymbol{\bar{w}}$& : \space standard Wiener process (backward)\\\bottomrule
        \end{tabular}
        }
     \label{tab:notation}
\end{table}

\section{Preliminaries}
\subsection{Diffusion Models}
Denoising Diffusion Probabilistic Models (DDPM) \cite{ho2020denoising} are generative models consisting of a noising forward path and a denoising backward path. During the forward path, Gaussian noise $\boldsymbol\epsilon$ is gradually added to the sample data point.

DDPM's backward path consists of a noise prediction step and a sampling step. %
Denoising Diffusion Implicit Models (DDIM) \cite{song2020denoising} are an extended version of DDPM which escape from the Markovian forward process. The general form of the sampling function of DDIM is given as below where $\boldsymbol\epsilon_t$ is the estimated noise at timestep $t$ for latent $\boldsymbol{x}_t$, computed as $\boldsymbol\epsilon_t \leftarrow \boldsymbol\epsilon_{t,\theta}(\boldsymbol{x}_t)$:
\begin{equation}
   \label{eq:ddim_sampling}
      \mathrm{Sample}(\boldsymbol{x}_{t},\boldsymbol{\epsilon}_{t}, \eta_{t})=\sqrt{1/{\alpha}_{t}}(\boldsymbol{x}_{t}-\sqrt{1-\bar{\alpha}_{t}}\boldsymbol{\epsilon}_{t})+\sqrt{1-\bar{\alpha}_{t-1}-\sigma^{2}_{t}}\boldsymbol{\epsilon}_{t} + \sigma_{t}\boldsymbol{\epsilon}_{\mathrm{add}}.
   \end{equation}
$\sigma_{t}$ is defined as $\sigma_{t} = \eta_{t}\sqrt{(1-\bar\alpha_{t-1})/(1-\bar\alpha_{t})}\sqrt{1-\bar\alpha_{t}/\bar\alpha_{t-1}}$, and $\eta_{t} \geq 0$ is a controllable hyperparameter. DDPM is a special case of DDIM where $\eta_{t}=1$ for all $t$, whereas DDIM sampling uses $\eta_{t}=0$, making the sampling procedure deterministic. 
For conditional image generation such as text-to-image generation, the noise estimation network receives an additional conditional input $\boldsymbol{c}$ as $\boldsymbol{\epsilon}_{t,\theta}(\boldsymbol{x}_{t},\boldsymbol{c})$. However, it has been empirically shown that the above conditioning is insufficient for reflecting text conditions, so classifier-free guidance \cite{ho2022classifier} is usually used to amplify the text condition as $\tilde{\boldsymbol\epsilon}_{t,\theta} = w \cdot \boldsymbol{\boldsymbol\epsilon}_{t,\theta}(\boldsymbol{x}_{t},\boldsymbol{c}) + (1-w) \cdot \boldsymbol{\epsilon}_{t,\theta}(\boldsymbol{x}_{t},\emptyset)$, where $w$ is the guidance scale parameter and $\emptyset$ is the empty prompt. %
We can summarize the text-to-image generation procedure as \hlc[mynoise]{Noise Prediction} $\boldsymbol\epsilon_t \leftarrow \tilde{\boldsymbol\epsilon}_{t,\theta}(\boldsymbol{x}_{t},\boldsymbol{c},\emptyset,w=7.5)$ and \hlc[mysampling]{Sampling} $\boldsymbol{x}_{t-1} \leftarrow \mathrm{Sample}(\boldsymbol{x}_{t},\boldsymbol{\epsilon}_{t},\eta_t=0)$ and simplify them as $\boldsymbol{x}_{t-1} \leftarrow \mathrm{DDIM}(\boldsymbol{x}_{t},\boldsymbol{c},\emptyset,w, \eta_t)$.%

\subsection{Score-based Models}
\cref{eq:sde_forward} and \cref{eq:sde_backward} are the forward and backward SDE of score-based models corresponding to the forward and backward path of DDPM \cite{song2020score}. \cref{eq:flow} is the probability flow ODE and corresponds to DDIM ($\eta=0$) sampling \cite{song2020denoising,song2020score}. \cref{eq:flow_extend} is the extended version of the backward SDE which has the same marginal distribution $q_t$ for any $\eta \geq 0$, and DDIM sampling (\cref{eq:ddim_sampling}) is a numerical method of \cref{eq:flow_extend} \cite{zhang2022fast}. Similarly, we can train a score function ${\boldsymbol{s}_{t,\theta}(\boldsymbol{x})=-\boldsymbol{\epsilon}_{t,\theta}(\boldsymbol{x})/\sqrt{1-\alpha_t} \approx \nabla_{\boldsymbol{x}}\log q_{t}(\boldsymbol{x})}$ and apply a numerical method to \cref{eq:flow_extend_p}.  %
\begin{align}
\label{eq:sde_forward}
&\mathop{\mathrm{d}\boldsymbol{x}} = f_{t}\boldsymbol{x}\mathop{\mathrm{d}t} + g_{t}\mathop{\mathrm{d}\boldsymbol{w}}, \quad \biggl( f_{t} = \frac{1}{2}\frac{\mathop{\mathrm{d}\log \alpha_{t}}}{\mathop{\mathrm{d}t}}, g_{t} = \sqrt{-\frac{\mathop{\mathrm{d}\log \alpha_{t}}}{\mathop{\mathrm{d}t}}}\biggr)\\
\label{eq:sde_backward}
&\mathop{\mathrm{d}\boldsymbol{x}} = [f_{t}\boldsymbol{x} - g^{2}_{t}\nabla_{\boldsymbol{x}}\log q_{t}(\boldsymbol{x})]\mathop{\mathrm{d}t} + g_{t}\mathop{\mathrm{d}\boldsymbol{\bar{w}}} \\
\label{eq:flow}
&\mathop{\mathrm{d}\boldsymbol{x}} = [f_{t}\boldsymbol{x} - 0.5g^{2}_{t}\nabla_{\boldsymbol{x}}\log q_{t}(\boldsymbol{x})]\mathop{\mathrm{d}t} \\
\label{eq:flow_extend}
&\mathop{\mathrm{d}\boldsymbol{x}} = [f_{t}\boldsymbol{x} - 0.5(1+\eta_t^2)g^{2}_{t}\nabla_{\boldsymbol{x}}\log q_{t}(\boldsymbol{x})]\mathop{\mathrm{d}t} + \eta_t g_{t}\mathop{\mathrm{d}\boldsymbol{\bar{w}}}\\
\label{eq:flow_extend_p}
&\mathop{\mathrm{d}\boldsymbol{x}} = [f_{t}\boldsymbol{x} - 0.5(1+\eta_t^2)g^{2}_{t}\boldsymbol{s}_{t,\theta}(\boldsymbol{x})]\mathop{\mathrm{d}t} + \eta_t g_{t}\mathop{\mathrm{d}\boldsymbol{\bar{w}}}
\end{align}

\subsection{DDIM Inversion}
\label{subsec:ddiminv}
DDIM Inversion \cite{song2020denoising} is an important technique for real image editing and can be derived from DDIM ($\eta=0$) sampling (\cref{eq:ddim_sampling}) by approximating $\boldsymbol{\epsilon}_{t} \approx \boldsymbol{\epsilon}_{t+1}$:
\begin{gather}
\label{eq:ddim_inversion}
    \boldsymbol{x}_{t+1} = \sqrt{\bar{\alpha}_{t+1}/\bar{\alpha}_{t}}\boldsymbol{x}_{t}+\sqrt{\bar{\alpha}_{t+1}}(\sqrt{1/\bar{\alpha}_{t+1}-1}-\sqrt{1/\bar{\alpha}_{t}-1})\boldsymbol{\epsilon}_{t}.
\end{gather}
DDIM Inversion can be written as $\boldsymbol{x}_{t+1} \leftarrow \mathrm{DDIM}_{\mathrm{inv}}(\boldsymbol{x}_{t},c,\emptyset,w)$.
With $w=1$, it encodes latent noise with negligible reconstruction error, but large $w$ values (e.g., $w=7.5$ in Stable Diffusion) result in significant error accumulation, leading to two issues:

\nbf{Reconstruction}
The reconstructed image from the inverted noise differs from the source image and fails to maintain the source's features during image editing.

\nbf{Editability}
The inverted noise deviates from a Gaussian distribution, causing poor editing results and unexpected behavior.

\label{sec:preliminaries}

\section{Generalized Framework}

\subsection{Existing Text-guided Image Editing Methods}
We focus on training-free image editing using diffusion inversion for real image editing. We generate an edited image with a pre-trained text-to-image model $\boldsymbol\epsilon_{\theta}$ like Stable Diffusion and adjust certain input parameters for high-quality editing while maintaining the source image's structure. The process involves denoising with both the source and target prompts. By modifying $\boldsymbol{x}_t^{(t)}$ and $\boldsymbol{c}^{(t)}$ of the target process using information from the source branch, it is possible to steer the editing process for better results (notation in \cref{tab:notation}): 
\begin{equation}
\boldsymbol{x}_{t-1}^{(t)} \leftarrow \mathrm{DDIM}(\boxed[myedit]{\boldsymbol{x}_{t}^{(t)},\boldsymbol{c}^{(t)}},\emptyset,w;\boxed[myedit]{\mathcal{M}_{t}^{(t)}}).
\end{equation}
Source and target paths only differ in the modified input. In practice, existing methods like PtP \cite{hertz2022prompt}, MasaCtrl \cite{cao2023masactrl} and PnP \cite{tumanyan2023plug} inject U-Net's~\cite{ronneberger2015unet} attention maps of the source inference process into the target inference process. %

\subsection{Inversion and Real Image Editing}
\label{subsec:inversion}

To perform real image editing, we need to acquire the inverted source noise $\boldsymbol{x}_{T}^{(s)}$ from the source image $\boldsymbol{x}_{0}^{(s)}$. Applying DDIM Inversion (forward path) can yield $\boldsymbol{x}_{T}^{(s)}$, but with considerable reconstruction errors as discussed in \cref{subsec:ddiminv}. Therefore, diffusion inversion methods aim to enhance the \hlc[myinv]{forward path} for a precise and editable $\boldsymbol{x}_{T}^{(s)}$ and adjust the \hlc[myrec]{backward path} to ensure accurate reconstruction and optimal editing. Details of existing inversion methods are provided in the supplementary materials.

\subsubsection{\hlc[myinv]{Forward} Path of Source (Inversion)}
The forward path can be expressed as $\boldsymbol{x}_{t+1}^{(s)^{*}} \leftarrow \mathrm{DDIM}_{\mathrm{inv}}(\boldsymbol{x}_{t}^{(s)^{*}},\boldsymbol{c}^{(s)},\boxed[myinv]{\emptyset,w})$, with $\emptyset$ or $w$ usually modified. The goal is to emulate the ideal forward path, which is unknown in practice. Many methods use $\mathrm{DDIM}_{\mathrm{inv}}(w=1)$ to ensure $\boldsymbol{x}_{T}^{(s)^{*}}$ aligns well with a Gaussian distribution for better editability.

\subsubsection{\hlc[myrec]{Backward} Path of Source and Target (Reconstruction and Editing)}

The backward process aims to align with the ideal (unknown) or actual forward path. Existing methods focus on matching the actual forward path by controlling $\emptyset$ or $w$ like NTI \cite{mokady2023null} and NPI \cite{han2023improving,miyake2023negative}. When editing images, two backward paths are used: one for the source prompt and one for the target prompt, written as:
\begin{gather}
\label{eq:back_source}
\boldsymbol{x}_{t-1}^{(s)'} \leftarrow \mathrm{DDIM}(\boldsymbol{x}_{t}^{(s)'},\boldsymbol{c}^{(s)},\boxed[myrec]{\emptyset,w}), \\
\label{eq:back_target}
\boldsymbol{x}_{t-1}^{(t)'} \leftarrow \mathrm{DDIM}(\boxed[myedit]{\boldsymbol{x}_{t}^{(t)'},\boldsymbol{c}^{(t)}},\boxed[myrec]{\emptyset,w};\boxed[myedit]{\mathcal{M}_{t}^{(t)}}).
\end{gather}
Inversion methods strive to reduce the gap between $\boldsymbol{x}_{t-1}^{(s)^{*}}$ and $\boldsymbol{x}_{t-1}^{(s)'}$, but perfect reconstruction remains challenging.

\subsubsection{Perfect Reconstruction Methods}
\label{subsubsec:perfect}

To achieve perfect source reconstruction, intermediate latents from the forward path can be directly reused for image editing. By replacing the current latent in the backward source path with the corresponding latent from the forward path at each timestep by setting ${{\boldsymbol{x}_{t-1}^{(s)'}}\leftarrow\boxed[myrec]{\boldsymbol{x}_{t-1}^{(s)^{*}}}}$, we ensure that the backward source path precisely matches the forward path. This alignment guarantees perfect reconstruction, and is employed by CycleDiffusion \cite{wu2022unifying}, DDPM Inversion \cite{huberman2023edit}, and Direct Inversion \cite{ju2023direct}.

\section{Theoretical Analysis of the Role of $\eta_t$}
Diffusion inversion demands accurate source image reconstruction and editable target images (\cref{subsec:ddiminv}). Existing perfect reconstruction inversion methods (\cref{subsubsec:perfect}) satisfy the former but lack editability and often yield images too similar to the source. To enhance editability, we explore improving these methods without compromising diffusion model properties. %

\subsubsection{Motivation}
Using deterministic DDIM sampling, the source and target backward paths differ only in the estimated noise per timestep, leading to limited editing and target images resembling the source. 
We aim to enable the target path to diverge from the source path by introducing a stochastic term (additional noise) using non-zero $\eta_t$ DDIM sampling. In particular, we investigate the optimal design of a function for $\eta_t$ to achieve superior performance.

\begin{proposition}[Proof in Supp.]
\label{prop1}
Let $\delta_{\eta_t} = \|\boldsymbol{x}_{t-1}^{(s)'}-\mathrm{DDIM}(\boldsymbol{x}_{t}^{(t)'},\boldsymbol{c}^{(t)},\eta_t)\|_2$ be the source-target branch distance at timestep $t$.
If $\delta_{0}$ is small, there exists an $\eta_t>0$ that satisfies $ \mathop{\mathbb{E}}_{\boldsymbol\epsilon_{add}}[\delta_{\eta_t}]> \delta_0$.
\end{proposition}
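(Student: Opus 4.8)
The plan is to expand $\delta_{\eta_t}^2$ by writing the DDIM step with $\eta_t > 0$ as the deterministic ($\eta_t = 0$) step plus the stochastic correction $\sigma_t \boldsymbol\epsilon_{\mathrm{add}}$, where $\sigma_t = \eta_t \sqrt{(1-\bar\alpha_{t-1})/(1-\bar\alpha_t)}\sqrt{1-\bar\alpha_t/\bar\alpha_{t-1}}$ as in \cref{eq:ddim_sampling}. Concretely, let $\boldsymbol{d} = \boldsymbol{x}_{t-1}^{(s)'} - \mathrm{DDIM}(\boldsymbol{x}_t^{(t)'}, \boldsymbol{c}^{(t)}, \eta_t = 0)$, so that $\|\boldsymbol{d}\|_2 = \delta_0$ and $\mathrm{DDIM}(\boldsymbol{x}_t^{(t)'}, \boldsymbol{c}^{(t)}, \eta_t) = \mathrm{DDIM}(\boldsymbol{x}_t^{(t)'}, \boldsymbol{c}^{(t)}, 0) + \sigma_t \boldsymbol\epsilon_{\mathrm{add}}$ (the noise prediction $\boldsymbol\epsilon_t$ does not depend on $\eta_t$, only on $\boldsymbol{x}_t^{(t)'}$, so the first two terms of \cref{eq:ddim_sampling} are unchanged). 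Then $\delta_{\eta_t} = \|\boldsymbol{d} - \sigma_t \boldsymbol\epsilon_{\mathrm{add}}\|_2$.

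The key step is to compute $\mathop{\mathbb{E}}_{\boldsymbol\epsilon_{\mathrm{add}}}[\delta_{\eta_t}^2]$. Since $\boldsymbol\epsilon_{\mathrm{add}} \sim \mathcal{N}(0, I)$ in dimension $n$, we have $\mathop{\mathbb{E}}[\delta_{\eta_t}^2] = \|\boldsymbol{d}\|_2^2 - 2\sigma_t \langle \boldsymbol{d}, \mathop{\mathbb{E}}[\boldsymbol\epsilon_{\mathrm{add}}]\rangle + \sigma_t^2 \mathop{\mathbb{E}}[\|\boldsymbol\epsilon_{\mathrm{add}}\|_2^2] = \delta_0^2 + n\sigma_t^2$, using $\mathop{\mathbb{E}}[\boldsymbol\epsilon_{\mathrm{add}}] = 0$. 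Because $\sigma_t$ is a strictly increasing function of $\eta_t$ with $\sigma_t \to \infty$ as $\eta_t \to \infty$ (the $\bar\alpha$ factors are positive constants at a fixed timestep, provided $\bar\alpha_t < \bar\alpha_{t-1}$, which holds for the standard monotone noise schedule), the quantity $\delta_0^2 + n\sigma_t^2$ can be made arbitrarily large; in particular it exceeds $\delta_0^2$ for every $\eta_t > 0$. Then I would invoke Jensen's inequality in the form $\mathop{\mathbb{E}}[\delta_{\eta_t}] \geq \sqrt{\mathop{\mathbb{E}}[\delta_{\eta_t}^2] - \mathrm{Var}(\delta_{\eta_t})}$ — actually the cleaner route is: we want $\mathop{\mathbb{E}}[\delta_{\eta_t}] > \delta_0$, and by concavity of the square root $\mathop{\mathbb{E}}[\delta_{\eta_t}] = \mathop{\mathbb{E}}[\sqrt{\delta_{\eta_t}^2}] \leq \sqrt{\mathop{\mathbb{E}}[\delta_{\eta_t}^2]}$, so Jensen points the wrong way and an upper bound on $\mathop{\mathbb{E}}[\delta_{\eta_t}^2]$ is not enough.

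This is where the hypothesis "$\delta_0$ small" enters and where the main obstacle lies: I need a lower bound on $\mathop{\mathbb{E}}\|\boldsymbol{d} - \sigma_t\boldsymbol\epsilon_{\mathrm{add}}\|_2$. The clean approach is to use the lower bound $\|\boldsymbol{d} - \sigma_t\boldsymbol\epsilon_{\mathrm{add}}\|_2 \geq \sigma_t\|\boldsymbol\epsilon_{\mathrm{add}}\|_2 - \|\boldsymbol{d}\|_2 = \sigma_t\|\boldsymbol\epsilon_{\mathrm{add}}\|_2 - \delta_0$ by the reverse triangle inequality, hence $\mathop{\mathbb{E}}[\delta_{\eta_t}] \geq \sigma_t\mathop{\mathbb{E}}\|\boldsymbol\epsilon_{\mathrm{add}}\|_2 - \delta_0$. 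Now $\mathop{\mathbb{E}}\|\boldsymbol\epsilon_{\mathrm{add}}\|_2 = \mu_n > 0$ is a fixed positive constant (the mean of a chi distribution with $n$ degrees of freedom, $\mu_n = \sqrt{2}\,\Gamma((n+1)/2)/\Gamma(n/2) \approx \sqrt{n}$), so $\mathop{\mathbb{E}}[\delta_{\eta_t}] \geq \sigma_t \mu_n - \delta_0$. It then suffices to choose $\eta_t$ large enough that $\sigma_t \mu_n - \delta_0 > \delta_0$, i.e. $\sigma_t > 2\delta_0/\mu_n$, which is achievable since $\sigma_t$ is unbounded and continuous in $\eta_t$. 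The phrase "if $\delta_0$ is small" then guarantees that such $\eta_t$ is itself small — keeping the injected noise modest relative to the scale of the problem — so the statement is really: for small $\delta_0$ a correspondingly small but positive $\eta_t$ already forces $\mathop{\mathbb{E}}[\delta_{\eta_t}] > \delta_0$, strictly separating the target branch from the source branch. I would close by remarking that this monotone-in-$\eta_t$ separation is exactly the mechanism Eta Inversion exploits for controllable editing extent.
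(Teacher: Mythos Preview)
Your argument has two concrete gaps. First, the decomposition $\mathrm{DDIM}(\cdot,\eta_t)=\mathrm{DDIM}(\cdot,0)+\sigma_t\boldsymbol\epsilon_{\mathrm{add}}$ is incorrect: in \cref{eq:ddim_sampling} the middle term carries the coefficient $\sqrt{1-\bar\alpha_{t-1}-\sigma_t^2}$, which itself depends on $\eta_t$ through $\sigma_t$. The noise prediction $\boldsymbol\epsilon_t$ is indeed $\eta_t$-independent, but its coefficient is not, so the deterministic offset is some $\boldsymbol{\mu}_t(\eta_t)$ different from $-\boldsymbol{d}$, and your reverse-triangle bound becomes $\mathbb{E}[\delta_{\eta_t}]\geq \sigma_t\mu_n-\|\boldsymbol{\mu}_t\|_2$ with $\|\boldsymbol{\mu}_t\|_2\neq\delta_0$ in general. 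Second, $\sigma_t$ is not unbounded in $\eta_t$: the same square root forces $\sigma_t^2\leq 1-\bar\alpha_{t-1}$, equivalently $\eta_t\leq\sqrt{(1-\bar\alpha_t)/(1-\bar\alpha_t/\bar\alpha_{t-1})}$. You therefore cannot take $\eta_t$ arbitrarily large, and the ``$\delta_0$ small'' hypothesis is precisely what guarantees an admissible $\eta_t$ exists within this finite range.

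The paper's proof sidesteps both issues by producing a lower bound on $\mathbb{E}\|\boldsymbol{\mu}_t+\sigma_t\boldsymbol\epsilon_{\mathrm{add}}\|_2$ that is independent of $\boldsymbol{\mu}_t$: it passes to the $\ell_1$ norm via Cauchy--Schwarz ($\|\cdot\|_2\geq d^{-1/2}\|\cdot\|_1$) and then uses the folded-normal fact that $\mathbb{E}|X|$ for $X\sim\mathcal{N}(\mu,\sigma_t^2)$ is minimized at $\mu=0$, giving $\mathbb{E}[\delta_{\eta_t}]\geq \sigma_t\sqrt{2d/\pi}$ regardless of the offset. The smallness assumption is made explicit as $\delta_0<\sqrt{1-\bar\alpha_{t-1}}\sqrt{2d/\pi}$, which together with the upper bound on $\eta_t$ yields a nonempty interval of valid choices. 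Your reverse-triangle idea can be repaired with extra work (e.g.\ controlling $\|\boldsymbol{\mu}_t+\boldsymbol{d}\|_2=O(\sigma_t^2)\|\boldsymbol\epsilon_t\|_2$ and choosing $\sigma_t$ in an intermediate range), but as written the two errors above leave the argument incomplete.
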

\cref{prop1} indicates that introducing a non-zero $\eta_t$ can encourage the target path to escape from the source path without losing the property of diffusion models.

We further study the role of $\eta_t$ theoretically to address two major problems for real image editing: (i.) inaccurate inversion ($p_T^{(t)} \neq p_T^{(t)'}$) and (ii.) inaccurate editing ($\boldsymbol{s}_{t,\theta}^{(t)}(x) \neq \nabla_{\boldsymbol{x}}\log q_{t}^{(t)}(x)$). We use the continuous-time framework of score-based models and measure the sample quality of generation (editing) with KL Divergence $D_{\mathrm{KL}}$.
\subsection{Inaccurate Inversion ($p_T^{(t)} \neq p_T^{(t)'}$)}
As diffusion inversion methods fail to obtain the ideal inverted $p_T^{(t)}$, the image generation (editing) procedure starts from an inaccurately inverted $p_T^{(t)'}$.
\begin{proposition}[Proof in Supp.]
\label{prop2}
Under mild conditions (see supp.), \cref{eq:prop2} is satisfied, wherein $D_{\mathrm{Fisher}}$ denotes the Fisher Divergence.
\end{proposition}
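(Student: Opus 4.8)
The plan is to bound $D_{\mathrm{KL}}(q_0^{(t)} \,\|\, p_{0,\eta_t}^{(t)'})$ — the KL divergence between the ideal edited distribution and the one actually produced when the backward SDE \cref{eq:flow_extend_p} is initialized from the imperfect $p_T^{(t)'}$ rather than $q_T^{(t)}$ — by the initial KL gap $D_{\mathrm{KL}}(q_T^{(t)} \,\|\, p_T^{(t)'})$ modulated by a factor depending on the $\eta_t$-schedule. The natural tool is the differential form of the data-processing / de Bruijn-type identity along the backward flow: I would compute $\frac{\mathrm{d}}{\mathrm{d}t} D_{\mathrm{KL}}(q_t \,\|\, p_{t,\eta_t})$ when both $q_t$ and $p_{t,\eta_t}$ evolve under the (same) Fokker–Planck equation associated with \cref{eq:flow_extend}, differing only in their initialization at time $T$. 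Because the drift is shared and the diffusion coefficient is $\eta_t g_t$, the standard calculation gives $\frac{\mathrm{d}}{\mathrm{d}t} D_{\mathrm{KL}}(q_t \,\|\, p_{t,\eta_t}) = -\tfrac12 \eta_t^2 g_t^2\, D_{\mathrm{Fisher}}(q_t \,\|\, p_{t,\eta_t})$ (the dissipation is strictly the relative Fisher information weighted by the noise magnitude), so integrating from $T$ down to $0$ yields
\begin{equation*}
D_{\mathrm{KL}}(q_0^{(t)} \,\|\, p_{0,\eta_t}^{(t)'}) = D_{\mathrm{KL}}(q_T^{(t)} \,\|\, p_T^{(t)'}) - \frac12 \int_0^T \eta_t^2 g_t^2\, D_{\mathrm{Fisher}}(q_t^{(t)} \,\|\, p_{t,\eta_t}^{(t)'})\,\mathrm{d}t .
\end{equation*}
This is presumably the content of \cref{eq:prop2}: the editing error inherited from imperfect inversion is the initial mismatch minus a nonnegative contraction term that grows with $\eta_t$. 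The takeaway to emphasize afterward is that larger $\eta_t$ accelerates the forgetting of the inversion error.

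The key steps, in order: (1) write the Fokker–Planck equation for \cref{eq:flow_extend} with the $(1+\eta_t^2)$-scaled score drift and diffusion $\eta_t^2 g_t^2$; (2) substitute $p_{t,\eta_t}$ for $q_t$ as the reference solution with a different terminal condition, noting the drift is computed from $q_t$'s score (the "true" score), which is what makes this the \emph{inversion-only} error clause — editing-model error is deferred to Proposition 3; (3) differentiate $D_{\mathrm{KL}}(q_t \,\|\, p_{t,\eta_t}) = \int q_t \log(q_t/p_{t,\eta_t})$ in $t$, integrate by parts twice, and cancel the drift terms so that only the diffusion term survives, producing $-\tfrac12\eta_t^2 g_t^2 \int q_t \|\nabla \log(q_t/p_{t,\eta_t})\|^2$; (4) integrate over $[0,T]$ and rearrange.

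The main obstacle — and the reason for the "mild conditions (see supp.)" hedge — is justifying the integration by parts and the vanishing of boundary terms at spatial infinity: this needs enough regularity and decay on $q_t$, $p_{t,\eta_t}$, and their log-gradients (e.g. finite relative Fisher information along the trajectory, sufficient tail decay so that flux terms vanish, and $\eta_t$, $g_t$ bounded and $\eta_t$ not vanishing on a set that breaks the argument), plus enough smoothness of the score to guarantee the Fokker–Planck solution is classical. A secondary subtlety is the degenerate case $\eta_t \equiv 0$, where \cref{eq:flow_extend} is a pure ODE (the probability flow), there is no dissipation, and $D_{\mathrm{KL}}$ is merely conserved — so the identity still holds but trivially, which is exactly why nonzero $\eta_t$ is needed for the inversion error to decay. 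I would state these regularity hypotheses explicitly as the "mild conditions" and then the computation itself is routine.
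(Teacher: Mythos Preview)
Your proof technique is exactly the paper's: differentiate the KL along the backward flow, write the Fokker--Planck equation for each trajectory via the probability-flow drift, integrate by parts so that the shared drift cancels and only the diffusion-weighted relative Fisher information survives, then integrate over $[0,T]$. However, you have guessed the objects in \cref{eq:prop2} slightly wrong. The identity the paper proves is
\[
D_{\mathrm{KL}}\bigl(p_0^{(t)'} \,\|\, p_0^{(t)}\bigr) \;=\; D_{\mathrm{KL}}\bigl(p_T^{(t)'} \,\|\, p_T^{(t)}\bigr) \;-\; \int_0^T \eta_{t}^2\, g_t^2\, D_{\mathrm{Fisher}}\bigl(p_t^{(t)'} \,\|\, p_t^{(t)}\bigr)\,\mathrm{d}t,
\]
where \emph{both} $p_t^{(t)}$ and $p_t^{(t)'}$ are marginals of the backward SDE \cref{eq:flow_extend_p} driven by the \emph{learned} score $\boldsymbol{s}_{t,\theta}$, differing only in their time-$T$ initialization (ideal vs.\ actually inverted). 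Neither is $q_t$. The inversion error is isolated precisely because the same (possibly imperfect) score appears in both drifts $\boldsymbol{A}_t^{(t)}$ and therefore drops out of $\tilde{\boldsymbol A}_t^{(t)'}-\tilde{\boldsymbol A}_t^{(t)}$, leaving only $\tfrac12\eta_t^2 g_t^2\bigl(\nabla\log p_t^{(t)'}-\nabla\log p_t^{(t)}\bigr)$; your assumption that the drift carries the true score is not needed and not what the paper does. Once you swap $q_t \to p_t^{(t)}$ and put the primed measure first in the KL (matching the paper's order), your steps (1)--(4) go through verbatim. The missing $\tfrac12$ is just normalization: the paper absorbs it into $D_{\mathrm{Fisher}}(p\,\|\,q)=\tfrac12\int p\,\|\nabla\log p-\nabla\log q\|^2$.
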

\begin{equation}
\label{eq:prop2}
D_{\mathrm{KL}}(p_0^{(t)'} \parallel p_0^{(t)}) = D_{\mathrm{KL}}(p_T^{(t)'} \parallel p_T^{(t)}) - \int_0^T \eta_{t}^2 g_t^2 D_{\mathrm{Fisher}}(p_t^{(t)'} \parallel p_t^{(t)})\mathop{\mathrm{d}t}
\end{equation}
\cref{prop2} shares a similar concept to \cite{nie2023blessing,lu2022maximum}, which is generalized to \cref{eq:flow_extend_p}. As $\int_0^T \eta_{t}^2 g_t^2 D_{\mathrm{Fisher}}(p_t^{(t)'} \parallel p_t^{(t)})\mathop{\mathrm{d}t} \geq 0$, we can reduce $D_{\mathrm{KL}}(p_0^{(t)'} \parallel p_0^{(t)})$ by applying $\eta_t$ with $\int_0^T \eta_{t}\mathop{\mathrm{d}t}>0$ (SDE) rather than setting $\eta_t=0$ for all $t$ (ODE). \cref{prop2} indicates that introducing a non-zero $\eta_t$ can improve the backward path of inaccurate diffusion inversion.  %
\subsection{Inaccurate Editing ($\boldsymbol{s}_{t,\theta}^{(t)}(x) \neq \nabla_{\boldsymbol{x}}\log q_{t}^{(t)}(x)$)}
If we would assume that the score estimation network is perfect, such that $\boldsymbol{s}_{t,\theta}^{(t)}(x) = \nabla_{\boldsymbol{x}}\log q_{t}^{(t)}(x)$, the choice of $\eta_t$ would not change the marginal distribution as $p_{t,\eta_t}^{(t)} = q_t^{(t)}$ \cite{cao2023exploring}. However, since we consider training-free image editing methods, and reuse the score estimation network from a pre-trained image generation model, a non-negligible score estimation error is introduced. As a result, $\eta_t$ impacts the marginal distribution and good performance cannot be guaranteed by setting $\eta_t=0$ \cite{cao2023exploring}. Therefore, it is beneficial to optimize $\eta_t$ for superior performance.
\begin{proposition}[Proof in Supp.]
\label{prop3}
Assuming mild conditions (see supp.), if the score estimation function $\boldsymbol{s}_{t,\theta}^{(t)}(x)$ undergoes perturbations only near timestep $T$ and near timestep $0$, there exist a timestep $T_a$ and a timestep $T_b$, along with a large constant $\eta_{\mathrm{const}}>0$, such that $D_{\mathrm{KL}}(p_0^{(t)} \parallel q_0^{(t)})$ becomes reduced when employing $\eta_t$ as \cref{eq:cases}, in comparison to $\eta_t=0$ for all $t$ or $\eta_t=\eta_{\mathrm{const}}$ for all $t$.
\end{proposition}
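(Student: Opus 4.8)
The plan is to reduce the statement to a single KL--dissipation identity for the approximate reverse SDE \cref{eq:flow_extend_p} and then read off the optimal placement of stochasticity region by region. Write $e_t:=\nabla_{\boldsymbol{x}}\log q_t^{(t)}-\boldsymbol{s}_{t,\theta}^{(t)}$ for the score error and, for a schedule $\eta_t$, let $p_t$ be the marginal of \cref{eq:flow_extend_p} integrated backward from $p_T^{(t)}$. Differentiating $D_{\mathrm{KL}}(p_t\parallel q_t^{(t)})$ along the time-reversed Fokker--Planck flow --- the same computation underlying \cref{prop2}, but now perturbing the drift by $e_t$ as well --- gives
\begin{equation*}
D_{\mathrm{KL}}(p_0^{(t)}\parallel q_0^{(t)})=D_{\mathrm{KL}}(p_T^{(t)}\parallel q_T^{(t)})-\int_0^T\Big[\tfrac{1}{2}\eta_t^2g_t^2\,\mathbb{E}_{p_t}\|\nabla\log(p_t/q_t^{(t)})\|^2+\tfrac{1}{2}(1+\eta_t^2)g_t^2\,\mathbb{E}_{p_t}\langle e_t,\nabla\log(p_t/q_t^{(t)})\rangle\Big]\,\mathrm{d}t .
\end{equation*}
The first bracketed term is a non-negative contraction toward $q_t^{(t)}$ that is subtracted and grows in $\eta_t$; the second is the score-error term, whose coefficient also grows in $\eta_t$ but whose sign is uncontrolled. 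The inversion gap $D_{\mathrm{KL}}(p_T^{(t)}\parallel q_T^{(t)})$ is independent of $\eta_t$ (inversion is fixed beforehand), so it cancels in all comparisons below.

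Next I would partition $[0,T]$ using the hypothesis that $e_t$ is supported near $t=T$ and near $t=0$: pick $T_b<T_a$ with $e_t\equiv0$ on $(T_b,T_a)$, so that $[T_a,T]$ and $[0,T_b]$ contain the two perturbed regions. On the middle interval the score is exact and only the non-negative contraction term is present, so $\eta_t=\eta_{\mathrm{const}}>0$ on $(T_b,T_a)$ strictly decreases $D_{\mathrm{KL}}(\,\cdot\,\parallel q_t^{(t)})$ there, whereas $\eta_t=0$ leaves it unchanged (the probability-flow ODE merely transports the gap). Near $t=T$ the target marginals are essentially Gaussian and $p_t\approx q_t^{(t)}$, so $\nabla\log(p_t/q_t^{(t)})$ and hence both integrands are higher order, and under a smallness bound on the perturbation near $T$ relative to $\eta_{\mathrm{const}}$ the choice of $\eta_t$ there is harmless. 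Near $t=0$, by contrast, the distributions are sharp and $e_t\neq0$: for large $\eta_{\mathrm{const}}$ the dynamics mix rapidly toward the equilibrium of the \emph{wrong} score, driving the $(1+\eta_t^2)$-weighted error term up, while $\eta_t=0$ avoids this amplification. With $\eta_t$ as in \cref{eq:cases} --- equal to $\eta_{\mathrm{const}}$ on the part of $[0,T]$ away from the data end (the perturbation near $T$ together with the exact middle) and $0$ on $[0,T_b]$ --- one then beats $\eta_t\equiv\eta_{\mathrm{const}}$, since the two schedules agree on $[T_b,T]$ and feed the \emph{same} $p_{T_b}$ into $[0,T_b]$, on which $\eta_t=0$ incurs strictly less KL; and one beats $\eta_t\equiv0$, since on the exact middle \cref{eq:cases} gains strict contraction and so enters $[0,T_b]$ with a strictly smaller $D_{\mathrm{KL}}(p_{T_b}\parallel q_{T_b}^{(t)})$, which a stability estimate for the near-$0$ flow carries down to $t=0$ (the extra amplification that $\eta_{\mathrm{const}}$ causes near $t=T$ being controlled to lower order).

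The hard part will be controlling $\mathbb{E}_{p_t}\langle e_t,\nabla\log(p_t/q_t^{(t)})\rangle$ inside the two perturbed regions: since $p_t$ itself depends on $\eta_t$, one cannot compare integrands pointwise and must instead track how the three schedules' marginals separate, using regularity of $q_t^{(t)}$, a log-Sobolev-type fast-mixing estimate to justify the large-$\eta_{\mathrm{const}}$ ``wrong-equilibrium'' behaviour near $t=0$, stability of the probability-flow ODE near $t=0$, and a smallness assumption on the perturbation near $t=T$ --- precisely the ``mild conditions'' in the statement. The rest is bookkeeping, assembling the three regional comparisons; the qualitative conclusion (stochasticity helps away from the data end but hurts near it when the score is inaccurate there) is the schedule-dependent refinement of \cite{cao2023exploring} that our identity recovers.
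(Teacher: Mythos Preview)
Your KL--dissipation identity is correct and a natural starting point, but the regional analysis both misreads the schedule and inverts the paper's mechanism.

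First, a factual slip: \cref{eq:cases} is not a step function equal to $\eta_{\mathrm{const}}$ on all of $[T_b,T]$; it ramps linearly from $\eta_{\mathrm{const}}$ down to $0$ on $[T_b,T_a]$. Hence case~1 and the constant schedule do \emph{not} ``agree on $[T_b,T]$'' and do not a priori feed the same $p_{T_b}$ into the final interval. The paper disposes of this by first driving the KL to $O(\epsilon^3)$ at $T_a$ (see below), after which any $\eta_t$ on the exact middle keeps it there; without that step your case~1 vs.\ case~2 comparison is incomplete.

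Second, and more importantly, your handling of the near-$T$ region is backwards relative to the paper. You treat large $\eta$ there as at best ``harmless'' and worry about ``extra amplification''; the paper instead imports Proposition~3.4 of \cite{cao2023exploring} (its Lemma~1) to assert that for a perturbation at any $t_a\in(0,T]$, a sufficiently large constant $\eta$ makes the second-order KL coefficient $L(\eta)<\min(\epsilon,L(0))$, i.e.\ $D_{\mathrm{KL}}(p_{T_a}\parallel q_{T_a})=O(\epsilon^3)$. This is the load-bearing input: it puts case~1 strictly ahead of case~3 already at $T_a$, and the exact middle merely transports that gap (the ODE with exact score preserves KL for case~3, while case~1 stays at $\approx 0$). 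Your plan instead places the load on the middle-interval Fisher contraction, but that contraction is proportional to whatever KL is present at $T_a$; your own ``higher order'' claim for the near-$T$ region leaves almost nothing to contract, and if your amplification worry were real, case~1 would enter the middle \emph{behind} case~3, which the contraction need not overcome.

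The paper's proof is itself heuristic (it, like you, simply asserts that the KL ordering at $T_b$ is preserved under the shared $\eta=0$ ODE on $[0,T_b]$), but its architecture is different: it black-boxes two facts from \cite{cao2023exploring} --- large $\eta$ kills the KL for a perturbation at $t_a\in(0,T]$, while large $\eta$ is strictly worse than $\eta=0$ for a perturbation at $t_b\ll 1$ --- and applies them region by region. Your identity could plausibly be a route to re-deriving those facts, but as written the proposal does not supply them, and the argument you sketch in their place does not close.
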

\begin{equation}
\label{eq:cases}
\eta_t = \begin{cases}
\eta_{\mathrm{const}} & \textrm{if } T \geq t \geq T_a\\
\eta_{\mathrm{const}}(t-T_b)/(T_a-T_b) & \textrm{if } T_a > t \geq T_b\\
0 & \textrm{if } T_b > t \geq 0
\end{cases}
\end{equation}
\cref{prop3} is inspired by several findings of \cite{cao2023exploring}. Even though we need to make assumptions for the score estimation function for our theory, it reveals the insight that decreasing $\eta$ during the backward process can better approximate the true target image distribution and lead to better editing results in practice.

\begin{figure}
    \vspace*{-10pt}
    \centering
    \bigskip %
        \resizebox{0.9\linewidth}{!}{%
    \begin{circuitikz}
        \tikzstyle{every node}=[font=\Huge]

        \draw[line width=5pt,-latex,preaction={draw,mysampling,-,double=mysampling,line width=3cm}] (12.5,7.5) --node[sloped, anchor=center,above=3pt]{\Huge $\boldsymbol\epsilon_{t}^{(t)'}$} (15,7.5);

        \path [fill=mynoise] (12,23) to (12,14) to (7,16) to (7,21) to (12, 23);
        \path [fill=mynoise] (2,23) to (2,14) to (7,16) to (7,21) to (2, 23);
        \path [fill=mynoise] (12,12) to (12,3) to (7,5) to (7,10) to (12, 12);
        \path [fill=mynoise] (2,12) to (2,3) to (7,5) to (7,10) to (2, 12);
        \node at (18.5,18.5) {\includegraphics[width=6cm]{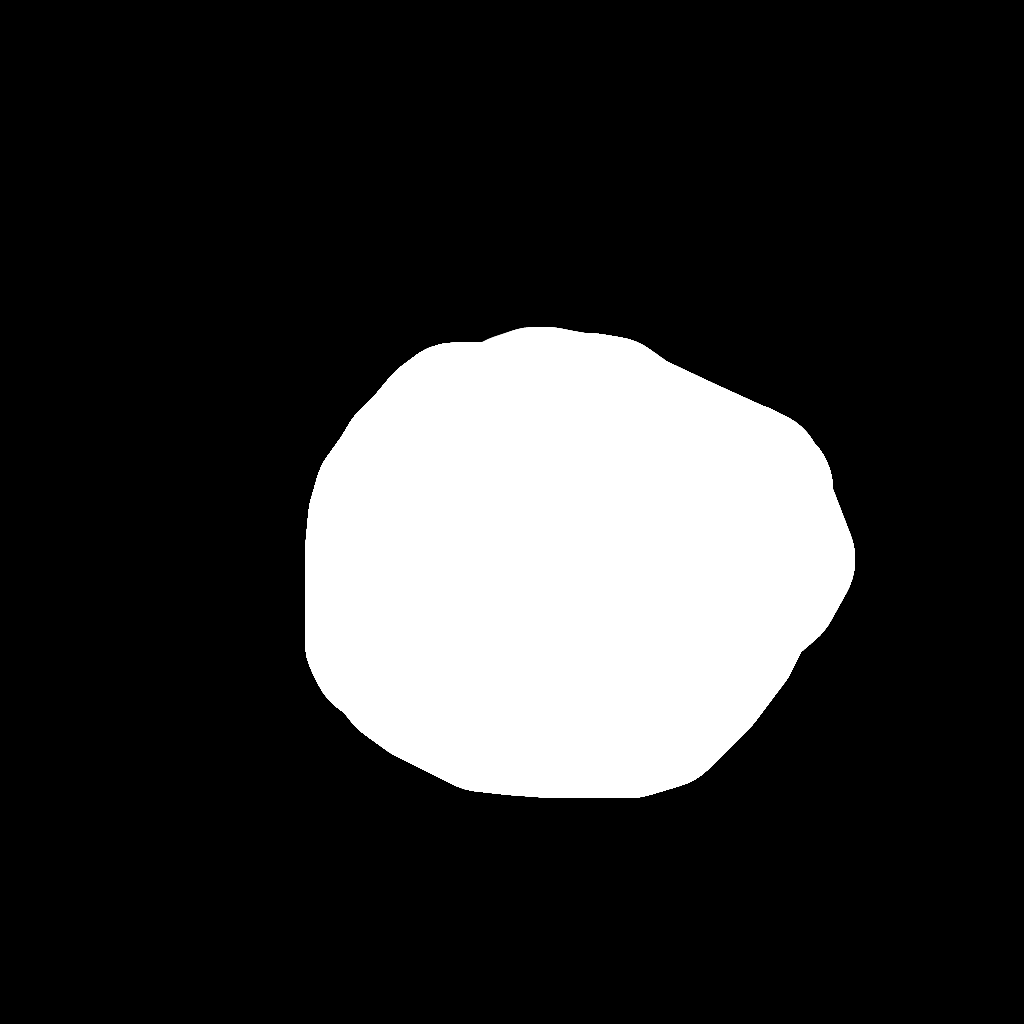}};
        \node [font=\Huge] at (18.5,14.75) {Binary Mask};
        \node at (18.5,7.5) {\includegraphics[width=6cm]{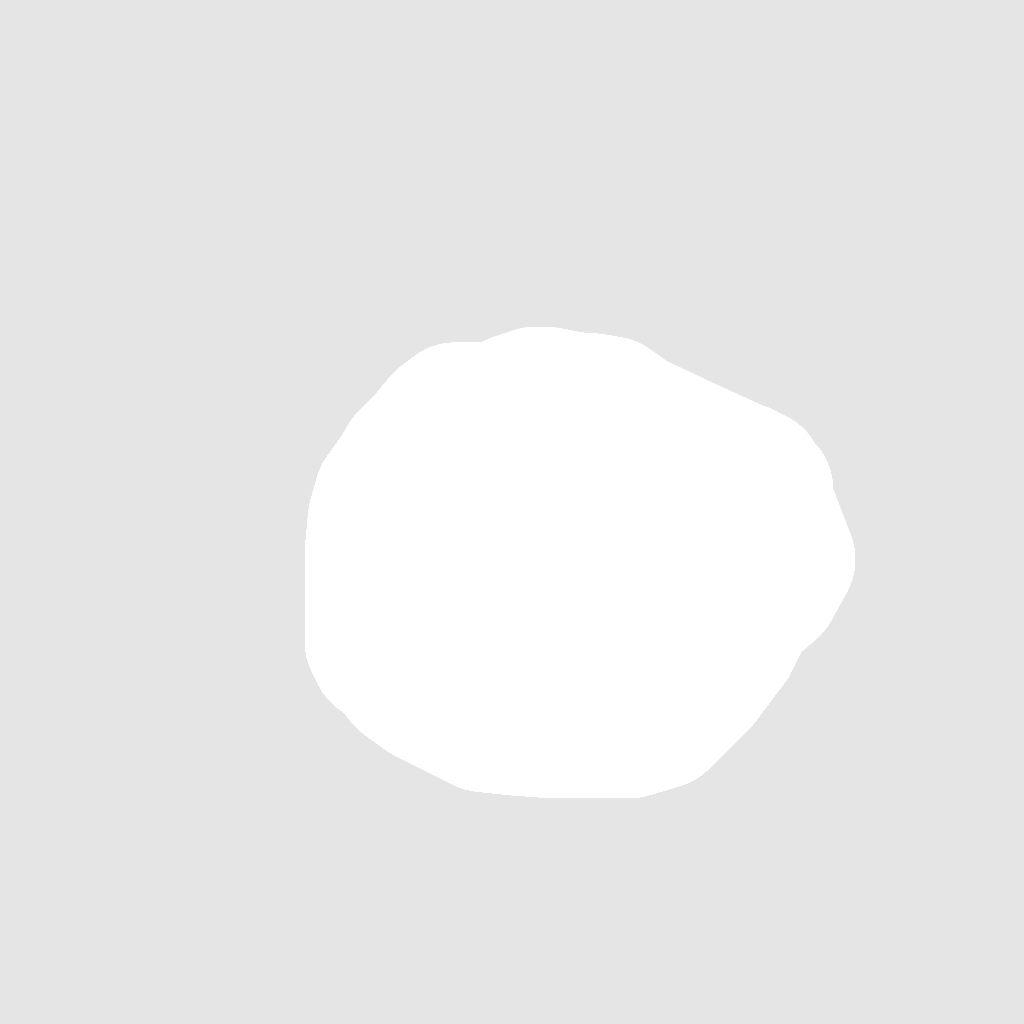}};
        \node [font=\Huge] at (18.5,3.5) {$\mathrm{Sample}(\boldsymbol{x}_{t}^{(t)'},\boldsymbol\epsilon_{t}^{(t)'},\eta_{t})$};
        \node [font=\Huge] at (18.75,7.125) {$\eta_{t} \geq 0$};
        \node [font=\Huge] at (18.5,9.375) {$\eta_{t}=0$};
        \node at (36,26) {\includegraphics[width=6cm]{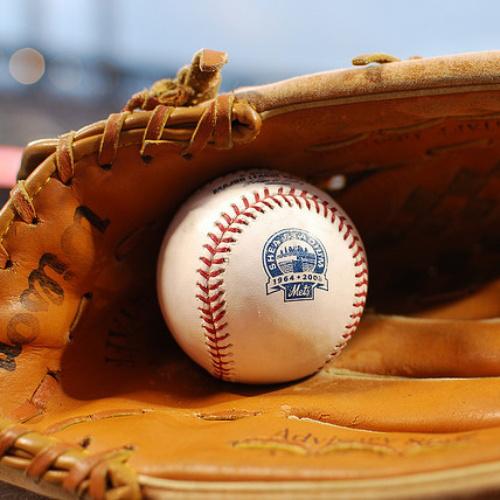}};
        \node [font=\Huge] at (25.5,26) {$\boldsymbol{x}_{t-1}^{(s)^{*}}$};
        \node [font=\Huge] at (31.5,26) {$\boldsymbol{x}_{0}^{(s)}$};
        \draw [line width=2.5pt](33,29) rectangle  node {\Huge } (39,23);
        \node at (36,18.5) {\includegraphics[width=6cm]{figs/concept/source_image}};
        \node [font=\Huge] at (25.5,18.5) {$\boldsymbol{x}_{t-1}^{(s)'}$};
        \node [font=\Huge] at (31.5,18.5) {$\boldsymbol{x}_{0}^{(s)'}$};
        \draw [line width=2.5pt](33,21.5) rectangle  node {\Huge } (39,15.5);
        \node at (36,7.5) {\includegraphics[width=6cm]{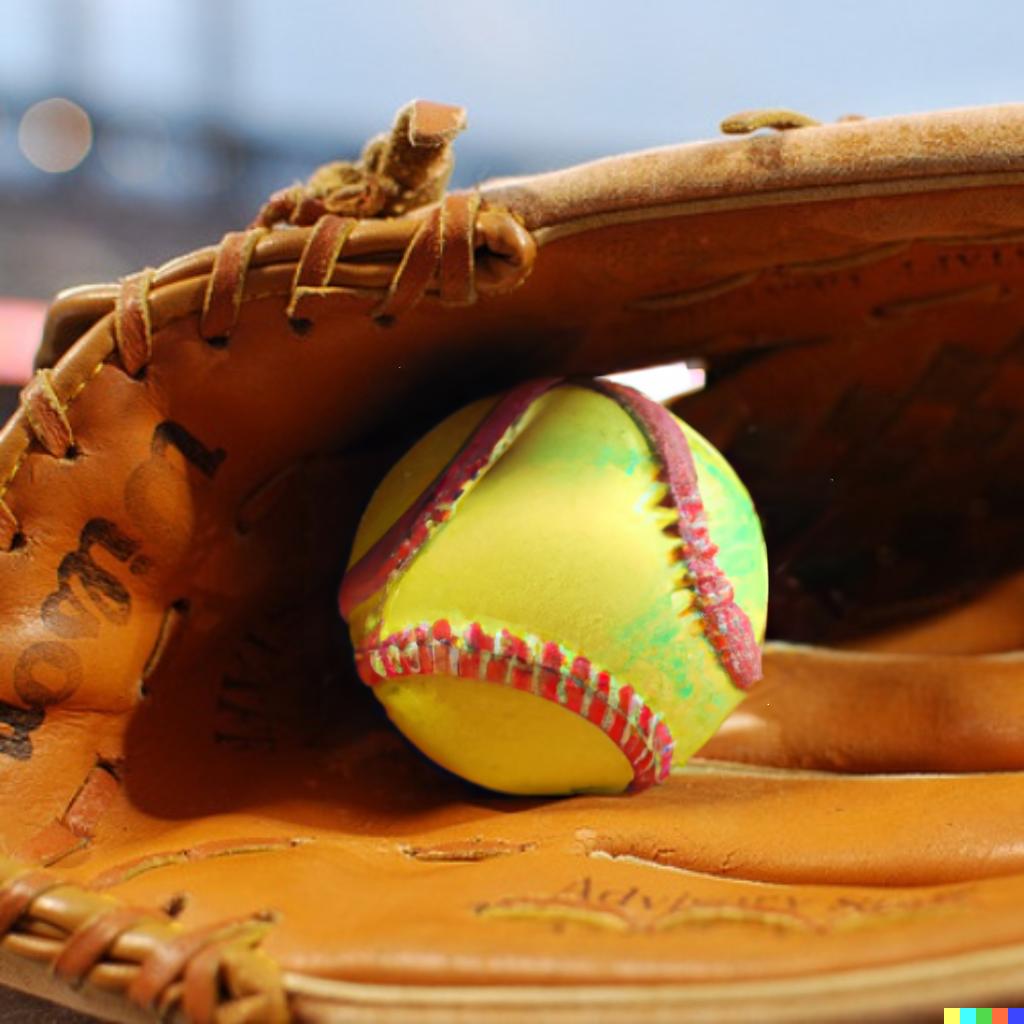}};
        \node [font=\Huge] at (25.5,7.5) {$\boldsymbol{x}_{t-1}^{(t)'}$};
        \node [font=\Huge] at (31.5,7.5) {$\boldsymbol{x}_{0}^{(t)'}$};
        \draw [line width=2.5pt] (33,10.5) rectangle  node {\Huge } (39,4.5);
        \node at (-12.5,18.5) {\includegraphics[width=6cm]{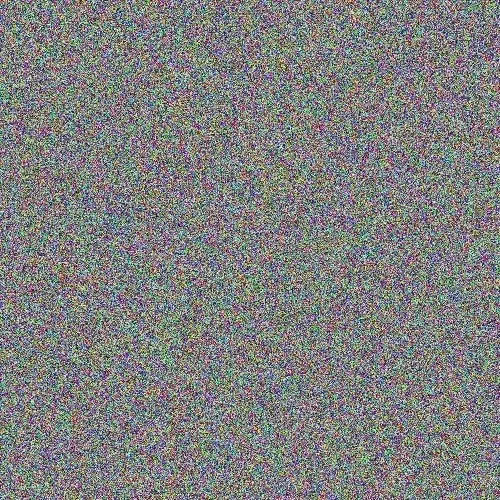}};
        \node at (-12.5,26) {\includegraphics[width=6cm]{figs/concept/noise}};
        \node at (-12.5,6.5) {\includegraphics[width=6cm]{figs/concept/noise}};
        \draw [line width=2.5pt, short,preaction={draw,white,-,double=white,line width=0.5cm}] (12,23) to (12,14) to (7,16) to (2,14) to (2,23) to (7,21) to (12, 23);
        \draw[line width=5pt,-latex] (0,15.25) to (1.5,15.25);
        \draw [line width=2.5pt, short,preaction={draw,white,-,double=white,line width=0.5cm}] (12,12) to (12,3) to (7,5) to (2,3) to (2,12) to (7,10) to (12, 12);
        \path [line width=2.5pt, rounded corners = 15.0] (4,14) rectangle  node {\hlc[myedit]{Editing Method}} (10,12);
        \draw[line width=5pt,-latex,preaction={draw,myedit,-,double=myedit,double distance=5pt}] (7,11.5) to (7,9.5);
        \draw[line width=5pt,-latex,preaction={draw,myedit,-,double=myedit,double distance=5pt}] (8,11.5) to (8,9.5);
        \draw[line width=5pt,-latex,preaction={draw,myedit,-,double=myedit,double distance=5pt}] (6,11.5) to (6,9.5);
        \draw[line width=5pt,-latex] (7,16.5) to (7,14.5);
        \draw[line width=5pt,-latex] (6,16.5) to (6,14.5);
        \draw[line width=5pt,-latex] (8,16.5) to (8,14.5);
        \draw[line width=5pt,-latex] (9,16.5) to (9,14.5);
        \draw[line width=5pt,-latex] (5,16.5) to (5,14.5);
        \draw[line width=5pt,-latex,preaction={draw,myedit,-,double=myedit,double distance=5pt}] (5,11.5) to (5,9.5);
        \draw[line width=5pt,-latex,preaction={draw,myedit,-,double=myedit,double distance=5pt}] (9,11.5) to (9,9.5);
        \draw [line width=2.5pt] (-9.5,21.5) rectangle  node {\Huge } (-15.5,15.5);
        \draw [line width=2.5pt] (-9.5,29) rectangle  node {\Huge } (-15.5,23);
        \draw[line width=5pt,-latex] (0,18.5) to (1.5,18.5);
        \draw [line width=2.5pt] (-15.5,9.5) rectangle  node {\Huge } (-9.5,3.5);
        \node [font=\Huge] at (-1.5,6.5) {$\boldsymbol{x}_{t}^{(t)'}$};
        \node [font=\Huge] at (-1.5,18.5) {$\boldsymbol{x}_{t}^{(s)'}$};
        \node [font=\Huge] at (-1.5,15.25) {$\boldsymbol{c}^{(s)}$};
        \node [font=\Huge] at (-7.5,6.5) {$\boldsymbol{x}_{T}^{(t)'}$};
        \node [font=\Huge] at (-7.5,18.5) {$\boldsymbol{x}_{T}^{(s)'}$};
        \node [font=\Huge] at (-1.5,10.75) {$\boldsymbol{c}^{(t)}$};
        \node [font=\Huge] at (-1.5,26) {$\boldsymbol{x}_{t}^{(s)^{*}}$};
        \node [font=\Huge] at (-7.5,26) {$\boldsymbol{x}_{T}^{(s)^{*}}$};
        \draw[line width=5pt,-latex,dashed,preaction={draw,myrec,-,double=myrec,line width=0.5cm}] (-6,6.5) to (-3,6.5);
        \draw[line width=5pt,latex-,dashed,preaction={draw,myinv,-,double=myinv,line width=0.5cm}] (-6,26) to (-3,26);

        \draw[line width=5pt,latex-,preaction={draw,myinv,-,double=myinv,line width=0.5cm}] (0,26) --node[sloped, anchor=center,above=12pt]{\Huge $\mathrm{DDIM}_{\mathrm{inv}}(\boldsymbol{x}_{t-1}^{(s)^{*}},\boldsymbol{c}^{(s)},\emptyset,w=1)$} (23.5,26);

        \draw[line width=5pt,-latex,preaction={draw,myedit,-,double=myedit,line width=0.75cm}] (0,10.75) to (1.5,10.75);

        \draw[line width=5pt,-latex,preaction={draw,myedit,-,double=myedit,line width=3cm}] (0,6.5) to (1.5,6.5);

        \draw [line width=2.5pt, rounded corners = 15.0, fill=myedit] (3.5,9) rectangle  node {\Huge \hlc[myedit]{$\mathcal{M}_{t}^{(t)'}$}} (10.5,6);

        \node [font=\Huge] at (7,1.5) {\hlc[myedit]{Modified Input} $\rightarrow$ \hlc[mynoise]{Noise Prediction} $\rightarrow$ \hlc[mysampling]{Sampling}};

        \draw[line width=5pt,-latex,preaction={draw,white,-,double=white,line width=5cm}] (11,18.5) --node[sloped, anchor=center,above=3pt]{\Huge Thres.} (15,18.5);
        \path[line width=5pt,-latex,preaction={draw,white,-,double=white,line width=5cm}] (10.5,18.5) --node[sloped, anchor=center,above=3pt]{\Huge } (11,18.5);
        \draw [line width=2.5pt, rounded corners = 15.0,fill=white,solid] (3.5,20) rectangle  node {\Huge $\mathcal{M}_{t}^{(s)'}$} (10.5,17);
                \draw[line width=5pt,-latex] (18.5,14) to (18.5,11);

                \draw[line width=5pt,-latex,preaction={draw,mysampling,-,double=mysampling,line width=3cm}] (22,7.5) to (23.5,7.5);
                    \draw[line width=5pt,-latex,dashed,preaction={draw,myrec,-,double=myrec,line width=0.5cm}] (27,7.5) to (30,7.5);
                    \draw[line width=5pt,latex-,dashed,preaction={draw,myinv,-,double=myinv,line width=0.5cm}] (27,26) to (30,26);

                    \draw[line width=5pt,dashed] (27,18.5) to (30,18.5);
                    \draw[line width=5pt,dashed] (-3,18.5) to (-6,18.5);

                    \draw[line width=5pt,-latex,preaction={draw,myrec,-,double=myrec,line width=0.5cm}] (25.5,24.5) to (25.5,20);
                    \draw[line width=5pt,-latex,preaction={draw,myrec,-,double=myrec,line width=0.5cm}] (31.5,24.5) to (31.5,20);
                    \draw[line width=5pt,-latex,preaction={draw,myrec,-,double=myrec,line width=0.5cm}] (-1.5,24.5) to (-1.5,20);
                    \draw[line width=5pt,-latex,preaction={draw,myrec,-,double=myrec,line width=0.5cm}] (-7.5,24.5) to (-7.5,20);
                    \draw[line width=5pt,-latex,preaction={draw,myrec,-,double=myrec,line width=0.5cm}] (-7.5,17) to (-7.5,8);
    \end{circuitikz}
            }%

    \caption{Eta Inversion for real image editing. We design an optimal time- and region-dependent $\eta$ function to inject real noise in the target path to improve editability. %
    }
    \label{fig:ours}
    \end{figure}

\section{Proposed Inversion Method}
In this section, we discuss how to design an optimal $\eta$ function based on our theoretical findings. Our full Eta Inversion algorithm is depicted in \cref{alg:eta}. \cref{fig:ours} provides an overview of our method.

\subsection{Exploring the Optimal $\eta$ Function}

\subsubsection{Time-dependent $\eta$}
Image editing aims to modify high-level features (e.g., objects) while preserving low-level features (e.g., background details). High-level features are generated early (near timestep $T$), and low-level features are generated later (near timestep $0$) \cite{ho2020denoising, song2020denoising, yue2024exploring}. 
Therefore, we employ a larger $\eta_t$ value initially to edit high-level features and a smaller $\eta_t$ value later to maintain finer details, aligning with \cref{prop1,prop3} by progressively reducing $\eta_t$ for smaller timesteps.

\subsubsection{Region-dependent $\eta$ (Masked $\eta$)}
To improve editing, we employ a region-dependent $\eta$ inspired by existing editing methods~\cite{hertz2022prompt, cao2023masactrl, tumanyan2023plug} that use attention maps to propagate information from the source to the target path. Concurrent with our method, DiffEditor  \cite{mou2024diffeditor} also employs a region-dependent $\eta$ but requires an input mask. Our method, on the other hand, uses cross-attention maps to selectively apply a non-zero $\eta$ to targeted regions without requiring external input. %
By leveraging the cross-attention map for an object and applying noise ($\eta > 0$) only where the map exceeds a threshold (\cref{fig:ours}), we can edit the object while preserving the background. Adjusting the threshold changes the extent of the editing by modifying the region addressed.

\subsection{Improving the Injected Noise $\boldsymbol\epsilon_{\mathrm{add}}$}

\pgfplotsset{scaled y ticks=false}
\pgfmathdeclarefunction{norm}{3}{%
  \pgfmathparse{1 / (#3 * sqrt(2 * pi)) * exp(-0.5 * ((#1-#2) / #3) ^ 2)}%
}

\begin{figure}[h]
    \centering
\begin{subfigure}[t]{0.48\linewidth}
    \centering
        \resizebox{0.6\linewidth}{!}{%
{\scalefont{2.2}
\begin{tikzpicture}

    \begin{axis}[
        axis lines = left,
        xmin=-5, xmax=5,
        ymax=0.7,
        xlabel style={at={(0.5,-1ex)}},
        ylabel style={at={(-2ex, 0.5)}},
        xlabel = {$\epsilon_{\mathit{add}}^{t=1}$},
        ylabel = {density},
        reverse legend,
        ]

        \addplot[
                ybar interval, 
                color = orange,
                opacity = 0.1,
                fill = orange
            ] table {figures/noise_histo_etainv.dat};
        \addplot [very thick, draw=orange,  domain=-5:5, samples=100, smooth]
                (x, {norm(x, 0.0, 1)})
        node[above,sloped,pos=0.42,color=orange]{$\sigma=1$};
        
        \addplot[
                ybar interval, 
                color = ForestGreen,
                opacity = 0.3,
                fill = ForestGreen
            ] table {figures/noise_histo_ddpminv.dat};
        \addplot [very thick, draw=ForestGreen,  domain=-5:5, samples=100, smooth]
                (x, {norm(x, 0.0, 2.26)})
        node[above,sloped,pos=0.8,color=ForestGreen]{$\sigma=2.26$};
        \legend{,EtaInv,, DDPM Inv.}
    \end{axis}
\end{tikzpicture}}
}%
\caption{Histograms of noise $\epsilon_{\mathit{add}}^{t=1}$ for one sample image from PIE-Bench~\cite{ju2023direct} at $t=1$. Only Eta Inversion shows a true Gaussian shape with a standard deviation ($\sigma$) of 1.}
\label{fig:noise_histo_ddpminv}
\end{subfigure}
\hfill
\begin{subfigure}[t]{0.48\linewidth}
    \centering
        \resizebox{0.6\linewidth}{!}{%
{\scalefont{2.2}
\begin{tikzpicture}

    \begin{axis}[
        every axis y label/.style={at={(current axis.west)},left=5mm},
        axis lines = left,
        xtick={0,0.5,1},
        xticklabels={$T$,$0.5T$,$0$},
        ymax=9,
        xlabel style={at={(0.5,-1ex)}},
        ylabel style={at={(-1ex, 0.5)}},
        xlabel = {$t$},
        ylabel = {$\sigma$},
        ]

        \addplot[
            color = ForestGreen,
        ] table {figures/noise_line_ddpminv.dat};

        \addplot[
            color = orange,
        ] table {figures/noise_line_etainv.dat};

        \legend{DDPM Inv., EtaInv}
    \end{axis}
\end{tikzpicture}}
}%
\caption{The noise standard deviation $\sigma(\epsilon_{\mathit{add}}^t)$, across timesteps, averaged over 100 images. Only Eta Inversion consistently maintains $\sigma=1$ for all timesteps.}
\label{fig:noise_graph}
\end{subfigure}
        \caption{ 
        Noise distribution of DDPM Inversion and Eta Inversion. Eta Inversion applies unit Gaussian noise, unlike DDPM Inversion, which applies noise such that $x_{t}' - x_{t}^* = 0$.
        }
    \label{fig:noise_histo}
    \end{figure}
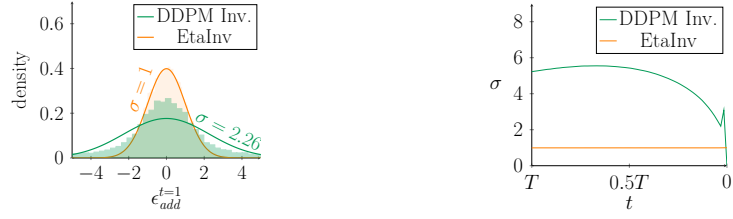

Although methods like CycleDiffusion \cite{wu2022unifying}, DDPM Inversion \cite{huberman2023edit}, and Direct Inversion \cite{ju2023direct} ensure perfect reconstruction by closing the gap between the forward and backward source path with ${\boldsymbol{x}_{t-1}^{(s)'}} \leftarrow \boldsymbol{x}_{t-1}^{(s)^{*}}$, they can produce unexpected editing results if the distance $\|\boldsymbol{x}_{t-1}^{(s)^{*}}-\mathrm{DDIM}(\boldsymbol{x}_{t}^{(s)'},\boldsymbol{c}^{(s)},\eta_t)\|$ is too large. This issue arises because such compensation violates the properties of diffusion models. Specifically, CycleDiffusion \cite{wu2022unifying} and DDPM Inversion \cite{huberman2023edit} calculate $\boldsymbol\epsilon_{\mathrm{add}}$ to meet the condition $\boldsymbol{x}_{t-1}^{(s)^{*}}=\mathrm{DDIM}(\boldsymbol{x}_{t}^{(s)'},\boldsymbol{c}^{(s)},\eta_t)$. However, this $\boldsymbol\epsilon_{\mathrm{add}}$ deviates from a Gaussian distribution, which adversely impacts image generation and editing.

Our approach also employs the compensation strategy used in \cite{wu2022unifying, huberman2023edit, ju2023direct} to ensure perfect reconstruction but improves on it by sampling $\boldsymbol\epsilon_{\mathrm{add}}$ directly from a Gaussian distribution (\cref{fig:noise_histo}). To minimize the forward-backward gap and reduce the necessary compensation, we sample $\boldsymbol\epsilon_{\mathrm{add}}$ multiple times and select the noise that minimizes this gap using $\argmin \|\boldsymbol{x}_{t-1}^{(s)^{*}}-\mathrm{DDIM}(\boldsymbol{x}_{t}^{(s)'},\boldsymbol{c}^{(s)},\eta_t; \boldsymbol\epsilon_{\mathrm{add}})\|$ (\cref{alg:eta} Backward L. \ref{alg:eta:noise1}, \ref{alg:eta:noise2}).

\begin{algorithm}[H]
    \caption{Eta Inversion}
    \begin{minipage}[t]{0.4\linewidth}
    \vspace{0pt}
    \textbf{Input}: $\boldsymbol{x}_{0}^{(s)}$ 
    
    \textbf{Output}: reconstructed $\boldsymbol{x}_{0}^{(s)'}$, 
    
    \hskip\algorithmicindent edited $\boldsymbol{x}_{0}^{(t)'}$ \vspace{2mm} \hrule \vspace{2mm}
    \textbf{Forward}:%
    \begin{algorithmic}[1] %
    \STATE  \textbf{initialize} $\boldsymbol{x}_{0}^{(s)^{*}} \leftarrow  \boldsymbol{x}_{0}^{(s)}$
    \FOR{$t = 0, 1, ..., T-1$}
    \STATE $\boldsymbol{x}_{t+1}^{(s)^{*}} \leftarrow \mathrm{DDIM}_{\mathrm{inv}}($\par
    \hskip\algorithmicindent $\boldsymbol{x}_{t}^{(s)^{*}},\boldsymbol{c}^{(s)},w=1)$
    \ENDFOR
    \STATE \textbf{return} $\boldsymbol{x}_{T}^{(s)^{*}}, \boldsymbol{x}_{T-1}^{(s)^{*}}, ... ,\boldsymbol{x}_{0}^{(s)^{*}}$
    \end{algorithmic}   
    \end{minipage}
    \vrule
    \vspace{2mm}
    \begin{minipage}[t]{0.58\linewidth}
    \vspace{0pt}
    \vspace{1mm}
    \textbf{Backward}:
    \begin{algorithmic}[1] %
    \STATE \textbf{initialize} $\boldsymbol{x}_{T}^{(s)'} ,\boldsymbol{x}_{T}^{(t)'} \leftarrow  \boldsymbol{x}_{T}^{(s)^{*}}$
    \STATE \textbf{define} time- and region-dependent $\eta_t$%
    \FOR{$t = T, T-1, ..., 1$}    
    \STATE $\boldsymbol{x}_{t-1}^{(s)'}(\boldsymbol\epsilon_{\mathrm{add}}) \coloneqq \mathrm{DDIM}($\par
    \hskip\algorithmicindent $\boldsymbol{x}_{t}^{(s)'}, \boldsymbol{c}^{(s)}, \eta_t, w=7.5; \boldsymbol\epsilon_{\mathrm{add}})$
    \label{alg:eta:ddim1}
    \STATE $\{\boldsymbol\epsilon\} \leftarrow \textrm{sample noise } n \textrm{ times} \sim \mathcal{N}(0,I) $
    \label{alg:eta:noise1}
    \STATE $\boldsymbol\epsilon_{\mathrm{min}} \leftarrow \argmin_{\boldsymbol\epsilon_{\mathrm{add}} \in \{\boldsymbol\epsilon\}} ||\boldsymbol{x}_{t-1}^{(s)^{*}}-\boldsymbol{x}_{t-1}^{(s)'}(\boldsymbol\epsilon_{\mathrm{add}})||$
    \label{alg:eta:noise2}
    \STATE $\boldsymbol{x}_{t-1}^{(s)'} \leftarrow \boldsymbol{x}_{t-1}^{(s)^{*}}$
    \label{alg:eta:dirinv}
    \STATE $\boldsymbol{x}_{t-1}^{(t)'} \leftarrow\mathrm{DDIM}(\boldsymbol{x}_{t}^{(t)'}, \boldsymbol{c}^{(t)},\eta_t, w =7.5; \boldsymbol\epsilon_{\mathrm{min}})$  
    \label{alg:eta:ddim2}
    \ENDFOR
    \STATE \textbf{return} $\boldsymbol{x}_{0}^{(s)'}, \boldsymbol{x}_{0}^{(t)'}$ (satisfying $\boldsymbol{x}_{0}^{(s)'} = \boldsymbol{x}_{0}^{(s)}$)
    \end{algorithmic}
    \end{minipage}
    \label{alg:eta}
    \vspace{-2mm}
    \end{algorithm}

\section{Experiments}

    \begin{table}[h]
        \caption{Evaluation results of inversion methods with various editing methods on PIE-Bench. Our method achieves the highest CLIP scores in most cases while maintaining relatively low structural similarity scores. \textbf{EtaInv (1)} and \textbf{EtaInv (2)} employ a region-dependent $\eta$, which further helps improve structural similarity compared to their versions without mask (w/o mask).}
        \centering
        \resizebox{1.0\linewidth}{!}{%
        \small
        \centering
        \setlength{\tabcolsep}{4.25pt}
        \sisetup{table-auto-round}
        \begin{tabular}{@{}l|*{6}{S[table-format=2.2,drop-exponent = true,fixed-exponent = -2,exponent-mode = fixed,]}|*{9}{S[table-format=2.2,drop-exponent = true,fixed-exponent = -2,exponent-mode = fixed,]}@{}}
        \toprule
        Metric $(\times {10}^{2})$&\multicolumn{3}{c}{CLIP similarity $\uparrow$} &\multicolumn{3}{c}{CLIP accuracy $\uparrow$} & \multicolumn{3}{c}{DINO $\downarrow$} & \multicolumn{3}{c}{LPIPS $\downarrow$}& \multicolumn{3}{c}{BG-LPIPS $\downarrow$}\\ 
        \cmidrule(r){1-1}\cmidrule(lr){2-4} \cmidrule(lr){5-7} \cmidrule(lr){8-10}\cmidrule(lr){11-13}\cmidrule(lr){14-16} 
        Method & {PtP} & {PnP} & {Masa} & {PtP} & {PnP}& {Masa}& {PtP} & {PnP} &{Masa} &{PtP}&{PnP}&{Masa} & {PtP}& {PnP}& {Masa} \\ \midrule
        
DDIM Inv. \cite{song2020denoising} & 0.3098963077579226 & 0.2937987298624856 & \cellcolor{mygold}0.3074240440130233 & 0.9457142857142857 & 0.8557142857142858 & \cellcolor{mygold}0.95 & 0.06942023908586374 & 0.061101810900228364 & 0.07545868670301778 & 0.4664535358973912 & 0.4084159646289689 & 0.4767742032238415 & 0.24970233517599158 & 0.20844711300505359 & 0.25374347115294116 \\
Null-text Inv. \cite{mokady2023null} & 0.3073391259780952 & 0.30753638380340165 & 0.30068163248045104 & 0.9257142857142857 & 0.9042857142857142 & 0.93 & \cellcolor{mybronze}0.01244411910651 & 0.03266450203722343 & 0.04494556720335303 & \cellcolor{mybronze}0.15125668101278 & 0.3051460664719343 & 0.2501945654968066 & \cellcolor{mybronze}0.05694563213929 & 0.14172093506670666 & 0.11923769590272319 \\
NPI \cite{miyake2023negative} & 0.3048740061053208 & 0.307270291192191 & 0.29544316662209374 & 0.9271428571428572 & 0.9128571428571428 & 0.8728571428571429 & 0.020258124567002857 & 0.026717880195771742 & 0.045089687282951284 & 0.19275611174692 & 0.26182728880750283 & 0.2603068265744618 & 0.08241629246801105 & \cellcolor{mybronze}0.11566226213575 & 0.12409586446343122 \\
ProxNPI \cite{han2023improving} & 0.3031321707155023 & 0.3053714159343924 & 0.29493104496172495 & 0.9242857142857143 & 0.9071428571428571 & 0.8814285714285715 & 0.019193061842317026 & \cellcolor{mybronze}0.02286067415328 & 0.03921245703839564 & 0.17685471942116107 & \cellcolor{mysilver}0.21758586830326 & \cellcolor{mybronze}0.22985350452895 & 0.07764945874233879 & \cellcolor{mysilver}0.09565648488966 & \cellcolor{mybronze}0.10990528252940 \\
EDICT   \cite{wallace2023edict} & 0.2927953909763268 & 0.2469451144444091 & 0.29676960666264807 & 0.9271428571428572 & 0.6342857142857142 & 0.9328571428571428 & \cellcolor{mygold}0.0041248430355751 & 0.042556588900874236 & \cellcolor{mysilver}0.00787121037403 & \cellcolor{mygold}0.0664862139676032 & 0.30217926079939517 & \cellcolor{mygold}0.0858801200972603 & \cellcolor{mygold}0.0309686089570043 & 0.14956702334493457 & \cellcolor{mysilver}0.04201239570757 \\
DDPM Inv. \cite{huberman2023edit} & 0.2942695520392486 & 0.30258001885243824 & 0.2956833677419594 & 0.9271428571428572 & 0.9485714285714286 & 0.93 & \cellcolor{mysilver}0.00416749393883 & \cellcolor{mygold}0.0104443566847060 & \cellcolor{mygold}0.0075142226933634 & \cellcolor{mysilver}0.06871223960737 & \cellcolor{mygold}0.1250114610046148 & \cellcolor{mysilver}0.08646675587764 & \cellcolor{mysilver}0.03273383083458 & \cellcolor{mygold}0.0584413428749030 & \cellcolor{mygold}0.0411543993033358 \\
Direct Inv. \cite{ju2023direct} & 0.3091879120469093 & 0.3131952231909548 & 0.30372176436441284 & 0.9471428571428572 & \cellcolor{mybronze}0.95142857142857 & \cellcolor{mysilver}0.94571428571428 & 0.012763145404169335 & \cellcolor{mysilver}0.02274692072533 & 0.04321882900914976 & 0.15788746399538858 & \cellcolor{mybronze}0.25591207100876 & 0.2691428067535162 & 0.06329163567514895 & 0.12984735441078166 & 0.1375892746267969 \\
\textbf{Eta Inversion (1)} & \cellcolor{mybronze}0.31007335275411 & 0.3132966907748154 & 0.30390003306525093 & \cellcolor{mybronze}0.95 & 0.9485714285714286 & 0.9314285714285714 & 0.013433658024296165 & 0.023384014786819795 & \cellcolor{mybronze}0.03661353356404 & 0.16576544786404285 & 0.27332021001194207 & 0.23115233425050974 & 0.06568354647575429 & 0.1405317166185601 & 0.11566979534219302 \\
\textbf{Eta Inversion (1) w/o mask} & 0.30998865040285245 & \cellcolor{mybronze}0.31341044721858 & 0.3037151514845235 & \cellcolor{mysilver}0.95285714285714 & 0.95 & 0.9271428571428572 & 0.013703394079181764 & 0.023664145005334702 & 0.03694477854961795 & 0.16847972245886922 & 0.2767769593266504 & 0.2340249225869775 & 0.06737695838071626 & 0.14328620091425753 & 0.117907582334592 \\
\textbf{Eta Inversion (2)} & \cellcolor{mysilver}0.31247970819473 & \cellcolor{mygold}0.3162599997861045 & \cellcolor{mysilver}0.30624428344624 & \cellcolor{mygold}0.9542857142857143 & \cellcolor{mysilver}0.95285714285714 & 0.9385714285714286 & 0.017007591246760316 & 0.03399889323993453 & 0.05237354726530612 & 0.21136770641963396 & 0.36594185411930086 & 0.33066985155854905 & 0.07995662840633096 & 0.18716991466014796 & 0.16644072913740404 \\
\textbf{Eta Inversion (2) w/o mask} & \cellcolor{mygold}0.3126584818959236 & \cellcolor{mysilver}0.31622652743543 & \cellcolor{mybronze}0.30624066016503 & \cellcolor{mygold}0.9542857142857143 & \cellcolor{mygold}0.9585714285714285 & \cellcolor{mybronze}0.94142857142857 & 0.018496008312795312 & 0.03581177577642458 & 0.05463061279883342 & 0.22771402167156338 & 0.384292874868427 & 0.34808154074209074 & 0.09027566545884058 & 0.20193475016559076 & 0.18027709391292385 \\

        \bottomrule
        \end{tabular}
        }%
        \label{tab:result_edit}
        \end{table}

    \begin{table}
        \caption{Evaluation results on the change-style subset of PIE-Bench. \textbf{EtaInv (3)} is optimized for style transfer and uses a larger $\eta$ to significantly outperform previous methods in terms of CLIP similarity. Since style transfer requires changing the whole image, \textbf{EtaInv (3)} does not use $\eta$ masking.}
\centering
        \resizebox{0.93\linewidth}{!}{%
        \small
        \centering
        \setlength{\tabcolsep}{4.25pt}
        \sisetup{table-auto-round}
        \begin{tabular}{@{}l|*{6}{S[table-format=2.2,drop-exponent = true,fixed-exponent = -2,exponent-mode = fixed,]}|*{6}{S[table-format=2.2,drop-exponent = true,fixed-exponent = -2,exponent-mode = fixed,]}@{}}
        \toprule
        Metric $(\times {10}^{2})$&\multicolumn{3}{c}{CLIP similarity $\uparrow$} &\multicolumn{3}{c}{CLIP accuracy $\uparrow$} & \multicolumn{3}{c}{DINO $\downarrow$} & \multicolumn{3}{c}{LPIPS $\downarrow$}\\ 
        \cmidrule(r){1-1}\cmidrule(lr){2-4} \cmidrule(lr){5-7} \cmidrule(lr){8-10}\cmidrule(lr){11-13}
        Method & {PtP} & {PnP} & {Masa} & {PtP} & {PnP}& {Masa}& {PtP} & {PnP} &{Masa} &{PtP}&{PnP}&{Masa}  \\ \midrule

        DDIM Inv. \cite{song2020denoising} & 0.31003336552530525 & 0.3020978208631277 & \cellcolor{mysilver}0.30665721725672 & 0.8375 & 0.7375 & \cellcolor{mysilver}0.8625 & 0.06469417617190629 & 0.06087852059863508 & 0.06898356971796601 & 0.46758218854665756 & 0.42627033982425927 & 0.47422250993549825 \\
Null-text Inv. \cite{mokady2023null} & \cellcolor{mysilver}0.32064904756844 & \cellcolor{mysilver}0.32791467607021 & 0.2996522953733802 & 0.8875 & \cellcolor{mysilver}0.9125 & \cellcolor{mysilver}0.8625 & \cellcolor{mybronze}0.01596638003247 & 0.03983156188041903 & 0.04067557628732175 & 0.19596332940272987 & 0.37259303145110606 & 0.25807287434581666 \\
NPI \cite{miyake2023negative} & 0.31437396369874476 & 0.32366949375718834 & 0.2959686605259776 & \cellcolor{mygold}0.925 & \cellcolor{mybronze}0.9 & 0.75 & 0.022183657478308305 & 0.032950097799766806 & 0.0403935914800968 & 0.22182300006970762 & 0.32263175016269086 & 0.27369909612461923 \\
ProxNPI \cite{han2023improving} & 0.30879320614039896 & 0.3165548058226705 & 0.2938237514346838 & 0.8625 & 0.85 & 0.8 & 0.020226734987227248 & \cellcolor{mybronze}0.02521418955875 & \cellcolor{mybronze}0.03359189067850 & \cellcolor{mybronze}0.19179117949679 & \cellcolor{mysilver}0.25466724950820 & \cellcolor{mybronze}0.22675854773260 \\
EDICT   \cite{wallace2023edict} & 0.2944970604032278 & 0.2532451439648867 & 0.2992744956165552 & \cellcolor{mysilver}0.9125 & 0.5875 & \cellcolor{mygold}0.9 & \cellcolor{mygold}0.0040652410432812 & 0.04217571843182668 & \cellcolor{mysilver}0.00726909328659 & \cellcolor{mygold}0.0668496058380696 & 0.3110722420969978 & \cellcolor{mysilver}0.08567899155896 \\
DDPM Inv. \cite{huberman2023edit} & 0.29779865927994253 & 0.3063500413671136 & 0.29780505541712043 & \cellcolor{mybronze}0.9 & \cellcolor{mybronze}0.9 & \cellcolor{mygold}0.9 & \cellcolor{mysilver}0.00429258551012 & \cellcolor{mygold}0.0097482536919414 & \cellcolor{mygold}0.0065682405082043 & \cellcolor{mysilver}0.06991250173887 & \cellcolor{mygold}0.1252831040183082 & \cellcolor{mygold}0.0850224569672718 \\
Direct Inv. \cite{ju2023direct} & \cellcolor{mybronze}0.31713469438254 & \cellcolor{mybronze}0.32505877781659 & \cellcolor{mybronze}0.30370388943701 & \cellcolor{mysilver}0.9125 & \cellcolor{mygold}0.9375 & \cellcolor{mybronze}0.85 & 0.016382710146717726 & \cellcolor{mysilver}0.02468865788541 & 0.03792487672762945 & 0.19873547358438373 & \cellcolor{mybronze}0.27218190636485 & 0.26557301003485917 \\
\textbf{Eta Inversion (3)} & \cellcolor{mygold}0.3285034172236919 & \cellcolor{mygold}0.3312120966613293 & \cellcolor{mygold}0.3081982402130961 & \cellcolor{mybronze}0.9 & 0.8625 & \cellcolor{mysilver}0.8625 & 0.04185123109491542 & 0.05160329109057784 & 0.06689334658440202 & 0.4775574363768101 & 0.5266271749511361 & 0.46165712028741834 \\

         \bottomrule
        \end{tabular}
        }%
        \label{tab:result_edit_style}    
    \end{table}

\begin{figure*}[t]
    \captionsetup[subfigure]{labelformat=empty}
    \centering
    
    \begin{minipage}{\linewidth}
        \centering
        \scriptsize{\textit{“an \textbf{orange} cat sitting on top of a fence”} $\rightarrow$ \textit{“a \textbf{black} cat sitting on top of a fence”}}
        \smallskip
        \end{minipage}
    
    \subfloat[]{%
    \includegraphics[width=.13\linewidth]{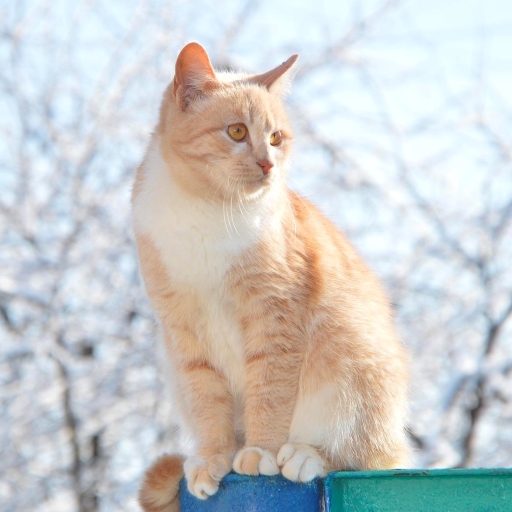}
    }%
    \subfloat[]{%
    \includegraphics[width=.13\linewidth]{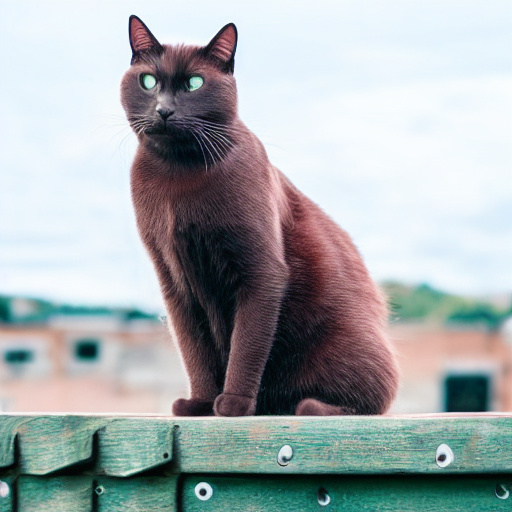}
    }%
    \subfloat[]{%
    \includegraphics[width=.13\linewidth]{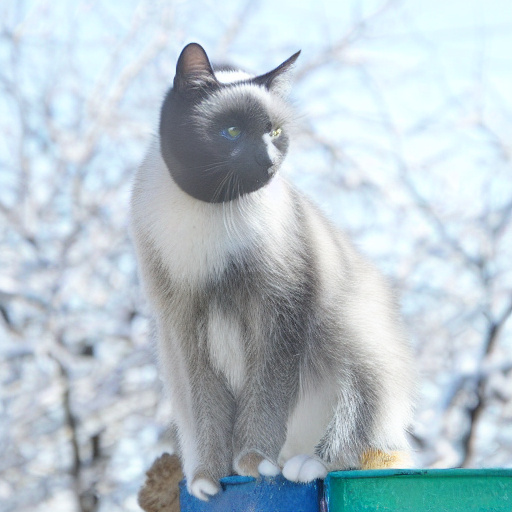}
    }%
    \subfloat[]{%
    \includegraphics[width=.13\linewidth]{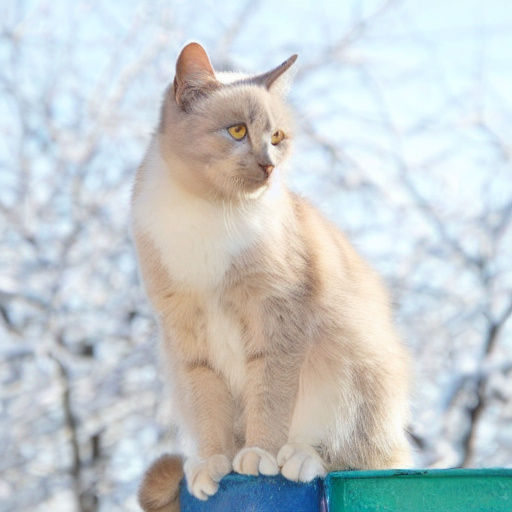}
    }%
    \subfloat[]{%
    \includegraphics[width=.13\linewidth]{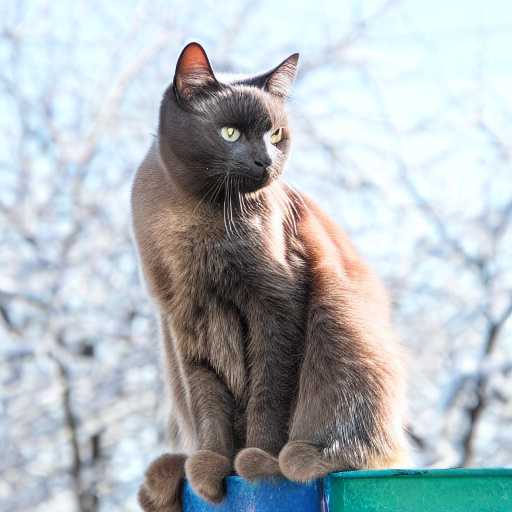}
    }%
    \subfloat[]{%
    \includegraphics[width=.13\linewidth]{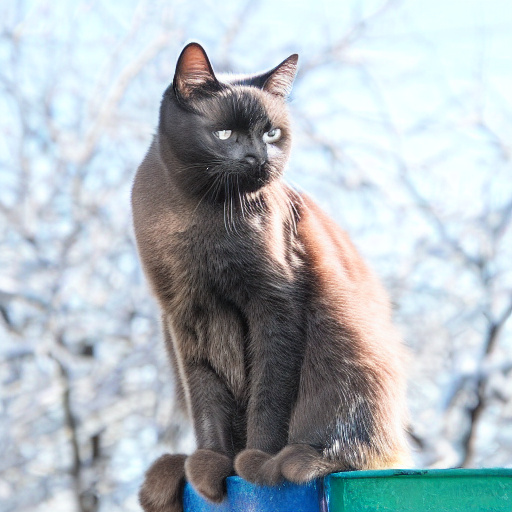}
    }%
    \subfloat[]{%
    \includegraphics[width=.13\linewidth]{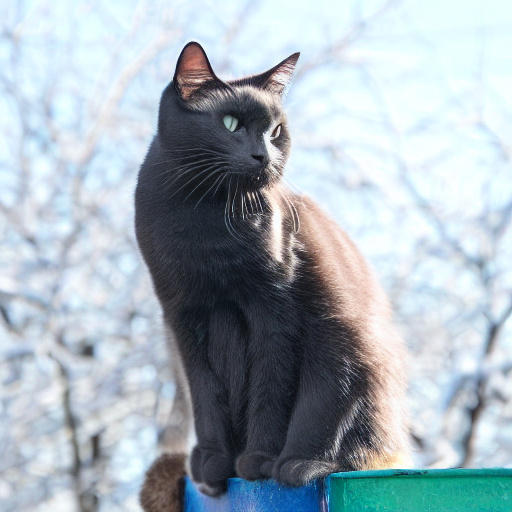}
    }
    \vspace*{-8pt}
    
    \begin{minipage}{\linewidth}
        \centering
        \scriptsize{\textit{“a woman in a \textbf{jacket} standing in the rain”} $\rightarrow$ \textit{“a woman in a \textbf{blouse} standing in the rain”}}
        \smallskip
        \end{minipage}
    
    \subfloat[]{%
    \includegraphics[width=.13\linewidth]{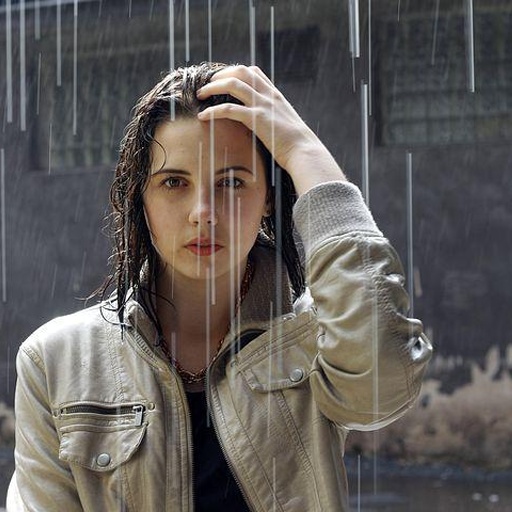}
    }%
    \subfloat[]{%
    \includegraphics[width=.13\linewidth]{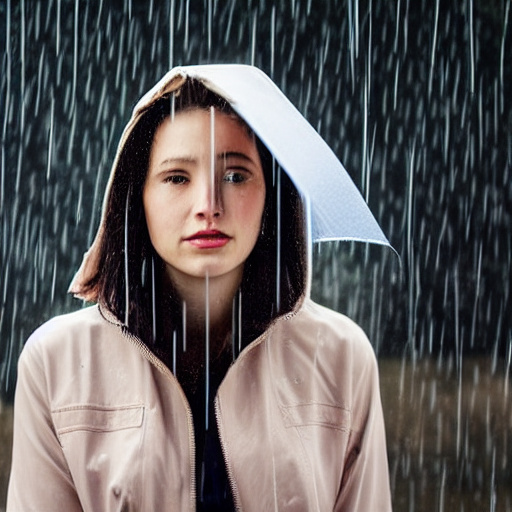}
    }%
    \subfloat[]{%
    \includegraphics[width=.13\linewidth]{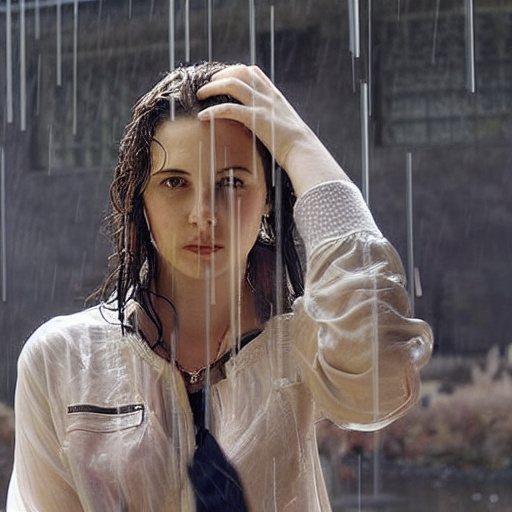}
    }%
    \subfloat[]{%
    \includegraphics[width=.13\linewidth]{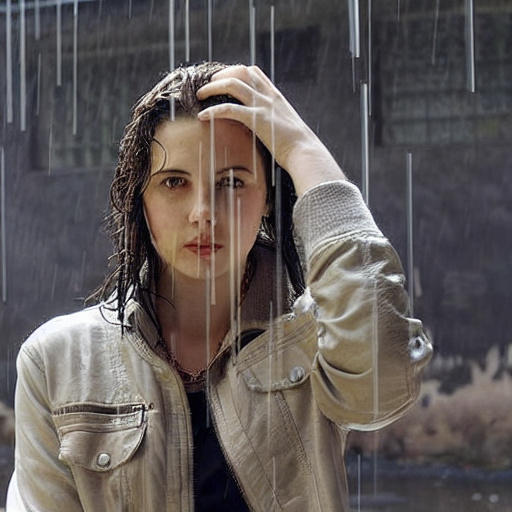}
    }%
    \subfloat[]{%
    \includegraphics[width=.13\linewidth]{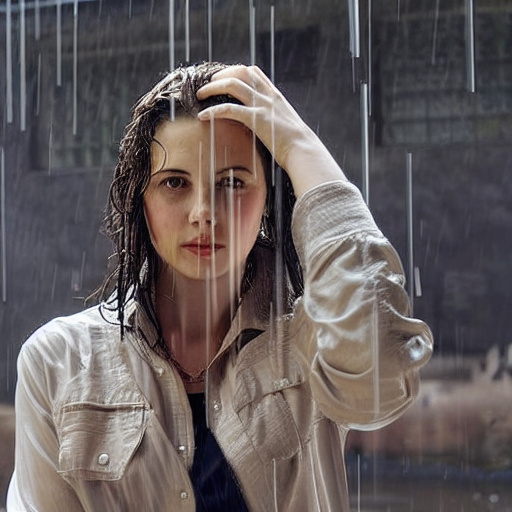}
    }%
    \subfloat[]{%
    \includegraphics[width=.13\linewidth]{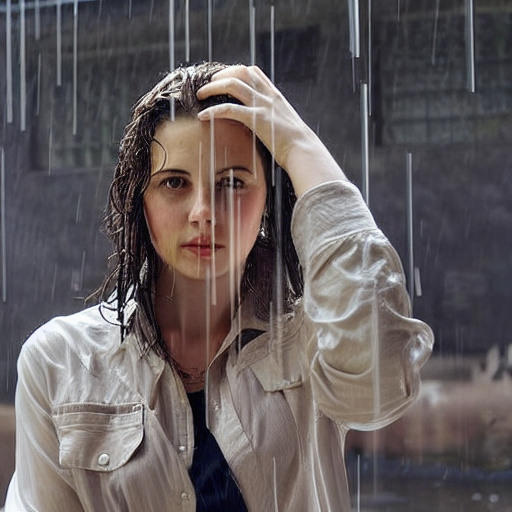}
    }%
    \subfloat[]{%
    \includegraphics[width=.13\linewidth]{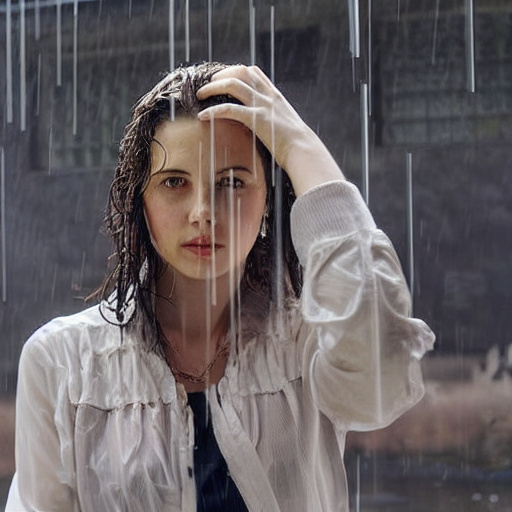}
    }
    
    \vspace*{-8pt}
    \begin{minipage}{\linewidth}
        \centering
        \scriptsize{\textit{“a \textbf{house} in the woods”} $\rightarrow$ \textit{“a \textbf{monster} in the woods”}}
        \smallskip
        \end{minipage}
    
    \subfloat[Source]{%
    \includegraphics[width=.13\linewidth]{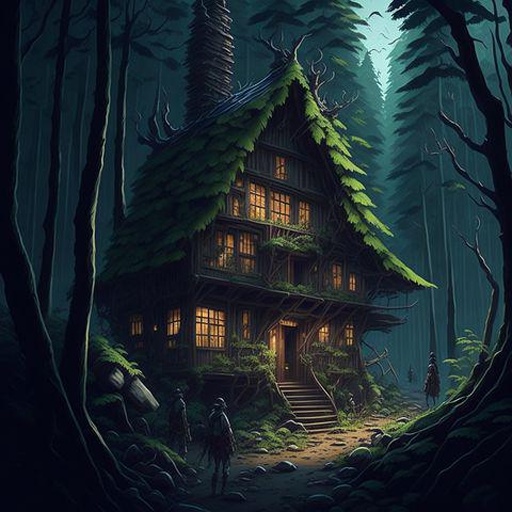}
    }%
    \subfloat[DDIM Inv.]{%
    \includegraphics[width=.13\linewidth]{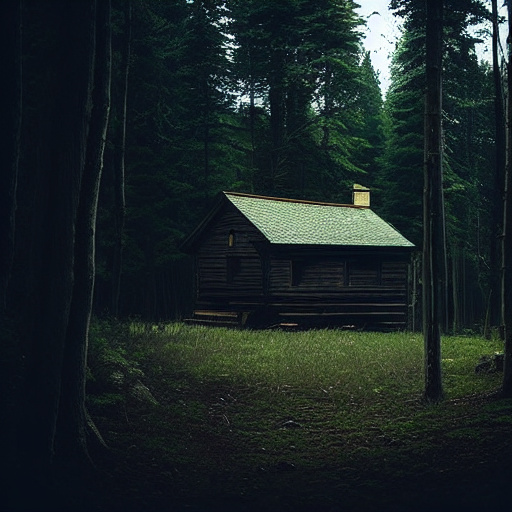}
    }%
    \subfloat[NTI]{%
    \includegraphics[width=.13\linewidth]{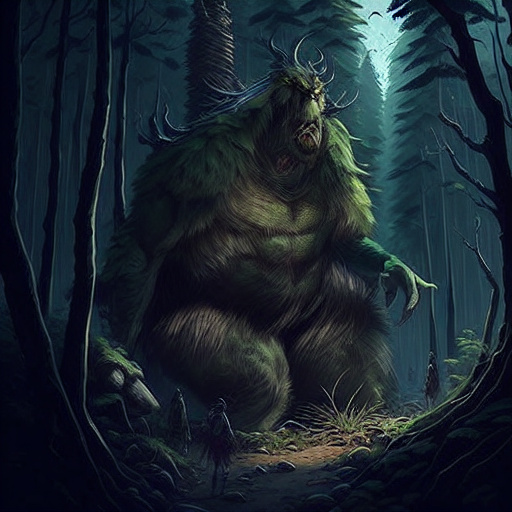}
    }%
    \subfloat[EDICT]{%
    \includegraphics[width=.13\linewidth]{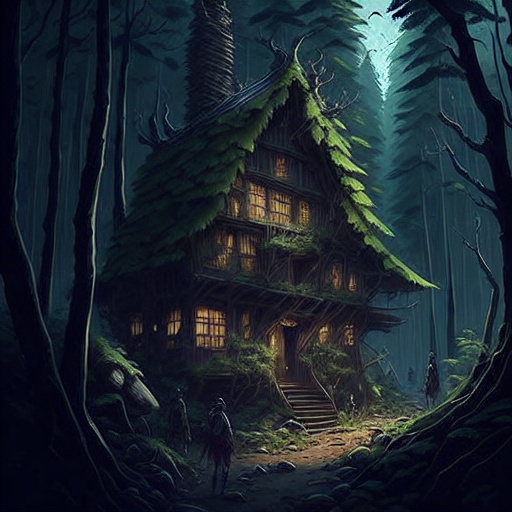}
    }%
    \subfloat[Direct Inv.]{%
    \includegraphics[width=.13\linewidth]{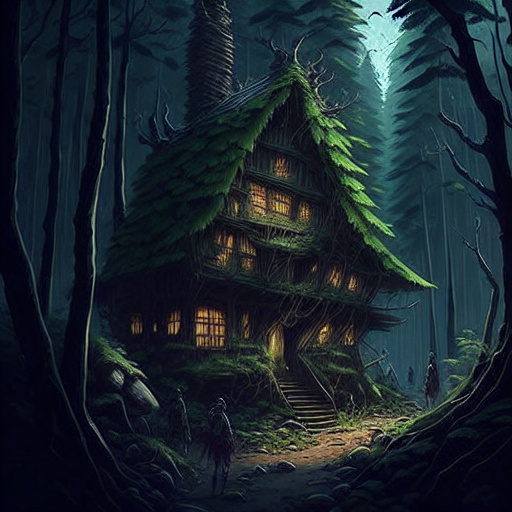}
    }%
    \subfloat[\textbf{EtaInv (1)}]{%
    \includegraphics[width=.13\linewidth]{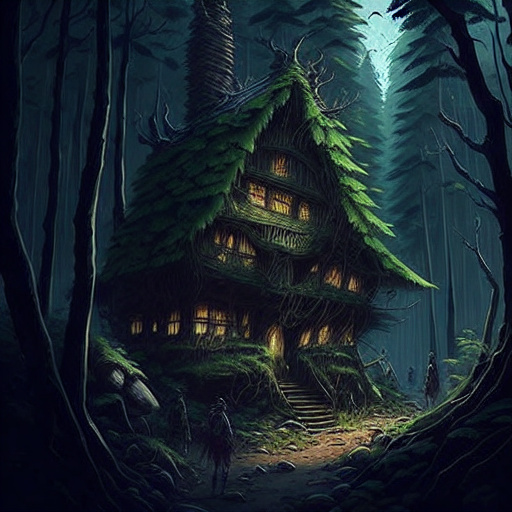}
    }%
    \subfloat[\textbf{EtaInv (2)}]{%
    \includegraphics[width=.13\linewidth]{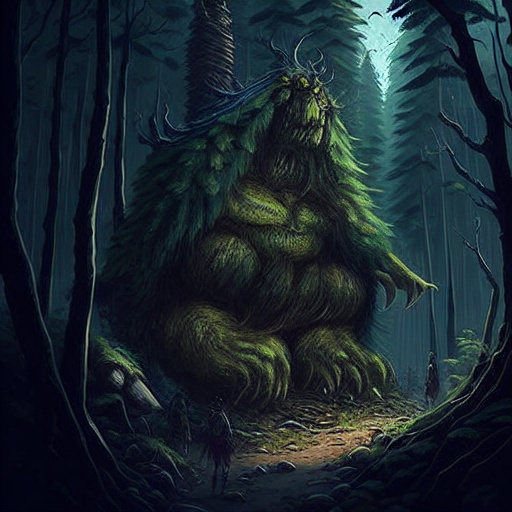}
    }
    \caption{Image editing qualitative results created with PtP \cite{hertz2022prompt} and various inversion methods. Our method, particularly \textbf{EtaInv (2)}, outperforms existing methods and edits the image to a greater degree. %
    We preserve the structure of the source image while correctly editing the image to match the target prompt.}
    \label{fig:good_images}
    \end{figure*}

\begin{figure}
\begin{subfigure}[b]{0.48\linewidth}
    \centering
    \hspace{-0.5cm}
    \begin{tikzpicture}[scale=0.6]
        \begin{axis}[
            axis lines = left,
            every axis y label/.style={at={(current axis.north west)},above=18mm},
            xlabel = {CLIP similarity ($\times 10^{2}$)},
            ylabel = {DINO ($\times 10^{2}$)},
            legend pos=north west,
            xtick={29.5,30,30.5,31},
            ytick={2,4,6},
            xmin=29, xmax=31.5,
            ymin=0, ymax=7,
            width=1.5\linewidth,
            height=1.0\linewidth
        ]
            \addplot[mark size=1pt,black,mark=*,mark options={fill=black},nodes near coords,only marks,
               point meta=explicit symbolic,
               visualization depends on={value \thisrow{anchor}\as\myanchor},
               every node near coord/.append style={anchor=\myanchor}
            ] table[meta=label] {
            x y label anchor
                30.99 6.94 {\small DDIM Inv.} north
                30.73 1.24 {\small NTI} south
                30.49 2.03 {\small NPI} south
                30.31 1.92 {\small ProxNPI} east
                29.28 0.41 {\small EDICT} south
                29.82 1.19 {\small DDPM $\textrm{Inv.}$} north
                30.92 1.28 {\small Dir. Inv.} north
            };\addlegendentry{PtP}
            \addplot[mark size=1pt,red,mark=*,mark options={fill=red},nodes near coords,only marks,
               point meta=explicit symbolic,
               visualization depends on={value \thisrow{anchor}\as\myanchor},
               every node near coord/.append style={anchor=\myanchor}
            ] table[meta=label] {
            x y label anchor
                31.01 1.34  {EtaInv (1)} west
                31.25 1.70   {EtaInv (2)} west
            };
        \end{axis}
        \end{tikzpicture}
        \caption{Visualization of CLIP text-image metrics (higher is better) and DINO structural similarity metrics (lower is better) on PIE-Bench for PtP.}
        \label{fig:clip_dino}
        \end{subfigure}
\hfill
\begin{subfigure}[b]{0.48\linewidth}
\centering
\scriptsize{\textit{“\textbf{car}”} $\rightarrow$ \textit{“\textbf{motorcycle}”}}
\smallskip
    
\includegraphics[width=.24\linewidth]{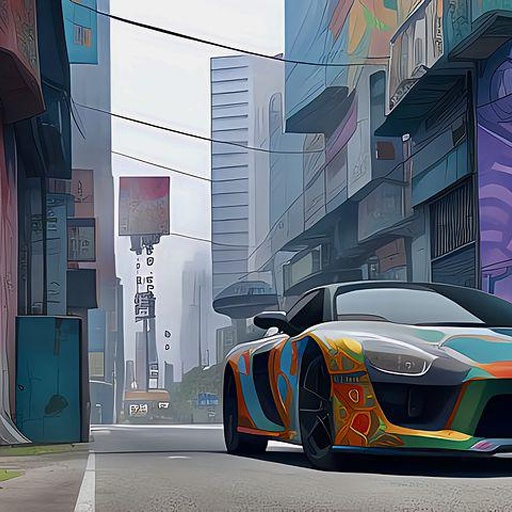}%
\hfill
\includegraphics[width=.24\linewidth]{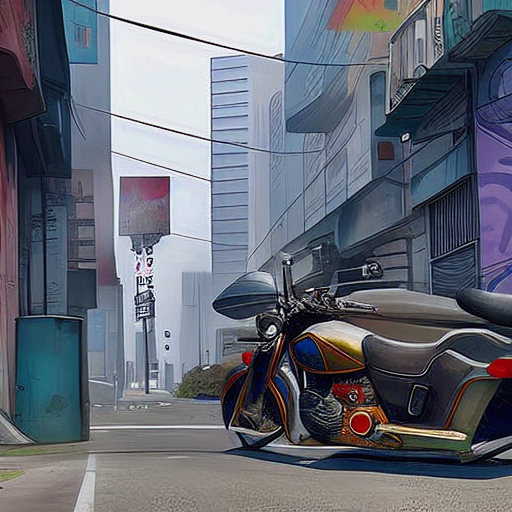}%
\hfill
\includegraphics[width=.24\linewidth]{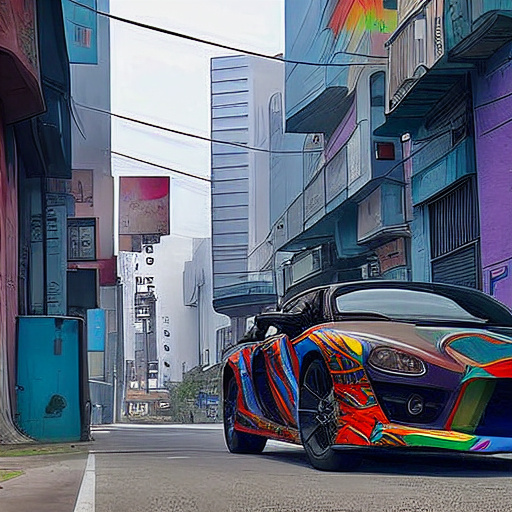}%
\hfill
\includegraphics[width=.24\linewidth]{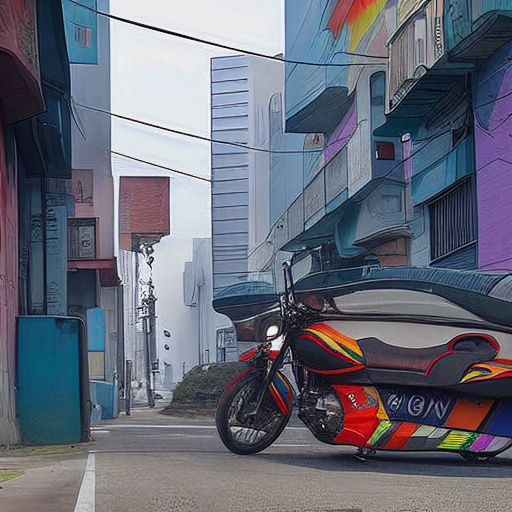}%

\vspace*{2pt}
\scriptsize{\textit{“\textbf{moon}”} $\rightarrow$ \textit{“\textbf{astronaut}”}}
\smallskip
    
\centering
\stackunder[4pt]{\includegraphics[width=.24\linewidth]{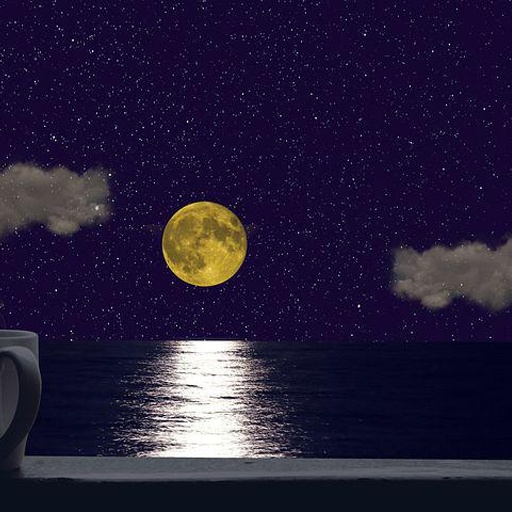}}{Source}%
\hfill
\stackunder[4pt]{\includegraphics[width=.24\linewidth]{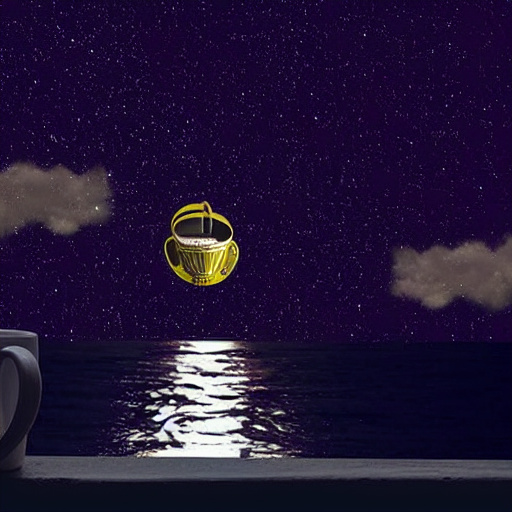}}{NTI}%
\hfill
\stackunder[4pt]{\includegraphics[width=.24\linewidth]{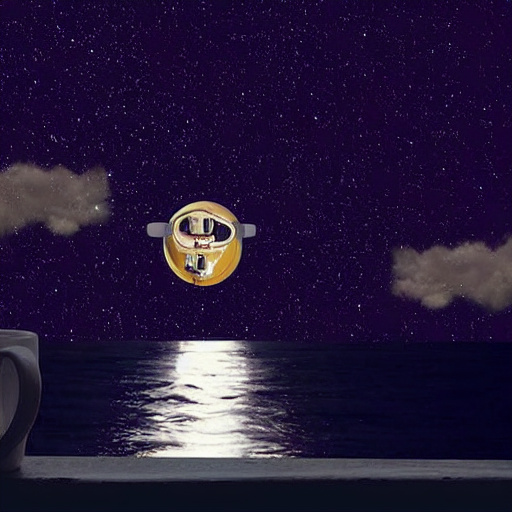}}{Dir. Inv.}%
\hfill
\stackunder[4pt]{\includegraphics[width=.24\linewidth]{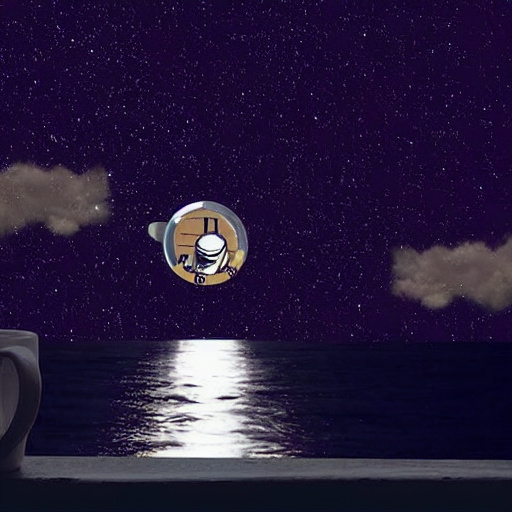}}{\textbf{EtaInv}}%
\medskip

\caption{Failure cases of our proposed method.}
\label{fig:bad_images}
\end{subfigure}
\caption{CLIP-DINO trade-off plot and failure cases.}
\end{figure}

\label{sec:exp}

\subsection{Setup}

We unify and re-implement existing diffusion inversion methods based on diffusers~\cite{von-platen-etal-2022-diffusers} and opt for Stable Diffusion v1.4 \cite{rombach2022high} with $T=50$ steps, using default settings for all methods.
For image editing, we apply PtP \cite{hertz2022prompt}, PnP \cite{tumanyan2023plug}, and MasaCtrl \cite{cao2023masactrl} on the dataset PIE-Bench \cite{ju2023direct}.
Evaluating image editing performance is challenging due to the lack of clear metrics. Prior works \cite{mokady2023null, tumanyan2023plug, ju2023direct} focused on two factors: (i.) text-image alignment, indicating the output image's faithfulness to the target prompt; and (ii.) structural similarity, showing how well the output image preserves the source image's structure.

For text-image alignment we use:
(i.) \textbf{CLIP similarity}: the dot product of normalized CLIP \cite{radford2021learning} embeddings of the target prompt and the output image; and
(ii.) \textbf{CLIP accuracy}: ratio of output images where the text-caption similarity with the target prompt is higher than with the source prompt \cite{parmar2023zero}. Text-caption similarity \cite{chefer2023attendandexcite}  is defined as the CLIP similarity between the target prompt and the BLIP-generated \cite{li2022blip} caption of the output image. For structural similarity we use:
(i.) \textbf{DINOv1 ViT} \cite{caron2021emerging}; 
(ii.) \textbf{LPIPS} \cite{zhang2018perceptual}; and 
(iii.) \textbf{BG-LPIPS} \cite{ju2023direct}, which computes LPIPS only on the background part (mask is provided by PIE-Bench).

We present our results on the complete PIE-Bench dataset, as well as on the change-style subset of PIE-Bench, which focuses exclusively on style transfer. In general, we found that a decreasing linear \(\eta\) schedule improves results, and that a larger \(\eta\) results in more editing, which aligns with our findings. Additionally, a larger noise sample count \(n\) achieves better structural similarity scores and more stable editing overall. We propose three distinct linear \(\eta\) functions, each optimized for a specific objective: structural similarity (EtaInv (1)), target prompt alignment (EtaInv (2)), and style transfer (EtaInv (3)). %
The \(\eta\) functions used, additional qualitative and quantitative results, and comprehensive hyperparameter grid search results are included in the supplementary materials.

\subsection{PIE-Bench Results}

\cref{tab:result_edit} presents our results on PIE-Bench with EtaInv (1) and (2).
For PtP, our method balances text-image alignment and structural similarity, achieving the highest CLIP text-image score and a low structural similarity score. PnP also shows our method as the best in CLIP similarity and accuracy. While our structural metrics are inferior, a too low score may indicate insufficient editing (like EDICT's PtP result in \cref{fig:good_images}). Lastly, with MasaCtrl, we achieve the second-best CLIP similarity but worse structural similarity compared to other techniques. \cref{fig:clip_dino} visualizes the trade-off between text-image and structural similarity for PtP (see supplementary for PnP and MasaCtrl).

\cref{fig:good_images} showcases qualitative results for the top-performing methods. Our proposed Eta Inversion demonstrates superior editing performance. Notably, EtaInv (2), which employs a higher $\eta$, promotes more editing. Furthermore, utilizing a region-dependent $\eta$ enhances structural similarity metrics by preserving more background (\cref{fig:mask}) while introducing a slight decrease in CLIP metrics.

\subsubsection{Style Transfer Results}

Style transfer requires changing the whole image to a higher degree than other tasks (e.g., object replacing). Thus, we disable $\eta$ masking and increase $\eta$ to introduce more noise to further enlarge the gap between the source and target branch for a better editing effect. 
\cref{tab:result_edit_style} shows that EtaInv (3) significantly improves CLIP similarity over previous methods, which we attribute to the injected real noise. Although DINO and LPIPS scores suggest underperformance, these metrics are less useful for style transfer, which requires complete image editing. \cref{fig:style_transfer} further demonstrates that EtaInv (3) achieves more impactful and faithful style transfer.

\begin{figure*}
    \captionsetup[subfigure]{labelformat=empty}
    \centering
    
    \begin{minipage}{\linewidth}
        \centering
        \scriptsize{\textit{“photo of a \textbf{goat} and a cat standing ...”} $\rightarrow$ \textit{“photo of a \textbf{horse} and a cat standing ...”}}
        \smallskip
        \end{minipage}
    
    \subfloat[Source]{%
    \includegraphics[width=.19\linewidth]{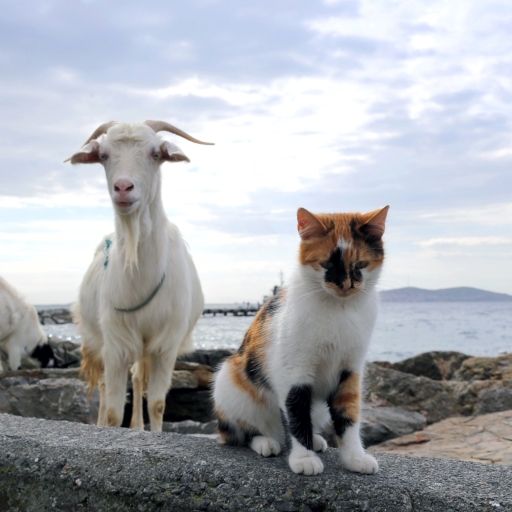}
    }%
    \subfloat[DDIM Inv.]{%
    \includegraphics[width=.19\linewidth]{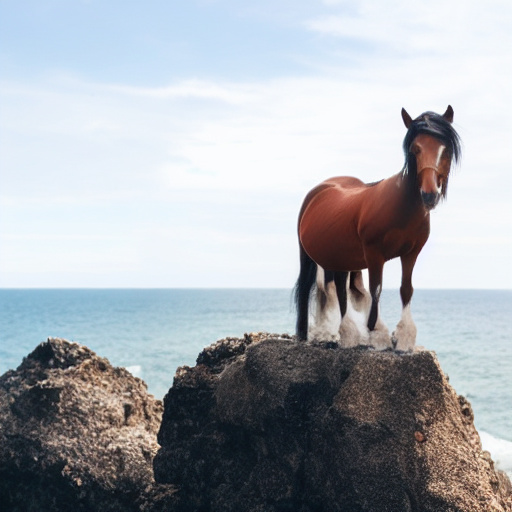}
    }%
    \subfloat[Direct Inv.]{%
    \includegraphics[width=.19\linewidth]{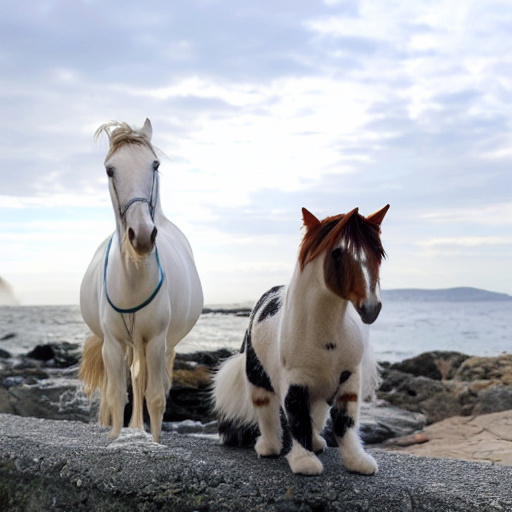}
    }%
    \subfloat[\textbf{EtaInv (no mask)}]{%
    \includegraphics[width=.19\linewidth]{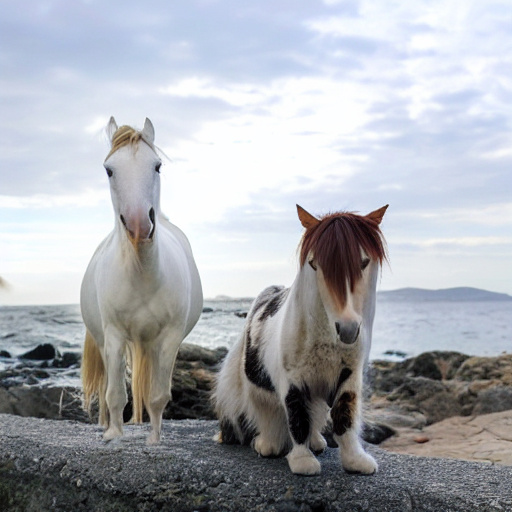}
    }%
    \subfloat[\textbf{EtaInv (mask)}]{%
    \includegraphics[width=.19\linewidth]{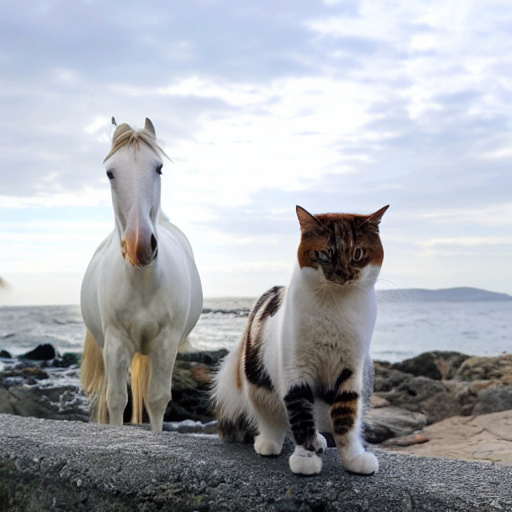}
    }
    \caption{Effectiveness of a region-dependent (masked) $\eta$ function. Only \textbf{EtaInv (mask)} preserves the cat in the original image.}
    \label{fig:mask}
    \end{figure*}

\begin{figure*}[t]
    \captionsetup[subfigure]{labelformat=empty}
    \centering
    
    \begin{minipage}{\linewidth}
        \centering
        \scriptsize{\textit{“a kitchen”} $\rightarrow$ \textit{“\textbf{an oil painting of} ...”}}
        \smallskip
        \end{minipage}
    \subfloat[]{%
    \includegraphics[width=.13\linewidth]{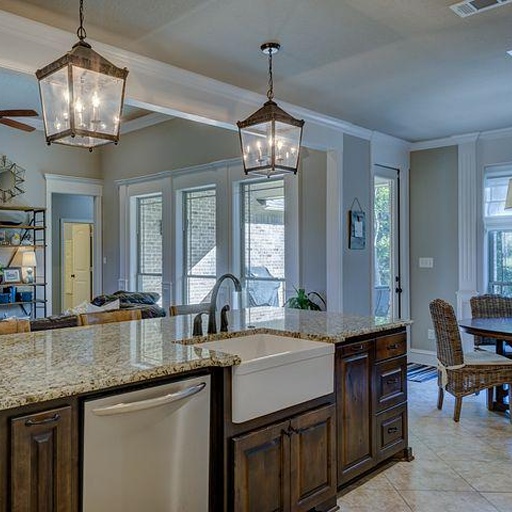}
    }%
    \subfloat[]{%
    \includegraphics[width=.13\linewidth]{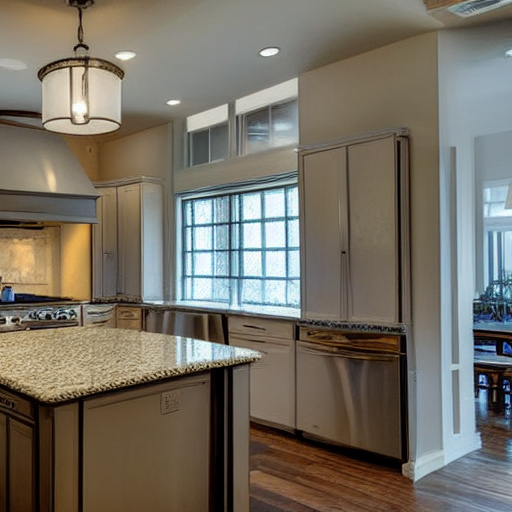}
    }%
    \subfloat[]{%
    \includegraphics[width=.13\linewidth]{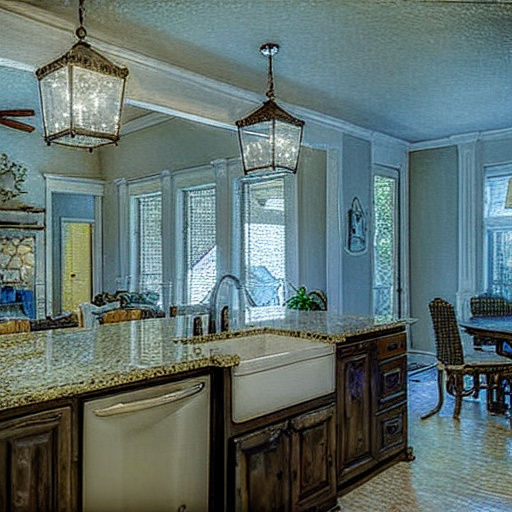}
    }%
    \subfloat[]{%
    \includegraphics[width=.13\linewidth]{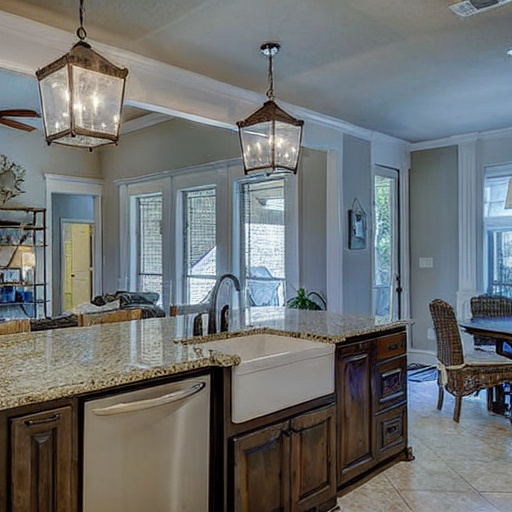}
    }%
    \subfloat[]{%
    \includegraphics[width=.13\linewidth]{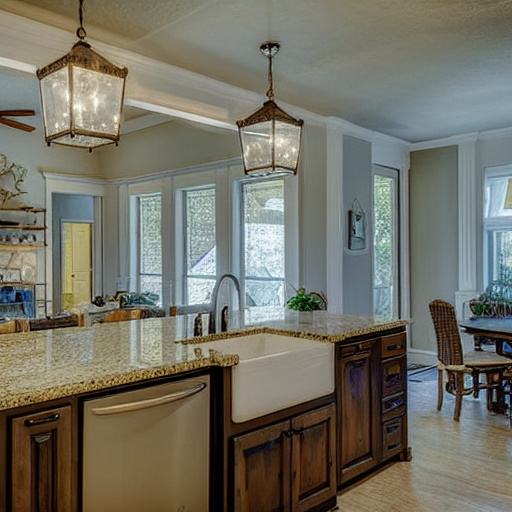}
    }%
    \subfloat[]{%
    \includegraphics[width=.13\linewidth]{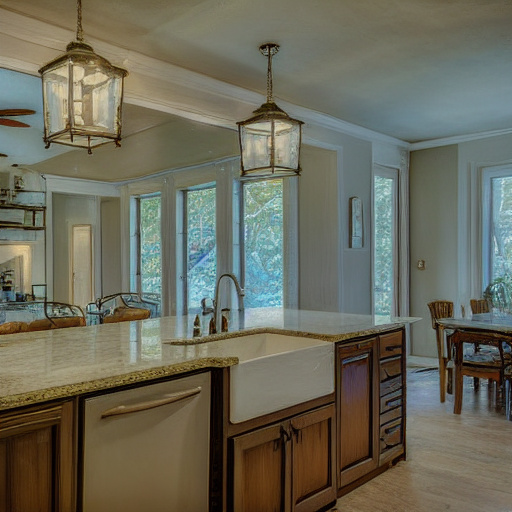}
    }%
    \subfloat[]{%
    \includegraphics[width=.13\linewidth]{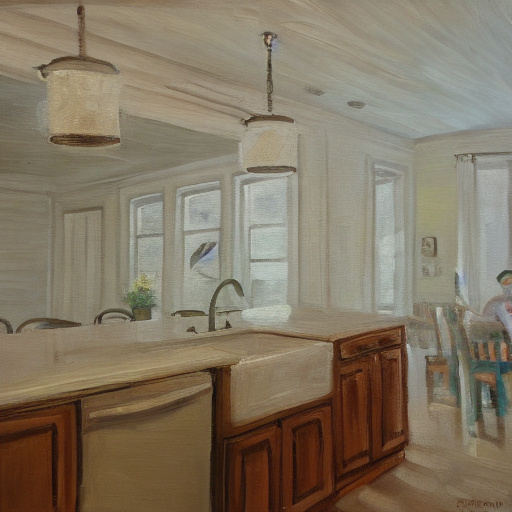}
    }
    \vspace{-8pt}
    \begin{minipage}{\linewidth}
        \centering
        \scriptsize{\textit{“a man with a long beard and a long sword in the forest”} $\rightarrow$ \textit{“\textbf{kids crayon drawing of} ...”}}
        \smallskip
        \end{minipage}
    \subfloat[Source]{%
    \includegraphics[width=.13\linewidth]{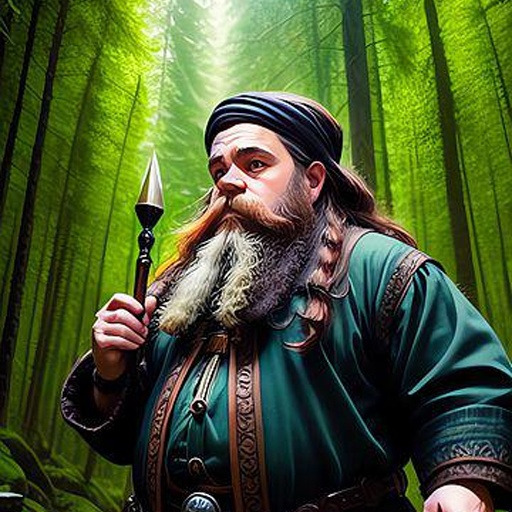}
    }%
    \subfloat[DDIM Inv]{%
    \includegraphics[width=.13\linewidth]{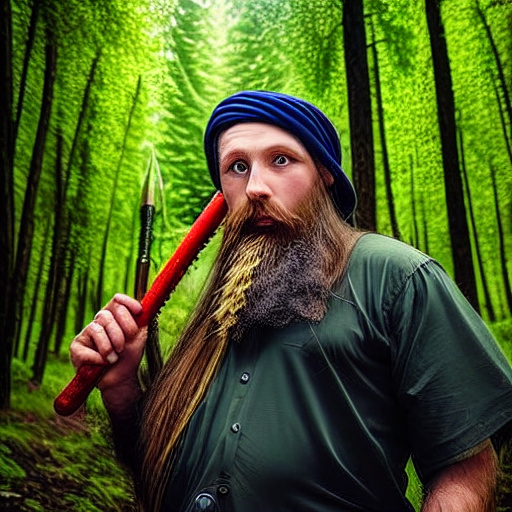}
    }%
    \subfloat[NTI]{%
    \includegraphics[width=.13\linewidth]{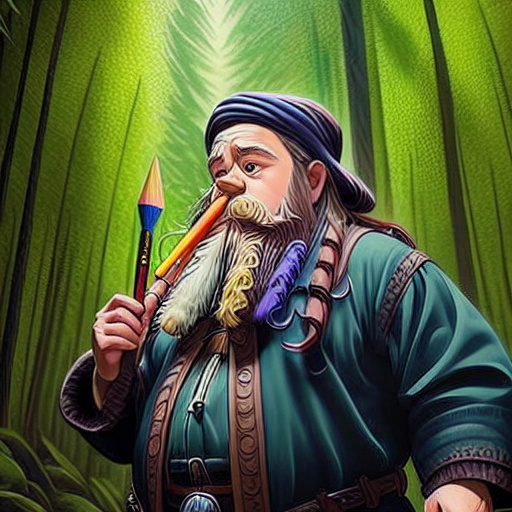}
    }%
    \subfloat[EDICT]{%
    \includegraphics[width=.13\linewidth]{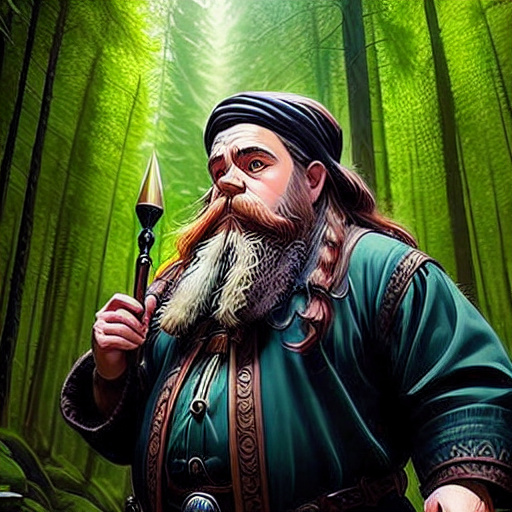}
    }%
    \subfloat[Direct Inv]{%
    \includegraphics[width=.13\linewidth]{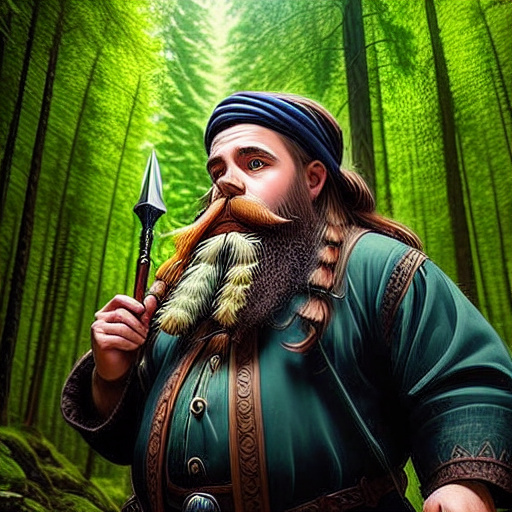}
    }%
    \subfloat[\textbf{EtaInv (2)}]{%
    \includegraphics[width=.13\linewidth]{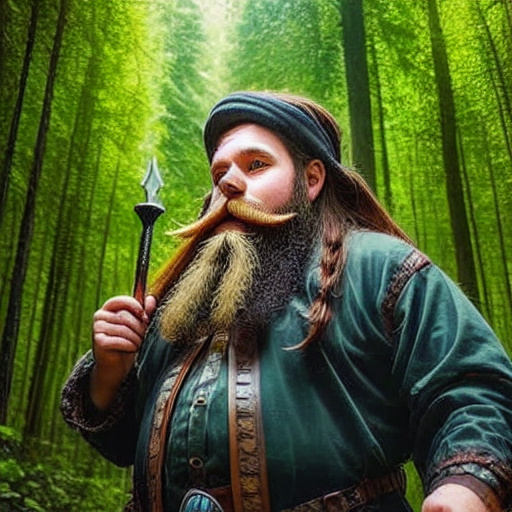}
    }%
    \subfloat[\textbf{EtaInv (3)}]{%
    \includegraphics[width=.13\linewidth]{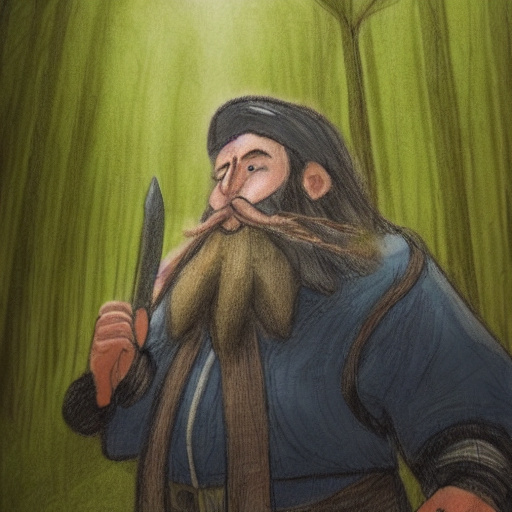}
    }
    \caption{Style transfer results created with PtP \cite{hertz2022prompt} and various inversion methods. \textbf{Eta Inversion (3)} with a larger $\eta$ function improves style transfer.}
    \label{fig:style_transfer}
    \end{figure*}

\section{Limitations}

Some image edits yield unrealistic outcomes or insufficient changes, despite preserving the original structure (\cref{fig:bad_images}). Adjusting the seed and $\eta$ function can improve results, but no universal setting works for every edit. Future efforts will focus on automating the optimal $\eta$ selection. Furthermore, existing metrics for evaluating image editing are limited, as none measure both structural similarity with the source image and faithfulness to the target prompt. We propose exploring Multimodal Large Language Models \cite{openai2023gpt4, liu2023visual, ge2023making} for more effective image editing assessment in future research.

\section{Conclusion}
\label{sec:conclusion}

In this paper, we propose a unified framework for diffusion inversion and introduce Eta Inversion, a novel approach for real image editing. Our method incorporates real noise into the editing process by utilizing an optimally designed $\eta$ function within DDIM sampling for faithful image editing. Through detailed comparison and analysis of the role of $\eta$, we demonstrate state-of-the-art performance in real image editing across various metrics, offering both compelling qualitative outcomes and precise editing control.%

\bibliographystyle{splncs04}
\bibliography{main}

\clearpage
\appendix

\centerline{\LARGE{\textbf{Supplementary Materials}}}

\section{Proofs of Propositions}
\subsection{Proof of Proposition 1}
\begin{proposition*}
Let $\delta_{\eta_t} = \|\boldsymbol{x}_{t-1}^{(s)'}-\mathrm{DDIM}(\boldsymbol{x}_{t}^{(t)'},\boldsymbol{c}^{(t)},\eta_t)\|_2$ be the source-target branch distance at timestep $t$.
If $\delta_{0}$ is small, there exists an $\eta_t>0$ that satisfies $ \mathop{\mathbb{E}}_{\boldsymbol\epsilon_{add}}[\delta_{\eta_t}]> \delta_0$.
\end{proposition*}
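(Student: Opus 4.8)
The plan is to split the stochastic DDIM update of the target branch into a part that is deterministic once the time-$t$ latents are fixed and a genuinely random part, and then to show that for a suitable small $\eta_t>0$ the random part inflates the expected distance by more than the deterministic part is simultaneously perturbed. First I would fix the time-$t$ latents $\boldsymbol{x}_{t}^{(s)'},\boldsymbol{x}_{t}^{(t)'}$ and record that the target noise estimate $\boldsymbol\epsilon_{t}^{(t)}=\tilde{\boldsymbol\epsilon}_{t,\theta}(\boldsymbol{x}_{t}^{(t)'},\boldsymbol{c}^{(t)},\emptyset,w)$ appearing in \cref{eq:ddim_sampling} does not depend on $\eta_t$, and that $\boldsymbol{x}_{t-1}^{(s)'}$ is the fixed forward-path latent. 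Expanding \cref{eq:ddim_sampling} and subtracting the $\eta_t=0$ value gives
\[
\mathrm{DDIM}(\boldsymbol{x}_{t}^{(t)'},\boldsymbol{c}^{(t)},\eta_t)=\boldsymbol\mu+c(\eta_t)\,\boldsymbol\epsilon_{t}^{(t)}+\sigma_t\,\boldsymbol\epsilon_{\mathrm{add}},
\]
where $\boldsymbol\mu:=\mathrm{DDIM}(\boldsymbol{x}_{t}^{(t)'},\boldsymbol{c}^{(t)},0)$, where $\sigma_t=\eta_t\kappa_t$ for a positive constant $\kappa_t$ independent of $\eta_t$, and where $c(\eta_t):=\sqrt{1-\bar\alpha_{t-1}-\sigma_t^2}-\sqrt{1-\bar\alpha_{t-1}}$ satisfies $|c(\eta_t)|\le\sigma_t^2/\sqrt{1-\bar\alpha_{t-1}}$ after rationalising the difference of square roots.

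Next I would write $\boldsymbol v:=\boldsymbol{x}_{t-1}^{(s)'}-\boldsymbol\mu-c(\eta_t)\boldsymbol\epsilon_{t}^{(t)}$, a fixed vector, so that $\delta_{\eta_t}=\|\boldsymbol v-\sigma_t\boldsymbol\epsilon_{\mathrm{add}}\|_2$, $\delta_0=\|\boldsymbol{x}_{t-1}^{(s)'}-\boldsymbol\mu\|_2$, and therefore $\|\boldsymbol v\|_2\le\delta_0+|c(\eta_t)|\,\|\boldsymbol\epsilon_{t}^{(t)}\|_2$. The reverse triangle inequality gives $\delta_{\eta_t}\ge\sigma_t\|\boldsymbol\epsilon_{\mathrm{add}}\|_2-\|\boldsymbol v\|_2$, and taking expectation over $\boldsymbol\epsilon_{\mathrm{add}}\sim\mathcal{N}(0,I)$, with $m_d:=\mathbb{E}\|\boldsymbol\epsilon_{\mathrm{add}}\|_2>0$ a positive constant depending only on the latent dimension,
\[
\mathbb{E}_{\boldsymbol\epsilon_{\mathrm{add}}}[\delta_{\eta_t}]\ \ge\ \sigma_t m_d-\delta_0-\frac{\|\boldsymbol\epsilon_{t}^{(t)}\|_2}{\sqrt{1-\bar\alpha_{t-1}}}\,\sigma_t^2 .
\]
It then remains to choose $\sigma_t>0$ (equivalently $\eta_t=\sigma_t/\kappa_t>0$, small enough that $\sigma_t^2<1-\bar\alpha_{t-1}$) making the right-hand side exceed $\delta_0$. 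Since $\sigma_t\mapsto\sigma_t m_d-(\|\boldsymbol\epsilon_{t}^{(t)}\|_2/\sqrt{1-\bar\alpha_{t-1}})\sigma_t^2$ is a downward parabola vanishing at $0$ with maximum value $m_d^2\sqrt{1-\bar\alpha_{t-1}}/(4\|\boldsymbol\epsilon_{t}^{(t)}\|_2)$, this is possible for an entire open interval of $\sigma_t>0$ as soon as $\delta_0$ lies below (half of) that maximum, which I would take as the precise meaning of ``$\delta_0$ is small''; picking $\sigma_t$ near the lower end of this interval also keeps $\sigma_t^2<1-\bar\alpha_{t-1}$. As a consistency check, Jensen's inequality for the convex map $\boldsymbol z\mapsto\|\boldsymbol z\|_2$ already yields $\mathbb{E}[\delta_{\eta_t}]\ge\|\boldsymbol v\|_2$ with strict inequality once $\sigma_t>0$; the parabola estimate is simply a quantitative version of this Jensen gap strong enough to dominate the deterministic shift $|c(\eta_t)|\,\|\boldsymbol\epsilon_{t}^{(t)}\|_2$.

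The step I expect to be the main obstacle is exactly the control of the term $c(\eta_t)\boldsymbol\epsilon_{t}^{(t)}$: increasing $\eta_t$ does not merely add Gaussian noise, it also shifts the deterministic part of the update, and a priori this shift could pull the target branch toward the source branch and cancel the benefit of the injected noise. The resolution relies on the mismatch of orders, $c(\eta_t)=O(\sigma_t^2)$ against a stochastic gain of order $\sigma_t$, so that a small but strictly positive $\eta_t$ wins; this is also why the hypothesis that $\delta_0$ be small cannot be removed, since for large $\delta_0$ the order-$\sigma_t$ gain from a small $\eta_t$ is too weak to overcome it while a large $\eta_t$ re-introduces the order-$\sigma_t^2$ deterministic perturbation.
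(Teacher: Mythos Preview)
Your proof is correct but follows a genuinely different route from the paper. The paper absorbs the entire deterministic part of the DDIM update (including the shift you call $c(\eta_t)\boldsymbol\epsilon_t^{(t)}$) into a single mean vector $\boldsymbol\mu_t$ and then lower-bounds $\mathbb{E}\|\boldsymbol\mu_t+\sigma_t\boldsymbol\epsilon_{\mathrm{add}}\|_2$ using the coordinate-wise folded-normal fact $\argmin_\mu\mathbb{E}|X|=0$ (after passing from $\ell_2$ to $\ell_1$ via Cauchy--Schwarz), obtaining $\mathbb{E}[\delta_{\eta_t}]\ge\sigma_t\sqrt{2d/\pi}$ \emph{regardless of} $\boldsymbol\mu_t$. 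The deterministic shift therefore never has to be controlled separately, and the resulting smallness condition $\delta_0<\sqrt{1-\bar\alpha_{t-1}}\sqrt{2d/\pi}$ depends only on the schedule and the latent dimension. Your approach instead isolates $c(\eta_t)$, bounds it as $O(\sigma_t^2)$, and plays this second-order term against the first-order stochastic gain $\sigma_t m_d$ via the reverse triangle inequality. That argument is more elementary---no folded-normal identity---but the smallness threshold you derive depends on the a priori unknown magnitude $\|\boldsymbol\epsilon_t^{(t)}\|_2$, and it restricts you to small $\eta_t$ where the quadratic correction stays negligible, whereas the paper's bound holds over the full admissible range of $\eta_t$. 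In short, the ``main obstacle'' you flag in your last paragraph is precisely what the paper's minimizer trick makes disappear.
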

\begin{proof}
Given a normally distributed random variable $X \sim \mathcal{N}(\mu,\sigma^{2})$, it is known that the random variable $|X|$ follows a \textit{Folded Normal Distribution} with %
\begin{gather}
\label{eq:folded}
\mathop{\mathbb{E}}[|X|] = \sigma\sqrt{\frac{2}{\pi}}e^{-\mu^2/2\sigma^2}+\mu\mathop{\mathrm{erf}}(\frac{\mu}{\sqrt{2\sigma^2}}),\\
\label{eq:argmin}
\argmin_{\mu}\mathop{\mathbb{E}}[|X|] = 0,
\end{gather}
where $\mathop{\mathrm{erf}}(z) = \frac{2}{\sqrt{\pi}}\int_0^z e^{-t^2}\mathop{\mathrm{d}t}$.
Let $\boldsymbol{x} \in \mathbb{R}^d$ and %
\begin{equation}
\label{eq:define}
\boldsymbol{\mu}_t \coloneqq \frac{(\boldsymbol{x}_{t}^{(t)'}-\sqrt{1-\bar{\alpha}_{t}}\boldsymbol{\epsilon}_{t}^{(t)'})}{\sqrt{{\alpha}_{t}}}+\sqrt{1-\bar{\alpha}_{t-1}-\sigma^{2}_{t}}\boldsymbol{\epsilon}_{t}^{(t)'} -\boldsymbol{x}_{t-1}^{(s)'}.
\end{equation}
As \cref{eq:define} requires $1-\bar{\alpha}_{t-1}-\sigma^{2}_{t} \geq 0$, using the definition of $\sigma_t(\eta_{t})$ we write 
\begin{align}
\sqrt{1-\bar{\alpha}_{t-1}} &\geq \eta_{t}\sqrt{\frac{1-\bar\alpha_{t-1}}{1-\bar\alpha_{t}}}\sqrt{1-\frac{\bar\alpha_{t}}{\bar\alpha_{t-1}}},
\end{align}
resulting in the following condition for $\eta_t$:
\begin{equation}
\label{eq:condition}
\frac{\sqrt{1-\bar\alpha_{t}}}{\sqrt{1-\frac{\bar\alpha_{t}}{\bar\alpha_{t-1}}}} \geq \eta_t \geq 0.
\end{equation}
Assuming $\delta_0$ is sufficiently small with $\sqrt{1-\bar{\alpha}_{t-1}}\sqrt{\frac{2d}{\pi}} > \delta_0$, we show that
\begin{align}
\mathop{\mathbb{E}}_{\boldsymbol{\epsilon}_{add}}[\delta_{\eta_t}]&=\mathop{\mathbb{E}}_{\boldsymbol{\epsilon}_{add}}[\|\boldsymbol{\mu}_t + \sigma_{t}\boldsymbol{\epsilon}_{\mathrm{add}}\|_2]\\
&\geq\frac{1}{\sqrt{d}}\mathop{\mathbb{E}}_{\boldsymbol{\epsilon}_{add}}[\|\boldsymbol{\mu}_t + \sigma_{t}\boldsymbol{\epsilon}_{\mathrm{add}}\|_1] &&\text{(Cauchy–Schwarz inequality)} \\
&=\frac{1}{\sqrt{d}}\mathop{\mathbb{E}}[\Sigma_{i=1}^d|\mu_{t,i} + \sigma_{t}\epsilon_{\mathrm{add},i}|] &&\text{(Sum of $i^{\textrm{th}}$ dimension's)}\\
&=\frac{1}{\sqrt{d}}\sum_{i=1}^d\mathop{\mathbb{E}}[|\mu_{t,i} + \sigma_{t}\epsilon_{\mathrm{add},i}|] \\
&=\frac{1}{\sqrt{d}}\sum_{i=1}^d\mathop{\mathbb{E}}[|X_i|] && (X_i \sim \mathcal{N}(\mu_{t,i},\sigma_{t}^2)) \\
&\geq\frac{1}{\sqrt{d}}\sum_{i=1}^d\mathop{\mathbb{E}}[|X_i|]_{\mu_{t,i}=0} && (\cref{eq:argmin}) \\
&=\frac{1}{\sqrt{d}}\sum_{i=1}^d\sigma_{t}\sqrt{\frac{2}{\pi}} && (\cref{eq:folded}) \\
&=\sigma_t\sqrt{\frac{2d}{\pi}} \\
&=\eta_{t}\sqrt{\frac{1-\bar\alpha_{t-1}}{1-\bar\alpha_{t}}}\sqrt{1-\frac{\bar\alpha_{t}}{\bar\alpha_{t-1}}}\sqrt{\frac{2d}{\pi}}.
\end{align}
Thus, our proposition $\mathop{\mathbb{E}}_{\epsilon_{add}}[\delta_{\eta_t}]>\delta_0$ holds if we choose an $\eta_t$, which satisfies
\begin{equation}
\frac{\sqrt{1-\bar\alpha_{t}}}{\sqrt{1-\frac{\bar\alpha_{t}}{\bar\alpha_{t-1}}}} \geq \eta_t > \frac{\delta_0}{\sqrt{\frac{1-\bar\alpha_{t-1}}{1-\bar\alpha_{t}}}\sqrt{1-\frac{\bar\alpha_{t}}{\bar\alpha_{t-1}}}\sqrt{\frac{2d}{\pi}}}.
\end{equation}

\null\nobreak\hfill\ensuremath{\square}
\end{proof}
\subsection{Proof of Proposition 2}
\begin{assumption}
\label{assumption1}
We rewrite the following assumptions from prior works \cite{song2020score,lu2022maximum,nie2023blessing} using our notation for completeness.
\begin{enumerate}
\item $q_0(\boldsymbol{x}) \in \mathcal{C}^3$ and $\mathop{\mathbb{E}}_{q_0(\boldsymbol{x})}[\|\boldsymbol{x}\|_2^2] < \infty$.
\item $\forall t \in [0,T]$ : $\boldsymbol{f}_t(\cdot) \in \mathcal{C}^2$. And $\exists C > 0$, $\forall \boldsymbol{x} \in \mathbb{R}^d$, $t \in [0,T]$ : $\|\boldsymbol{f}_t(\boldsymbol{x})\|_2 \leq C(1+\|\boldsymbol{x}\|_2)$.
\item $\exists C > 0$, $\forall \boldsymbol{x}, \boldsymbol{y} \in \mathbb{R}^d$ : $\|\boldsymbol{f}_t(\boldsymbol{x}) - \boldsymbol{f}_t(\boldsymbol{y})\|_2 \leq C\|\boldsymbol{x}-\boldsymbol{y}\|_2$.
\item $g \in \mathcal{C}$ and $\forall t \in [0,T]$, $|g(t)|>0$.
\item Open bounded set $\forall O$, $\int_0^T \int_O \|q_t(\boldsymbol{x})\|_2^2 + d \cdot g(t)^2\|\nabla_{\boldsymbol{x}}\log q_{t}(\boldsymbol{x})\|_2^2 \mathop{\mathrm{d}\boldsymbol{x}} \mathop{\mathrm{d}t} < \infty$.
\item $\exists C > 0$, $\forall \boldsymbol{x} \in \mathbb{R}^d$, $t \in [0,T]$ : $\|\nabla_{\boldsymbol{x}}\log q_{t}(\boldsymbol{x})\|_2^2 \leq C(1+\|\boldsymbol{x}\|_2)$.
\item $\exists C > 0$, $\forall \boldsymbol{x}, \boldsymbol{y} \in \mathbb{R}^d$ : $\|\nabla_{\boldsymbol{x}}\log q_{t}(\boldsymbol{x})-\nabla_{\boldsymbol{y}}\log q_{t}(\boldsymbol{y})\|_2 \leq C \|\boldsymbol{x}-\boldsymbol{y}\|_2$.
\item $\exists C > 0$, $\forall \boldsymbol{x} \in \mathbb{R}^d$, $t \in [0,T]$ : $\|\boldsymbol{s}_{t,\theta}(\boldsymbol{x})\|_2 \leq C(1+\|\boldsymbol{x}\|_2)$.
\item $\exists C > 0$, $\forall \boldsymbol{x}, \boldsymbol{y} \in \mathbb{R}^d$ : $\|\boldsymbol{s}_{t,\theta}(\boldsymbol{x})-\boldsymbol{s}_{t,\theta}(\boldsymbol{y})\|_2 \leq C \|\boldsymbol{x}-\boldsymbol{y}\|_2$.
\item Novikov's condition: $\mathop{\mathbb{E}}[\exp{(\frac{1}{2}\int_0^T \|\nabla_{\boldsymbol{x}}\log q_{t}(\boldsymbol{x}) - \boldsymbol{s}_{t,\theta}(\boldsymbol{x})\|_2^2)}] < \infty$.
\item $\forall t \in [0,T]$, $\exists k > 0$ : $q_t(\boldsymbol{x}) = \mathcal{O}(e^{-\|\boldsymbol{x}\|^k_2})$, $p_{t,\eta_t}(\boldsymbol{x}) = \mathcal{O}(e^{-\|\boldsymbol{x}\|^k_2})$ as $\|\boldsymbol{x}\|_2 \rightarrow \infty$.
\end{enumerate}
\end{assumption}
\begin{proposition*}
Under Assumption \ref{assumption1}, \cref{eq:prop2_new} holds, wherein $D_{\mathrm{Fisher}}$ denotes the Fisher Divergence.
\end{proposition*}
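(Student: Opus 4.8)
The plan is to read \cref{eq:prop2} as an entropy-dissipation identity along a Fokker--Planck flow, exactly the kind of computation in \cite{lu2022maximum,nie2023blessing}, and to check that it carries over from the probability-flow ODE \cref{eq:flow} to the $\eta_t$-parametrized family \cref{eq:flow_extend_p}. The key structural observation is that $p_t^{(t)'}$ and $p_t^{(t)}$ are the marginals obtained by running \cref{eq:flow_extend_p} with the \emph{same} (network) score $\boldsymbol{s}_{t,\theta}^{(t)}$ and the \emph{same} schedule $\eta_t$, differing only in the terminal law at $t=T$ (the inaccurately inverted $p_T^{(t)'}$ versus the ideal $p_T^{(t)}$). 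Hence both densities solve the same linear Fokker--Planck equation
\[
\partial_t \rho_t = -\nabla\!\cdot\!\Big(\big(f_t\boldsymbol{x}-\tfrac12(1+\eta_t^2)g_t^2\,\boldsymbol{s}_{t,\theta}^{(t)}(\boldsymbol{x})\big)\rho_t\Big)+\tfrac12\eta_t^2 g_t^2\,\Delta\rho_t ,
\]
and the whole statement reduces to computing $\tfrac{\mathop{\mathrm{d}}}{\mathop{\mathrm{d}t}}D_{\mathrm{KL}}(p_t^{(t)'}\parallel p_t^{(t)})$ and integrating over $[0,T]$.

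First I would differentiate $D_{\mathrm{KL}}(p_t^{(t)'}\parallel p_t^{(t)})=\int p_t^{(t)'}\log\!\big(p_t^{(t)'}/p_t^{(t)}\big)\mathop{\mathrm{d}\boldsymbol{x}}$ in $t$, substitute the Fokker--Planck equation above for both $\partial_t p_t^{(t)'}$ and $\partial_t p_t^{(t)}$, and integrate by parts. The transport (drift) contributions cancel because the drift is one fixed vector field tested against both densities --- this is the standard fact that the deterministic part of the flow leaves relative entropy invariant --- so only the terms generated by the $\tfrac12\eta_t^2 g_t^2\Delta$ part survive; a further integration by parts assembles them into the relative Fisher information, giving $\tfrac{\mathop{\mathrm{d}}}{\mathop{\mathrm{d}t}}D_{\mathrm{KL}}(p_t^{(t)'}\parallel p_t^{(t)})=\eta_t^2 g_t^2\,D_{\mathrm{Fisher}}(p_t^{(t)'}\parallel p_t^{(t)})$ under the paper's normalization of $D_{\mathrm{Fisher}}$ and with the orientation of $t$ fixed so that the generative flow runs $t:T\to 0$. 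Integrating from $0$ to $T$ and rearranging produces \cref{eq:prop2} verbatim; in particular the integrand is nonnegative, which is the monotonicity the authors exploit afterwards. As a cross-check and alternative route one can instead compare the two path measures: since the two processes share drift and diffusion and differ only in the initial law, a Girsanov/Markov argument (Novikov's condition in Assumption~\ref{assumption1} being exactly the hypothesis that makes this legitimate) identifies path-space KL with terminal-marginal KL, and disintegrating in time recovers the same per-time Fisher increment.

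The main obstacle is rigor, not the formal algebra: one must justify differentiation under the integral sign, the two integration-by-parts steps \emph{with no boundary contributions}, and the finiteness of every quantity along the flow. This is precisely what Assumption~\ref{assumption1} is designed to supply --- the $\mathcal{C}^2$/$\mathcal{C}^3$ regularity and global linear-growth/Lipschitz bounds on $f_t$, $g_t$, $\nabla_{\boldsymbol{x}}\log q_t$ and $\boldsymbol{s}_{t,\theta}$ give well-posedness and enough smoothness/integrability of $p_t^{(t)'}$ and $p_t^{(t)}$; the super-exponential tail bounds $q_t,\,p_{t,\eta_t}=\mathcal{O}(e^{-\|\boldsymbol{x}\|_2^k})$ kill the boundary terms in the integration by parts; and Novikov's condition underwrites the absolute-continuity arguments and excludes the degenerate $\eta_t\equiv 0$ regime, where \cref{eq:flow_extend_p} is a pure ODE and the Fisher term legitimately vanishes. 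A minor but real subtlety is bookkeeping the time direction: \cref{eq:flow_extend_p} is written with the backward Wiener process $\boldsymbol{\bar{w}}$, so one has to be careful with signs to land on the $-\int_0^T(\cdot)\mathop{\mathrm{d}t}$ in \cref{eq:prop2} rather than its negative.
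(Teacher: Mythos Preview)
Your proposal is correct and follows essentially the same route as the paper: both differentiate $D_{\mathrm{KL}}(p_t^{(t)'}\parallel p_t^{(t)})$ in $t$, substitute the Fokker--Planck equations for the two marginals (which share the same drift $\boldsymbol{A}_t^{(t)}$ and diffusion $\eta_t g_t$, differing only in terminal law), integrate by parts using the tail decay in Assumption~\ref{assumption1} to kill boundary terms, and collect the surviving piece as $\eta_t^2 g_t^2 D_{\mathrm{Fisher}}$. The only cosmetic difference is that the paper first rewrites each SDE as its probability-flow ODE $\partial_t p_t = -\nabla_{\boldsymbol{x}}\!\cdot[\tilde{\boldsymbol{A}}_t p_t]$ (so the two flows now have \emph{different} drifts $\tilde{\boldsymbol{A}}_t^{(t)}$ and $\tilde{\boldsymbol{A}}_t^{(t)'}$, differing exactly by $\tfrac12\eta_t^2 g_t^2(\nabla\log p_t^{(t)'}-\nabla\log p_t^{(t)})$) and then does the integration by parts, whereas you keep the drift--plus--Laplacian form and invoke the standard fact that a shared transport term leaves relative entropy invariant; these are two bookkeepings of the identical computation.
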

\begin{equation}
\label{eq:prop2_new}
D_{\mathrm{KL}}(p_0^{(t)'} \parallel p_0^{(t)}) = D_{\mathrm{KL}}(p_T^{(t)'} \parallel p_T^{(t)}) - \int_0^T \eta_{t}^2 g_t^2 D_{\mathrm{Fisher}}(p_t^{(t)'} \parallel p_t^{(t)})\mathop{\mathrm{d}t}
\end{equation}
\begin{proof}
Given the general form of the SDE (\cref{eq:sde_general}), \cref{eq:pode1} and \cref{eq:pode2} are the equations for the probability flow ODE, and \cref{eq:fp} is the link between the Fokker–Planck equation\cite{song2020score} and the probability flow.
\begin{gather}
\label{eq:sde_general}
\mathop{\mathrm{d}\boldsymbol{x}} = \boldsymbol{f}_t(\boldsymbol{x})\mathop{\mathrm{d}t} + \boldsymbol{G}_t(\boldsymbol{x})\mathop{\mathrm{d}\boldsymbol{w}} \\
\label{eq:pode1}
\mathop{\mathrm{d}\boldsymbol{x}} = \tilde{\boldsymbol{f}}_t(\boldsymbol{x})\mathop{\mathrm{d}t} \\
\label{eq:pode2}
\tilde{\boldsymbol{f}}_t(\boldsymbol{x}) \leftarrow \boldsymbol{f}_t(\boldsymbol{x}) - \frac{1}{2}\nabla\cdot[\boldsymbol{G}_t(\boldsymbol{x})\boldsymbol{G}_t(\boldsymbol{x})^{\intercal}]- \frac{1}{2}\boldsymbol{G}_t(\boldsymbol{x})\boldsymbol{G}_t(\boldsymbol{x})^{\intercal}\nabla_{\boldsymbol{x}}\log p_{t}(\boldsymbol{x}) \\
\label{eq:fp}
\frac{\partial}{\partial t}p_t(\boldsymbol{x}) = - \nabla_{\boldsymbol{x}}\cdot[\tilde{\boldsymbol{f}}_t(\boldsymbol{x})p_t(\boldsymbol{x})]
\end{gather}
Recall the forward path (\cref{eq:sde_forward2}) and the extended backward path (\cref{eq:flow_extend2}) of diffusion models (score-based models). In practice, we use \cref{eq:flow_extend_p2} as the backward path to sample data with the score estimation network $\boldsymbol{s}_{t,\theta}(\boldsymbol{x})$ instead of the unknown ground truth $\nabla_{\boldsymbol{x}}\log q_{t}(\boldsymbol{x})$.
\begin{align}
\label{eq:sde_forward2}
&\mathop{\mathrm{d}\boldsymbol{x}} = f_{t}\boldsymbol{x}\mathop{\mathrm{d}t} + g_{t}\mathop{\mathrm{d}\boldsymbol{w}} \quad \biggl( f_{t} = \frac{1}{2}\frac{\mathop{\mathrm{d}\log \alpha_{t}}}{\mathop{\mathrm{d}t}}, g_{t} = \sqrt{-\frac{\mathop{\mathrm{d}\log \alpha_{t}}}{\mathop{\mathrm{d}t}}}\biggr)\\
\label{eq:flow_extend2}
&\mathop{\mathrm{d}\boldsymbol{x}} = [f_{t}\boldsymbol{x} - \frac{1+\eta_t^2}{2}g^{2}_{t}\nabla_{\boldsymbol{x}}\log q_{t}(\boldsymbol{x})]\mathop{\mathrm{d}t} + \eta_t g_{t}\mathop{\mathrm{d}\boldsymbol{\bar{w}}}\\
\label{eq:flow_extend_p2}
&\mathop{\mathrm{d}\boldsymbol{x}} = \underbrace{[f_{t}\boldsymbol{x} - \frac{1+\eta_t^2}{2}g^{2}_{t}\boldsymbol{s}_{t,\theta}(\boldsymbol{x})]}_{\boldsymbol{A}_t(\boldsymbol{x})}\mathop{\mathrm{d}t} + \eta_t g_{t}\mathop{\mathrm{d}\boldsymbol{\bar{w}}}
\end{align}
Using $\boldsymbol{f}_t(\boldsymbol{x}) \leftarrow \boldsymbol{A}_t(\boldsymbol{x})$ and $\boldsymbol{G}_t(\boldsymbol{x}) \leftarrow \eta_t g_{t}$, we rewrite the probability flow ODE~(\cref{eq:pode1}, \cref{eq:pode2}) as
\begin{align}
\label{eq:pode_A1}
\mathop{\mathrm{d}\boldsymbol{x}} &= \tilde{\boldsymbol{A}}_t(\boldsymbol{x})\mathop{\mathrm{d}t},\\
\label{eq:pode_A2}
\tilde{\boldsymbol{A}}_t(\boldsymbol{x}) &= \boldsymbol{A}_t(\boldsymbol{x}) + \frac{1}{2}\eta_t^{2}g_t^{2}\nabla_{\boldsymbol{x}}\log p_{t}(\boldsymbol{x}),
\end{align}
and the Fokker-Plank equation~(\cref{eq:fp}) as 
\begin{align}
\label{eq:fp_A}
\frac{\partial}{\partial t}p_t(\boldsymbol{x}) &= - \nabla_{\boldsymbol{x}}\cdot[\tilde{\boldsymbol{A}}_t(\boldsymbol{x})p_t(\boldsymbol{x})].
\end{align}

For real image editing (the backward path of the target), we are interested in the unknown true inverted marginal distribution $p_T^{(t)}$ and the inaccurately inverted marginal distribution $p_T^{(t)'}$, which is obtained by diffusion inversion. Since both distributions use the same backward path (\cref{eq:flow_extend_p2}), we can formulate the probability flow ODE for both as
\begin{align}
\boldsymbol{A}_t^{(t)}(\boldsymbol{x}) &= f_{t}\boldsymbol{x} - \frac{1+\eta_t^2}{2}g^{2}_{t}\boldsymbol{s}_{t,\theta}^{(t)}(\boldsymbol{x}),\\
\tilde{\boldsymbol{A}}_t^{(t)}(\boldsymbol{x}) &= \boldsymbol{A}_t^{(t)}(\boldsymbol{x}) + \frac{1}{2}\eta_t^{2}g_t^{2}\nabla_{\boldsymbol{x}}\log p_{t}^{(t)}(\boldsymbol{x}),\\
\tilde{\boldsymbol{A}}_t^{(t)'}(\boldsymbol{x}) &= \boldsymbol{A}_t^{(t)}(\boldsymbol{x}) + \frac{1}{2}\eta_t^{2}g_t^{2}\nabla_{\boldsymbol{x}}\log p_{t}^{(t)'}(\boldsymbol{x}),
\end{align} 
and the Fokker-Plank equation for both as
\begin{align}
\frac{\partial}{\partial t}p_t^{(t)}(\boldsymbol{x}) &= - \nabla_{\boldsymbol{x}}\cdot[\tilde{\boldsymbol{A}}_t^{(t)}(\boldsymbol{x})p_t^{(t)}(\boldsymbol{x})],\\
\frac{\partial}{\partial t}p_t^{(t)'}(\boldsymbol{x}) &= - \nabla_{\boldsymbol{x}}\cdot[\tilde{\boldsymbol{A}}_t^{(t)'}(\boldsymbol{x})p_t^{(t)'}(\boldsymbol{x})].
\end{align} 
Finally, we show
\begin{align}
&\frac{\partial D_{\mathrm{KL}}(p_t^{(t)'} \parallel p_t^{(t)})}{\partial t} \\
=& \frac{\partial}{\partial t} \int p_t^{(t)'}(\boldsymbol{x}) \log \frac{p_t^{(t)'}(\boldsymbol{x})}{p_t^{(t)}(\boldsymbol{x})} \mathop{\mathrm{d}\boldsymbol{x}} \\
=& \int \frac{\partial}{\partial t}p_t^{(t)'}(\boldsymbol{x}) \log \frac{p_t^{(t)'}(\boldsymbol{x})}{p_t^{(t)}(\boldsymbol{x})} \mathop{\mathrm{d}\boldsymbol{x}} - \int\frac{p_t^{(t)'}(\boldsymbol{x})}{p_t^{(t)}(\boldsymbol{x})}\frac{\partial}{\partial t}p_t^{(t)}(\boldsymbol{x})\mathop{\mathrm{d}\boldsymbol{x}} \\
=& -\int \nabla_{\boldsymbol{x}}\cdot[\tilde{\boldsymbol{A}}_t^{(t)'}(\boldsymbol{x})p_t^{(t)'}(\boldsymbol{x})] \log \frac{p_t^{(t)'}(x)}{p_t^{(t)}(\boldsymbol{x})} \mathop{\mathrm{d}\boldsymbol{x}} \notag\\
&+ \int\frac{p_t^{(t)'}(\boldsymbol{x})}{p_t^{(t)}(\boldsymbol{x})}\nabla_{\boldsymbol{x}}\cdot[\tilde{\boldsymbol{A}}_t^{(t)}(\boldsymbol{x})p_t^{(t)}(\boldsymbol{x})] \mathop{\mathrm{d}\boldsymbol{x}}\\
=& \int [\tilde{\boldsymbol{A}}_t^{(t)'}(\boldsymbol{x})p_t^{(t)'}(\boldsymbol{x})]^{\intercal}\nabla_{\boldsymbol{x}}\log \frac{p_t^{(t)'}(x)}{p_t^{(t)}(\boldsymbol{x})} \mathop{\mathrm{d}\boldsymbol{x}} \notag\\
&- \int[\tilde{\boldsymbol{A}}_t^{(t)}(\boldsymbol{x})p_t^{(t)}(\boldsymbol{x})]^{\intercal}\nabla_{\boldsymbol{x}}\frac{p_t^{(t)'}(\boldsymbol{x})}{p_t^{(t)}(\boldsymbol{x})} \mathop{\mathrm{d}\boldsymbol{x}} \qquad\qquad\textrm{(Assumption \ref{assumption1})}\\
=& \int p_t^{(t)'}(\boldsymbol{x})[\tilde{\boldsymbol{A}}_t^{(t)'}(\boldsymbol{x})^{\intercal}-\tilde{\boldsymbol{A}}_t^{(t)}(\boldsymbol{x})^{\intercal}][\nabla_{\boldsymbol{x}}\log p_t^{(t)'}(\boldsymbol{x})- \nabla_{\boldsymbol{x}}\log p_t^{(t)}(\boldsymbol{x})]\mathop{\mathrm{d}\boldsymbol{x}}\\
=& \frac{1}{2}\eta_t^2 g_t^2  \int p_t^{(t)'}(\boldsymbol{x})\|\nabla_{\boldsymbol{x}}\log p_t^{(t)'}(\boldsymbol{x})- \nabla_{\boldsymbol{x}}\log p_t^{(t)}(\boldsymbol{x})\|_2^2\mathop{\mathrm{d}\boldsymbol{x}}\\
\label{eq:fisher}
=& \eta_t^2 g_t^2 D_{\mathrm{Fisher}}(p_t^{(t)'} \parallel p_t^{(t)}). \qquad\qquad\qquad\qquad\qquad\textrm{(See \cite{nie2023blessing,lu2022maximum})}
\end{align}
Thus, \cref{eq:prop2_new} holds by integrating \cref{eq:fisher}.

\null\nobreak\hfill\ensuremath{\square}

\end{proof}

\subsection{Proof of Proposition 3}
We omit the superscript $\dashedph^{(t)}$ in this section for simplicity. We express the scale of the score estimation error as $\epsilon$, which we assume is sufficiently small:
\begin{gather}
\boldsymbol{s}_{t,\theta}(\boldsymbol{x}) = \nabla_{\boldsymbol{x}}\log q_t(\boldsymbol{x}) + \epsilon \mathop{\mathrm{Error}}(\boldsymbol{x}),
\end{gather}
with $\mathop{\mathrm{Error}}(x) = \mathcal{O}(1)$. It is known that  $D_{\mathrm{KL}}(p_0 \parallel q_0)$ has order of $\epsilon^2$ as below \cite{chen2023improved,chen2022sampling}:
\begin{gather}
D_{\mathrm{KL}}(p_0 \parallel q_0) = \epsilon^2 L(\eta_t) + \mathcal{O}(\epsilon^3).
\end{gather}

\begin{assumption}
\label{assumption3}
We rewrite the following assumptions from prior work \cite{cao2023exploring} using our notation for completeness.
\begin{enumerate}
\item Without loss of generality (time re-scaling),  $f_{t}= -\frac{1}{2}$, $g_t = 1$. 
\item $\exists c_U \in \mathbb{R}$, $\forall \boldsymbol{x} \in \mathbb{R}^d$ : $-\log p_0(\boldsymbol{x}) - \frac{|\boldsymbol{x}|^2}{2} \geq c_U$.
\item $\forall t \in [0,T]$, $-\log p_t(\boldsymbol{x})$ is strongly convex.
\item $\forall t \in [0,T]$, $ m_t \boldsymbol{I} \preceq \nabla^2(-\log p_t(\boldsymbol{x})) \preceq M_t \boldsymbol{I}$ where $m_t \geq 1$ for $t \in (0,T]$ and $m_0 > 1$.
\end{enumerate}
\end{assumption}
The proof for Proposition 3 is based on two propositions from prior work\cite{cao2023exploring}, which we reformulate to match our notation (\cref{lemma1} and \cref{lemma2}). Under Assumption \ref{assumption3} (1), $\eta_t$ in our notation corresponds to $h_t$ from \cite{cao2023exploring}. For the rest of this section, $\delta$ represents the Dirac delta function.

\begin{lemma}[Proposition 3.4 of \cite{cao2023exploring}]
\label{lemma1}
Suppose the score estimation function $\boldsymbol{s}_{t,\theta}(x)$ only undergoes perturbation at some fixed arbitrary timestep $t_a \in (0,T]$ with $\mathop{\mathrm{Error}}(\boldsymbol{x})=\delta_{t-t_a}E(\boldsymbol{x})$. Let $\eta_t = \eta$ ($\eta_t$ is constant for all $t$). Under Assumption \ref{assumption3}, and if $\eta$ is large enough, there exists an upper bound $L_{\mathrm{ub}}(\eta) \geq L(\eta)$, which is an exponentially decreasing function converging to 0. Thus, there exists an $\eta$ with $L(\eta) < \min(\epsilon, L(0))$.  %
\end{lemma}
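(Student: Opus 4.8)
The plan is to identify $L(\eta)$ with the $\epsilon^2$-coefficient of $D_{\mathrm{KL}}(p_0\parallel q_0)$, where $p_t$ is the law at time $t$ of the backward SDE \cref{eq:flow_extend_p2} run with the perturbed score and $q_t$ the law of the same SDE run with the exact score $\nabla_{\boldsymbol{x}}\log q_t$ (so $q_t$ is the true reverse marginal). Two structural facts do most of the work. First, in the inaccurate-editing regime inversion is exact, so $p_T=q_T$; since the perturbation $\epsilon\,\mathrm{Error}(\boldsymbol{x})=\epsilon\,\delta_{t-t_a}E(\boldsymbol{x})$ is supported only at $t=t_a$, the two processes have identical drifts on $(t_a,T]$, hence $p_t=q_t$ there. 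Second, on $[0,t_a)$ the drift perturbation has already acted, so $p_t$ and $q_t$ solve the \emph{same} Fokker--Planck equation, merely from different data at $t=t_a$. So the argument reduces to: (i) bounding the KL gap $D_{\mathrm{KL}}(p_{t_a^-}\parallel q_{t_a^-})$ injected by the impulse at $t_a$, and (ii) bounding how fast this gap shrinks as $t$ decreases from $t_a$ to $0$.

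For step (i), integrating the Dirac impulse converts the score error into a one-step map $\Phi(\boldsymbol{x})=\boldsymbol{x}-\tfrac{1+\eta^2}{2}g_{t_a}^2\,\epsilon E(\boldsymbol{x})+\mathcal{O}(\epsilon^2)$ applied to $q_{t_a}$, i.e.\ $p_{t_a^-}=\Phi_\# q_{t_a}$. Taylor-expanding $D_{\mathrm{KL}}(\Phi_\# q_{t_a}\parallel q_{t_a})$ to second order in $\epsilon$ gives $D_{\mathrm{KL}}(p_{t_a^-}\parallel q_{t_a^-})=\epsilon^2(1+\eta^2)^2 C_1+\mathcal{O}(\epsilon^3)$, where $C_1$ depends on $E$, $g_{t_a}$, $q_{t_a}$ but not on $\eta$; the sole $\eta$-dependence of the injected error is the polynomial factor $(1+\eta^2)^2$ coming from the $(1+\eta^2)/2$ multiplying the score in \cref{eq:flow_extend_p2}.

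For step (ii), on $(0,t_a)$ there is no score error, so the computation in the proof of \cref{prop2} applies verbatim (both processes share the same drift) and yields $\tfrac{\mathrm{d}}{\mathrm{d}t}D_{\mathrm{KL}}(p_t\parallel q_t)=\eta^2 g_t^2\,D_{\mathrm{Fisher}}(p_t\parallel q_t)$. The strong log-concavity in Assumption~\ref{assumption3} (with $m_t\ge 1$, $g_t=1$, $f_t=-1/2$) gives, via Bakry--\'Emery, a log-Sobolev inequality for $q_t$, hence the comparison $D_{\mathrm{Fisher}}(p_t\parallel q_t)\ge 2\,D_{\mathrm{KL}}(p_t\parallel q_t)$, so $\tfrac{\mathrm{d}}{\mathrm{d}t}D_{\mathrm{KL}}(p_t\parallel q_t)\ge 2\eta^2 D_{\mathrm{KL}}(p_t\parallel q_t)$. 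Integrating this differential inequality (Gr\"onwall) from $0$ up to $t_a$ gives $D_{\mathrm{KL}}(p_0\parallel q_0)\le e^{-2\eta^2 t_a}\,D_{\mathrm{KL}}(p_{t_a^-}\parallel q_{t_a^-})$. Combining with (i), $L(\eta)\le L_{\mathrm{ub}}(\eta):=C_1(1+\eta^2)^2 e^{-2t_a\eta^2}$; for large $\eta$ the polynomial prefactor is absorbed into a second exponential, so $L_{\mathrm{ub}}(\eta)\le C_1' e^{-t_a\eta^2}$, which is decreasing and tends to $0$. Since $t_a>0$ and $\min(\epsilon,L(0))>0$, any $\eta$ large enough that $L_{\mathrm{ub}}(\eta)<\min(\epsilon,L(0))$ yields $L(\eta)<\min(\epsilon,L(0))$, which is the claim.

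The step I expect to be the main obstacle is (i): making the impulsive perturbation rigorous (defining $\Phi$ as the time-integral of an idealized delta-perturbed flow and justifying the limit), and — more delicately — controlling the second-order Taylor remainder in $\epsilon$ \emph{uniformly} over the range of $\eta$ eventually used, since the leading coefficient itself grows like $(1+\eta^2)^2$. One also has to check that invoking strong log-concavity of $q_t$ rather than $p_t$ in step (ii) is harmless, which it is because $p_t$ and $q_t$ differ by $\mathcal{O}(\epsilon)$, so the Hessian bounds transfer at leading order. Modulo these technical points the argument is an adaptation of Proposition~3.4 of \cite{cao2023exploring} to the $(1+\eta_t^2)$-scaled backward SDE used here.
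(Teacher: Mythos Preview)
The paper does not prove this lemma: it is restated verbatim from Proposition~3.4 of \cite{cao2023exploring} and used as a black-box input to the proof of \cref{prop3}. There is therefore no ``paper's own proof'' to compare against; the paper's contribution here is only the reformulation in its notation and the downstream application.

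That said, your sketch is a faithful reconstruction of the mechanism one expects behind such a statement, and almost certainly matches the argument in \cite{cao2023exploring}: the Dirac impulse at $t_a$ produces a KL gap of order $(1+\eta^2)^2\epsilon^2$ (because the score enters the drift of \cref{eq:flow_extend_p2} with the $(1+\eta^2)/2$ prefactor), and then on $[0,t_a)$ the two laws evolve under the \emph{same} SDE, so the Fisher--KL identity from the proof of \cref{prop2} combined with the log-Sobolev inequality (via Bakry--\'Emery from the strong log-concavity in Assumption~\ref{assumption3}) gives exponential contraction at rate $e^{-2\eta^2 t_a}$. The product $(1+\eta^2)^2 e^{-2\eta^2 t_a}$ is your $L_{\mathrm{ub}}(\eta)$, eventually monotone decreasing to $0$ since $t_a>0$.

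The one point worth flagging --- which you already identify --- is the order-of-limits issue in step (i): $L(\eta)$ is by definition the $\epsilon^2$-coefficient of $D_{\mathrm{KL}}(p_0\parallel q_0)$, i.e.\ an object obtained after sending $\epsilon\to 0$ at each fixed $\eta$. So the Taylor expansion of $D_{\mathrm{KL}}(\Phi_\# q_{t_a}\parallel q_{t_a})$ need only be valid pointwise in $\eta$, not uniformly; the quantity $L_{\mathrm{ub}}(\eta)$ is then a genuine function of $\eta$ alone, and choosing $\eta$ large to beat $\min(\epsilon,L(0))$ is unproblematic. Your worry about ``controlling the remainder uniformly over the range of $\eta$'' is therefore less acute than you suggest, provided one is careful to compute $L(\eta)$ as a formal $\epsilon^2$-coefficient rather than as a non-asymptotic bound.
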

\begin{lemma}[Proposition 3.5 of \cite{cao2023exploring}]
\label{lemma2}
Suppose the score estimation function $\boldsymbol{s}_{t,\theta}(x)$ only undergoes perturbation at some fixed timestep $t_b \ll 1$ near timestep 0 with $\mathop{\mathrm{Error}}(\boldsymbol{x})=\delta_{t-t_b}E(\boldsymbol{x})$. Let $\eta_t = \eta$ ($\eta_t$ is constant for all $t$). Under Assumption \ref{assumption3}, and if $\eta$ is large enough, we have $L(0) \ll L(\eta)$.
\end{lemma}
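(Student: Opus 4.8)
\section*{Proof proposal for Lemma~\ref{lemma2}}

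The plan is to reduce this statement to a direct computation of $L(\eta)$ for a single score perturbation localized at a timestep $t_b \ll 1$, and then show that $L$ is increasing (indeed blows up) in $\eta$ near this regime, so that $L(0) \ll L(\eta)$ for large enough $\eta$. The starting point is the Fisher-divergence accumulation identity established in the proof of Proposition~2 (\cref{eq:fisher}): along the backward path, the KL divergence between the generated and true marginals is driven by $\eta_t^2 g_t^2 D_{\mathrm{Fisher}}(p_t \parallel q_t)$. Under Assumption~\ref{assumption3}(1), $g_t = 1$, so with constant $\eta_t = \eta$ the leading-order term $L(\eta)$ is governed by how the score error injected at $t_b$ propagates forward (toward timestep $0$) through the backward SDE and accumulates as Fisher divergence, scaled by $\eta^2$.

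First I would insert the perturbation $\mathop{\mathrm{Error}}(\boldsymbol{x}) = \delta_{t - t_b} E(\boldsymbol{x})$ into the backward dynamics (\cref{eq:flow_extend_p2}) and track the resulting discrepancy $\nabla_{\boldsymbol{x}} \log p_t - \nabla_{\boldsymbol{x}} \log q_t$ for $t \in [0, t_b]$, i.e.\ after the impulse has acted. The impulse creates an $\mathcal{O}(\epsilon)$ seed perturbation in the score (and hence in the log-density gradient) at $t = t_b$; for $t < t_b$ this perturbation evolves under the linearized backward flow. The crucial structural feature is that for $t$ very close to $0$ there is almost no remaining ``time'' in the backward process to dissipate the injected error before the terminal marginal $p_0$ is reached, and the stochastic term $\eta\, \mathrm{d}\boldsymbol{\bar w}$ actively \emph{amplifies} rather than contracts the Fisher divergence contribution. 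Concretely, I would compute the linearized evolution of the perturbed log-density gradient using the strong-convexity bounds of Assumption~\ref{assumption3}(3,4): the Hessian bound $m_t \boldsymbol{I} \preceq \nabla^2(-\log p_t) \preceq M_t \boldsymbol{I}$ controls the contraction/expansion rate, and because $t_b \ll 1$ the net contraction over $[0, t_b]$ is negligible.

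The key quantitative step is to express $L(\eta)$ in closed form (to leading order) as a function of $\eta$. Because $D_{\mathrm{Fisher}}$ itself scales with the squared perturbation of the score, and the perturbation is convected and reshaped by the $\eta$-dependent drift $\tfrac{1+\eta^2}{2} g_t^2 \boldsymbol{s}_{t,\theta}$ while the accumulation rate carries an explicit $\eta_t^2 g_t^2$ prefactor from \cref{eq:fisher}, the contribution for a near-$0$ impulse picks up a positive power of $\eta$ that grows without the exponential decay present in Lemma~\ref{lemma1}. In the $t_b \to 0$ regime the stabilizing ``relaxation toward the data distribution'' that makes large $\eta$ helpful for a near-$T$ impulse (Lemma~\ref{lemma1}) is absent, so instead of an exponentially decreasing upper bound $L_{\mathrm{ub}}(\eta)$ one obtains a lower bound on $L(\eta)$ that is monotonically increasing in $\eta$. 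I would then evaluate this expression at $\eta = 0$ (pure ODE, where the injected error is merely transported without the $\eta^2$ Fisher amplification) and compare to large $\eta$, yielding $L(0) \ll L(\eta)$.

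The main obstacle I anticipate is making the ``no time to relax'' intuition rigorous: one must carefully justify that over the short interval $[0, t_b]$ with $t_b \ll 1$, the contraction afforded by strong convexity (controlled by $m_t \ge 1$, with $m_0 > 1$) is insufficient to offset the $\eta^2$-weighted accumulation of Fisher divergence, so that the $\eta$-dependence of $L$ is genuinely increasing rather than eventually decreasing. This requires a delicate estimate of the competition between the deterministic contraction rate $\sim \exp(-\int_t^{t_b} m_s\, \mathrm{d}s)$ and the noise-driven growth, and showing that for $t_b$ small enough the latter dominates uniformly in $\eta$ on a suitable range. Since this is precisely the mirror-image of the argument underlying Lemma~\ref{lemma1} (which this paper imports directly from \cite{cao2023exploring}), I would lean on the same analytic machinery from \cite{cao2023exploring}, adapting the sign of the net effect to the near-$0$ timestep case, rather than re-deriving the Fokker--Planck perturbation analysis from scratch.
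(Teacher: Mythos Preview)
The paper does not prove Lemma~\ref{lemma2} at all: it is imported verbatim as Proposition~3.5 of \cite{cao2023exploring} and used as a black box in the proof of Proposition~3. So there is no ``paper's own proof'' to compare your sketch against; your final paragraph, where you say you would ``lean on the same analytic machinery from \cite{cao2023exploring},'' is already what the paper does in its entirety.

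That said, your sketch has a substantive gap worth flagging. The Fisher identity you invoke from \cref{eq:fisher} was derived for the comparison $D_{\mathrm{KL}}(p_t^{(t)'} \parallel p_t^{(t)})$, where \emph{both} processes evolve under the \emph{same} backward SDE with the same (possibly inaccurate) score $\boldsymbol{s}_{t,\theta}$ but different initial conditions. In the setting of Lemma~\ref{lemma2} the relevant quantity is $D_{\mathrm{KL}}(p_t \parallel q_t)$, where $p_t$ evolves under the perturbed score and $q_t$ under the true score $\nabla_{\boldsymbol{x}}\log q_t$; the drifts differ, so the cancellation that produced the clean $\eta_t^2 g_t^2 D_{\mathrm{Fisher}}$ term no longer goes through as stated. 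The time derivative of the KL picks up an extra cross term proportional to the score error $\epsilon E(\boldsymbol{x})$ contracted against the log-density gap, and it is this term---not the pure $\eta^2$ Fisher accumulation---that carries the leading $\epsilon^2$ contribution defining $L(\eta)$. Your ``$\eta^2$ prefactor amplifies the Fisher divergence'' argument therefore does not directly yield the monotone growth of $L(\eta)$; the actual mechanism in \cite{cao2023exploring} involves analyzing how the $\tfrac{1+\eta^2}{2}$ drift coefficient reshapes the perturbed density over the short interval $[0,t_b]$, which is related to but distinct from what you wrote.
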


\begin{proposition*}
Under Assumption \ref{assumption3}, if the score estimation function $\boldsymbol{s}_{t,\theta}(x)$ undergoes perturbations only near the timestep $T$ and near the timestep $0$, there exists a timestep $T_a$ and a timestep $T_b$, along with a large constant $\eta_{\mathrm{const}}>0$, such that $D_{\mathrm{KL}}(p_{0,\eta_t} \parallel q_0)$ becomes reduced when employing $\eta_t$ as \cref{eq:cases_new}, in comparison to $\eta_t=0$ for all $t$ or $\eta_t=\eta_{\mathrm{const}}$ for all $t$.
\end{proposition*}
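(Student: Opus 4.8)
The plan is to reduce the statement to the two building blocks \cref{lemma1} and \cref{lemma2} by a superposition argument carried out at leading order in the expansion $D_{\mathrm{KL}}(p_{0,\eta_t}\parallel q_0)=\epsilon^2 L(\eta_t)+\mathcal{O}(\epsilon^3)$, and then to verify that the schedule of \cref{eq:cases_new} simultaneously inherits the favourable behaviour of a large constant $\eta$ near timestep $T$ (\cref{lemma1}) and of $\eta_t=0$ near timestep $0$ (\cref{lemma2}).

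\textbf{Localising the perturbation.} Since the score is perturbed only near $T$ and near $0$, I would write $\mathop{\mathrm{Error}}(\boldsymbol{x})=\delta_{t-t_a}E_a(\boldsymbol{x})+\delta_{t-t_b}E_b(\boldsymbol{x})$ with $t_a$ close to $T$ and $t_b\ll1$ (a continuum of perturbations near each endpoint is handled by further superposition), and then \emph{choose} the breakpoints so that $T\ge t_a>T_a>T_b>t_b>0$. With this choice the near-$T$ perturbation lies inside the plateau $\{\eta_t=\eta_{\mathrm{const}}\}$, the near-$0$ perturbation lies inside the region $\{\eta_t=0\}$, and the linear ramp on $[T_b,T_a]$ is perturbation-free, so on that interval the backward SDE of \cref{eq:flow_extend_p2} is exact and, by the identity behind \cref{prop2}, can only contract whatever error is already present.

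\textbf{Superposition.} Writing the linearised response as $p_t=q_t+\epsilon(r_t^{(a)}+r_t^{(b)})+\mathcal{O}(\epsilon^2)$, where $r^{(a)},r^{(b)}$ solve the linearised Fokker--Planck equation of \cref{eq:flow_extend_p2} with source supported at $t_a$, respectively $t_b$ (Assumption~\ref{assumption3} supplies the needed regularity), one obtains $L(\eta_t)=L_a(\eta_t)+L_b(\eta_t)+L_{\mathrm{cross}}(\eta_t)$ with $L_a=\tfrac12\int (r_0^{(a)})^2/q_0$, $L_b$ defined analogously, and $|L_{\mathrm{cross}}|\le 2\sqrt{L_aL_b}$. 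Two facts then do the work. (i) $r^{(b)}$ depends on $\eta_t$ only for $t\le t_b$, where the proposed schedule coincides with $\eta_t\equiv0$, so $L_b(\text{schedule})=L_b(0)$, whereas \cref{lemma2} gives $L_b(\eta_t\equiv\eta_{\mathrm{const}})\gg L_b(0)$ once $\eta_{\mathrm{const}}$ is large. (ii) On $[T_a,t_a]$ the proposed schedule equals the constant $\eta_{\mathrm{const}}$, so re-running the argument of \cref{lemma1} on that subinterval (injection of the error at $t_a$ is at most polynomial in $\eta_{\mathrm{const}}$, contraction over $[T_a,t_a]$ is exponentially decreasing in $\eta_{\mathrm{const}}$) already bounds $L_a(\text{schedule})$ by an exponentially decreasing function of $\eta_{\mathrm{const}}$; hence $L_a(\text{schedule})\to0$ and $L_a(\text{schedule})<L_a(0)$, the ramp and the strong log-concavity of Assumption~\ref{assumption3} contributing only further contraction on $[0,T_a]$.

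\textbf{Conclusion and main obstacle.} Choosing $\eta_{\mathrm{const}}$ large enough to validate \cref{lemma1,lemma2} and to make the residuals negligible, $L(\text{schedule})\le L_a(\text{schedule})+L_b(0)+2\sqrt{L_a(\text{schedule})L_b(0)}$ is essentially $L_b(0)$; on the other hand $L(\eta_t\equiv\eta_{\mathrm{const}})\gtrsim L_b(\eta_{\mathrm{const}})\gg L_b(0)$, and $L(\eta_t\equiv0)=L_a(0)+L_b(0)+L_{\mathrm{cross}}(0)\ge L_b(0)+L_a(0)-2\sqrt{L_a(0)L_b(0)}$, which strictly exceeds $L(\text{schedule})$. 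I expect the delicate point to be this last comparison against the pure ODE: bounding $L_{\mathrm{cross}}$ by Cauchy--Schwarz forces the near-$T$ perturbation to contribute non-negligibly relative to the near-$0$ one (i.e.\ $L_a(0)$ must not be swamped by $L_b(0)$), and, more fundamentally, rigorously justifying the linearisation together with uniform control of the $\mathcal{O}(\epsilon^3)$ remainder under Assumption~\ref{assumption3} is the technical heart of the argument; the remainder is bookkeeping around \cref{lemma1} and \cref{lemma2}.
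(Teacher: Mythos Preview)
Your approach is in the same spirit as the paper's---both hinge on \cref{lemma1,lemma2} and work at leading order in the expansion $D_{\mathrm{KL}}=\epsilon^2 L+\mathcal{O}(\epsilon^3)$---but the paper organises the argument differently and thereby sidesteps the cross-term bookkeeping you flag as delicate.

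Rather than decomposing $L(\eta_t)=L_a+L_b+L_{\mathrm{cross}}$, the paper tracks $D_{\mathrm{KL}}(p_t\parallel q_t)$ segment by segment through $[T_a,T]$, $[T_b,T_a]$, $[0,T_b]$ for each of the three schedules. The key step is that \cref{lemma1} on the first interval yields $L(\eta_{\mathrm{const}})<\epsilon$, so $D_{\mathrm{KL}}(p_{T_a}\parallel q_{T_a})=\mathcal{O}(\epsilon^3)$ for the proposed schedule: the near-$T$ perturbation is pushed entirely into the remainder, and on $[0,T_b]$ the schedule becomes a genuine \emph{single}-perturbation problem at leading order, with no cross term to control. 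The comparison to $\eta_t\equiv\eta_{\mathrm{const}}$ is then immediate, since that schedule also reaches $D_{\mathrm{KL}}\approx 0$ at $T_b$ and the comparison reduces to \cref{lemma2} on $[0,T_b]$ alone. For the comparison to $\eta_t\equiv 0$, the paper observes that both the proposed schedule and the pure ODE use $\eta_t=0$ on $[0,T_b]$, hence follow the same flow there; since the schedule enters $T_b$ with $D_{\mathrm{KL}}\approx 0$ while the pure ODE enters with strictly positive $D_{\mathrm{KL}}$ (carried over from $t_a$), the paper concludes by a monotonicity claim that the schedule also exits with smaller $D_{\mathrm{KL}}$.

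The trade-off is this: the paper's segment-wise framing cleanly eliminates the cross term for the schedule and for the constant-$\eta$ comparison, which is a real simplification over your Cauchy--Schwarz route. However, the final monotonicity step (same perturbed ODE on $[0,T_b]$ from different initial distributions preserves the ordering of $D_{\mathrm{KL}}(\cdot\parallel q_0)$) is asserted rather than proved, and at leading order it is equivalent to the inequality $L_a(0)+L_{\mathrm{cross}}(0)>0$---precisely the point you identify as delicate. So your diagnosis of the obstacle is accurate; the paper absorbs it into a heuristic monotonicity claim rather than resolving it via an explicit bound.
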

\begin{equation}
\label{eq:cases_new}
\eta_t = \begin{cases}
\eta_{\mathrm{const}} & \textrm{if } T \geq t \geq T_a\\
\eta_{\mathrm{const}}(t-T_b)/(T_a-T_b) & \textrm{if } T_a > t \geq T_b\\
0 & \textrm{if } T_b > t \geq 0
\end{cases}
\end{equation}
\begin{proof}
We define the following cases for three different possible $\eta_t$ functions (\cref{fig:prop3}):
\begin{itemize}
\item Case 1: \cref{eq:cases_new},
\item Case 2: $\eta_t=\eta_{\mathrm{const}}$ for all $t$,
\item Case 3: $\eta_t=0$ for all $t$.
\end{itemize}

We write $\mathop{\mathrm{Error}}(\boldsymbol{x})$ as below assuming perturbations only at $t_a$ (near timestep $T$) and at $t_b$ (near timestep 0):
\begin{gather}
\mathop{\mathrm{Error}}(\boldsymbol{x})=(\delta_{t-t_a}+\delta_{t-t_b})E(\boldsymbol{x}).
\end{gather}

Let $T_a$ be an arbitrary timestep with $t_a > T_a > t_b$. Assume we perform a shorter diffusion backward pass from $T$ to $T_a$ by setting the final diffusion step to $T_a$ instead of 0 and measure its sample quality with $D_{\mathrm{KL}}(p_{T_a} \parallel q_{T_a})$. Since we now only operate in the time interval $[T_a,T]$ , we can ignore the perturbation at $t_b$ and rewrite the error function as $\mathop{\mathrm{Error}}(\boldsymbol{x})=\delta_{t-t_a}E(\boldsymbol{x})$. By applying \cref{lemma1} to our new diffusion pass in $[T_a,T]$, without loss of generality, there exists a constant $\eta_{\mathrm{const,a}}$ so that
\begin{gather}
\label{eq:min}
L(\eta_{\mathrm{const,a}}) < \min (\epsilon, L(T_a)) \leq L(T_a),\\
\label{eq:approx}
D_{\mathrm{KL}}(p_{T_a,\eta_{\mathrm{const,a}}} \parallel q_{T_a}) = \epsilon^2 L(\eta_t) + \mathcal{O}(\epsilon^3) = \mathcal{O}(\epsilon^3) \approx 0.
\end{gather}

Similarly, let $T_b$ be an arbitrary timestep with $T_a > T_b > t_b$ and assume we perform a shorter diffusion backward pass starting from $T_b$ and ending at 0. From \cref{lemma2}, there exists a constant $\eta_{\mathrm{const,b}}$ that satisfies $L(0) \ll L(\eta_{\mathrm{const,b}})$.

Let $\eta_{\mathrm{const}}= \max (\eta_{\mathrm{const,a}}, \eta_{\mathrm{const,b}})$.  
We compare case 1 to case 2 and case 3 and show that case 1 has the best sampling quality among those three.

\nbf{i) Comparison to case 2 ($\eta_t=\eta_{\mathrm{const}}$ for all $t$)}
\begin{enumerate}
\item $T\geq t \geq T_a$: By \cref{lemma1} and \cref{eq:approx}, we have $D_{\mathrm{KL}}(p_{T_a,\eta_t} \parallel q_{T_a}) \approx 0$ for case 1 and $D_{\mathrm{KL}}(p_{T_a,\eta_{\mathrm{const}}} \parallel q_{T_a}) \approx 0$ for case 2.
\item $T_a \geq t \geq T_b$: Since we have $D_{\mathrm{KL}}(p_{T_a} \parallel q_{T_a}) \approx 0$ for both cases and the score function is accurate, the $\eta_t$ function does not affect $p_{T_b,\eta_t}$, thus we have $D_{\mathrm{KL}}(p_{T_b} \parallel q_{T_b}) \approx 0$ for both case 1 and case 2.
\item $T_b \geq t \geq 0$: Since $\eta_t=0$ for $t \leq T_b$ for case 1, case 1's $D_{\mathrm{KL}}(p_{0,\eta_t} \parallel q_0)$ is smaller than case 2's $D_{\mathrm{KL}}(p_{0,\eta_{\mathrm{const}}} \parallel q_0)$ by \cref{lemma2}.
\end{enumerate}

\nbf{ii) Comparison to case 3 ($\eta_t=0$ for all $t$)}
\begin{enumerate}
\item $T \geq t \geq T_a$: By \cref{lemma1}, \cref{eq:min} and \cref{eq:approx}, case 1 satisfies $D_{\mathrm{KL}}(p_{T_a,\eta_t} \parallel q_{T_a}) \approx 0$ while we have $D_{\mathrm{KL}}(p_{T_a,0} \parallel q_{T_a}) > D_{\mathrm{KL}}(p_{T_a,\eta_t} \parallel q_{T_a})$ for case 3.
\item $T_a \geq t \geq T_b$: Since the score function is accurate and $D_{\mathrm{KL}}(p_{T_a,\eta_t} \parallel q_{T_a}) \approx 0$ for case 1, $D_{\mathrm{KL}}(p_{T_b,\eta_t} \parallel q_{T_b}) \approx 0$ holds while we have $D_{\mathrm{KL}}(p_{T_a,0} \parallel q_{T_a}) > D_{\mathrm{KL}}(p_{T_a,\eta_t} \parallel q_{T_a})$  and therefore $D_{\mathrm{KL}}(p_{T_b,0} \parallel q_{T_b}) > D_{\mathrm{KL}}(p_{T_b,\eta_t} \parallel q_{T_b})$ for case 3.
\item $T_b \geq t \geq 0$: %
For case 1 we have $D_{\mathrm{KL}}(p_{T_b,\eta_t} \parallel q_{T_b}) \approx 0$ and for case 3 we have $D_{\mathrm{KL}}(p_{T_b,0} \parallel q_{T_b}) > D_{\mathrm{KL}}(p_{T_b,\eta_t} \parallel q_{T_b})$. Since both case 1 and case 3 follow the same ODE ($\eta_t=0$ for $t \leq T_b$), case 1's $D_{\mathrm{KL}}(p_{0,\eta_t} \parallel q_0)$ is smaller than case 3's $D_{\mathrm{KL}}(p_{0,0} \parallel q_0)$.
\end{enumerate}
Following \textbf{i)} and \textbf{ii)}, case 1 has the best sample quality, since its $D_{\mathrm{KL}}(p_{0,\eta_t} \parallel q_0)$ is smaller than $D_{\mathrm{KL}}(p_{0,\eta_{\mathrm{const}}} \parallel q_0)$ of case 2 and $D_{\mathrm{KL}}(p_{0,0} \parallel q_0)$ of case 3.

\null\nobreak\hfill\ensuremath{\square}

\end{proof}

    \begin{figure}[h]
    \centering
        \resizebox{0.45\textwidth}{!}{%
\begin{tikzpicture}
\begin{axis}[
    every axis y label/.style={at={(current axis.west)},left=5mm},
    axis lines = left,
    xlabel = timestep $t$,
    xtick={0,0.1,0.2,0.8,0.9,1},
    xticklabels={$T$,$t_a$,$T_a$,$T_b$,$t_b$,$0$},
    ytick={0,0.6},
    yticklabels={0,$\eta_{\mathrm{const}}$},
    xmin=0, xmax=1.0,
    ymin=0, ymax=0.8,
    legend pos=south west,
]
\addplot[color=blue,domain=0:1]{0.6}
node[above,sloped,pos=0.5]{case 2 ($\eta_t = \eta_{\mathrm{const}}$)};
\addplot[color=green,domain=0:1]{0}
node[above,sloped,pos=0.5]{case 3 ($\eta_t = 0$)};
\addplot[color=red,domain=0.2:0.8]{0.8-x}
node[above,sloped,pos=0.5]{case 1 \cref{eq:cases_new}};
\addplot[color=red,domain=0:0.2]{0.6};
\addplot[color=red,domain=0.8:1]{0};
\addplot[color=black,domain=0:1,->,dashed] coordinates {(0.1,0)(0.1,0.4)}
node[above,sloped,pos=0.5]{perturbation};
\addplot[color=black,domain=0:1,->,dashed] coordinates {(0.9,0)(0.9,0.4)}
node[above,sloped,pos=0.5]{perturbation};

\end{axis}
\end{tikzpicture}
}
    \caption{Proposition 3. We assume that the score estimation model is accurate and only has two perturbations at $t_a$ and $t_b$ ($t_b$ close to 0). We show that the $\eta$ function of case 1 provides better sample quality than case 2 and case 3.}
    \label{fig:prop3}
    \end{figure}
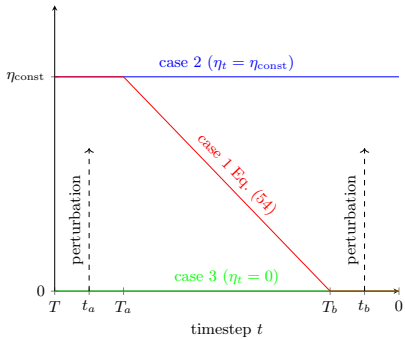

\section{Experimental Details}

We provide our hyperparameters for diffusion inversion and real image editing in \cref{tab:inv_details} and \cref{tab:edit_details}. In general, all hyperparameters follow the official code implementation of the respective method. Additionally, \cref{tab:metrics_detail} shows which backbone we used for each metric. \cref{fig:eta_setting} visualizes the $\eta$ function for our three proposed Eta Inversion configurations. %
For EtaInv (1) and (2), we set the cross-attention map threshold to 0.2 and use a sampling count of \(n = 10\). For EtaInv (3), we do not use region-dependent \(\eta\) and use a sampling count of \(n = 1\).

    \begin{table}[h]
    \caption{Experimental details.}
\centering
\begin{subtable}{0.48\linewidth}

    \caption{Inversion hyperparameters. In general, parameter values follow the official implementation.}
\centering
\resizebox{1\textwidth}{!}{%
        \small
        \centering
        \setlength{\tabcolsep}{4.25pt}
        \begin{tabular}{ll|l}
        \toprule
        Inversion Method & Parameter & Value \\
        \midrule
        \multirow{1}{*}{DDPM Inv. \cite{huberman2023edit}} & \footnotesize{\texttt{skip}} & 18 \\
\midrule
\multirow{3}{*}{EDICT \cite{wallace2023edict}} & \footnotesize{\texttt{init\char`_image\char`_strength}} & 1.0 \\
 & \footnotesize{\texttt{leapfrog\char`_steps}} & \footnotesize{\texttt{True}} \\
 & \footnotesize{\texttt{mix\char`_weight}} & 0.93 \\
\midrule
\multirow{2}{*}{Null-text Inv. \cite{mokady2023null}} & \footnotesize{\texttt{early\char`_stop\char`_epsilon}} & 1e-05 \\
 & \footnotesize{\texttt{num\char`_inner\char`_steps}} & 10 \\
\midrule
\multirow{5}{*}{ProxNPI \cite{han2023improving}} & \footnotesize{\texttt{dilate\char`_mask}} & 1 \\
 & \footnotesize{\texttt{prox}} & \footnotesize{\texttt{l0}} \\
 & \footnotesize{\texttt{quantile}} & 0.7 \\
 & \footnotesize{\texttt{recon\char`_lr}} & 1 \\
 & \footnotesize{\texttt{recon\char`_t}} & 400 \\
        \bottomrule
    \end{tabular}
    }
    \label{tab:inv_details}
    
\end{subtable}
\hfill
\begin{subtable}{0.48\linewidth}

\caption{Editing hyperparameters. In general, parameter values follow the official implementation.}
\centering
        \resizebox{1\linewidth}{!}{
        \small
        \centering
        \setlength{\tabcolsep}{4.25pt}

    \begin{tabular}{ll|l}
        \toprule
        Editing Method & Parameter & Value \\
        \midrule
        \multirow{3}{*}{PtP \cite{hertz2022prompt}} & \footnotesize{\texttt{cross\char`_replace\char`_steps}} & 0.4 \\
 & \footnotesize{\texttt{self\char`_replace\char`_steps}} & 0.6 \\
 &\footnotesize{\texttt{equilizer\char`_params\char`_values}} & 2.0 \\\midrule
\multirow{2}{*}{PnP \cite{tumanyan2023plug}} & \footnotesize{\texttt{pnp\char`_f\char`_t}} & 0.8 \\
 & \footnotesize{\texttt{pnp\char`_attn\char`_t}} & 0.5 \\\midrule
\multirow{2}{*}{MasaCtrl \cite{cao2023masactrl}} & \footnotesize{\texttt{step}} & 4 \\
 & \footnotesize{\texttt{layer}} & 10 \\\bottomrule
    \end{tabular}
    }
    \label{tab:edit_details}
\end{subtable}
\newline
\smallskip
\newline
\begin{subtable}{\linewidth}
\caption{Backbone models for metric computation.}
\centering
        \resizebox{0.5\linewidth}{!}{%
        \small
        \centering
        \setlength{\tabcolsep}{4.25pt}
    
    \begin{tabular}{p{5cm}|p{2cm}}
        \toprule
        Metric & Backbone \\
        \midrule
        CLIP similarity \cite{radford2021learning} & ViT-B16 \cite{dosovitskiy2021image} \\
        DINO structural similarity \cite{caron2021emerging} & ViT-B8 \cite{dosovitskiy2021image} \\
        Perceptual Similarity (LPIPS) \cite{zhang2018perceptual} & AlexNet \cite{alexnet} \\
        \bottomrule
    \end{tabular}
    }
    \label{tab:metrics_detail}
\end{subtable}

\end{table}

    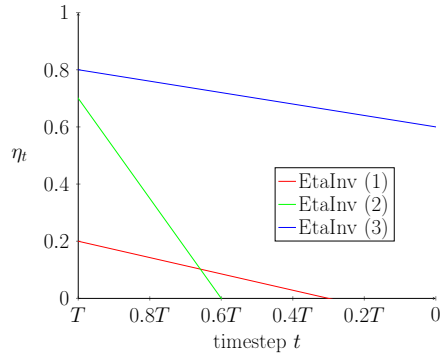
\begin{figure}
\centering
\resizebox{0.5\linewidth}{!}{
{\scalefont{2.2}
    \begin{tikzpicture}
\begin{axis}[
    every axis y label/.style={at={(current axis.west)},left=15mm},
    every axis x label/.style={at={(current axis.south)},below=10mm},
    legend style={at={(0.55,0.2)},anchor=south west},
    axis lines = left,
    xlabel = {timestep $t$},
    ylabel = {$\eta_t$},
    xtick={0,0.2,0.4,0.6,0.8,1},
    xticklabels={$T$,$0.8T$,$0.6T$,$0.4T$,$0.2T$,$0$},
    ytick={0,0.2,0.4,0.6,0.8,1.0},
    xmin=0, xmax=1,
    ymin=0, ymax=1,
    width=\textwidth,
    height=0.8\linewidth,
    scale only axis=true
]
\addplot[color=red]{0.28571428571428575 * ((1 - x) - 0.3)};
\addlegendentry{EtaInv (1)}
\addplot[color=green]{1.7499999999999998 * ((1 - x) - 0.6)};
\addlegendentry{EtaInv (2)}
\addplot[color=blue]{0.20000000000000007 * ((1 - x) - 0.0) + 0.6};
\addlegendentry{EtaInv (3)}

\end{axis}
\end{tikzpicture}
}}
\caption{
    $\eta$ function for our three different EtaInv settings. \textbf{EtaInv (1)} uses smaller $\eta$ favoring structural similarity, \textbf{EtaInv (2)} uses larger $\eta$ favoring prompt alignment, and \textbf{EtaInv (3)} uses very large $\eta$ even at later steps, optimized for style transfer.
}
\label{fig:eta_setting}

\end{figure}
\section{Searching the Optimal Eta Function}
In this section, we present several hyperparameter study results on how we searched the optimal $\eta$ function for EtaInv (2). We initialize all hyperparameters to EtaInv (2) by default. Tests are performed using PyTorch \cite{paszke2019pytorch} on an NVIDIA V100 32GB GPU in 32-bit precision. \cref{fig:optimal} shows an overview of our experiments.

\subsection{Sign of Slope $\frac{\mathop{\mathrm{d}\eta_{(T-t)}}}{\mathop{\mathrm{d}t}}$}
We first formulate $\eta$ as a linear function for simplicity and explore how the slope $\frac{\mathop{\mathrm{d}\eta_{(T-t)}}}{\mathop{\mathrm{d}t}}$ of the graph affects the editing performance. \cref{fig:slope} displays the tested $\eta$ functions and \cref{tab:slope} shows the metric values for each function. The results demonstrate that $\frac{\mathop{\mathrm{d}\eta_{(T-t)}}}{\mathop{\mathrm{d}t}}<0$ shows better text-image alignment performance and a better editing effect than $\frac{\mathop{\mathrm{d}\eta_{(T-t)}}}{\mathop{\mathrm{d}t}}\geq0$, which aligns with our theoretical findings. Therefore, we use a decreasing slope with $\frac{\mathop{\mathrm{d}\eta_{(T-t)}}}{\mathop{\mathrm{d}t}} <0$ for further experiments.

\subsection{Optimal $t, \eta-$intercepts}
Next, we analyze how different linear $\eta$ functions affect performance. We define several intercepts on the time axis ($0.3, 0.4, 0.5, 0.6, 0.7$) and on the $\eta$ axis (0.6, 0.7, 0.8, 0.9, 1.0) and linearly interpolate between two intercepts as displayed in \cref{fig:intercepts}. \cref{tab:intercept} shows that a larger $\eta$ and a smaller $t$ (corresponding to applying noise even at later timesteps) improve text-image alignment while sacrificing structural similarity. EtaInv (2) with ($\eta\textrm{-intercept}=0.7, t\textrm{-intercept}=0.6$) provides a good balance for both.

\subsection{Non-zero Concavity $\frac{\mathop{\mathrm{d}^2\eta_{(T-t)}}}{\mathop{\mathrm{d}t^2}}$}
Furthermore, we perform several grid search experiments with non-linear $\eta$ functions by introducing an exponent $p$ (1/3, 1/2, 1, 2, 3) resulting in a concave (if $p < 1$) or convex (if $p > 1$) $\eta$ function. First, we fix the $t, \eta-$intercept to EtaInv (2) and compute metrics for different exponents in \cref{tab:power}. We can observe that making EtaInv (2) concave improves image alignment since more total noise is injected and that a convex EtaInv (2) achieves better structural similarity since less noise is injected. Second, we provide an extensive grid search over various intercepts and exponents in \cref{tab:intercept_power} where each power shows a similar trade-off for alignment and similarity when altering the $\eta$ and $t$ intercept. Based on these experiments we find that there is no immediate benefit of introducing a non-linear $\eta$ function and decide to fix it to linear for the remaining tests.

\subsection{Sampling Count $n$}
We test several different noise sampling counts $(1, 10, 10^2, 10^3, 10^4)$ in \cref{tab:grid_sample_count} and observe that a larger sampling count improves structural similarity while reducing text-image CLIP similarity. We argue that a larger sample count reduces the randomness in Eta Inversion by finding a noise that better approximates the true source-target branch distance. %

\subsection{Cross-attention Map Source}
There are three different sources for cross-attention maps: (i.) from the forward (inversion) path; (ii.) from the the backward path of the source latent; and (iii.) from the backward path of the target latent. Therefore, we provide results for each cross-attention map source in \cref{tab:grid_attn_src} while additionally including two more tests: GT, which uses the ground-truth foreground-background segmentation map provided by the dataset instead of cross-attention; and Source+Target which combines the backward attention maps from the source and the target branch with a $\max$ operation. We found that averaged attention masks from the forward path (i.) are most accurate and stable, since Eta Inversion injects no noise in the forward path, leading to a balanced trade-off of text-image alignment and structural similarity.

\subsection{Mask Threshold $\mathcal{M}_{th}$}
Finally, \cref{tab:grid_attn_thres} shows text-image alignment and structural similarity metrics for different attention map thresholds. Additionally, Smooth does not threshold the attention map but instead multiplies it to $\eta$, reducing $\eta$ at low attention values, which did not achieve good results. For the threshold experiments, a larger attention threshold reduces the region where $\eta>0$ and noise is injected (see \cref{fig:thres}), consequently showing worse text-image alignment and better structural similarity. We find that the threshold $\mathcal{M}_{th} = 0.2$ achieves the best results.

    \begin{figure}[h]
    \centering
\begin{subfigure}[t]{0.49\linewidth}
    \centering
        \resizebox{\linewidth}{!}{%
\begin{tikzpicture}
\begin{axis}[
    every axis y label/.style={at={(current axis.west)},left=5mm},
    axis lines = left,
    xlabel = timestep $t$,
    ylabel = $\eta_t$,
    xtick={0,0.5,1},
    xticklabels={$T$,$0.5T$,$0$},
    ytick={0,0.2,0.4,0.6,0.8,1},
    xmin=0, xmax=1,
    ymin=0, ymax=1,
]
\addplot[color=blue]{1.0 * ((1 - x) - 0) ^ 1 + 0.0};
\addlegendentry{$\frac{\mathop{\mathrm{d}\eta_{(T-t)}}}{\mathop{\mathrm{d}t}} < 0$}
\addplot[color=green]{0.0 * ((1 - x) - 0) ^ 1 + 0.5};
\addlegendentry{$\frac{\mathop{\mathrm{d}\eta_{(T-t)}}}{\mathop{\mathrm{d}t}} = 0$}
\addplot[color=red]{-0.20000000000000018 * ((1 - x) - 0) ^ 1 + 0.6000000000000001};
\addlegendentry{$\frac{\mathop{\mathrm{d}\eta_{(T-t)}}}{\mathop{\mathrm{d}t}} > 0$}
\addplot[color=blue]{0.8 * ((1 - x) - 0) ^ 1 + 0.1};
\addplot[color=blue]{0.6000000000000001 * ((1 - x) - 0) ^ 1 + 0.2};
\addplot[color=blue]{0.3999999999999999 * ((1 - x) - 0) ^ 1 + 0.30000000000000004};
\addplot[color=blue]{0.19999999999999996 * ((1 - x) - 0) ^ 1 + 0.4};
\addplot[color=red]{-0.40000000000000013 * ((1 - x) - 0) ^ 1 + 0.7000000000000001};
\addplot[color=red]{-0.6000000000000001 * ((1 - x) - 0) ^ 1 + 0.8};
\addplot[color=red]{-0.8 * ((1 - x) - 0) ^ 1 + 0.9};
\addplot[color=red]{-1.0 * ((1 - x) - 0) ^ 1 + 1.0};
\end{axis}
\end{tikzpicture}
            }%
    \caption{Sign of slope $\frac{\mathop{\mathrm{d}\eta_{(T-t)}}}{\mathop{\mathrm{d}t}}$}
    \label{fig:slope}
    \end{subfigure}
\begin{subfigure}[t]{0.49\linewidth}
    \centering
        \resizebox{\linewidth}{!}{%
\begin{tikzpicture}
\begin{axis}[
    every axis y label/.style={at={(current axis.west)},left=5mm},
    axis lines = left,
    xlabel = timestep $t$,
    ylabel = $\eta_t$,
    xtick={0,0.2,0.4,0.6,0.8,1},
    xticklabels={$T$,$0.8T$,$0.6T$,$0.4T$,$0.2T$,$0$},
    ytick={0,0.2,0.4,0.6,0.8,1},
    xmin=0, xmax=1,
    ymin=0, ymax=1,
]
\addplot[color=black]{0.8571428571428572 * ((1 - x) - 0.3) ^ 1 + 0};
\addplot[color=black]{1.142857142857143 * ((1 - x) - 0.3) ^ 1 + 0};
\addplot[color=black]{1.4285714285714286 * ((1 - x) - 0.3) ^ 1 + 0};
\addplot[color=black]{1.2 * ((1 - x) - 0.5) ^ 1 + 0};
\addplot[color=black]{1.6 * ((1 - x) - 0.5) ^ 1 + 0};
\addplot[color=black]{2.0 * ((1 - x) - 0.5) ^ 1 + 0};
\addplot[color=black]{1.9999999999999996 * ((1 - x) - 0.7) ^ 1 + 0};
\addplot[color=black]{2.6666666666666665 * ((1 - x) - 0.7) ^ 1 + 0};
\addplot[color=black]{3.333333333333333 * ((1 - x) - 0.7) ^ 1 + 0};

\end{axis}
\end{tikzpicture}
                    }%
        \caption{$t, \eta-$intercepts}
    \label{fig:intercepts}
        \end{subfigure}

\begin{subfigure}[t]{0.49\linewidth}
    \centering
        \resizebox{\linewidth}{!}{%
\begin{tikzpicture}
\begin{axis}[
    every axis y label/.style={at={(current axis.west)},left=5mm},
    axis lines = left,
    xlabel = timestep $t$,
    ylabel = $\eta_t$,
    xtick={0,0.2,0.4,0.6,0.8,1},
    xticklabels={$T$,$0.8T$,$0.6T$,$0.4T$,$0.2T$,$0$},
    ytick={0,0.2,0.4,0.6,0.8,1},
    xmin=0, xmax=0.5,
    ymin=0, ymax=0.8,
]
\addplot[color=blue,domain=0:0.4]{0.9500461658082172 * ((1 - x) - 0.6) ^ 0.3333333333333333 + 0}
node[above,sloped,pos=0.5]{$p=1/3$};
\addlegendentry{$\frac{\mathop{\mathrm{d}^2\eta_{(T-t)}}}{\mathop{\mathrm{d}t^2}} < 0$}
\addplot[color=green,domain=0:0.4]{1.7499999999999998 * ((1 - x) - 0.6) ^ 1 + 0}
node[above,sloped,pos=0.5]{$p=1$};
\addlegendentry{$\frac{\mathop{\mathrm{d}^2\eta_{(T-t)}}}{\mathop{\mathrm{d}t^2}} = 0$}
\addplot[color=red,domain=0:0.4]{4.374999999999999 * ((1 - x) - 0.6) ^ 2 + 0}
node[above,sloped,pos=0.5]{$p=2$};
\addlegendentry{$\frac{\mathop{\mathrm{d}^2\eta_{(T-t)}}}{\mathop{\mathrm{d}t^2}} > 0$}
\addplot[color=blue,domain=0:0.4]{1.1067971810589328 * ((1 - x) - 0.6) ^ 0.5 + 0}
node[below,sloped,pos=0.5]{$p=1/2$};
\addplot[color=red,domain=0:0.4]{10.937499999999996 * ((1 - x) - 0.6) ^ 3 + 0}
node[below,sloped,pos=0.5]{$p=3$};

\end{axis}
\end{tikzpicture}
                    }%
        \caption{Concavity $\frac{\mathop{\mathrm{d}^2\eta_{(T-t)}}}{\mathop{\mathrm{d}t^2}}$}
    \label{fig:concavity}
        \end{subfigure}
\begin{subfigure}[t]{0.49\linewidth}
    \centering
        \resizebox{\linewidth}{!}{%
        \includegraphics{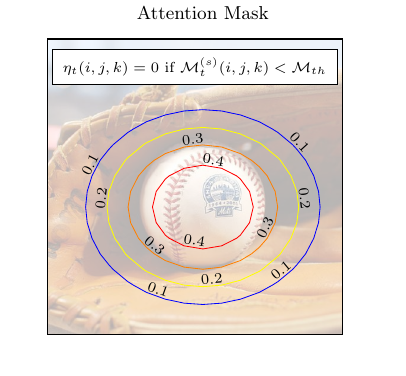}
                    }%
        \caption{Mask thres. $\mathcal{M}_{th}$}
            \label{fig:thres}
        \end{subfigure}
        \caption{Exploring the optimal $\eta$ function.}
    \label{fig:optimal}
    \end{figure}

    \begin{table*}[t]
    \caption{Extensive parameter study for slope, intercept, and concavity of the $\eta$ function evaluated on PIE-Bench with PtP. Hyperparameters are set to EtaInv (2) by default.}
        \centering

\begin{subtable}{\linewidth}
        \caption{Slope $\frac{\mathop{\mathrm{d}\eta_{(T-t)}}}{\mathop{\mathrm{d}t}}$ results. A negative/decreasing slope leads to better text-image alignment. An increasing slope may lead to better similarity but, in practice, fails to edit the image sufficiently since no noise is injected in the early diffusion steps, which is needed to edit high-level features.}
        \centering
        \resizebox{0.8\linewidth}{!}{%
        \small
        \centering
        \setlength{\tabcolsep}{4.25pt}
        \sisetup{table-auto-round}
        \begin{tabular}{@{}c*{5}{S[table-format=2.2,drop-exponent = true,fixed-exponent = -2,exponent-mode = fixed,]}@{}}%
        \toprule
        & \multicolumn{5}{c}{Metric $(\times {10}^{2})$ }  \\ 
        \cmidrule(r){2-6}
         &\multicolumn{2}{c}{Text-Image Alignment (CLIP)} & \multicolumn{3}{c}{Structural Similarity} \\ 
        \cmidrule(r){2-3} \cmidrule(lr){4-6} 
        Slope $\frac{\mathop{\mathrm{d}\eta_{(T-t)}}}{\mathop{\mathrm{d}t}}$ & {text-img $\uparrow$} & {text-cap. $\uparrow$} & {DINOv1 $\downarrow$} & {LPIPS $\downarrow$} & {BG-LPIPS $\downarrow$}  \\ \midrule
        
        $-1.0$ & \cellcolor{mygold}0.31485865360924176 & 0.9571428571428572 & 0.026546814252422855 & 0.297251725279327 & 0.1076310270335359 \\ 
        $-0.8$ & 0.3145242207816669 & \cellcolor{mygold}0.9614285714285714 & 0.024899481675654117 & 0.2849586186744273 & 0.1037061723217734 \\ 
        $-0.6$ & 0.3138454021087715 & 0.9514285714285714 & 0.023619182594120502 & 0.27350145879334636 & 0.09996576243915894 \\ 
        $-0.4$ & 0.31326323264411515 & 0.9471428571428572 & 0.02221010511541473 & 0.2622098144170429 & 0.09645846517199451 \\ 
        $-0.2$ & 0.31327964086617743 & 0.9414285714285714 & 0.021037162935826928 & 0.2508402223219829 & 0.09301226013198695 \\ 
        $\phantom{-}0.0$ & 0.31276073193975856 & 0.95 & 0.02006368477529447 & 0.24024068778380753 & 0.08971116969737133 \\ 
        $\phantom{-}0.2$ & 0.3127705287720476 & 0.9457142857142857 & 0.019216401416342704 & 0.23145483411582454 & 0.08692340581963695 \\ 
        $\phantom{-}0.4$ & 0.3123838809984071 & 0.9414285714285714 & 0.01861261689503278 & 0.22448363900982907 & 0.08462591541184727 \\ 
        $\phantom{-}0.6$ & 0.31230001828500203 & 0.95 & 0.018174748348205216 & 0.21902038262624826 & 0.08284610633903607 \\ 
        $\phantom{-}0.8$ & 0.31195226096681183 & 0.95 & 0.017856704066174903 & 0.2154961786498981 & 0.08167070123911668 \\ 
        $\phantom{-}1.0$ & 0.31155111911041394 & 0.9457142857142857 & \cellcolor{mygold}0.017781001677004887 & \cellcolor{mygold}0.21414580272510647 & \cellcolor{mygold}0.08132684734146876 \\
        
        \bottomrule
        \end{tabular}
        }%
        \label{tab:slope}
        \end{subtable}
        
\bigskip
\medskip

\begin{subtable}{\linewidth}

        \caption{$t, \eta-$intercept results. A larger $\eta$ improves alignment while sacrificing similarity. A larger $t$ reduces the total injected noise and improves similarity while worsening alignment. We chose $\eta=0.7, t=0.6$ for Eta Inversion (2).}
        \centering
        \resizebox{0.8\linewidth}{!}{%
        \small
        \centering
        \setlength{\tabcolsep}{4.25pt}
        \begin{tabular}{@{}l*{10}{S[table-format=2.2]}@{}}%
        \toprule
        & \multicolumn{10}{c}{Metric $(\times {10}^{2})$ }  \\ 
        \cmidrule(r){2-11}
         &\multicolumn{5}{c}{Text-Image Alignment (CLIP) $\uparrow$} & \multicolumn{5}{c}{Structural Similarity (DINOv1) $\downarrow$} \\ 
        \cmidrule(r){2-6} \cmidrule(lr){7-11} 
        \tikz{\node[below left, inner sep=1pt] (def) {$\eta$};%
      \node[above right,inner sep=1pt] (abc) {$t$};%
      \draw (def.north west|-abc.north west) -- (def.south east-|abc.south east);} & 0.3 & 0.4 & 0.5 & \textbf{0.6} & 0.7 & 0.3 & 0.4 & 0.5 & \textbf{0.6} & 0.7  \\ \midrule

        0.6 & 31.22 & 31.19 & 31.19 & 31.14 & 31.12 & 1.76 & 1.71 & 1.67 & 1.60 & \cellcolor{mygold}1.53 \\
        \textbf{0.7} & 31.30 & 31.30 & 31.26 & \textbf{31.25} & 31.18 & 1.93 & 1.88 & 1.82 & \textbf{1.70} & 1.61 \\
        0.8 & 31.32 & 31.35 & 31.34 & 31.26 & 31.24 & 2.11 & 2.03 & 1.96 & 1.83 & 1.71 \\
        0.9 & 31.42 & 31.41 & 31.40 & 31.33 & 31.29 & 2.29 & 2.20 & 2.14 & 1.97 & 1.84 \\
        1.0 & \cellcolor{mygold}31.45 & 31.43 & 31.43 & 31.44 & 31.33 & 2.47 & 2.39 & 2.33 & 2.14 & 1.95 \\
        
        \bottomrule
        \end{tabular}
        }%
        \label{tab:intercept}
        \end{subtable}

\bigskip
\medskip

\begin{subtable}{\linewidth}
        \caption{Concavity results. An exponent $p > 1$ leads to a convex graph, which reduces $\eta$ and thus the total noise injected. Consequently, text-image alignment worsens while similarity improves. A linear $\eta$ function ($p=1$) is sufficient for a good balance of text-image alignment and structural similarity.}
        \centering
        \resizebox{0.8\linewidth}{!}{%
        \small
        \centering
        \setlength{\tabcolsep}{4.25pt}
        \sisetup{table-auto-round}
        \begin{tabular}{@{}c*{5}{S[table-format=2.2,drop-exponent = true,fixed-exponent = -2,exponent-mode = fixed,]}@{}}%
        \toprule
        & \multicolumn{5}{c}{Metric $(\times {10}^{2})$ }  \\ 
        \cmidrule(r){2-6}
         &\multicolumn{2}{c}{Text-Image Alignment (CLIP)} & \multicolumn{3}{c}{Structural Similarity} \\ 
        \cmidrule(r){2-3} \cmidrule(lr){4-6} 
        Exponent $p$ & {text-img $\uparrow$} & {text-cap. $\uparrow$} & {DINOv1 $\downarrow$} & {LPIPS $\downarrow$} & {BG-LPIPS $\downarrow$}  \\ \midrule
        
        $1/3$ & \cellcolor{mygold}0.3129876764331545 & \cellcolor{mygold}0.9542857142857143 & 0.019760140197551145 & 0.2385519772688193 & 0.08886439234384202 \\ 
        $1/2$ & 0.31294646699513706 & 0.9428571428571428 & 0.018914608835974442 & 0.22989207232370973 & 0.0860619368938621 \\ 
        $1$ & 0.31247970819473264 & \cellcolor{mygold}0.9542857142857143 & 0.017007591246760316 & 0.21136770641963396 & 0.07995662840633096 \\ 
        $2$ & 0.31136887124606544 & 0.95 & 0.015474852629538093 & 0.19341757196134754 & 0.0740910722051506 \\ 
        $3$ & 0.3108385650387832 & 0.9528571428571428 & \cellcolor{mygold}0.01477136943289744 & \cellcolor{mygold}0.18383140230551362 & \cellcolor{mygold}0.07106707164021957 \\
        
        \bottomrule
        \end{tabular}
        }%
        \label{tab:power}
        \end{subtable}
\end{table*}

    \clearpage
    \begin{table*}[t]
        \caption{$t, \eta-$intercept results for various concavity/exponents. Every exponent shows a similar trade-off for alignment and similarity, e.g., when increasing $\eta$ and decreasing $t$. We conclude that a linear function ($p=1$) is sufficient for good editing results. Remaining hyperparameters are set to match Eta Inversion (2).}

        \centering
        \resizebox{1\linewidth}{!}{%
        \small
        \centering
        \setlength{\tabcolsep}{4.25pt}
        \begin{tabular}{@{}cl*{10}{S[table-format=2.2]}@{}}%
        \toprule
        & & \multicolumn{10}{c}{Metric $(\times {10}^{2})$ }  \\ 
        \cmidrule(r){3-12}
         & & \multicolumn{5}{c}{Text-Image Alignment (CLIP) $\uparrow$} & \multicolumn{5}{c}{Structural Similarity (DINOv1) $\downarrow$} \\ 
        \cmidrule(r){3-7} \cmidrule(lr){8-12} 
        Exponent $p$ & \tikz{\node[below left, inner sep=1pt] (def) {$\eta$};%
      \node[above right,inner sep=1pt] (abc) {$t$};%
      \draw (def.north west|-abc.north west) -- (def.south east-|abc.south east);} & 0.3 & 0.4 & 0.5 & 0.6 & 0.7 & 0.3 & 0.4 & 0.5 & 0.6 & 0.7  \\ \midrule
        \multirow{5}{*}{$1/3$}
        & 0.6 & 31.25 & 31.26 & 31.28 & 31.28 & 31.21 & 1.98 & 1.94 & 1.89 & 1.80 & 1.70 \\
        & 0.7 & 31.37 & 31.33 & 31.35 & 31.30 & 31.26 & 2.19 & 2.13 & 2.07 & 1.98 & 1.86 \\
        & 0.8 & 31.39 & 31.42 & 31.40 & 31.37 & 31.31 & 2.40 & 2.34 & 2.28 & 2.17 & 2.01 \\
        & 0.9 & 31.43 & 31.46 & 31.45 & 31.40 & 31.39 & 2.60 & 2.54 & 2.46 & 2.36 & 2.19 \\
        & 1.0 & 31.43 & 31.49 & 31.52 & 31.52 & 31.46 & 2.81 & 2.74 & 2.68 & 2.55 & 2.38 \\
        \midrule
        
        \multirow{5}{*}{$1/2$}
        & 0.6 & 31.25 & 31.28 & 31.27 & 31.22 & 31.18 & 1.92 & 1.88 & 1.81 & 1.73 & 1.63 \\
        & 0.7 & 31.31 & 31.30 & 31.31 & 31.29 & 31.26 & 2.10 & 2.05 & 1.98 & 1.89 & 1.77 \\
        & 0.8 & 31.40 & 31.40 & 31.36 & 31.36 & 31.31 & 2.31 & 2.25 & 2.17 & 2.05 & 1.90 \\
        & 0.9 & 31.45 & 31.46 & 31.40 & 31.39 & 31.38 & 2.50 & 2.44 & 2.35 & 2.25 & 2.06 \\
        & 1.0 & 31.46 & 31.51 & 31.49 & 31.44 & 31.43 & 2.71 & 2.66 & 2.55 & 2.43 & 2.24 \\
        \midrule
        
        \multirow{5}{*}{$1$}
        & 0.6 & 31.22 & 31.19 & 31.19 & 31.14 & 31.12 & 1.76 & 1.71 & 1.67 & 1.60 & 1.53 \\
        & 0.7 & 31.30 & 31.30 & 31.26 & 31.25 & 31.18 & 1.93 & 1.88 & 1.82 & 1.70 & 1.61 \\
        & 0.8 & 31.32 & 31.35 & 31.34 & 31.26 & 31.24 & 2.11 & 2.03 & 1.96 & 1.83 & 1.71 \\
        & 0.9 & 31.42 & 31.41 & 31.40 & 31.33 & 31.29 & 2.29 & 2.20 & 2.14 & 1.97 & 1.84 \\
        & 1.0 & 31.45 & 31.43 & 31.43 & 31.44 & 31.33 & 2.47 & 2.39 & 2.33 & 2.14 & 1.95 \\
        \midrule
        
        \multirow{5}{*}{$2$}
        & 0.6 & 31.15 & 31.13 & 31.11 & 31.09 & 31.05 & 1.60 & 1.57 & 1.53 & 1.48 & 1.44 \\
        & 0.7 & 31.23 & 31.22 & 31.16 & 31.14 & 31.10 & 1.71 & 1.65 & 1.60 & 1.55 & 1.49 \\
        & 0.8 & 31.28 & 31.28 & 31.24 & 31.17 & 31.10 & 1.84 & 1.78 & 1.69 & 1.62 & 1.54 \\
        & 0.9 & 31.34 & 31.30 & 31.28 & 31.24 & 31.14 & 1.99 & 1.90 & 1.82 & 1.72 & 1.60 \\
        & 1.0 & 31.41 & 31.39 & 31.34 & 31.28 & 31.18 & 2.14 & 2.05 & 1.93 & 1.83 & 1.68 \\
        \midrule
        
        \multirow{5}{*}{$3$}
        & 0.6 & 31.11 & 31.10 & 31.08 & 31.05 & 31.04 & 1.53 & 1.50 & 1.46 & 1.43 & 1.39 \\
        & 0.7 & 31.16 & 31.15 & 31.11 & 31.08 & 31.05 & 1.60 & 1.56 & 1.52 & 1.48 & 1.43 \\
        & 0.8 & 31.24 & 31.21 & 31.16 & 31.09 & 31.05 & 1.69 & 1.64 & 1.58 & 1.52 & 1.47 \\
        & 0.9 & 31.28 & 31.26 & 31.20 & 31.14 & 31.07 & 1.81 & 1.74 & 1.67 & 1.58 & 1.51 \\
        & 1.0 & 31.33 & 31.29 & 31.25 & 31.18 & 31.10 & 1.92 & 1.85 & 1.77 & 1.66 & 1.56 \\

        \bottomrule
        \end{tabular}
        }%
        \label{tab:intercept_power}
\end{table*}

    \clearpage
    \begin{table*}[t]
    \caption{Extensive parameter study for noise sample count, attention source, and attention threshold evaluated on PIE-Bench with PtP. Hyperparameters are set to EtaInv (2) by default.}
        \centering

\begin{subtable}{\linewidth}
        \caption{Noise sample count $n$ results. Large sample counts generally lead to better similarity, while lower sample counts achieve better prompt alignment.}
        \centering
        \resizebox{0.9\linewidth}{!}{%
        \small
        \centering
        \setlength{\tabcolsep}{4.25pt}
        \sisetup{table-auto-round}
        \begin{tabular}{@{}l*{5}{S[table-format=2.2,drop-exponent = true,fixed-exponent = -2,exponent-mode = fixed,]}@{}}%
        \toprule
        & \multicolumn{5}{c}{Metric $(\times {10}^{2})$ }  \\ 
        \cmidrule(r){2-6}
         &\multicolumn{2}{c}{Text-Image Alignment (CLIP)} & \multicolumn{3}{c}{Structural Similarity} \\ 
        \cmidrule(r){2-3} \cmidrule(lr){4-6} 
        Sample count $n$ & {text-img $\uparrow$} & {text-cap. $\uparrow$} & {DINOv1 $\downarrow$} & {LPIPS $\downarrow$} & {BG-LPIPS $\downarrow$}  \\ \midrule
        
        $1$ & 0.3118507398877825 & 0.9557142857142857 & 0.017150170436860727 & 0.21263937968090713 & 0.08077677391522489 \\ 
        $\mathbf{10}$ & \cellcolor{mygold}0.31247970819473264 & 0.9542857142857143 & 0.017007591246760316 & 0.21136770641963396 & 0.07995662840633096 \\ 
        $10^2$ & 0.31222788829888615 & 0.95 & 0.01727236340554165 & 0.21266182084434798 & 0.0806505930916007 \\ 
        $10^3$ & 0.3111966674029827 & 0.9471428571428572 & \cellcolor{mygold}0.01695874663868121 & 0.21114044757559897 & 0.0804013620131965 \\ 
        $10^4$ & 0.31132705445800507 & \cellcolor{mygold}0.9571428571428572 & 0.017038725386041082 & \cellcolor{mygold}0.21060344410528029 & \cellcolor{mygold}0.07988158991608153 \\
        
        \bottomrule
        \end{tabular}
        }%
        \label{tab:grid_sample_count}
        \end{subtable}
        
\bigskip
\bigskip

\begin{subtable}{\linewidth}
        \caption{Cross-attention map source results. \textbf{GT} uses ground-truth foreground-background maps from PIE-Bench. \textbf{Forward} are cross-attention maps collected during the forward path. \textbf{Forward (mean)} averages all forward attention maps to one map. \textbf{Backward Source} and \textbf{Backward Target} are the attention maps from the backward source and target path respectively. \textbf{Backward Source+Target} combines both activations to one map via the $\max$-operator. We found that forward (mean) provides the best balance for text alignment and similarity.}
        \centering
        \resizebox{0.9\linewidth}{!}{%
        \small
        \centering
        \setlength{\tabcolsep}{4.25pt}
        \sisetup{table-auto-round}
        \begin{tabular}{@{}l*{5}{S[table-format=2.2,drop-exponent = true,fixed-exponent = -2,exponent-mode = fixed,]}@{}}%
        \toprule
        & \multicolumn{5}{c}{Metric $(\times {10}^{2})$ }  \\ 
        \cmidrule(r){2-6}
         &\multicolumn{2}{c}{Text-Image Alignment (CLIP)} & \multicolumn{3}{c}{Structural Similarity} \\ 
        \cmidrule(r){2-3} \cmidrule(lr){4-6} 
        Attention source & {text-img $\uparrow$} & {text-cap. $\uparrow$} & {DINOv1 $\downarrow$} & {LPIPS $\downarrow$} & {BG-LPIPS $\downarrow$}  \\ \midrule

        No mask & \cellcolor{mygold}0.3126584818959236 & \cellcolor{mygold}0.9542857142857143 &  0.018496008312795312 & 0.22771402167156338 & 0.09027566545884058 \\
        GT & 0.3122127622791699 & 0.9471428571428572 & \cellcolor{mygold}0.01671474074046793 & \cellcolor{mygold}0.2067473436226802 & \cellcolor{mygold}0.06996948878582251 \\ 
        Forward & 0.3123822163896901 & \cellcolor{mygold}0.9542857142857143 & 0.01750645055369075 & 0.2159298769783761 & 0.08268398533429717 \\ 
        \textbf{Forward (Mean)} & 0.31247970819473264 & \cellcolor{mygold}0.9542857142857143 & 0.017007591246760316 & 0.21136770641963396 & 0.07995662840633096 \\ 
        Backward Source & 0.31261885519538607 & 0.9514285714285714 & 0.018125142375938593 & 0.2229980825446546 & 0.08740944280110333 \\ 
        Backward Target & 0.31221912183931894 & 0.9471428571428572 & 0.018029548376466014 & 0.22271979062418854 & 0.08734988968391137 \\ 
        Backward Source+Target & 0.3125478547172887 & 0.9485714285714286 & 0.018283239707816392 & 0.22483578864218934 & 0.08843672327335558 \\
        
        \bottomrule
        \end{tabular}
        }%
        \label{tab:grid_attn_src}
        \end{subtable}
        
\bigskip
\bigskip

\begin{subtable}{\linewidth}
        \caption{Cross-attention threshold results. A higher threshold reduces the region where noise is being injected, resulting in less editing and better similarity while negatively affecting alignment. \textbf{Smooth} multiplies $\eta$ with the attention map activations instead of thresholding it, resulting in smaller $\eta$ being applied to less activated regions. $\mathbf{\mathcal{M}_{th} = 0.2}$ provides a good tradeoff between alignment and similarity metrics.}
        \centering
        \resizebox{0.9\linewidth}{!}{%
        \small
        \centering
        \setlength{\tabcolsep}{4.25pt}
        \sisetup{table-auto-round}
        \begin{tabular}{@{}c*{5}{S[table-format=2.2,drop-exponent = true,fixed-exponent = -2,exponent-mode = fixed,]}@{}}%
        \toprule
        & \multicolumn{5}{c}{Metric $(\times {10}^{2})$ }  \\ 
        \cmidrule(r){2-6}
         &\multicolumn{2}{c}{Text-Image Alignment (CLIP)} & \multicolumn{3}{c}{Structural Similarity} \\ 
        \cmidrule(r){2-3} \cmidrule(lr){4-6} 
        Attention threshold $\mathcal{M}_{th}$ & {text-img $\uparrow$} & {text-cap. $\uparrow$} & {DINOv1 $\downarrow$} & {LPIPS $\downarrow$} & {BG-LPIPS $\downarrow$}  \\ \midrule
        
        No mask & \cellcolor{mygold}0.3126584818959236 & 0.9542857142857143 &  0.018496008312795312 & 0.22771402167156338 & 0.09027566545884058 \\
        $0.1$ & 0.3125421884655952 & \cellcolor{mygold}0.9571428571428572 & 0.01805870528898335 & 0.22210704055481723 & 0.08589022974797575 \\ 
        $\mathbf{0.2}$ & 0.31247970819473264 & 0.9542857142857143 & 0.017007591246760316 & 0.21136770641963396 & 0.07995662840633096 \\ 
        $0.3$ & 0.311488395099129 & 0.9471428571428572 & 0.016380587407454315 & 0.20248329586748565 & 0.07615126278039368 \\ 
        $0.4$ & 0.31084525955574854 & 0.9528571428571428 & 0.01555208809341171 & 0.19348110478637473 & 0.07238848480820056 \\ 
        $0.5$ & 0.3104569662468774 & \cellcolor{mygold}0.9571428571428572 & 0.015136553602226611 & 0.1882278627609568 & 0.07020961104777858 \\ 
        Smooth & 0.31051539631826536 & 0.9498567335243553 & \cellcolor{mygold}0.015073008510683264 & \cellcolor{mygold}0.18765462182666193 & \cellcolor{mygold}0.07019156411817155 \\
        
        \bottomrule
        \end{tabular}
        }%
        \label{tab:grid_attn_thres}
        \end{subtable}

\end{table*}

    \clearpage
\section{Existing Diffusion Inversion Methods}
\label{sec:inversion_methods}
We summarize existing diffusion inversion methods in \cref{tab:summary}. The table divides each forward and backward path into ($\eta,w$) and strategy. Former indicates the used $\eta$ parameter (DDIM or DDPM) with the guidance scale parameter $w$, while strategy indicates additional effort by the inversion method to reduce the gap to the (ideal) forward path.

    \begin{table}[h]
    \caption{Summary of existing diffusion inversion methods.} %
    \centering
    \resizebox{1\linewidth}{!}{%
    \small
    \centering
    \setlength{\tabcolsep}{4.25pt}
    \sisetup{table-auto-round}
    \begin{tabular}{@{}l|cc|ccc@{}}%
    \toprule
    & \multicolumn{2}{c}{\hlc[myinv]{forward path}} & \multicolumn{2}{c}{\hlc[myrec]{backward path}} \\ 
    \cmidrule(r){2-3} \cmidrule(lr){4-5} 
    Inversion Method & $(\eta, w)$ & strategy & $(\eta, w)$ & strategy  \\ \midrule
    DDIM Inv. \cite{song2020denoising} & $(0,1)$ & - & $(0,7.5)$ & -  \\
    NTI \cite{mokady2023null}                                            & $(0,1)$ & - & $(0,7.5)$ & optimized $\emptyset_{t}$  \\
    NPI \cite{miyake2023negative}                                                   & $(0,1)$ & - & $(0,7.5)$ & $\emptyset \leftarrow c^{(s)}$  \\ 
    ProxNPI \cite{han2023improving}                                                 & $(0,1)$ & - & $(0,7.5)$ & $\emptyset \leftarrow c^{(s)}$, modified $\tilde{\epsilon}_{\theta}$ \\ 
    Direct Inv. \cite{ju2023direct}                                                 & $(0,1)$ & - & $(0,7.5)$ & ${\boldsymbol{x}_{t-1}^{(s)'}} \leftarrow \boldsymbol{x}_{t-1}^{(s)^{*}}$ \\ %
    EDICT \cite{wallace2023edict}                                                   & $(0,3)$ & Coupled Transformations & $(0,3)$ & Coupled Transformations \\ 
    DDPM Inv. \cite{huberman2023edit}                                & - & $q(\boldsymbol{x}_{t}|\boldsymbol{x}_{0})=\mathcal{N}(\boldsymbol{x}_{t};\sqrt{\bar{\alpha}_{t}}\boldsymbol{x}_{0},(1-\bar{\alpha}_{t})I)$ & $(1,3.5)$ & ${\boldsymbol{x}_{t-1}^{(s)'}} \leftarrow \boldsymbol{x}_{t-1}^{(s)^{*}}$, modified $\epsilon_{\mathrm{add}}$ \\ \bottomrule
    \end{tabular}
    }%
    \label{tab:summary}
    \end{table}

\section{Text-guided Image Editing Methods}
\label{sec:editing_methods}
In this section, we introduce commonly used training-free text-guided image editing methods. We also state the inversion method used by the original paper for real image editing. %

\subsection{Prompt-to-Prompt (PtP)  \cite{hertz2022prompt}}

\paragraph{Editing}
To forward information from the source to the target backward path, PtP replaces the cross-attention maps of the target with the maps from the source. Since PtP needs to know which attention maps to exchange, each word in the source prompt must be matched to a word in the target prompt. Thus, PtP introduces restrictions for specifying an appropriate source and target prompt. %

\paragraph{Inversion}
The proposed Prompt-to-Prompt in \cite{hertz2022prompt} applies DDIM Inversion ($w=1$) and DDIM sampling ($w=7.5$) without any additional strategy, which negatively affects structural similarity.

\subsection{MasaCtrl \cite{cao2023masactrl}}

\paragraph{Editing} 
MasaCtrl focuses on non-rigid real image editing (motion editing). Unlike PtP, MasaCtrl replaces self-attention maps instead of cross-attention maps. For real image editing, MasaCtrl uses an empty prompt for the source prompt. Otherwise, when performing synthetic editing and if a source prompt is specified, MasaCtrl optionally uses masked guidance by obtaining a mask from the cross-attention maps for the word to edit. %

\paragraph{Inversion}
MasaCtrl applies DDIM Inversion ($w=1$) and replaces the source prompt $c^{(s)}$ with an empty prompt $\emptyset$. Consequently, MasaCtrl does not require a source prompt for inversion.

\subsection{Plug-and-Play (PnP) \cite{tumanyan2023plug}}

\paragraph{Editing}
Plug-and-Play also focuses on modifying self-attention maps similar to MasaCtrl. In addition, PnP performs spatial feature injection in U-Net's decoder layers from the source to the target branch. They thereby report better structural preservation than PtP.

\paragraph{Inversion}
Same as MasaCtrl, Plug-and-Play applies DDIM Inversion ($w=1$) and replaces the source prompt $c^{(s)}$ with an empty prompt $\emptyset$.

\section{Image Editing Metrics}
\label{sec: metrics}
It is unclear how to evaluate image editing performance. Instead of having a single metric measuring both text-image alignment with the target prompt and structural similarity with the source image, previous methods focus on evaluating these two concepts separately. For text-image alignment, CLIP \cite{radford2021learning} is commonly used, while for structural similarity, a strong feature extractor like DINO \cite{caron2021emerging} is utilized along with more classical evaluation metrics such as MS-SSIM \cite{wang2003multiscale}. Below, we give a detailed explanation of all our metrics.

\subsection{Text-Image Alignment}

\subsubsection{$\textrm{Text}^{\textrm{(t)}}$ - $\textrm{Image}^{\textrm{(t)}}$ CLIP Similarity (text-img)}
This metric first computes the target prompt text embeddings and the output image embeddings using CLIP, normalizes them, and finally computes the dot product of the two embeddings. The larger the dot product value, the better the target prompt and the output image are aligned.

\subsubsection{$\textrm{Text}^{\textrm{(t)}}$ - $\textrm{Image Caption}^{\textrm{(t)}}$ \cite{chefer2023attendandexcite} CLIP Similarity (text-cap)} %
Similar to text-image CLIP similarity, but instead of using the output image embeddings, a caption of the output image is obtained via BLIP \cite{li2022blip}. We then embed the generated caption by CLIP and compute the dot product of the target prompt text embeddings and the generated caption embeddings. Since there is a gap between CLIP's image space and CLIP's text space, this has the advantage that both embeddings lie in CLIP's text space.

\subsubsection{Directional CLIP Similarity \cite{patashnik2021styleclip} (directional)}
Unlike the above two metrics, this metric additionally incorporates the source prompt and the source image. The idea of this metric is that, in CLIP space, the direction from the source prompt to the target prompt should match the direction from the source image to the output image. Therefore, directional CLIP similarity computes the respective embeddings of both prompts and images and then retrieves the text direction and image direction by taking their respective difference. Lastly, the dot product of those two directions serves as the metric value.

\subsubsection{CLIP Accuracy (acc)}
This metric was originally introduced in \cite{parmar2023zero} and computes the ratio of output images where the text-image CLIP similarity is higher with the target prompt than with the source prompt. We noticed in our experiments that most inversion and editing methods reach a perfect score of 1. Thus, we decided to use text-caption similarity via BLIP instead of text-image similarity.

\subsection{Structural Similarity}

\subsubsection{DINO \cite{caron2021emerging} Self-similarity}

DINO self-similarity measures the similarity between the source image and the target image by obtaining embeddings via DINO and computing their MSE loss. DINO effectively extracts structural features of images compared to other feature extraction models. We provide metrics for both DINOv1 \cite{caron2021emerging} and DINOv2 \cite{oquab2024dinov2} but focus on the more commonly used DINOv1. 

\subsubsection{LPIPS \cite{zhang2018perceptual}}
LPIPS is another metric similar to DINO self-similarity operating on AlexNet \cite{alexnet} and focusing on matching human perception.

\subsubsection{BG-LPIPS \cite{parmar2023zero,ju2023direct}}

BG-LPIPS computes the LPIPS only on the background part of the source and output image, which should not be edited. This metric works well for edits like replacing or editing single objects rather than performing style transfer, where there is no clear background. The background mask is provided by the user or dataset.

\subsubsection{MS-SSIM \cite{wang2003multiscale}}

Multi-scale structural similarity (MS-SSIM) is an improved version of SSIM and computes similarity over various image scales by consecutive downsampling.

\subsection{Text-Image Alignment and Structural Similarity}

\subsubsection{VIEScore \cite{ku2024viescoreexplainablemetricsconditional}}
VIEScore is specifically introduced for evaluating image generation and editing performance using GPT-4V(ision)~\cite{openai2023gpt4}. For image editing, VIEScore assesses both text-image alignment and structural similarity. It employs a carefully designed prompt that rates the editing performance with three separate scores, ranging from 0 to 10, where higher is better: overall score, alignment score, and similarity score. The overall score evaluates the image editing in general, whereas the alignment score and similarity score focus on text-image alignment and structural similarity, respectively.

\section{Additional Quantitative and Qualitative Results}
\label{sec:additional_results}

\subsection{Additional Quantitative Results}

\cref{tab:result_appendix} provides results for PIE-Bench editing with additional metrics. \cref{fig:clip_dino_all} shows trade-off plots for PtP, PnP and MasaCtrl. PIE-Bench also divides all 700 edits into ten categories, such as random edits or changing objects. We thus evaluate and present our metrics for each category individually in \cref{tab:result_edit_cat}. Additionally, we provide supplementary metrics for the GPT-4~\cite{openai2023gpt4} based VIEScore~\cite{ku2024viescoreexplainablemetricsconditional} on subsets of PIE-Bench in \cref{tab:result_appendix_gpt}, and pairwise compare our method with Direct Inversion through human evaluation in \cref{table_user_study}. Furthermore, \cref{tab:result_appendix_ti2i} provides metrics for the ImageNetR-TI2I \cite{tumanyan2023plug} dataset. In all additional experiments, our method achieves state-of-the-art results in most cases.

In \cref{tab:result_appendix_rec}, we evaluate the reconstruction accuracy and inference time of our method. We observe that our method achieves perfect reconstruction, matching VAE reconstruction, which serves as the upper bound. Additionally, our method introduces negligible overhead and offers an inference speed similar to standard DDIM Inversion, when compared to both standard inversion and all three tested editing methods.

\subsection{Additional Qualitative Results}

In addition to the above quantitative results, we provide more qualitative comparisons of our method with various other inversion and editing approaches. \cref{fig:ptp}, \cref{fig:pnp}, and \cref{fig:masa} show qualitative results on PIE-Bench \cite{ju2023direct} images using PtP \cite{dong2023prompt}, PnP \cite{tumanyan2023plug}, and MasaCtrl \cite{cao2023masactrl} respectively. Additionally, \cref{fig:etainv3} shows examples where a larger $\eta$ as in EtaInv (3) leads to better editing results (e.g., style transfer). Finally, \cref{fig:rat_pig} shows the impact of $\eta$ on image editing where higher $\eta$ values result in better target prompt alignment while negatively influencing structural similarity.

    \begin{table*}[h]
        \caption{PIE-Bench \cite{ju2023direct} evaluation with additional metrics. 
        }
        \centering
        \resizebox{1.0\textwidth}{!}{%
        \small
        \centering
        \setlength{\tabcolsep}{4.25pt}
        \sisetup{table-auto-round}
        \begin{tabular}{@{}cl*{10}{S[table-format=2.2,drop-exponent = true,fixed-exponent = -2,exponent-mode = fixed,]}@{}}%
        \toprule
        && \multicolumn{9}{c}{Metric $(\times {10}^{2})$ }  \\ 
        \cmidrule(r){3-11}
        &&\multicolumn{4}{c}{Text-Image Alignment (CLIP)} & \multicolumn{5}{c}{Structural Similarity} \\ 
        \cmidrule(r){3-6} \cmidrule(lr){7-11} 
        Editing & Inversion & {text-img $\uparrow$} & {text-cap $\uparrow$} & {directional $\uparrow$} & {acc $\uparrow$} & {DINOv1 $\downarrow$} & {DINOv2 $\downarrow$} & {LPIPS $\downarrow$} & {BG-LPIPS $\downarrow$} & {MS-SSIM $\uparrow$} \\ \midrule
        \multirow{10}{*}{PtP}
        
        & DDIM Inv. \cite{song2020denoising} & 0.3098963077579226 & 0.7573099485039712 & 0.01489594351026296 & 0.9457142857142857 & 0.06942023908586374 & 0.00979264195576044 & 0.4664535358973912 & 0.24970233517599158 & 0.6192663074391229 \\ 
        & Null-text Inv. \cite{mokady2023null} & 0.3073391259780952 & 0.7440846993241992 & \cellcolor{mybronze}0.03346213553537382 & 0.9257142857142857 & \cellcolor{mybronze}0.012444119106512518 & \cellcolor{mybronze}0.0032320704425053137 & \cellcolor{mybronze}0.151256681012788 & \cellcolor{mybronze}0.05694563213929151 & \cellcolor{mybronze}0.8876146561758859 \\ 
        & NPI \cite{miyake2023negative} & 0.3048740061053208 & 0.7427982727544649 & \cellcolor{mygold}0.0379693094396498 & 0.9271428571428572 & 0.020258124567002857 & 0.004317765259599713 & 0.19275611174692 & 0.08241629246801105 & 0.8592968276568822 \\ 
        & ProxNPI \cite{han2023improving} & 0.3031321707155023 & 0.7400196839656149 & \cellcolor{mysilver}0.03477620806931684 & 0.9242857142857143 & 0.019193061842317026 & 0.004098424002960591 & 0.17685471942116107 & 0.07764945874233879 & 0.8683452299662999 \\ 
        & EDICT \cite{wallace2023edict} & 0.2927953909763268 & 0.7269143332328115 & 0.008288079663262969 & 0.9271428571428572 & \cellcolor{mygold}0.004124843035575135 & \cellcolor{mygold}0.001586323544906918 & \cellcolor{mygold}0.06648621396760324 & \cellcolor{mygold}0.030968608957004366 & \cellcolor{mygold}0.9370805729287012 \\ 
        & DDPM Inv. \cite{huberman2023edit} & 0.2942695520392486 & 0.7312118480886732 & 0.006871060335543007 & 0.9271428571428572 & \cellcolor{mysilver}0.004167493938834274 & \cellcolor{mysilver}0.0017546577040255735 & \cellcolor{mysilver}0.06871223960737033 & \cellcolor{mysilver}0.03273383083458092 & \cellcolor{mysilver}0.9338151170526232 \\ 
        & Direct Inv. \cite{ju2023direct} & 0.3091879120469093 & 0.7576252316577093 & 0.023372119456159583 & \cellcolor{mybronze}0.9471428571428572 & 0.012763145404169335 & 0.0032352335843773158 & 0.15788746399538858 & 0.06329163567514895 & 0.8833724835940769 \\ 
        & \textbf{EtaInv (1)} & \cellcolor{mybronze}0.31007335275411607 & \cellcolor{mybronze}0.7607581440891539 & 0.023874808910539808 & \cellcolor{mysilver}0.95 & 0.013433658024296165 & 0.0033779775318024414 & 0.16576544786404285 & 0.06568354647575429 & 0.8759700518846512 \\ 
        & \textbf{EtaInv (2)} & \cellcolor{mysilver}0.31247970819473264 & \cellcolor{mysilver}0.7619916633622987 & 0.02674273075338403 & \cellcolor{mygold}0.9542857142857143 & 0.017007591246760316 & 0.003993112342598449 & 0.21136770641963396 & 0.07995662840633096 & 0.831945077266012 \\ 
        & \textbf{EtaInv (3)} & \cellcolor{mygold}0.3151833241752216 & \cellcolor{mygold}0.7627197948949678 & 0.03207994163738996 & \cellcolor{mysilver}0.95 & 0.030452230763954244 & 0.00588432825819057 & 0.33568434526079466 & 0.13305862386694312 & 0.6886161335664136 \\ 
        
        \midrule
        \multirow{10}{*}{PnP}
        
        & DDIM Inv. \cite{song2020denoising} & 0.2937987298624856 & 0.6938250744768552 & \cellcolor{mybronze}0.041720140941366224 & 0.8557142857142858 & 0.061101810900228364 & 0.009447477753939374 & 0.4084159646289689 & 0.20844711300505359 & 0.6929777942384993 \\ 
        & Null-text Inv. \cite{mokady2023null} & 0.30753638380340165 & 0.7376578430192812 & \cellcolor{mysilver}0.046171344291152695 & 0.9042857142857142 & 0.03266450203722343 & 0.006085259692543851 & 0.3051460664719343 & 0.14172093506670666 & 0.7837164359433311 \\ 
        & NPI \cite{miyake2023negative} & 0.307270291192191 & 0.7383094525762967 & \cellcolor{mygold}0.04841411963624913 & 0.9128571428571428 & 0.026717880195771742 & 0.00527817684092692 & 0.26182728880750283 & \cellcolor{mybronze}0.11566226213575906 & 0.8180732675535338 \\ 
        & ProxNPI \cite{han2023improving} & 0.3053714159343924 & 0.7402171358891896 & 0.04122855223349429 & 0.9071428571428571 & \cellcolor{mybronze}0.022860674153281642 & \cellcolor{mysilver}0.004638417416426819 & \cellcolor{mysilver}0.2175858683032649 & \cellcolor{mysilver}0.09565648488966481 & \cellcolor{mysilver}0.8443145848172051 \\ 
        & EDICT \cite{wallace2023edict} & 0.2469451144444091 & 0.6009387849697045 & 0.02577517635895804 & 0.6342857142857142 & 0.042556588900874236 & 0.0072898531979548615 & 0.30217926079939517 & 0.14956702334493457 & 0.7646284541061946 \\ 
        & DDPM Inv. \cite{huberman2023edit} & 0.30258001885243824 & 0.7372387099266052 & 0.020220083087395844 & \cellcolor{mybronze}0.9485714285714286 & \cellcolor{mygold}0.01044435668470604 & \cellcolor{mygold}0.002897609248092132 & \cellcolor{mygold}0.12501146100461483 & \cellcolor{mygold}0.05844134287490306 & \cellcolor{mygold}0.8934911684479032 \\ 
        & Direct Inv. \cite{ju2023direct} & 0.3131952231909548 & 0.7613077391045434 & 0.030966096437436395 & \cellcolor{mysilver}0.9514285714285714 & \cellcolor{mysilver}0.02274692072533071 & \cellcolor{mybronze}0.004779900857746335 & \cellcolor{mybronze}0.2559120710087674 & 0.12984735441078166 & \cellcolor{mybronze}0.8233862844535282 \\ 
        & \textbf{EtaInv (1)} & \cellcolor{mybronze}0.3132966907748154 & \cellcolor{mysilver}0.7646695785863059 & 0.026571786788949146 & \cellcolor{mybronze}0.9485714285714286 & 0.023384014786819795 & 0.004964417909671153 & 0.27332021001194207 & 0.1405317166185601 & 0.8084589408976691 \\ 
        & \textbf{EtaInv (2)} & \cellcolor{mysilver}0.3162599997861045 & \cellcolor{mybronze}0.7635779229232244 & 0.03223152512468264 & \cellcolor{mygold}0.9528571428571428 & 0.03399889323993453 & 0.006371868756333632 & 0.36594185411930086 & 0.18716991466014796 & 0.7082592932241304 \\ 
        & \textbf{EtaInv (3)} & \cellcolor{mygold}0.31920644049133573 & \cellcolor{mygold}0.7709907117911747 & 0.039845791492927156 & 0.9457142857142857 & 0.051576449676815954 & 0.008770590557583741 & 0.5039191923716239 & 0.2661278968174468 & 0.5053479700854846 \\ 
        
        \midrule
        \multirow{10}{*}{Masa}
        
        & DDIM Inv. \cite{song2020denoising} & \cellcolor{mygold}0.30742404401302337 & \cellcolor{mygold}0.7521762572016035 & 0.013699371784293491 & \cellcolor{mygold}0.95 & 0.07545868670301778 & 0.010215338541394366 & 0.4767742032238415 & 0.25374347115294116 & 0.6036750082245895 \\ 
        & Null-text Inv. \cite{mokady2023null} & 0.30068163248045104 & 0.7279130147184645 & \cellcolor{mysilver}0.02638700956223017 & 0.93 & 0.04494556720335303 & 0.0068012517306488005 & 0.2501945654968066 & 0.11923769590272319 & 0.7735224713172232 \\ 
        & NPI \cite{miyake2023negative} & 0.29544316662209374 & 0.7117334362012999 & \cellcolor{mygold}0.029506508847074914 & 0.8728571428571429 & 0.045089687282951284 & 0.006975050355706896 & 0.2603068265744618 & 0.12409586446343122 & 0.7793228232434818 \\ 
        & ProxNPI \cite{han2023improving} & 0.29493104496172495 & 0.7149522835016251 & \cellcolor{mybronze}0.025920723030015195 & 0.8814285714285715 & 0.03921245703839564 & 0.006251328894535878 & \cellcolor{mybronze}0.22985350452895675 & \cellcolor{mybronze}0.10990528252940358 & \cellcolor{mybronze}0.8050647920370102 \\ 
        & EDICT \cite{wallace2023edict} & 0.29676960666264807 & 0.7345920038649014 & 0.008100226531137846 & 0.9328571428571428 & \cellcolor{mysilver}0.007871210374037868 & \cellcolor{mysilver}0.002304856096868337 & \cellcolor{mygold}0.08588012009726038 & \cellcolor{mysilver}0.042012395707575444 & \cellcolor{mygold}0.923195264169148 \\ 
        & DDPM Inv. \cite{huberman2023edit} & 0.2956833677419594 & 0.7316503988419261 & 0.006980817656169945 & 0.93 & \cellcolor{mygold}0.007514222693363471 & \cellcolor{mygold}0.002231737565348989 & \cellcolor{mysilver}0.08646675587764809 & \cellcolor{mygold}0.04115439930333586 & \cellcolor{mysilver}0.9151499950034278 \\ 
        & Direct Inv. \cite{ju2023direct} & 0.30372176436441284 & \cellcolor{mysilver}0.7449550858991487 & 0.013893943320352783 & \cellcolor{mysilver}0.9457142857142857 & 0.04321882900914976 & 0.006632671428212364 & 0.2691428067535162 & 0.1375892746267969 & 0.7694688713976315 \\ 
        & \textbf{EtaInv (1)} & 0.30390003306525093 & 0.7400398475357465 & 0.016195030842375543 & 0.9314285714285714 & \cellcolor{mybronze}0.03661353356404496 & \cellcolor{mybronze}0.005908169171723005 & 0.23115233425050974 & 0.11566979534219302 & 0.7961219504049846 \\ 
        & \textbf{EtaInv (2)} & \cellcolor{mybronze}0.3062442834462438 & \cellcolor{mybronze}0.7432072950686728 & 0.019605730133813008 & 0.9385714285714286 & 0.05237354726530612 & 0.007951788205287552 & 0.33066985155854905 & 0.16644072913740404 & 0.6818421719329698 \\ 
        & \textbf{EtaInv (3)} & \cellcolor{mysilver}0.3071766421198845 & 0.7423931103093283 & 0.022181702222941177 & \cellcolor{mybronze}0.94 & 0.07208526262880437 & 0.010562405071021724 & 0.4496590716391802 & 0.23651542364653974 & 0.4764318342506886 \\ 
        
        \bottomrule
        \end{tabular}
        }%
        \label{tab:result_appendix}
        \end{table*}

    \begin{figure}
    \captionsetup[subfigure]{labelformat=empty}
    \centering
    \hspace{-0.5cm}
    \begin{subfigure}{0.33\linewidth}
        \centering
    \begin{tikzpicture}[scale=0.50]
        \begin{axis}[
            axis lines = left,
            every axis y label/.style={at={(current axis.north west)},above=27.5mm},
            xlabel = {CLIP similarity ($\times 10^{2}$)},
            ylabel = {DINO ($\times 10^{2}$)},
            legend pos=north west,
            xtick={29.5,30,30.5,31},
            ytick={2,4,6},
            xmin=29, xmax=31.5,
            ymin=0, ymax=7,
        ]
            \addplot[mark size=1pt,black,mark=*,mark options={fill=black},nodes near coords,only marks,
               point meta=explicit symbolic,
               visualization depends on={value \thisrow{anchor}\as\myanchor},
               every node near coord/.append style={anchor=\myanchor}
            ] table[meta=label] {
            x y label anchor
                30.99 6.94 {\small DDIM Inv.} north
                30.73 1.24 {\small NTI} south
                30.49 2.03 {\small NPI} south
                30.31 1.92 {\small ProxNPI} east
                29.28 0.41 {\small EDICT} south
                29.82 1.19 {\small DDPM $\textrm{Inv.}$} south
                30.92 1.28 {\small Dir. Inv.} north
            };\addlegendentry{PtP}
            \addplot[mark size=1pt,red,mark=*,mark options={fill=red},nodes near coords,only marks,
               point meta=explicit symbolic,
               visualization depends on={value \thisrow{anchor}\as\myanchor},
               every node near coord/.append style={anchor=\myanchor}
            ] table[meta=label] {
            x y label anchor
                31.01 1.34  {EtaInv (1)} west
                31.25 1.70   {EtaInv (2)} west
            };
        \end{axis}
        \end{tikzpicture}
               \end{subfigure}
       \begin{subfigure}{0.33\linewidth}
        \centering
    \begin{tikzpicture}[scale=0.50]
        \begin{axis}[
            axis lines = left,
            every axis y label/.style={at={(current axis.north west)},above=27.5mm},
            xlabel = {CLIP similarity ($\times 10^{2}$)},
            ylabel = {DINO ($\times 10^{2}$)},
            legend pos=north east,
            xtick={29.5,30,30.5,31,31.5},
            ytick={2,4,6},
            xmin=29.2, xmax=32,
            ymin=0.5, ymax=6.5,
        ]
            \addplot[mark size=1pt,black,mark=*,mark options={fill=black},nodes near coords,only marks,
               point meta=explicit symbolic,
               visualization depends on={value \thisrow{anchor}\as\myanchor},
               every node near coord/.append style={anchor=\myanchor}
            ] table[meta=label] {
            x y label anchor
            29.38 6.11 {DDIM Inv.} west
            30.75 3.27 {NTI} south
            30.73 2.67 {NPI} south
            30.54 2.29 {ProxNPI} east
            30.26 1.04 {DDPM $\textrm{Inv.}$} south
            31.32 2.27 {Dir. Inv.} east
            };\addlegendentry{PnP}
            \addplot[mark size=1pt,red,mark=*,mark options={fill=red},nodes near coords,only marks,
               point meta=explicit symbolic,
               visualization depends on={value \thisrow{anchor}\as\myanchor},
               every node near coord/.append style={anchor=\myanchor}
            ] table[meta=label] {
            x y label anchor
                31.33 2.34  {EtaInv (1)} west
                31.63 3.40   {EtaInv (2)} north
            };
        \end{axis}
        \end{tikzpicture}
           \end{subfigure}
       \begin{subfigure}{0.33\linewidth}
        \centering
    \begin{tikzpicture}[scale=0.50]
        \begin{axis}[
            axis lines = left,
            every axis y label/.style={at={(current axis.north west)},above=27.5mm},
            xlabel = {CLIP similarity ($\times 10^{2}$)},
            ylabel = {DINO ($\times 10^{2}$)},
            legend pos=north west,
            xtick={29.5,30,30.5},
            ytick={2,4,6,8},
            xmin=29.2, xmax=30.75,
            ymin=0, ymax=9,
        ]
            \addplot[mark size=1pt,black,mark=*,mark options={fill=black},nodes near coords,only marks,
               point meta=explicit symbolic,
               visualization depends on={value \thisrow{anchor}\as\myanchor},
               every node near coord/.append style={anchor=\myanchor}
            ] table[meta=label] {
            x y label anchor
            30.74 7.55 {DDIM Inv.} east
            30.07 4.49 {NTI} south
            29.54 4.51 {NPI} south
            29.49 3.92 {ProxNPI} north
            29.68 0.79 {EDICT} south
            29.57 0.75 {DDPM Inv.} north
            30.37 4.32 {Dir. Inv.} east
            };\addlegendentry{MasaCtrl}
            \addplot[mark size=1pt,red,mark=*,mark options={fill=red},nodes near coords,only marks,
               point meta=explicit symbolic,
               visualization depends on={value \thisrow{anchor}\as\myanchor},
               every node near coord/.append style={anchor=\myanchor}
            ] table[meta=label] {
            x y label anchor
                30.39 3.66  {EtaInv (1)} west
                30.62 5.24   {EtaInv (2)} north
            };
        \end{axis}
        \end{tikzpicture}
           \end{subfigure}
        \caption{Visualization of CLIP text-image metrics (higher is better) and DINO metrics (lower is better) on PIE-Bench for PtP (left), PnP (middle), and MasaCtrl (right).}
        \label{fig:clip_dino_all}
        \end{figure}
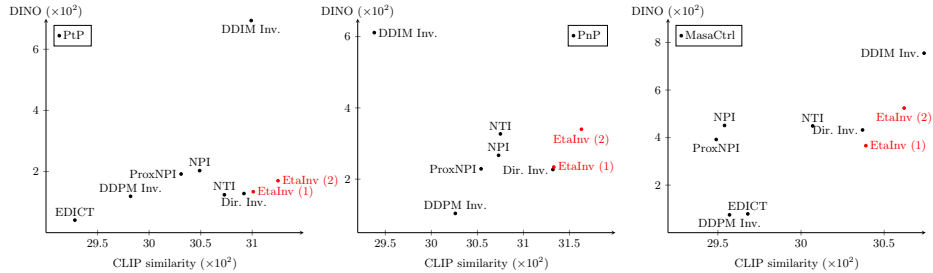

    \begin{sidewaystable*}
    \captionsetup{width=15cm}
        \caption{Per category evaluation for PIE-Bench \cite{ju2023direct}: Categories are defined as: (0) random; (1) change object; (2) add object; (3) delete object; (4) change attribute content; (5) change attribute pose; (6) change attribute color; (7) change attribute material; (8) change background; and (9) change style. 
        }
        \medskip
        \centering
        \resizebox{0.9\linewidth}{!}{%
        \small
        \centering
        \setlength{\tabcolsep}{4.25pt}
        \sisetup{table-auto-round}
        \begin{tabular}{@{}cl*{21}{S[table-format=2.2,drop-exponent = true,fixed-exponent = -2,exponent-mode = fixed,]}@{}}%
        \toprule
        && \multicolumn{20}{c}{Metric $(\times {10}^{2})$ }  \\ 
        \cmidrule(r){3-22}
         &\multicolumn{10}{c}{Text-Image alignment (CLIP) $\uparrow$} & \multicolumn{10}{c}{Structural similarity (DINOv1) $\downarrow$} \\ 
        \cmidrule(r){3-12} \cmidrule(l){13-22} 
        Editing & Inversion & {0} & {1} & {2} & {3} & {4} & {5} & {6} & {7} & {8} & {9} & {0} & {1} & {2} & {3} & {4} & {5} & {6} & {7} & {8} & {9} \\ \midrule
        \multirow{9}{*}{PtP} & DDIM Inv. & 0.307326 & 0.307877 & 0.312836 & 0.299251 & 0.312863 & 0.322136 & 0.311816 & 0.324791 & 0.307973 & 0.310033 & 0.067908 & 0.067148 & 0.067044 & 0.072343 & 0.066201 & 0.062148 & 0.075006 & 0.080242 & 0.07556 & 0.064694 \\
        & NTI     & 0.306799 & 0.304231 & 0.302367 & 0.295219 & 0.303331 & 0.316315 & 0.313104 & 0.31604 & 0.305457 & 0.320649 & 0.014148 & 0.014472 & 0.00931 & 0.01152 & 0.008951 & 0.007608 & 0.011721 & 0.011144 & 0.013148 & 0.015966 \\
        & NPI     & 0.306142 & 0.302937 & 0.30124  & 0.292681 & 0.304927 & 0.313722 & 0.308853 & 0.309746 & 0.302043 & 0.314374 & 0.0198 & 0.026453 & 0.019142 & 0.019713 & 0.013697 & 0.015156 & 0.020007 & 0.018437 & 0.021468 & 0.022184 \\ 
        & ProxNPI & 0.304369 & 0.302053 & 0.299707 & 0.292917 & 0.303927 & 0.31319  & 0.307453 & 0.308657 & 0.299679 & 0.308793 & 0.018499 & 0.025543 & 0.018474 & 0.018964 & 0.012948 & 0.014837 & 0.019148 & 0.018099 & 0.019841 & 0.020227 \\ 
        & EDICT & 0.285026 & 0.283076 & 0.295225 & 0.290474 & 0.297652 & 0.31773 & 0.295827 & 0.304465 & 0.292055 & 0.294497 & 0.004695 & 0.004585 & 0.003518 & 0.004008 & 0.003499 & 0.003772 & 0.004172 & 0.003751 & 0.004105 & 0.004065 \\ 
        & DDPM Inv. & 0.28547 & 0.284947 & 0.297418 & 0.290803 & 0.297971 & 0.320896 & 0.297985 & 0.306135 & 0.292827 & 0.297799 & 0.004457 & 0.004295 & 0.0037 & 0.004072 & 0.003618 & 0.00414 & 0.004376 & 0.004005 & 0.004237 & 0.004293 \\ 
        & Direct Inv.       & 0.307543 & 0.304338 & 0.308609 & 0.299785 & 0.304977 & 0.319798 & 0.312669 & 0.321377 & 0.307917 & 0.317135 & 0.013727 & 0.012361 & 0.011154 & 0.012283 & 0.01008 & 0.011471 & 0.011755 & 0.011266 & 0.013189 & 0.016383 \\
        
        & \textbf{EtaInv (1)} & 0.3088670386799744 & 0.3049957582727075 & 0.30978108290582895 & 0.3002709051594138 & 0.3065885674208403 & 0.3196298360824585 & 0.31402608528733256 & 0.3222586393356323 & 0.3074317755177617 & 0.3188934329897165 & 0.01437125562640306 & 0.012706887372769416 & 0.012147608186933213 & 0.01335526649345411 & 0.010850934393238277 & 0.012248380086384713 & 0.012117453827522695 & 0.011736998218111694 & 0.013322794999112375 & 0.01738537005148828 \\
        & \textbf{EtaInv (2)} & 0.3107388467660972 & 0.31022951621562245 & 0.31196989994496105 & 0.30106402933597565 & 0.30900383852422236 & 0.31919717639684675 & 0.31631172075867653 & 0.32266714945435526 & 0.3115272387862206 & 0.32202383801341056 & 0.01704761429739717 & 0.015518841680022887 & 0.016473714867606758 & 0.018038870037707967 & 0.013206117681693285 & 0.015176402957877144 & 0.014543411717750133 & 0.014501588104758411 & 0.016972376362537032 & 0.023265535209793596 \\
        & \textbf{EtaInv (3)} & 0.3126392132469586 & 0.31085608918219804 & 0.31414625234901905 & 0.30339304320514204 & 0.30995155684649944 & 0.32172586023807526 & 0.32013949528336527 & 0.32239895313978195 & 0.3167287286370993 & 0.3285034172236919 & 0.02916099014442547 & 0.027437129954341798 & 0.028489639080362394 & 0.03419724965497153 & 0.022634642524644734 & 0.026800072158221156 & 0.025283850822597743 & 0.022823651152430104 & 0.03467892831831705 & 0.04185123109491542 \\

        \midrule
        \multirow{10}{*}{PnP} & DDIM Inv.   & 0.297129 & 0.291502 & 0.292808 & 0.28123 & 0.293455 & 0.29697 & 0.300111 & 0.309246 & 0.283235 & 0.302098 & 0.063373 & 0.062158 & 0.055789 & 0.062081 & 0.049262 & 0.057084 & 0.059192 & 0.060756 & 0.069685 & 0.060879 \\
        & NTI     & 0.305549 & 0.305114 & 0.300723 & 0.294675 & 0.305168 & 0.306015 & 0.308688 & 0.316005 & 0.309869 & 0.327915 & 0.032252 & 0.037054 & 0.029658 & 0.032975 & 0.022361 & 0.024297 & 0.032763 & 0.029243 & 0.035522 & 0.039832 \\
        & NPI     & 0.306708 & 0.30185 & 0.302235 & 0.297513  & 0.304536 & 0.312419 & 0.310455 & 0.314841 & 0.305483 & 0.323669 & 0.02613 & 0.034589 & 0.022405 & 0.028249 & 0.017816 & 0.017077 & 0.02537 & 0.024443 & 0.027508 & 0.03295 \\
        & ProxNPI & 0.305434 & 0.300901 & 0.300878 & 0.297035 & 0.303044 & 0.313172 & 0.308931 & 0.31187 & 0.303615 & 0.316555 & 0.021704 & 0.03088 & 0.020176 & 0.025176 & 0.01569 & 0.015147 & 0.022754 & 0.020736 & 0.02344 & 0.025214 \\
        & NSI     & 0.308074 & 0.307608 & 0.313703 & 0.305411 & 0.308962 & 0.322299 & 0.316572 & 0.322731 & 0.31354 & 0.323429 & 0.020357 & 0.023274 & 0.018513 & 0.020547 & 0.021033 & 0.018831 & 0.022227 & 0.021955 & 0.023497 & 0.021793 \\
        & EDICT   & 0.245296 & 0.240349 & 0.244591 & 0.238628 & 0.255315 & 0.25793 & 0.233699 & 0.263676 & 0.249378 & 0.253245 & 0.041683 & 0.045929 & 0.042775 & 0.04398 & 0.033287 & 0.041811 & 0.044667 & 0.0373 & 0.046033 & 0.042176 \\
        & DDPM Inv.  & 0.297219 & 0.297912 & 0.30257 & 0.296602 & 0.30233 & 0.32085 & 0.305622 & 0.316406 & 0.301403 & 0.30635 & 0.011048 & 0.011731 & 0.008747 & 0.011406 & 0.009662 & 0.009197 & 0.010516 & 0.010077 & 0.010697 & 0.009748 \\
        & Direct Inv.  & 0.308072 & 0.307606 & 0.313697 & 0.305409 & 0.30894 & 0.322279 & 0.316558 & 0.32272 & 0.31353 & 0.32342 & 0.020357 & 0.023274 & 0.018514 & 0.020547 & 0.021033 & 0.018831 & 0.022227 & 0.021955 & 0.023497 & 0.021793 \\
        
        & \textbf{EtaInv (1)} & 0.3094754481954234 & 0.30763834081590175 & 0.3120006473734975 & 0.30612894501537086 & 0.30736390575766565 & 0.32162765935063364 & 0.3184932924807072 & 0.32267407327890396 & 0.31564259342849255 & 0.3232740178704262 & 0.022610991288508686 & 0.02564124616328627 & 0.020400327671086415 & 0.023243395966710524 & 0.023170134972315282 & 0.020761285437038167 & 0.023741583875380456 & 0.02353287994628772 & 0.02558025565231219 & 0.024572727060876785 \\
        & \textbf{EtaInv (2)} & 0.31437261988009724 & 0.3137755969539285 & 0.3139154825359583 & 0.30762694850564004 & 0.3072288006544113 & 0.3232583858072758 & 0.32299443148076534 & 0.32235062047839164 & 0.31541760731488466 & 0.32847115881741046 & 0.03147501499126 & 0.03603489915840328 & 0.03076102090999484 & 0.035501986916642636 & 0.03282403383636847 & 0.0309066888759844 & 0.03451595264486969 & 0.03196657916996628 & 0.03734000659314916 & 0.0376644987729378 \\
        & \textbf{EtaInv (3)} & 0.31574875861406326 & 0.3162227764725685 & 0.31690542846918107 & 0.30996477194130423 & 0.30985031351447107 & 0.3228080585598946 & 0.32518419772386553 & 0.33121442049741745 & 0.3216624580323696 & 0.3312120966613293 & 0.048629204370081426 & 0.05694386085961014 & 0.049023758713155986 & 0.05810008600819856 & 0.04752900272142142 & 0.04897871147841215 & 0.04965760502964258 & 0.048550447751767936 & 0.05316394686233252 & 0.05160329109057784 \\

        \midrule
        \multirow{10}{*}{Masa} & DDIM Inv.   & 0.298524 & 0.308096 & 0.311807 & 0.309586 & 0.309313 & 0.323071 & 0.300613 & 0.322153 & 0.303822 & 0.306657 & 0.076363 & 0.0765 & 0.067555 & 0.088841 & 0.065761 & 0.063603 & 0.079375 & 0.081611 & 0.079573 & 0.068984 \\
        & NTI     & 0.297125 & 0.302908 & 0.301317 & 0.298063 & 0.298899 & 0.317839 & 0.298424 & 0.311435 & 0.295757 & 0.299652 & 0.050931 & 0.051538 & 0.039783 & 0.0518 & 0.028902 & 0.029284 & 0.05202 & 0.042592 & 0.043949 & 0.040676 \\
        & NPI      & 0.291854 & 0.298781 & 0.29552 & 0.293385 & 0.296347 & 0.310881 & 0.285467 & 0.307449 & 0.290657 & 0.295969 & 0.048809 & 0.057715 & 0.042754 & 0.052415 & 0.029931 & 0.035059 & 0.044066 & 0.042032 & 0.040297 & 0.040394 \\
        & ProxNPI & 0.290803 & 0.298467 & 0.294953 & 0.294868 & 0.29683  & 0.311312 & 0.28554 & 0.307049  & 0.289264 & 0.293824 & 0.041484 & 0.052309 & 0.037137 & 0.048294 & 0.024138 & 0.032123 & 0.038355 & 0.03677 & 0.033487 & 0.033592 \\
        & NSI   & 0.292297 & 0.297471 & 0.303208 & 0.300085 & 0.299717 & 0.315236 & 0.298584 & 0.309353 & 0.296036 & 0.299578 & 0.039358 & 0.043739 & 0.032995 & 0.049973 & 0.030269 & 0.032229 & 0.035628 & 0.047428 & 0.040349 & 0.035471 \\
        & EDICT        & 0.287542 & 0.284458 & 0.302455 & 0.296313 & 0.298843 & 0.321136 & 0.299148 & 0.308138 & 0.297402 & 0.299274 & 0.008046 & 0.008271 & 0.006455 & 0.010752 & 0.007023 & 0.007703 & 0.006812 & 0.008294 & 0.00713 & 0.007269 \\
        & DDPM Inv. & 0.287163 & 0.285726 & 0.300875 & 0.295101 & 0.299325 & 0.320229 & 0.295838 & 0.305342 & 0.29482 & 0.297805 & 0.007727 & 0.009382 & 0.006074 & 0.010225 & 0.006157 & 0.006354 & 0.006262 & 0.006925 & 0.007129 & 0.006568 \\
        & Direct Inv.  & 0.296265 & 0.30301  & 0.306502 & 0.303756 & 0.301745 & 0.32037 & 0.300017 & 0.317445 & 0.298657 & 0.302186 & 0.033976 & 0.039918 & 0.028155 & 0.043967 & 0.027073 & 0.029447 & 0.028747 & 0.041718 & 0.032563 & 0.028447 \\
        
        & \textbf{EtaInv (1)} & 0.29811294270413263 & 0.3037538196891546 & 0.3051359286531806 & 0.3051352959126234 & 0.3019089061766863 & 0.3209380455315113 & 0.3023850996047258 & 0.31844585463404657 & 0.2983286209404469 & 0.30323502141982317 & 0.03640367245368127 & 0.042454510723473504 & 0.02876522419974208 & 0.0459367937175557 & 0.029499739105813206 & 0.03285879553295672 & 0.032200653688050807 & 0.044395982252899556 & 0.03776790829142555 & 0.03225996966939419 \\
        & \textbf{EtaInv (2)} & 0.3003685529742922 & 0.3073809051886201 & 0.30809705778956414 & 0.3041563708335161 & 0.30712787322700025 & 0.319422060996294 & 0.30297422520816325 & 0.3183720842003822 & 0.30154303219169376 & 0.30886702463030813 & 0.05153397250521396 & 0.05536038337741047 & 0.046386676467955114 & 0.06417977624805644 & 0.042859296500682834 & 0.047520422469824554 & 0.05240240152925253 & 0.053905405360274015 & 0.05480161677114666 & 0.04901187089271843 \\
        & \textbf{EtaInv (3)} & 0.3009531635258879 & 0.3080802982673049 & 0.3083322636783123 & 0.3067431280389428 & 0.3073941767215729 & 0.32096946984529495 & 0.3014859385788441 & 0.3188445925712585 & 0.3054265633225441 & 0.30819824021309616 & 0.07303433041088284 & 0.07607623809017242 & 0.06576247510965913 & 0.08403962370939552 & 0.060595654509961606 & 0.06438433551229536 & 0.07142013660632074 & 0.07564583676867187 & 0.07414130459073931 & 0.06689334658440202 \\

        \bottomrule
        \end{tabular}
        }%
        \label{tab:result_edit_cat}
\end{sidewaystable*}

        \begin{table*}[h]
        \caption{VIEScore~\cite{ku2024viescoreexplainablemetricsconditional} for the PIE-Bench random (0) and change style (9) subsets. VIEScore ranges from 0 to 10 (higher is better) and provides three scores: overall, text-image alignment (align.), and structural similarity (sim.). For PtP and MasaCtrl, our method achieves state-of-the-art overall and alignment scores.}
        \centering
        \resizebox{1\textwidth}{!}{%
        \small
        \centering
        \setlength{\tabcolsep}{4.25pt}
        \sisetup{table-auto-round}
        \begin{tabular}{@{}cl*{6}{S[table-format=2.2,drop-exponent = true,fixed-exponent = 0,exponent-mode = fixed,]}@{}}%
        \toprule
        && \multicolumn{3}{c}{PIE-Bench Random} & \multicolumn{3}{c}{PIE-Bench Style}  \\ 
        \cmidrule(r){3-5}  \cmidrule(r){6-8}
        Editing & Inversion & {overall $\uparrow$} & {align. $\uparrow$} & {sim. $\uparrow$} & {overall $\uparrow$} & {align. $\uparrow$} & {sim. $\uparrow$} \\ \midrule
        \multirow{10}{*}{PtP}
 & DDIM Inv. \cite{song2020denoising} & 4.30597 & 4.776119 & 5.358209 & 1.717949 & 1.794872 & 3.102564 \\
 & Null-text Inv. \cite{mokady2023null} & 5.431655 & 5.827338 & 8.071942 & \cellcolor{mysilver}4.025 & \cellcolor{mysilver}4.1625 & 8.0125 \\
 & NPI \cite{miyake2023negative} & \cellcolor{mysilver}5.870504 & \cellcolor{mysilver}6.352518 & 7.834532 & 3.850000 & 4.087500 & 7.175000 \\
 & ProxNPI \cite{han2023improving} & 5.23741 & 5.647482 & 7.633094 & 3.175 & 3.325 & 7.5625 \\
 & EDICT \cite{wallace2023edict} & 1.482014 & 1.561151 & 8.064748 & 0.425 & 0.425 & \cellcolor{mygold}9.1875 \\
 & DDPM Inv. \cite{huberman2023edit} & 1.402878 & 1.482014 & \cellcolor{mygold}8.374101 & 0.375 & 0.375 & \cellcolor{mysilver}8.625 \\
 & Direct Inv. \cite{ju2023direct} & 5.271429 & 5.657143 & 8.078571 & 3.177215 & 3.240506 & 7.088608 \\
 & \textbf{Eta Inversion (1)} & 4.978571 & 5.342857 & \cellcolor{mysilver}8.128571 & 3.164557 & 3.227848 & 7.088608 \\
 & \textbf{Eta Inversion (2)} & 5.604317 & 6.071942 & 8.014388 & 3.481013 & 3.64557 & 6.696203 \\
 & \textbf{Eta Inversion (3)} & \cellcolor{mygold}6.208633 & \cellcolor{mygold}6.726619 & 7.654676 & \cellcolor{mygold}4.240506 & \cellcolor{mygold}4.544304 & 6.063291 \\
        \midrule
        \multirow{10}{*}{PnP}
 & DDIM Inv. \cite{song2020denoising} & 4.264286 & 4.985714 & 5.128571 & 4.250000 & 4.600000 & 5.425000 \\
 & Null-text Inv. \cite{mokady2023null} & \cellcolor{mysilver}6.021428571428571 & \cellcolor{mysilver}6.628571428571429 & 7.228571 & \cellcolor{mygold}5.9875 & \cellcolor{mysilver}6.3 & 7.800000 \\
 & NPI \cite{miyake2023negative} & \cellcolor{mygold}6.371428571428571 & \cellcolor{mygold}6.9071428571428575 & \cellcolor{mybronze}7.678571428571429 & \cellcolor{mysilver}5.9375 & \cellcolor{mygold}6.3375 & \cellcolor{mysilver}7.9625 \\
 & ProxNPI \cite{han2023improving} & 5.785714 & 6.285714 & \cellcolor{mysilver}7.828571428571428 & 4.700000 & 4.875000 & \cellcolor{mygold}8.1 \\
 & EDICT \cite{wallace2023edict} & 2.600000 & 2.871429 & 4.171429 & 2.350000 & 2.687500 & 3.900000 \\
 & DDPM Inv. \cite{huberman2023edit} & 4.121429 & 4.385714 & \cellcolor{mygold}8.492857142857142 & 2.025000 & 2.062500 & 7.475000 \\
 & Direct Inv. \cite{ju2023direct} & 5.042857 & 5.428571 & 7.464286 & 4.737500 & 4.925000 & \cellcolor{mybronze}7.9375 \\
 & \textbf{Eta Inversion (1)} & 5.057143 & 5.435714 & 7.178571 & 3.525000 & 3.625000 & 7.325000 \\
 & \textbf{Eta Inversion (2)} & 5.564286 & 5.985714 & 6.914286 & 4.437500 & 4.812500 & 6.237500 \\
 & \textbf{Eta Inversion (3)} & \cellcolor{mybronze}5.878571428571429 & \cellcolor{mybronze}6.485714285714286 & 6.628571 & \cellcolor{mybronze}5.3375 & \cellcolor{mybronze}5.775 & 6.550000 \\
        \midrule
        \multirow{10}{*}{Masa}
 & DDIM Inv. \cite{song2020denoising} & 2.546763 & 2.906475 & 3.928058 & 1.287500 & 1.387500 & 2.625000 \\
 & Null-text Inv. \cite{mokady2023null} & \cellcolor{mysilver}3.8642857142857143 & \cellcolor{mybronze}4.128571428571429 & 7.635714 & 1.837500 & 1.987500 & 6.812500 \\
 & NPI \cite{miyake2023negative} & 3.807143 & \cellcolor{mysilver}4.185714285714286 & 6.971429 & \cellcolor{mygold}2.7375 & \cellcolor{mygold}2.9125 & 6.612500 \\
 & ProxNPI \cite{han2023improving} & 3.650000 & 4.007143 & 7.371429 & 1.675000 & 1.737500 & 6.375000 \\
 & EDICT \cite{wallace2023edict} & 1.564286 & 1.628571 & \cellcolor{mysilver}8.635714285714286 & 0.569620 & 0.582278 & \cellcolor{mysilver}8.341772151898734 \\
 & DDPM Inv. \cite{huberman2023edit} & 1.307143 & 1.385714 & \cellcolor{mygold}8.907142857142857 & 0.425000 & 0.425000 & \cellcolor{mygold}8.7125 \\
 & Direct Inv. \cite{ju2023direct} & 2.607143 & 2.778571 & 7.478571 & 1.237500 & 1.287500 & 5.775000 \\
 & \textbf{Eta Inversion (1)} & 3.192857 & 3.328571 & \cellcolor{mybronze}8.042857142857143 & 1.175000 & 1.212500 & \cellcolor{mybronze}7.3 \\
 & \textbf{Eta Inversion (2)} & \cellcolor{mygold}4.357142857142857 & \cellcolor{mygold}4.5285714285714285 & 7.478571 & \cellcolor{mybronze}2.15 & \cellcolor{mybronze}2.175 & 6.025000 \\
 & \textbf{Eta Inversion (3)} & \cellcolor{mybronze}3.8285714285714287 & 4.107143 & 6.657143 & \cellcolor{mysilver}2.475 & \cellcolor{mysilver}2.55 & 5.550000 \\
        \bottomrule
        \end{tabular}
        }
        \label{tab:result_appendix_gpt}
        \end{table*}

    \begin{table}[h]
        \caption{Human evaluation on PIE-Bench. We conducted 740 comparisons on PIE-Bench's random (0) and change style (9) subsets, where participants were asked to choose if Direct Inversion's output is better, if Eta Inversion's output is better, or if it is a tie. In non-tie cases, Eta Inversion was preferred approximately 2 to 3.5 times more than Direct Inversion.}
        \centering
        \resizebox{1\textwidth}{!}{%
        \small
        \centering
        \setlength{\tabcolsep}{4.25pt}
        \renewcommand{\arraystretch}{0.1}
\begin{tabular}{l|ccc}
        \toprule
        Editing method: PtP &Tie & Direct Inversion & \textbf{Eta Inversion} \\
        \midrule
        PIE-Bench (random) & 69.76\% & 10.70\% & \cellcolor{mygold}19.53\% \\
        PIE-Bench (change style) & 52.50\% & 10.63\% & \cellcolor{mygold}36.86\% \\
         \bottomrule
        \end{tabular}
        }%
        \label{table_user_study}
        \end{table}

    \begin{table*}[h]
        \caption{ImageNet-R-TI2I \cite{tumanyan2023plug} evaluation with various metrics. For all three editing methods, our approach achieves the best CLIP text-image and text-caption metrics.
        }
        \centering
        \resizebox{1.0\textwidth}{!}{%
        \small
        \centering
        \setlength{\tabcolsep}{4.25pt}
        \sisetup{table-auto-round}
        \begin{tabular}{@{}cl*{8}{S[table-format=2.2,drop-exponent = true,fixed-exponent = -2,exponent-mode = fixed,]}@{}}%
        \toprule
        && \multicolumn{8}{c}{Metric $(\times {10}^{2})$ }  \\ 
        \cmidrule(r){3-10}
        &&\multicolumn{4}{c}{Text-Image Alignment (CLIP)} & \multicolumn{4}{c}{Structural Similarity} \\ 
        \cmidrule(r){3-6} \cmidrule(lr){7-10} 
        Editing & Inversion & {text-img $\uparrow$} & {text-cap $\uparrow$} & {directional $\uparrow$} & {acc $\uparrow$} & {DINOv1 $\downarrow$} & {DINOv2 $\downarrow$} & {LPIPS $\downarrow$} & {MS-SSIM $\uparrow$} \\ \midrule
        \multirow{10}{*}{PtP}
        
 & DDIM Inv. \cite{song2020denoising} & 0.300632 & 0.698667 & 0.024871 & \cellcolor{mygold}0.9888888888888889 & 0.083532 & 0.010722 & 0.504776 & 0.551669 \\
 & Null-text Inv. \cite{mokady2023null} & 0.303227 & 0.705829 & \cellcolor{mybronze}0.054039139332922384 & \cellcolor{mybronze}0.9444444444444444 & 0.021777 & 0.005250 & 0.261035 & 0.815069 \\
 & NPI \cite{miyake2023negative} & 0.303377 & 0.684326 & \cellcolor{mygold}0.0615244996902119 & \cellcolor{mybronze}0.9444444444444444 & 0.032457 & 0.006280 & 0.287074 & 0.793274 \\
 & ProxNPI \cite{han2023improving} & 0.299941 & 0.690854 & \cellcolor{mysilver}0.05412450299546537 & 0.922222 & 0.030834 & 0.005934 & 0.256126 & 0.809384 \\
 & EDICT \cite{wallace2023edict} & 0.293859 & 0.676906 & 0.026189 & \cellcolor{mybronze}0.9444444444444444 & \cellcolor{mysilver}0.009404323671737479 & \cellcolor{mysilver}0.003173777186829183 & \cellcolor{mysilver}0.12395210979092453 & \cellcolor{mysilver}0.8985121422343784 \\
 & DDPM Inv. \cite{huberman2023edit} & 0.289843 & 0.671433 & 0.006860 & \cellcolor{mysilver}0.9777777777777777 & \cellcolor{mygold}0.00460396135588073 & \cellcolor{mygold}0.002229099212369571 & \cellcolor{mygold}0.07873167813652092 & \cellcolor{mygold}0.9222212546401554 \\
 & Direct Inv. \cite{ju2023direct} & 0.303374 & 0.702348 & 0.032677 & \cellcolor{mysilver}0.9777777777777777 & \cellcolor{mybronze}0.018036740557808014 & \cellcolor{mybronze}0.004538325217112692 & \cellcolor{mybronze}0.21982821428941357 & \cellcolor{mybronze}0.8298795898755391 \\
 & \textbf{EtaInv (1)} & \cellcolor{mybronze}0.3054346842898263 & \cellcolor{mybronze}0.7088675684399075 & 0.034022 & \cellcolor{mysilver}0.9777777777777777 & 0.018853 & 0.004640 & 0.226764 & 0.821688 \\
 & \textbf{EtaInv (2)} & \cellcolor{mysilver}0.30942235456572637 & \cellcolor{mysilver}0.7117388566335042 & 0.042446 & \cellcolor{mybronze}0.9444444444444444 & 0.037025 & 0.006706 & 0.386631 & 0.648443 \\
 & \textbf{EtaInv (3)} & \cellcolor{mygold}0.31441925532288023 & \cellcolor{mygold}0.7301973561445873 & 0.052512 & \cellcolor{mygold}0.9888888888888889 & 0.050549 & 0.008064 & 0.456492 & 0.564557 \\
        
        \midrule
        \multirow{10}{*}{PnP}
 & DDIM Inv. \cite{song2020denoising} & 0.288298 & 0.671258 & \cellcolor{mysilver}0.07907485843429135 & 0.877778 & 0.074509 & 0.011089 & 0.460250 & 0.624951 \\
 & Null-text Inv. \cite{mokady2023null} & 0.303485 & 0.707477 & \cellcolor{mybronze}0.0749444331922051 & 0.900000 & 0.047465 & 0.007860 & 0.418736 & 0.681922 \\
 & NPI \cite{miyake2023negative} & 0.308220 & 0.707688 & \cellcolor{mygold}0.08110463526358622 & 0.922222 & 0.042287 & 0.007434 & 0.376241 & 0.726913 \\
 & ProxNPI \cite{han2023improving} & 0.302026 & 0.692809 & 0.062840 & \cellcolor{mybronze}0.9666666666666667 & 0.036819 & 0.006430 & \cellcolor{mybronze}0.31199564006593494 & \cellcolor{mybronze}0.7682385550604927 \\
 & EDICT \cite{wallace2023edict} & 0.234416 & 0.581498 & 0.056548 & 0.577778 & 0.065405 & 0.009536 & 0.432080 & 0.628998 \\
 & DDPM Inv. \cite{huberman2023edit} & 0.299064 & 0.691761 & 0.031989 & 0.933333 & \cellcolor{mygold}0.014282594046865901 & \cellcolor{mygold}0.003969529303463383 & \cellcolor{mygold}0.16628073079304562 & \cellcolor{mygold}0.851397309700648 \\
 & Direct Inv. \cite{ju2023direct} & 0.308042 & 0.713955 & 0.053568 & \cellcolor{mygold}0.9888888888888889 & \cellcolor{mysilver}0.030206094971961445 & \cellcolor{mysilver}0.005974539100295968 & \cellcolor{mysilver}0.3070564554797279 & \cellcolor{mysilver}0.7722945113976797 \\
 & \textbf{EtaInv (1)} & \cellcolor{mybronze}0.30969284110599093 & \cellcolor{mybronze}0.7164372139506869 & 0.055690 & \cellcolor{mysilver}0.9777777777777777 & \cellcolor{mybronze}0.03297534180391166 & \cellcolor{mybronze}0.00621007154436989 & 0.325032 & 0.751517 \\
 & \textbf{EtaInv (2)} & \cellcolor{mysilver}0.3127148946126302 & \cellcolor{mysilver}0.7330768446127574 & 0.068616 & \cellcolor{mysilver}0.9777777777777777 & 0.052363 & 0.008610 & 0.478414 & 0.547032 \\
 & \textbf{EtaInv (3)} & \cellcolor{mygold}0.3161194344361623 & \cellcolor{mygold}0.7384766393237644 & 0.073254 & \cellcolor{mysilver}0.9777777777777777 & 0.058571 & 0.009597 & 0.535969 & 0.447680 \\
        
        \midrule
        \multirow{10}{*}{Masa}
         & DDIM Inv. \cite{song2020denoising} & \cellcolor{mybronze}0.2967420544889238 & \cellcolor{mysilver}0.6881464123725891 & 0.012349 & \cellcolor{mysilver}0.9888888888888889 & 0.097966 & 0.011802 & 0.539644 & 0.516811 \\
 & Null-text Inv. \cite{mokady2023null} & 0.296128 & 0.657415 & \cellcolor{mybronze}0.029650776284850307 & 0.900000 & 0.067908 & 0.009343 & 0.345713 & 0.673883 \\
 & NPI \cite{miyake2023negative} & 0.280663 & 0.639803 & \cellcolor{mygold}0.03908606185319109 & 0.844444 & 0.073938 & 0.010173 & 0.365953 & 0.670935 \\
 & ProxNPI \cite{han2023improving} & 0.274236 & 0.633457 & 0.027010 & 0.877778 & 0.064165 & 0.008982 & 0.322750 & 0.709539 \\
 & EDICT \cite{wallace2023edict} & 0.293836 & \cellcolor{mybronze}0.685013359453943 & \cellcolor{mysilver}0.03328529838472605 & 0.955556 & \cellcolor{mysilver}0.019621735428356463 & \cellcolor{mysilver}0.0044216456635492955 & \cellcolor{mysilver}0.16925901296652027 & \cellcolor{mysilver}0.8567955798572964 \\
 & DDPM Inv. \cite{huberman2023edit} & 0.292804 & 0.679085 & 0.009276 & \cellcolor{mygold}1.0 & \cellcolor{mygold}0.009276034831742032 & \cellcolor{mygold}0.003063239445651157 & \cellcolor{mygold}0.11149187653015057 & \cellcolor{mygold}0.8908810774485271 \\
 & Direct Inv. \cite{ju2023direct} & 0.294003 & 0.670548 & 0.005269 & \cellcolor{mybronze}0.9777777777777777 & \cellcolor{mybronze}0.056977016923742164 & \cellcolor{mybronze}0.007977131089299089 & \cellcolor{mybronze}0.30589368748995993 & \cellcolor{mybronze}0.7167490538623598 \\
 & \textbf{EtaInv (1)} & 0.294922 & 0.676686 & 0.006206 & \cellcolor{mybronze}0.9777777777777777 & 0.059918 & 0.008269 & 0.319825 & 0.697101 \\
 & \textbf{EtaInv (2)} & \cellcolor{mysilver}0.297080311510298 & 0.678817 & 0.009184 & \cellcolor{mysilver}0.9888888888888889 & 0.078229 & 0.010496 & 0.429114 & 0.520659 \\
 & \textbf{EtaInv (3)} & \cellcolor{mygold}0.29915328125158946 & \cellcolor{mygold}0.6885678966840109 & 0.011879 & \cellcolor{mybronze}0.9777777777777777 & 0.088821 & 0.011894 & 0.482996 & 0.422169 \\
        
        \bottomrule
        \end{tabular}
        }%
        \label{tab:result_appendix_ti2i}
        \end{table*}

\begin{table*}[h]
        \caption{Reconstruction benchmark of inversion methods on the COCO training set \cite{chen2015microsoft} (left). Inference time of inversion methods (right).
        Our method matches VAE reconstruction metrics, thus demonstrating perfect reconstruction. Additionally, our inference time is only slightly higher than DDIM Inversion for inversion (Inv.) and the three editing methods PtP, PnP, and MasaCtrl. Inference time is measured on an NVIDIA V100.}
        \centering
        \resizebox{1\linewidth}{!}{%
        \small
        \centering
        \setlength{\tabcolsep}{5.5pt}
        \sisetup{table-auto-round}
        \begin{tabular}{@{}l|*{1}{S[table-format=2.1]}*{2}{S[table-format=2.3]}|*{4}{S[table-format=2.1]}@{}}%
        \toprule
        &\multicolumn{3}{c}{Reconstruction error}&\multicolumn{4}{c}{Inference time (s)}\\ 
        \cmidrule(lr){2-4} \cmidrule(lr){5-8} 
        Method & {PSNR $\uparrow$} & {LPIPS $\downarrow$} & {SSIM $\uparrow$} & {Inv.} & {PtP} & {PnP} & {Masa} \\ \midrule
        DDIM Inv. \cite{song2020denoising} & 14.320371 & 0.499998 & 0.469395 & 22.060387 & 23.506866 & 19.275929 & 25.593275 \\
        Null-text Inv. \cite{mokady2023null} & 26.329788 & 0.072109 & 0.744787 & 200.926327 & 202.677380 & 198.365538 & 204.591798 \\
        NPI \cite{miyake2023negative} & 23.951053 & 0.147295 & 0.696579 & 22.116447 & 23.553465 & 19.259445 & 25.550220 \\
        ProxNPI \cite{han2023improving} & 26.635513 & 0.067292 & 0.75056 & 25.720187 & 27.288240 & 22.760977 & 29.327472 \\
        EDICT \cite{wallace2023edict} & 26.635451 & 0.067293 & 0.750558 & 43.987208 & 53.930088 & 45.082623 & 55.824568 \\
        DDPM Inv. \cite{huberman2023edit} & 26.635513 & 0.067292 & 0.75056 & 32.115896 & 41.027418 & 35.531602 & 43.045689 \\
        Direct Inv. \cite{ju2023direct} & 26.635513 & 0.067292 & 0.75056 & 22.139811 & 23.517492 & 19.276044 & 25.565936 \\
        \textbf{Eta Inversion} & 26.635513 & 0.067292 & 0.75056 & 22.752689 & 24.319281 & 19.871074 & 26.356343 \\
        \textbf{Eta Inversion w/o mask} & 26.635513 & 0.067292 & 0.75056 & 22.265121 & 23.895624 & 19.429463 & 25.906187 \\
        \midrule
        VAE Reconstruction & 26.635513 & 0.067292 & 0.75056 &{-}&{-}&{-}&{-}\\ 
         \bottomrule
        \end{tabular}
        }%
        \label{tab:result_appendix_rec}
        \end{table*}

    \clearpage
    \begin{figure*}[t]
    \captionsetup[subfigure]{labelformat=empty}
    \centering

\begin{minipage}{\linewidth}
        \centering
        {\footnotesize \textit{“a \textbf{kitten} walking through the grass”} $\rightarrow$ \textit{“a \textbf{duck} walking through the grass”}}
        \medskip
        \end{minipage}

\subfloat[]{\includegraphics[width=.155\linewidth]{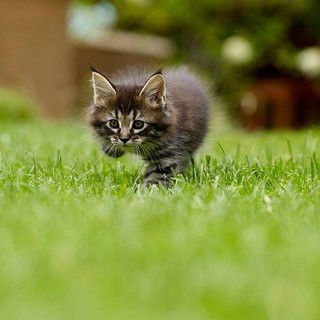}}\hspace*{-1pt}
\subfloat[]{\includegraphics[width=.155\linewidth]{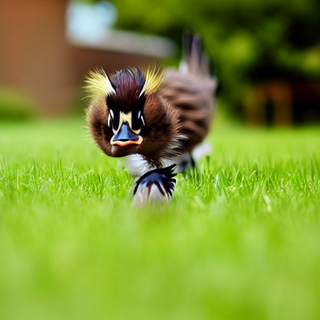}}\hspace*{-1pt}
\subfloat[]{\includegraphics[width=.155\linewidth]{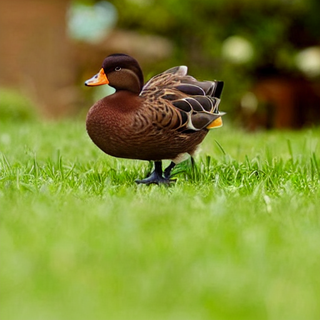}}\hspace*{-1pt}
\subfloat[]{\includegraphics[width=.155\linewidth]{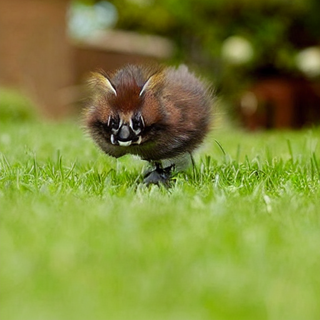}}\hspace*{-1pt}
\subfloat[]{\includegraphics[width=.155\linewidth]{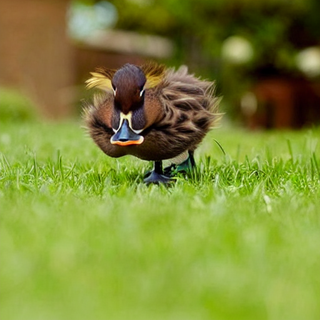}}\hspace*{-1pt}
\subfloat[]{\includegraphics[width=.155\linewidth]{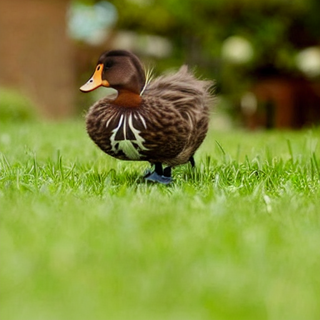}}

\begin{minipage}{\linewidth}
        \centering
        {\footnotesize \textit{“painting of a \textbf{shepherd} dog sitting in a laundry room next to a washing machine”} $\rightarrow$ \textit{“painting of a \textbf{poodle} dog sitting in a laundry room next to a washing machine”}}
        \medskip
        \end{minipage}

\subfloat[]{\includegraphics[width=.155\linewidth]{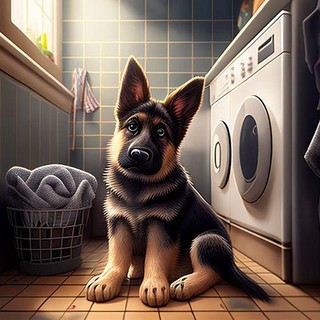}}\hspace*{-1pt}
\subfloat[]{\includegraphics[width=.155\linewidth]{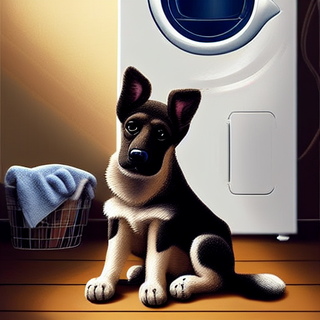}}\hspace*{-1pt}
\subfloat[]{\includegraphics[width=.155\linewidth]{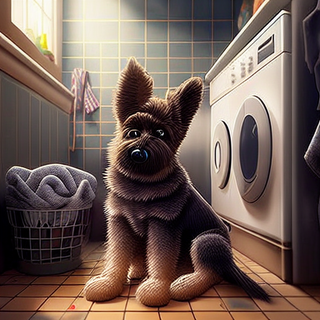}}\hspace*{-1pt}
\subfloat[]{\includegraphics[width=.155\linewidth]{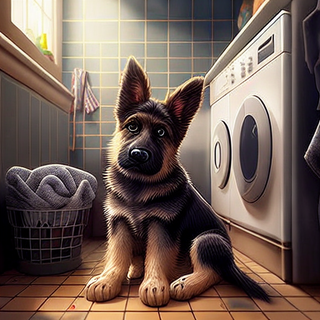}}\hspace*{-1pt}
\subfloat[]{\includegraphics[width=.155\linewidth]{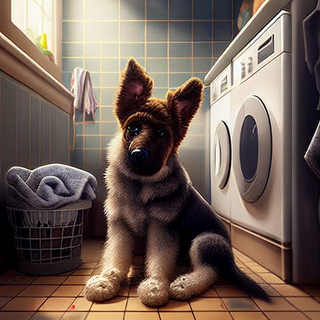}}\hspace*{-1pt}
\subfloat[]{\includegraphics[width=.155\linewidth]{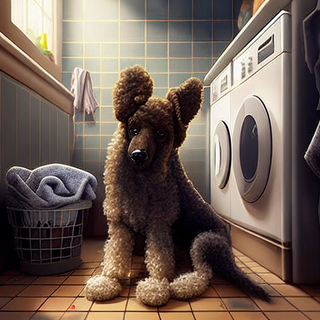}}

\begin{minipage}{\linewidth}
        \centering
        {\footnotesize \textit{“a detailed oil painting of a \textbf{calm} beautiful woman with stars in her hair”} $\rightarrow$ \textit{“a detailed oil painting of a \textbf{laughing} beautiful woman with stars in her hair”}}
        \medskip
        \end{minipage}

\subfloat[]{\includegraphics[width=.155\linewidth]{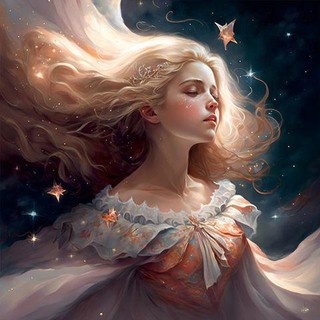}}\hspace*{-1pt}
\subfloat[]{\includegraphics[width=.155\linewidth]{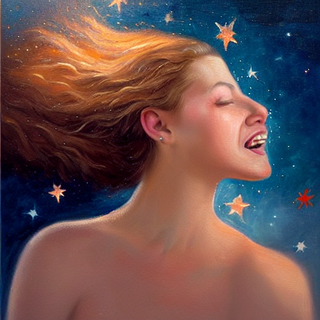}}\hspace*{-1pt}
\subfloat[]{\includegraphics[width=.155\linewidth]{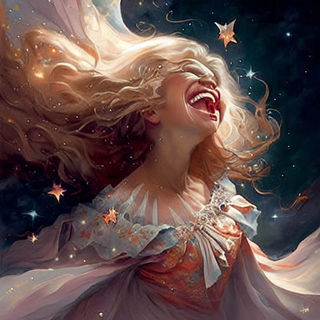}}\hspace*{-1pt}
\subfloat[]{\includegraphics[width=.155\linewidth]{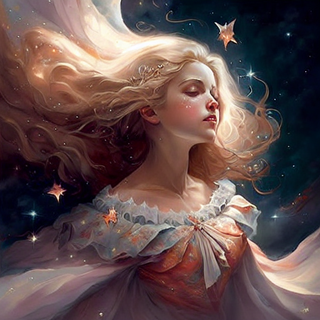}}\hspace*{-1pt}
\subfloat[]{\includegraphics[width=.155\linewidth]{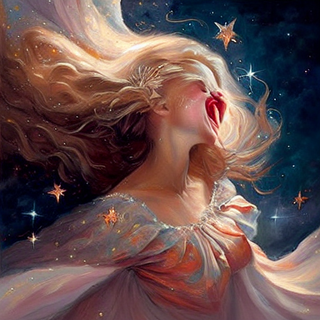}}\hspace*{-1pt}
\subfloat[]{\includegraphics[width=.155\linewidth]{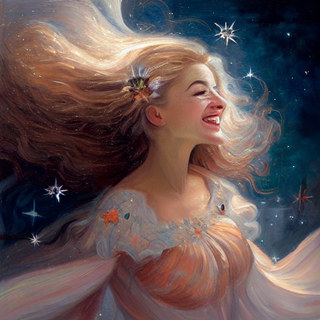}}

\begin{minipage}{\linewidth}
        \centering
        {\footnotesize \textit{“a \textbf{cat} sitting next to a mirror”} $\rightarrow$ \textit{“a \textbf{tiger} sitting next to a mirror”}}
        \medskip
        \end{minipage}

\subfloat[]{\includegraphics[width=.155\linewidth]{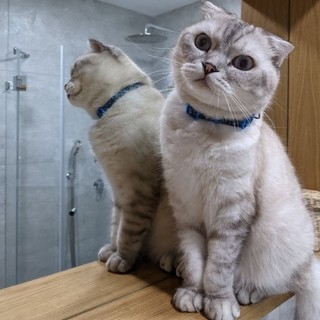}}\hspace*{-1pt}
\subfloat[]{\includegraphics[width=.155\linewidth]{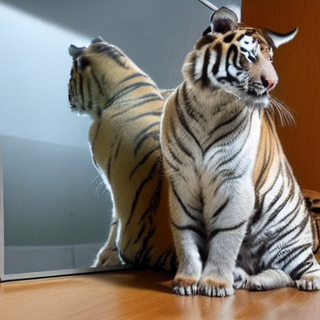}}\hspace*{-1pt}
\subfloat[]{\includegraphics[width=.155\linewidth]{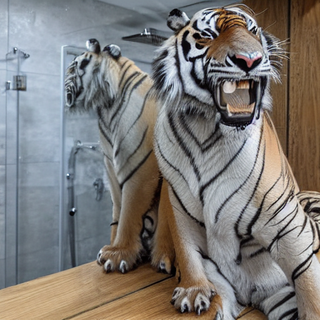}}\hspace*{-1pt}
\subfloat[]{\includegraphics[width=.155\linewidth]{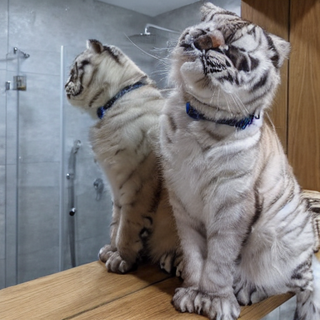}}\hspace*{-1pt}
\subfloat[]{\includegraphics[width=.155\linewidth]{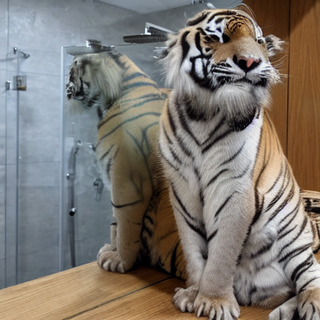}}\hspace*{-1pt}
\subfloat[]{\includegraphics[width=.155\linewidth]{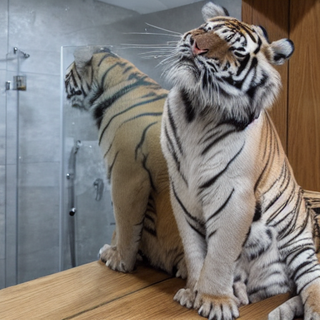}}

\begin{minipage}{\linewidth}
        \centering
        {\footnotesize \textit{“a woman in a black bikini top and yoga pants is meditating”} $\rightarrow$ \textit{“a \textbf{wax statue of} woman in a black bikini top and yoga pants is meditating”}}
        \medskip
        \end{minipage}

\subfloat[]{\includegraphics[width=.155\linewidth]{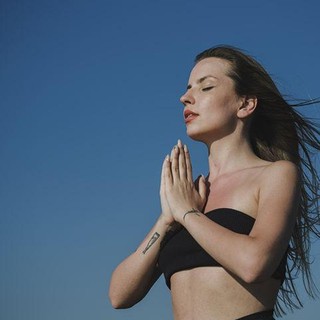}}\hspace*{-1pt}
\subfloat[]{\includegraphics[width=.155\linewidth]{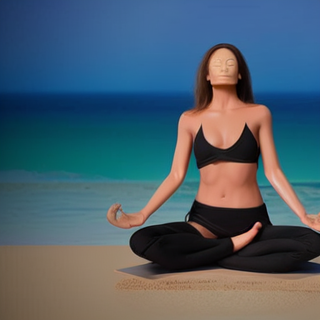}}\hspace*{-1pt}
\subfloat[]{\includegraphics[width=.155\linewidth]{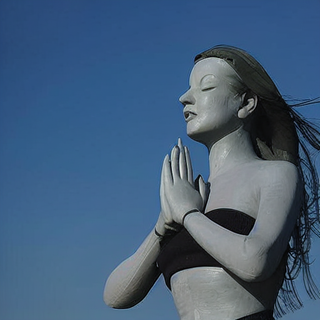}}\hspace*{-1pt}
\subfloat[]{\includegraphics[width=.155\linewidth]{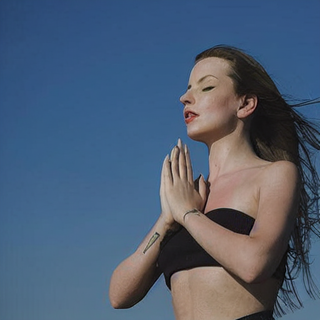}}\hspace*{-1pt}
\subfloat[]{\includegraphics[width=.155\linewidth]{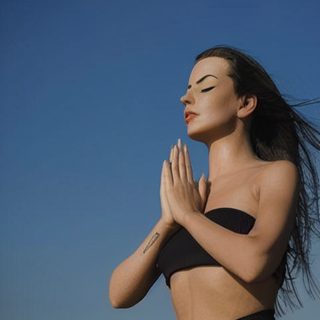}}\hspace*{-1pt}
\subfloat[]{\includegraphics[width=.155\linewidth]{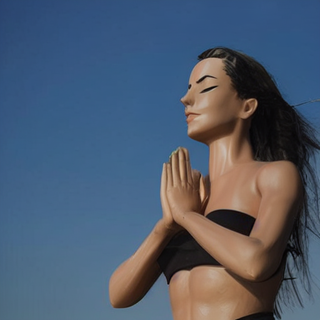}}

\begin{minipage}{\linewidth}
        \centering
        {\footnotesize \textit{“a slanted mountain bicycle on the road in front of a building”} $\rightarrow$ \textit{“a slanted \textbf{rusty} mountain bicycle on the road in front of a building”}}
        \medskip
        \end{minipage}

\subfloat[Source]{\includegraphics[width=.155\linewidth]{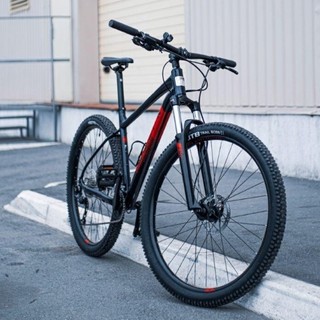}}\hspace*{-1pt}
\subfloat[DDIM~Inv~\cite{song2020denoising}]{\includegraphics[width=.155\linewidth]{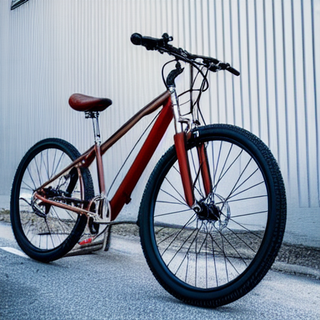}}\hspace*{-1pt}
\subfloat[NTI~\cite{mokady2023null}]{\includegraphics[width=.155\linewidth]{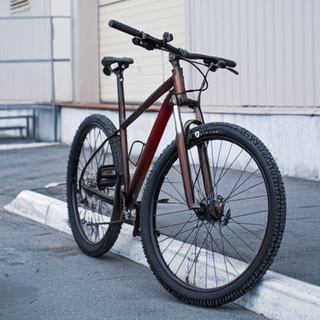}}\hspace*{-1pt}
\subfloat[EDICT~\cite{wallace2023edict}]{\includegraphics[width=.155\linewidth]{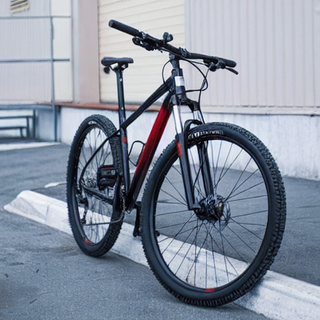}}\hspace*{-1pt}
\subfloat[Dir. Inv.~\cite{ju2023direct}]{\includegraphics[width=.155\linewidth]{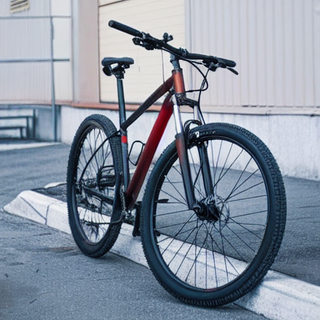}}\hspace*{-1pt}
\subfloat[\textbf{EtaInv~(2)}]{\includegraphics[width=.155\linewidth]{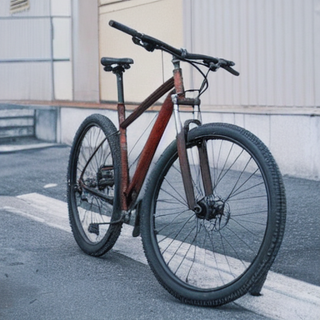}}

    \caption{Additional qualitative results for PtP editing.}
    \label{fig:ptp}
    \end{figure*}

    \clearpage
    \begin{figure*}[t]
    \captionsetup[subfigure]{labelformat=empty}
    \centering

\begin{minipage}{\linewidth}
        \centering
        {\footnotesize \textit{“a man wearing a tie”} $\rightarrow$ \textit{“a man wearing a \textbf{black and yellow stripes} tie ”}}
        \medskip
        \end{minipage}

\subfloat[]{\includegraphics[width=.155\linewidth]{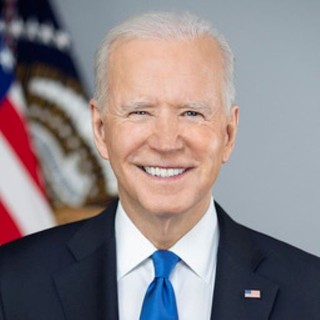}}\hspace*{-1pt}
\subfloat[]{\includegraphics[width=.155\linewidth]{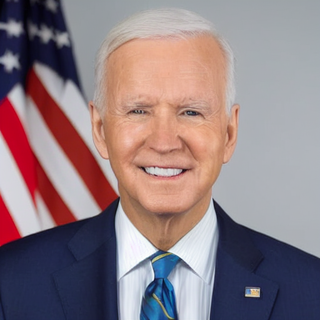}}\hspace*{-1pt}
\subfloat[]{\includegraphics[width=.155\linewidth]{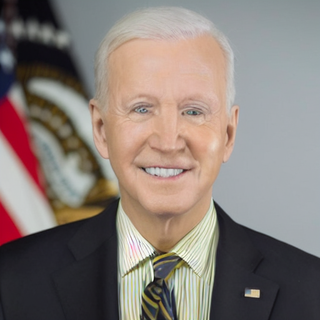}}\hspace*{-1pt}
\subfloat[]{\includegraphics[width=.155\linewidth]{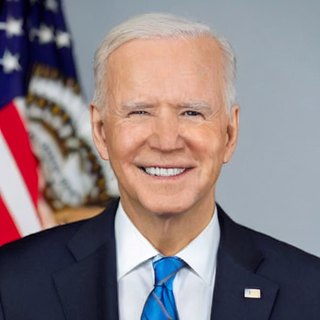}}\hspace*{-1pt}
\subfloat[]{\includegraphics[width=.155\linewidth]{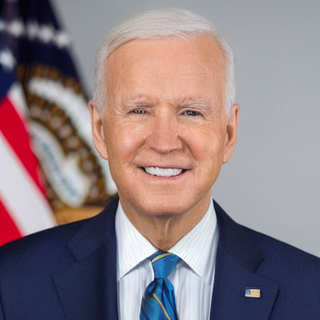}}\hspace*{-1pt}
\subfloat[]{\includegraphics[width=.155\linewidth]{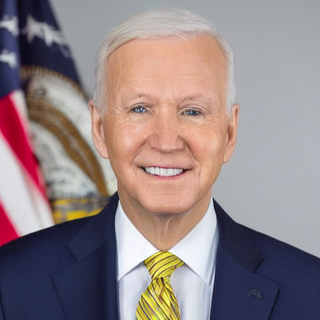}}

\begin{minipage}{\linewidth}
        \centering
        {\footnotesize \textit{“a woman in front of a glowing yellow light”} $\rightarrow$ \textit{“a woman \textbf{riding a lion} in front of a glowing yellow light”}}
        \medskip
        \end{minipage}

\subfloat[]{\includegraphics[width=.155\linewidth]{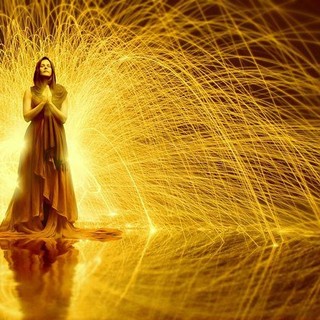}}\hspace*{-1pt}
\subfloat[]{\includegraphics[width=.155\linewidth]{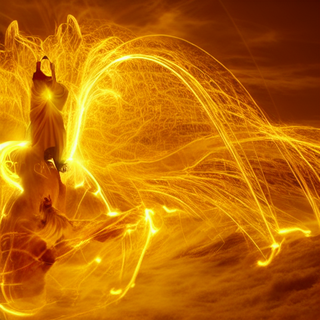}}\hspace*{-1pt}
\subfloat[]{\includegraphics[width=.155\linewidth]{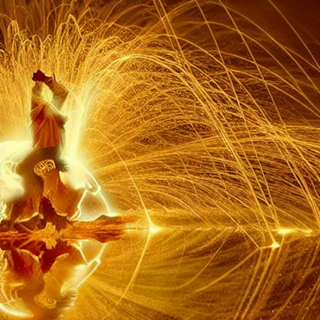}}\hspace*{-1pt}
\subfloat[]{\includegraphics[width=.155\linewidth]{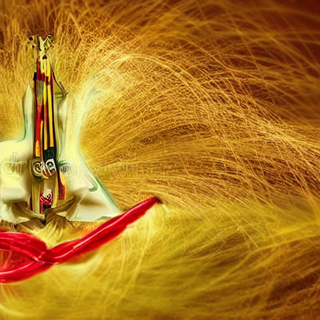}}\hspace*{-1pt}
\subfloat[]{\includegraphics[width=.155\linewidth]{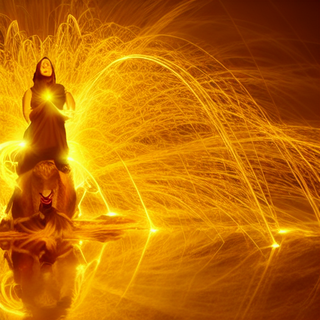}}\hspace*{-1pt}
\subfloat[]{\includegraphics[width=.155\linewidth]{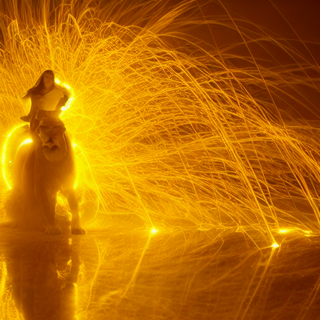}}

\begin{minipage}{\linewidth}
        \centering
        {\footnotesize \textit{“a basket of books and a \textbf{cup}”} $\rightarrow$ \textit{“a basket of books and a \textbf{candle}”}}
        \medskip
        \end{minipage}

\subfloat[]{\includegraphics[width=.155\linewidth]{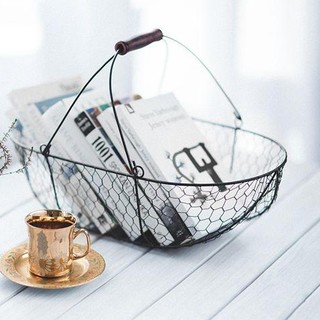}}\hspace*{-1pt}
\subfloat[]{\includegraphics[width=.155\linewidth]{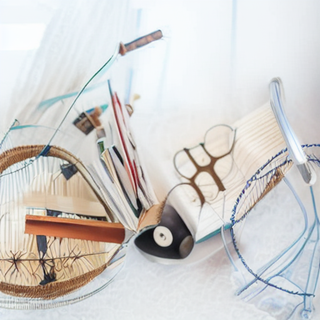}}\hspace*{-1pt}
\subfloat[]{\includegraphics[width=.155\linewidth]{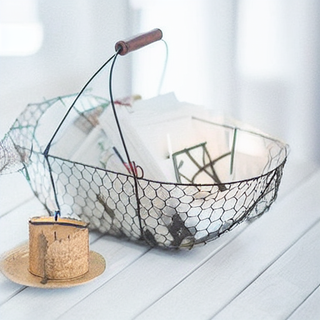}}\hspace*{-1pt}
\subfloat[]{\includegraphics[width=.155\linewidth]{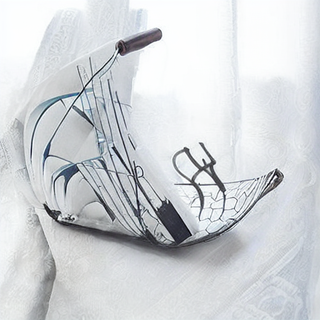}}\hspace*{-1pt}
\subfloat[]{\includegraphics[width=.155\linewidth]{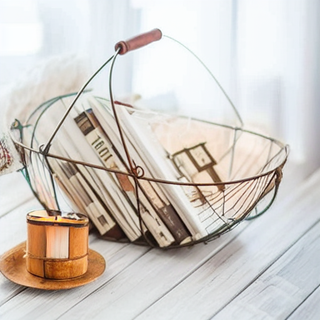}}\hspace*{-1pt}
\subfloat[]{\includegraphics[width=.155\linewidth]{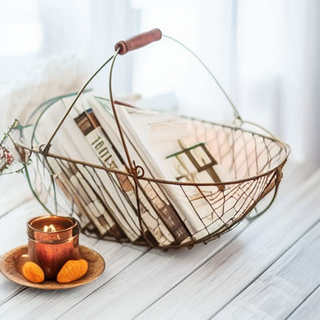}}

\begin{minipage}{\linewidth}
        \centering
        {\footnotesize \textit{“two \textbf{red and white} toy gnomes are sitting on a snow covered surface”} $\rightarrow$ \textit{“two \textbf{blue and green} toy gnomes are sitting on a snow covered surface”}}
        \medskip
        \end{minipage}

\subfloat[]{\includegraphics[width=.155\linewidth]{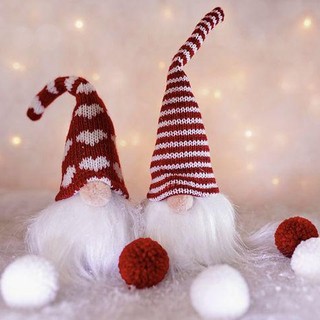}}\hspace*{-1pt}
\subfloat[]{\includegraphics[width=.155\linewidth]{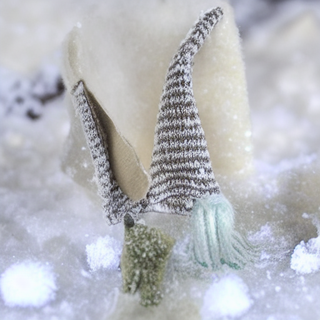}}\hspace*{-1pt}
\subfloat[]{\includegraphics[width=.155\linewidth]{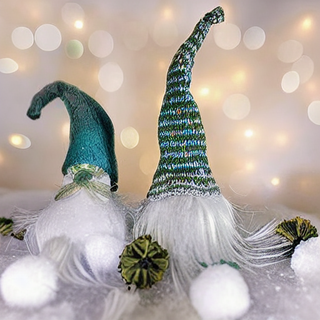}}\hspace*{-1pt}
\subfloat[]{\includegraphics[width=.155\linewidth]{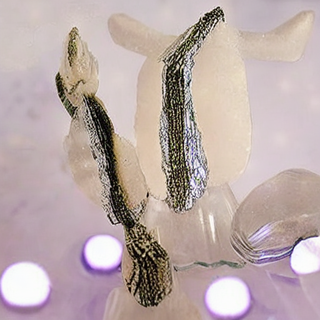}}\hspace*{-1pt}
\subfloat[]{\includegraphics[width=.155\linewidth]{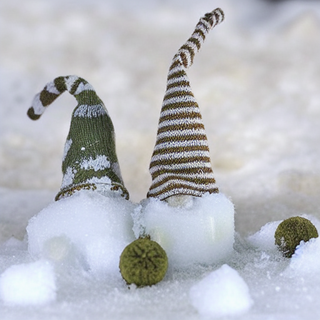}}\hspace*{-1pt}
\subfloat[]{\includegraphics[width=.155\linewidth]{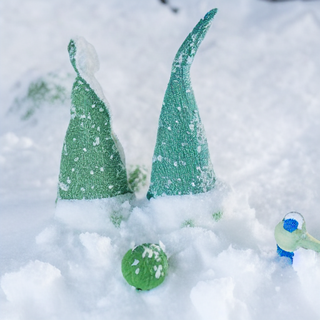}}

\begin{minipage}{\linewidth}
        \centering
        {\footnotesize \textit{“a view of the mountains covered in \textbf{snow}”} $\rightarrow$ \textit{“a view of the mountains covered in \textbf{leaves}”}}
        \medskip
        \end{minipage}

\subfloat[]{\includegraphics[width=.155\linewidth]{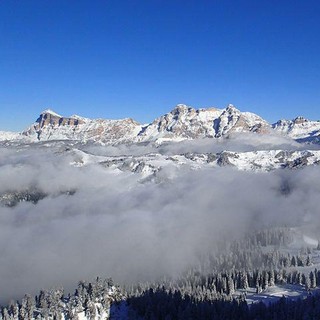}}\hspace*{-1pt}
\subfloat[]{\includegraphics[width=.155\linewidth]{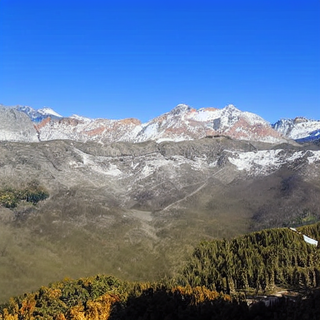}}\hspace*{-1pt}
\subfloat[]{\includegraphics[width=.155\linewidth]{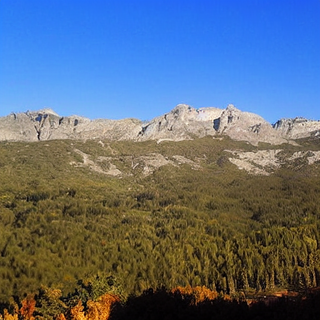}}\hspace*{-1pt}
\subfloat[]{\includegraphics[width=.155\linewidth]{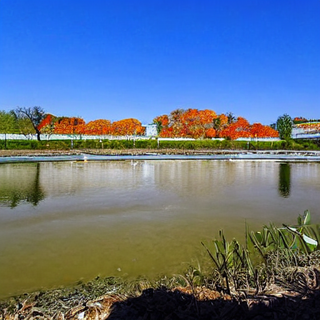}}\hspace*{-1pt}
\subfloat[]{\includegraphics[width=.155\linewidth]{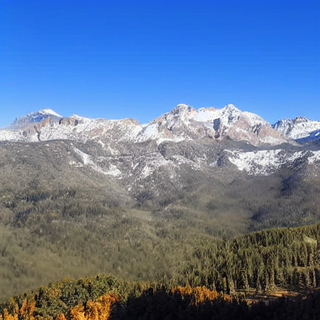}}\hspace*{-1pt}
\subfloat[]{\includegraphics[width=.155\linewidth]{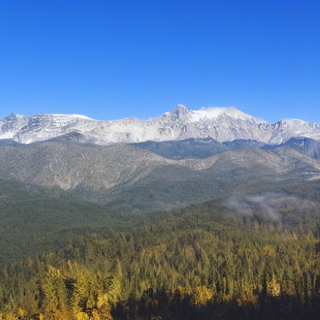}}

\begin{minipage}{\linewidth}
        \centering
        {\footnotesize \textit{“a boat is docked on a lake in the \textbf{heavy fog}”} $\rightarrow$ \textit{“a boat is docked on a lake in the \textbf{sunny day}”}}
        \medskip
        \end{minipage}

\subfloat[Source]{\includegraphics[width=.155\linewidth]{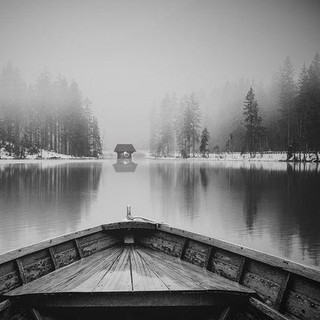}}\hspace*{-1pt}
\subfloat[DDIM~Inv~\cite{song2020denoising}]{\includegraphics[width=.155\linewidth]{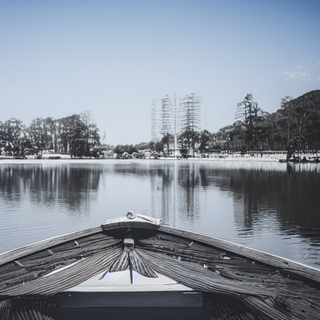}}\hspace*{-1pt}
\subfloat[NTI~\cite{mokady2023null}]{\includegraphics[width=.155\linewidth]{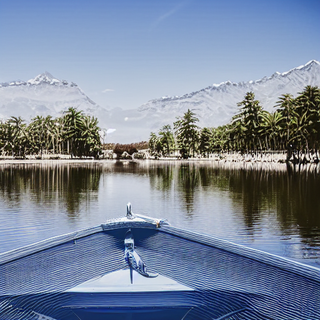}}\hspace*{-1pt}
\subfloat[EDICT~\cite{wallace2023edict}]{\includegraphics[width=.155\linewidth]{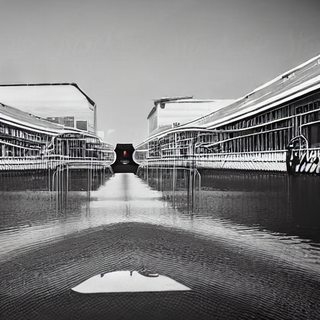}}\hspace*{-1pt}
\subfloat[Dir. Inv.~\cite{ju2023direct}]{\includegraphics[width=.155\linewidth]{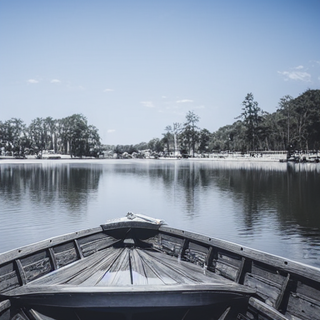}}\hspace*{-1pt}
\subfloat[\textbf{EtaInv~(2)}]{\includegraphics[width=.155\linewidth]{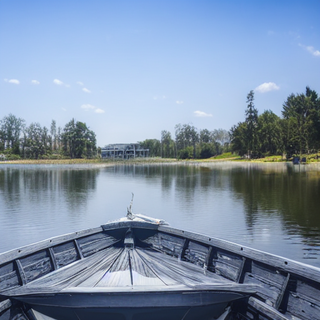}}

    \caption{Additional qualitative results for PnP editing.}
    \label{fig:pnp}
    \end{figure*}

    \clearpage
    \begin{figure*}[t]
    \captionsetup[subfigure]{labelformat=empty}
    \centering

\begin{minipage}{\linewidth}
        \centering
        {\footnotesize \textit{“the christmas illustration of a santa's \textbf{laughing} face”} $\rightarrow$ \textit{“the christmas illustration of a santa's \textbf{angry} face”}}
        \medskip
        \end{minipage}

\subfloat[]{\includegraphics[width=.155\linewidth]{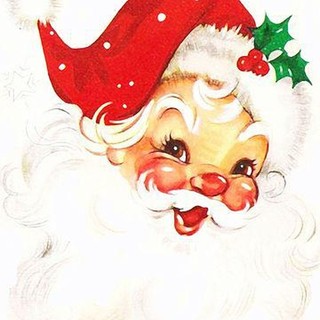}}\hspace*{-1pt}
\subfloat[]{\includegraphics[width=.155\linewidth]{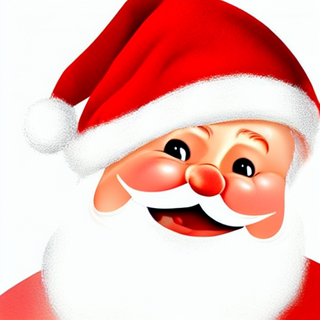}}\hspace*{-1pt}
\subfloat[]{\includegraphics[width=.155\linewidth]{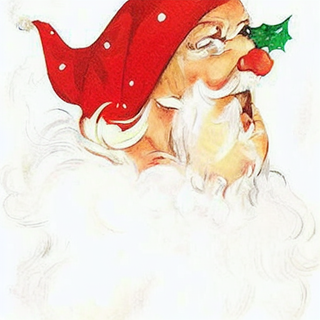}}\hspace*{-1pt}
\subfloat[]{\includegraphics[width=.155\linewidth]{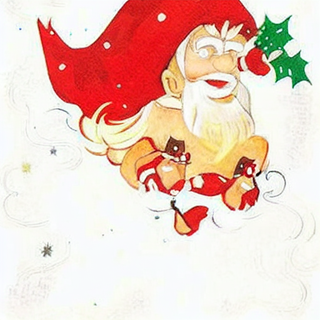}}\hspace*{-1pt}
\subfloat[]{\includegraphics[width=.155\linewidth]{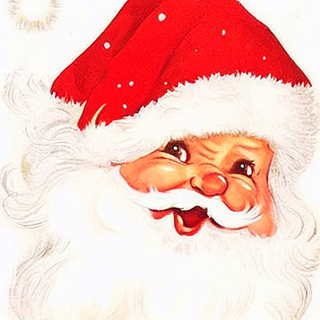}}\hspace*{-1pt}
\subfloat[]{\includegraphics[width=.155\linewidth]{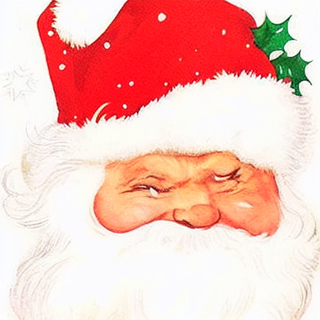}}

\begin{minipage}{\linewidth}
        \centering
        {\footnotesize \textit{“an illustration of a \textbf{cat} sitting on top of a rock”} $\rightarrow$ \textit{“an illustration of a \textbf{bear} sitting on top of a rock”}}
        \medskip
        \end{minipage}

\subfloat[]{\includegraphics[width=.155\linewidth]{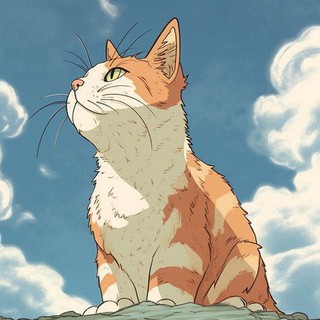}}\hspace*{-1pt}
\subfloat[]{\includegraphics[width=.155\linewidth]{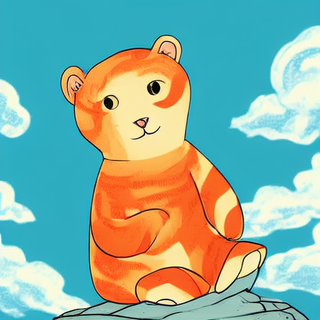}}\hspace*{-1pt}
\subfloat[]{\includegraphics[width=.155\linewidth]{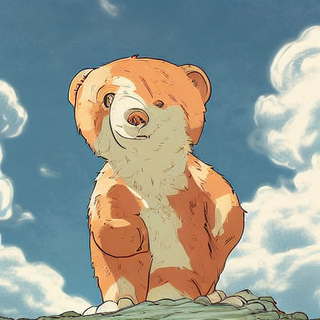}}\hspace*{-1pt}
\subfloat[]{\includegraphics[width=.155\linewidth]{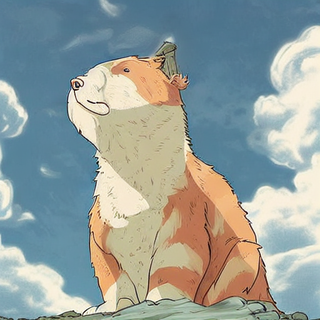}}\hspace*{-1pt}
\subfloat[]{\includegraphics[width=.155\linewidth]{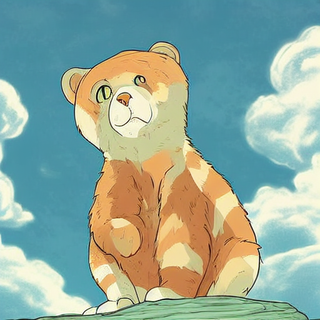}}\hspace*{-1pt}
\subfloat[]{\includegraphics[width=.155\linewidth]{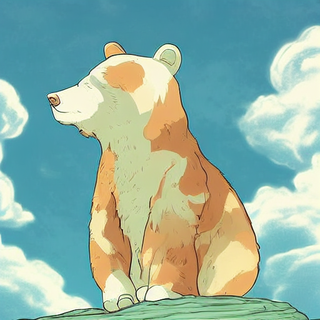}}

\begin{minipage}{\linewidth}
        \centering
        {\footnotesize \textit{“a cat standing on fence”} $\rightarrow$ \textit{“a cat \textbf{wearing hat} standing on fence”}}
        \medskip
        \end{minipage}

\subfloat[]{\includegraphics[width=.155\linewidth]{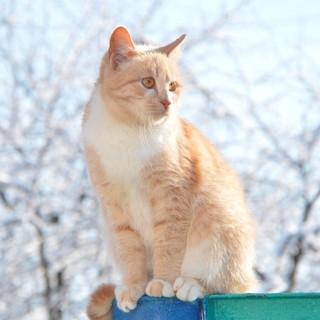}}\hspace*{-1pt}
\subfloat[]{\includegraphics[width=.155\linewidth]{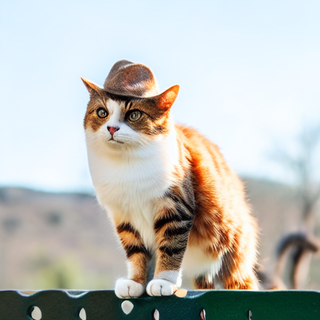}}\hspace*{-1pt}
\subfloat[]{\includegraphics[width=.155\linewidth]{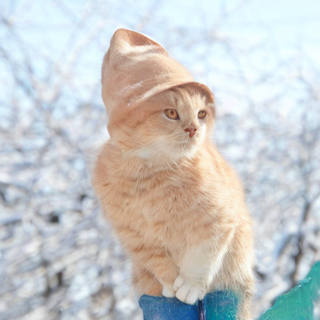}}\hspace*{-1pt}
\subfloat[]{\includegraphics[width=.155\linewidth]{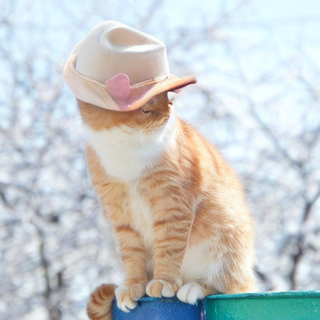}}\hspace*{-1pt}
\subfloat[]{\includegraphics[width=.155\linewidth]{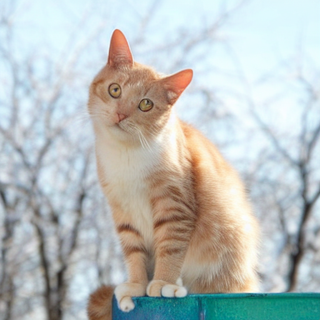}}\hspace*{-1pt}
\subfloat[]{\includegraphics[width=.155\linewidth]{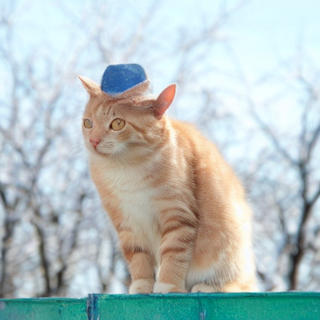}}

\begin{minipage}{\linewidth}
        \centering
        {\footnotesize \textit{“a glass of red \textbf{drink} on the beach”} $\rightarrow$ \textit{“a glass of red \textbf{wine} on the beach”}}
        \medskip
        \end{minipage}

\subfloat[]{\includegraphics[width=.155\linewidth]{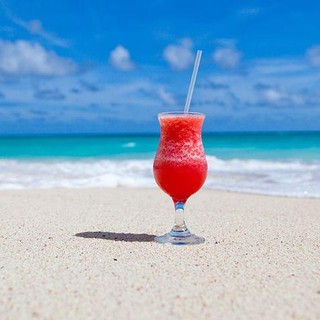}}\hspace*{-1pt}
\subfloat[]{\includegraphics[width=.155\linewidth]{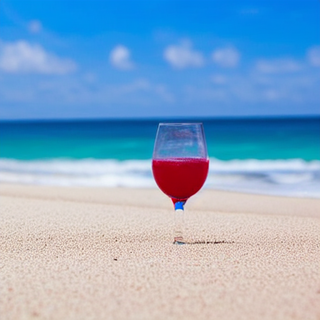}}\hspace*{-1pt}
\subfloat[]{\includegraphics[width=.155\linewidth]{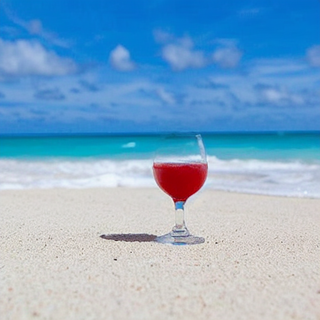}}\hspace*{-1pt}
\subfloat[]{\includegraphics[width=.155\linewidth]{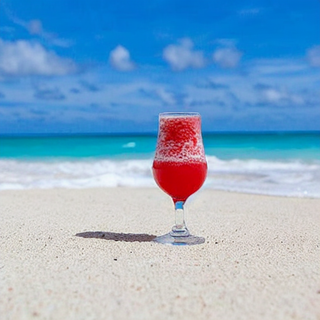}}\hspace*{-1pt}
\subfloat[]{\includegraphics[width=.155\linewidth]{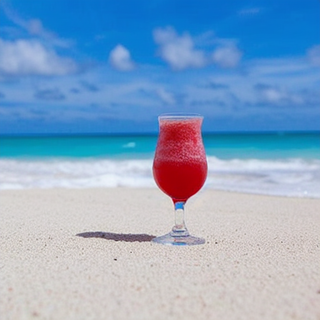}}\hspace*{-1pt}
\subfloat[]{\includegraphics[width=.155\linewidth]{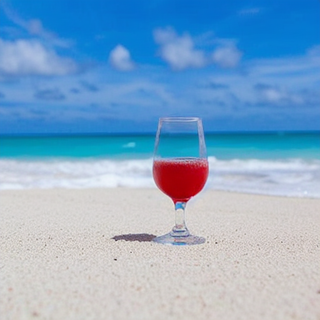}}

\begin{minipage}{\linewidth}
        \centering
        {\footnotesize \textit{“a painting of a \textbf{cabin} in the snow with mountains in the background”} $\rightarrow$ \textit{“a painting of a \textbf{car} in the snow with mountains in the background”}}
        \medskip
        \end{minipage}

\subfloat[]{\includegraphics[width=.155\linewidth]{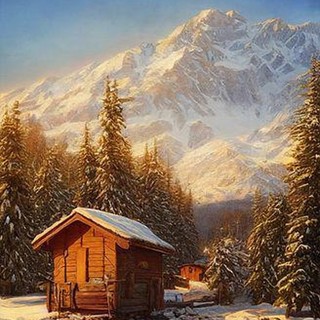}}\hspace*{-1pt}
\subfloat[]{\includegraphics[width=.155\linewidth]{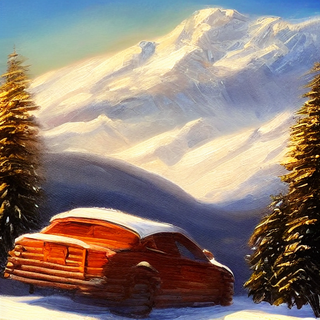}}\hspace*{-1pt}
\subfloat[]{\includegraphics[width=.155\linewidth]{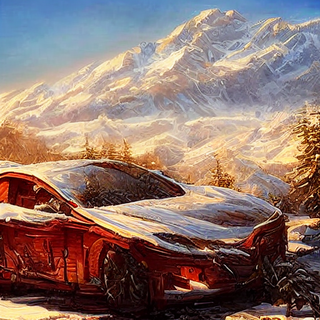}}\hspace*{-1pt}
\subfloat[]{\includegraphics[width=.155\linewidth]{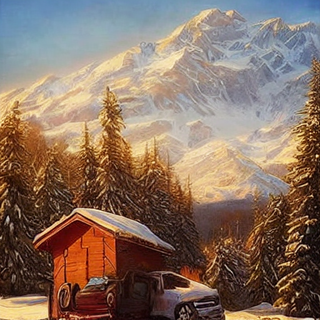}}\hspace*{-1pt}
\subfloat[]{\includegraphics[width=.155\linewidth]{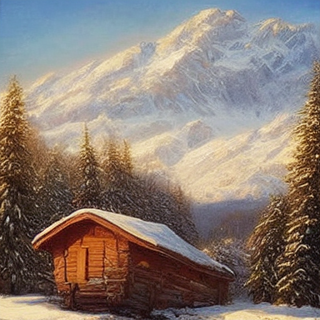}}\hspace*{-1pt}
\subfloat[]{\includegraphics[width=.155\linewidth]{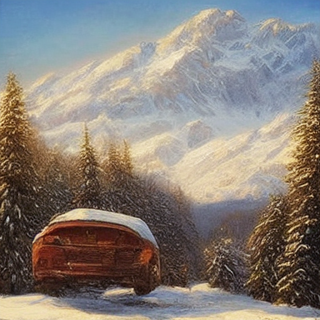}}

\begin{minipage}{\linewidth}
        \centering
        {\footnotesize \textit{“a lion in a suit sitting at a table \textbf{with a laptop}”} $\rightarrow$ \textit{“a lion in a suit sitting at a table”}}
        \medskip
        \end{minipage}

\subfloat[Source]{\includegraphics[width=.155\linewidth]{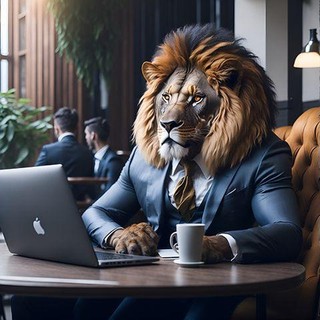}}\hspace*{-1pt}
\subfloat[DDIM~Inv~\cite{song2020denoising}]{\includegraphics[width=.155\linewidth]{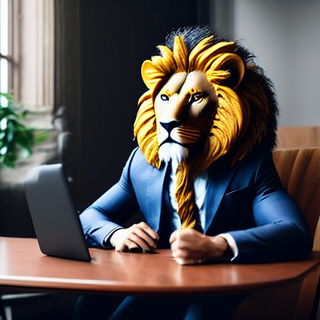}}\hspace*{-1pt}
\subfloat[NTI~\cite{mokady2023null}]{\includegraphics[width=.155\linewidth]{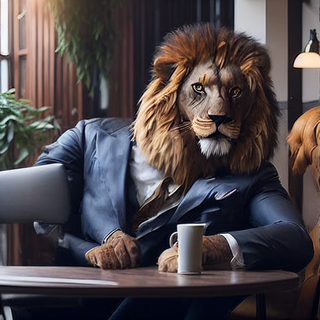}}\hspace*{-1pt}
\subfloat[EDICT~\cite{wallace2023edict}]{\includegraphics[width=.155\linewidth]{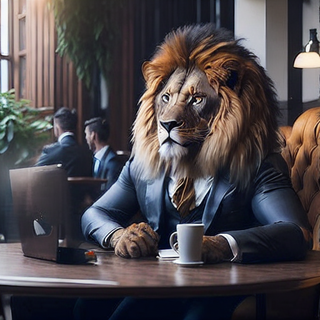}}\hspace*{-1pt}
\subfloat[Dir. Inv.~\cite{ju2023direct}]{\includegraphics[width=.155\linewidth]{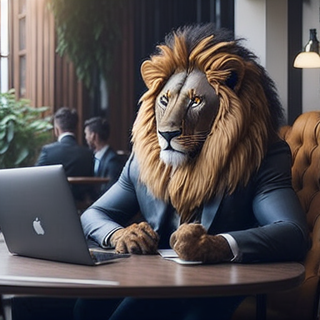}}\hspace*{-1pt}
\subfloat[\textbf{EtaInv~(2)}]{\includegraphics[width=.155\linewidth]{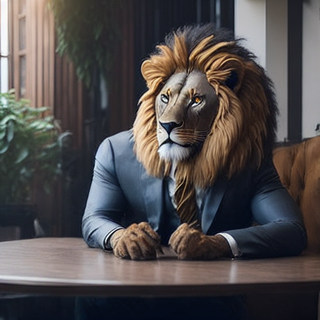}}

    \caption{Additional qualitative results for MasaCtrl editing.}
    \label{fig:masa}
    \end{figure*}

    \clearpage
    \begin{figure*}[t]
    \captionsetup[subfigure]{labelformat=empty}
    \centering

\begin{minipage}{\linewidth}
        \centering
        {\footnotesize \textit{“a \textbf{collie dog} is sitting on a \textbf{bed}”} $\rightarrow$ \textit{“a \textbf{garfield cat} is sitting on a \textbf{sofa}”}}
        \medskip
        \end{minipage}

\subfloat[]{\includegraphics[width=.155\linewidth]{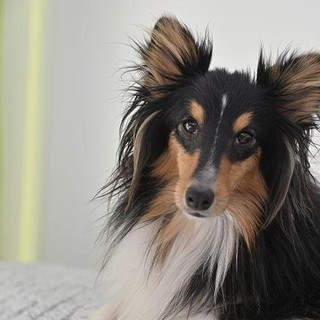}}\hspace*{-1pt}
\subfloat[]{\includegraphics[width=.155\linewidth]{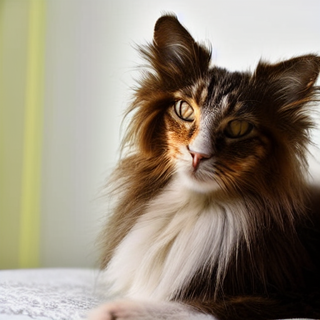}}\hspace*{-1pt}
\subfloat[]{\includegraphics[width=.155\linewidth]{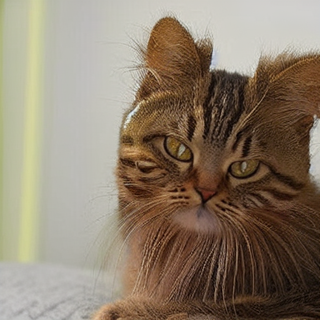}}\hspace*{-1pt}
\subfloat[]{\includegraphics[width=.155\linewidth]{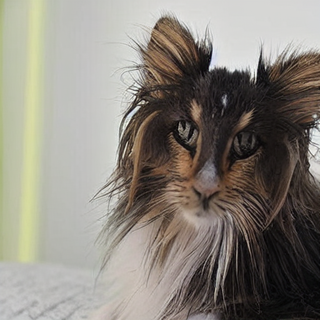}}\hspace*{-1pt}
\subfloat[]{\includegraphics[width=.155\linewidth]{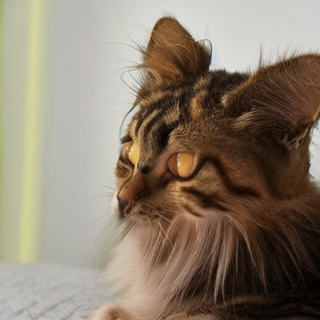}}\hspace*{-1pt}
\subfloat[]{\includegraphics[width=.155\linewidth]{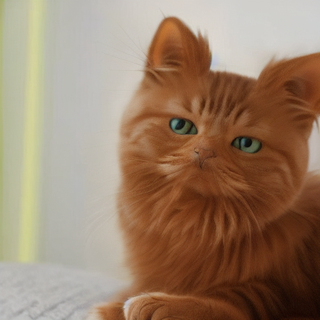}}

\begin{minipage}{\linewidth}
        \centering
        {\footnotesize \textit{“a living room with a couch and a table”} $\rightarrow$ \textit{“\textbf{a watercolor of} a living room with a couch and a table”}}
        \medskip
        \end{minipage}

\subfloat[]{\includegraphics[width=.155\linewidth]{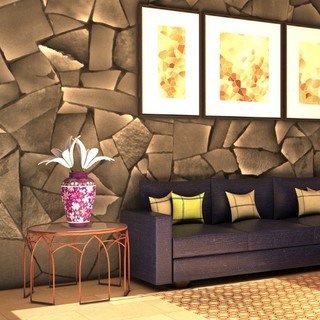}}\hspace*{-1pt}
\subfloat[]{\includegraphics[width=.155\linewidth]{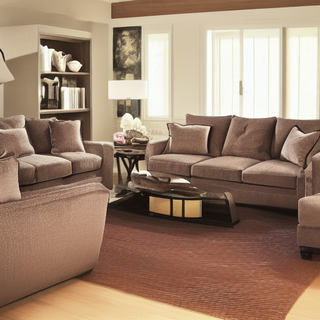}}\hspace*{-1pt}
\subfloat[]{\includegraphics[width=.155\linewidth]{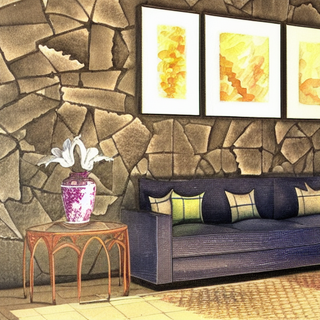}}\hspace*{-1pt}
\subfloat[]{\includegraphics[width=.155\linewidth]{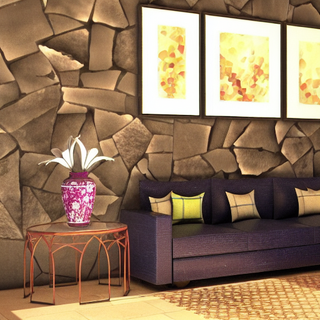}}\hspace*{-1pt}
\subfloat[]{\includegraphics[width=.155\linewidth]{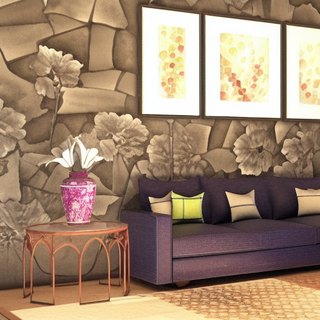}}\hspace*{-1pt}
\subfloat[]{\includegraphics[width=.155\linewidth]{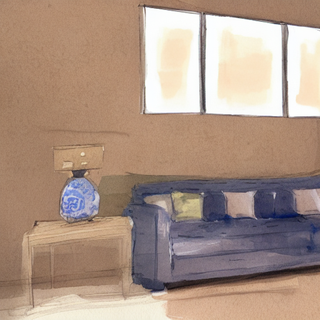}}

\begin{minipage}{\linewidth}
        \centering
        {\footnotesize \textit{“a \textbf{black and white} drawing of a woman with long hair”} $\rightarrow$ \textit{“a \textbf{colorful and detailed} drawing of a woman with long hair”}}
        \medskip
        \end{minipage}

\subfloat[]{\includegraphics[width=.155\linewidth]{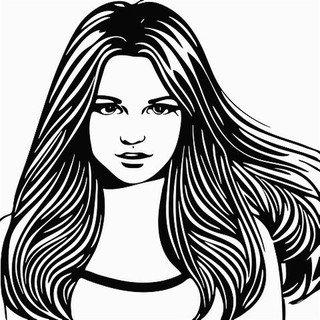}}\hspace*{-1pt}
\subfloat[]{\includegraphics[width=.155\linewidth]{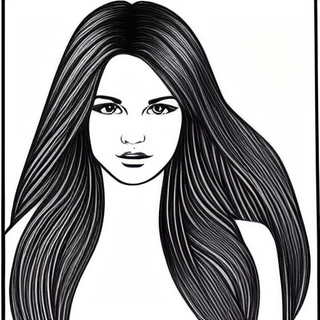}}\hspace*{-1pt}
\subfloat[]{\includegraphics[width=.155\linewidth]{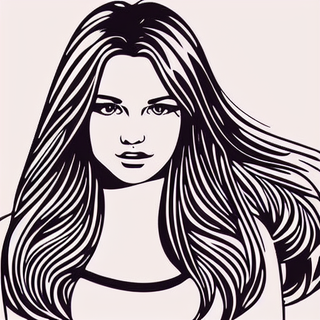}}\hspace*{-1pt}
\subfloat[]{\includegraphics[width=.155\linewidth]{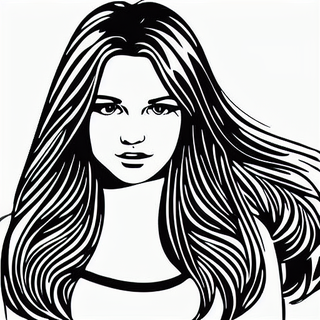}}\hspace*{-1pt}
\subfloat[]{\includegraphics[width=.155\linewidth]{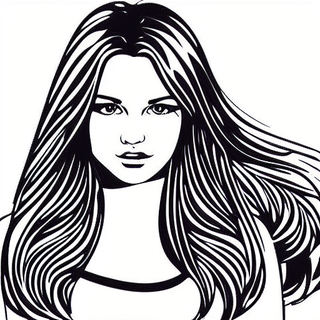}}\hspace*{-1pt}
\subfloat[]{\includegraphics[width=.155\linewidth]{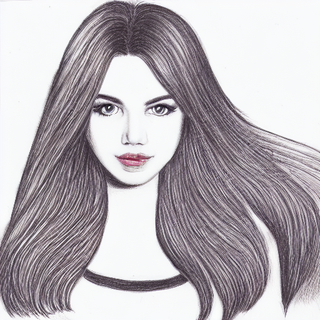}}

\begin{minipage}{\linewidth}
        \centering
        {\footnotesize \textit{“a woman with \textbf{black} hair and red lipstick holding a flower”} $\rightarrow$ \textit{“a woman with \textbf{silver} hair and red lipstick holding a flower”}}
        \medskip
        \end{minipage}

\subfloat[]{\includegraphics[width=.155\linewidth]{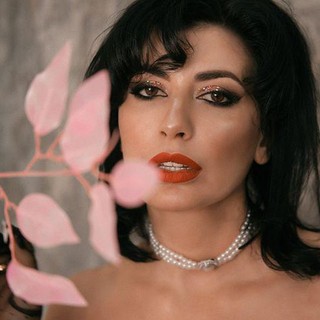}}\hspace*{-1pt}
\subfloat[]{\includegraphics[width=.155\linewidth]{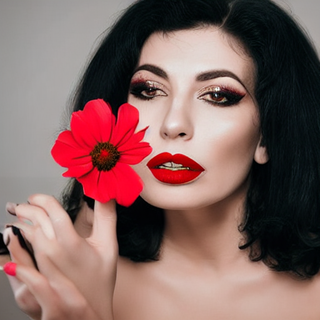}}\hspace*{-1pt}
\subfloat[]{\includegraphics[width=.155\linewidth]{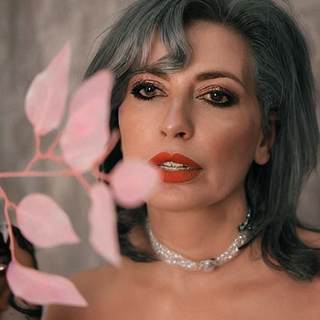}}\hspace*{-1pt}
\subfloat[]{\includegraphics[width=.155\linewidth]{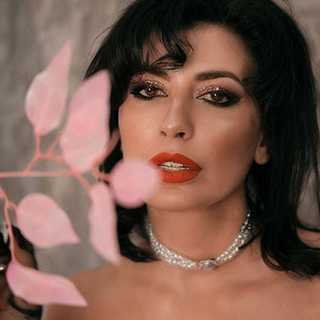}}\hspace*{-1pt}
\subfloat[]{\includegraphics[width=.155\linewidth]{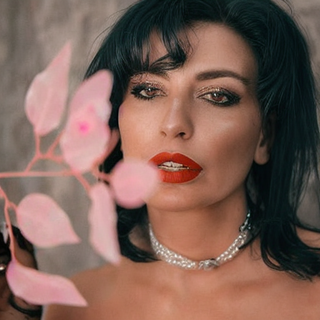}}\hspace*{-1pt}
\subfloat[]{\includegraphics[width=.155\linewidth]{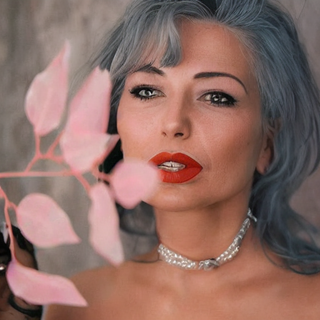}}

\begin{minipage}{\linewidth}
        \centering
        {\footnotesize \textit{“a \textbf{dry} tree in the wild”} $\rightarrow$ \textit{“a \textbf{blooming} tree in the wild”}}
        \medskip
        \end{minipage}

\subfloat[]{\includegraphics[width=.155\linewidth]{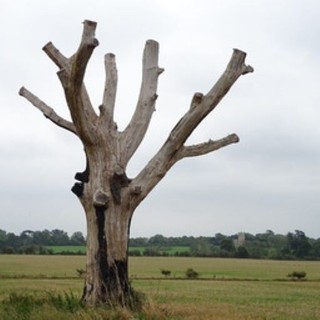}}\hspace*{-1pt}
\subfloat[]{\includegraphics[width=.155\linewidth]{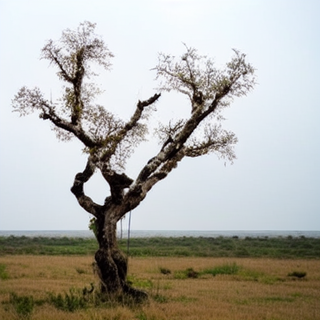}}\hspace*{-1pt}
\subfloat[]{\includegraphics[width=.155\linewidth]{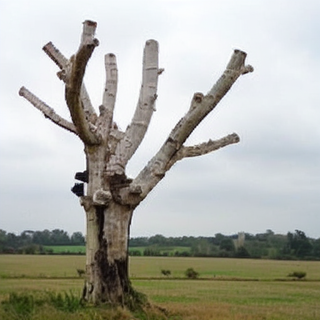}}\hspace*{-1pt}
\subfloat[]{\includegraphics[width=.155\linewidth]{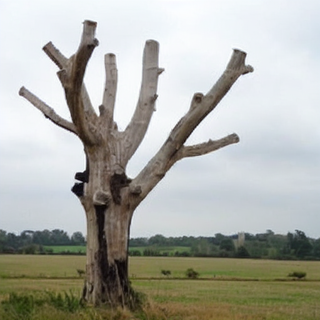}}\hspace*{-1pt}
\subfloat[]{\includegraphics[width=.155\linewidth]{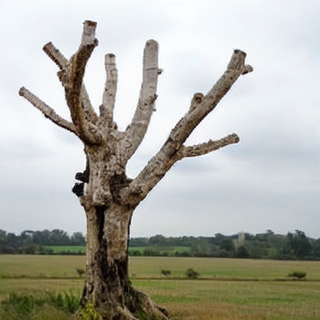}}\hspace*{-1pt}
\subfloat[]{\includegraphics[width=.155\linewidth]{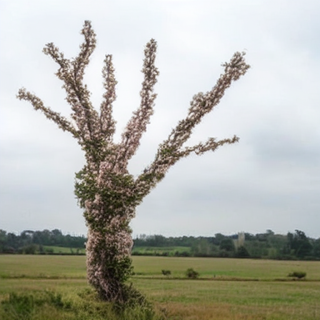}}

\begin{minipage}{\linewidth}
        \centering
        {\footnotesize \textit{“a little girl wearing sunglasses and a gray \textbf{shirt} leaning against a wall”} $\rightarrow$ \textit{“a little girl wearing sunglasses and a gray \textbf{dress} leaning against a wall”}}
        \medskip
        \end{minipage}

\subfloat[Source]{\includegraphics[width=.155\linewidth]{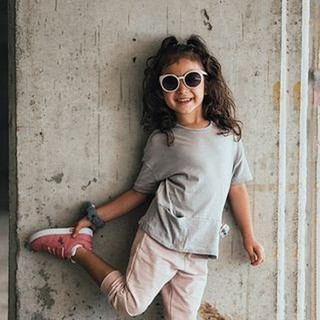}}\hspace*{-1pt}
\subfloat[DDIM~Inv~\cite{song2020denoising}]{\includegraphics[width=.155\linewidth]{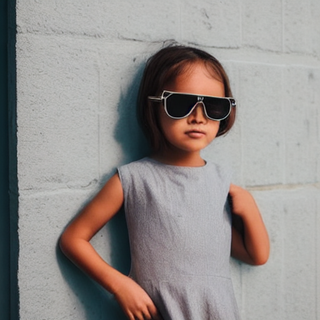}}\hspace*{-1pt}
\subfloat[NTI~\cite{mokady2023null}]{\includegraphics[width=.155\linewidth]{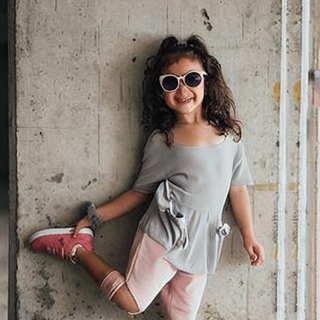}}\hspace*{-1pt}
\subfloat[EDICT~\cite{wallace2023edict}]{\includegraphics[width=.155\linewidth]{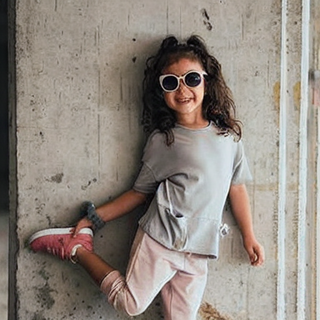}}\hspace*{-1pt}
\subfloat[Dir. Inv.~\cite{ju2023direct}]{\includegraphics[width=.155\linewidth]{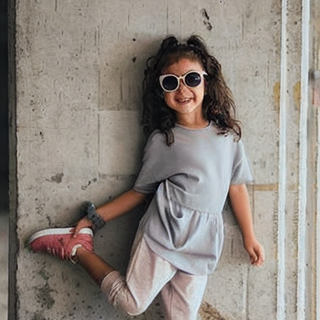}}\hspace*{-1pt}
\subfloat[\textbf{EtaInv~(3)}]{\includegraphics[width=.155\linewidth]{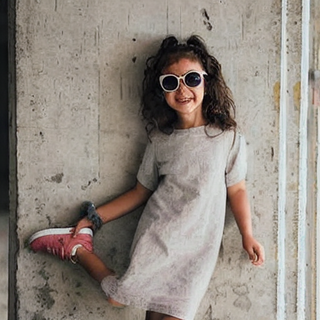}}

    \caption{Additional qualitative results for EtaInv (3) PtP editing. %
    }
    \label{fig:etainv3}
    \end{figure*}

    \clearpage
    \begin{figure*}[t]
    \captionsetup[subfigure]{labelformat=empty}
    \centering

    \begin{minipage}{\linewidth}
        \centering
        \bigskip
        {\textit{“a painting of a \textbf{rat} with red eyes”} $\rightarrow$ \textit{“a painting of a \textbf{pig} with red eyes”}}
        \medskip
        \end{minipage}
    
    \subfloat[{$[0.0,0.0]$}]{
    \includegraphics[width=.158\linewidth]{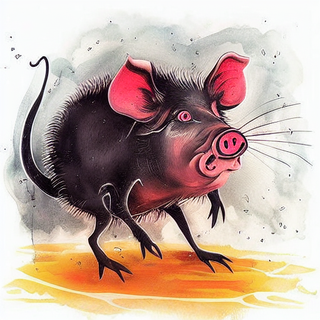}
    }\hspace*{-6pt}
    \subfloat[{$[0.2,0.0]$}]{
    \includegraphics[width=.158\linewidth]{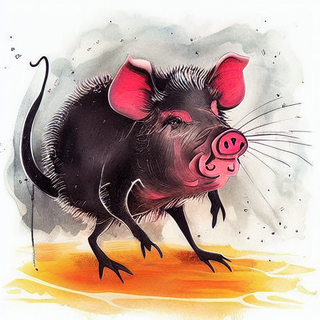}
    }\hspace*{-6pt}
    \subfloat[{$[0.4,0.0]$}]{
    \includegraphics[width=.158\linewidth]{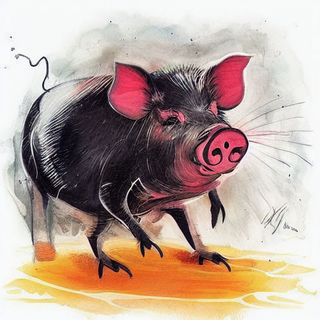}
    }\hspace*{-6pt}
    \subfloat[{$[0.6,0.0]$}]{
    \includegraphics[width=.158\linewidth]{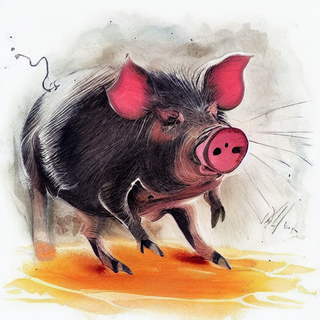}
    }\hspace*{-6pt}
    \subfloat[{$[0.8,0.0]$}]{
    \includegraphics[width=.158\linewidth]{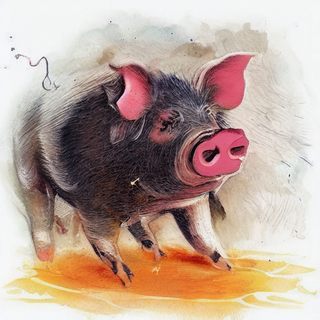}
    }\hspace*{-6pt}
    \subfloat[{$[1.0,0.0]$}]{
    \includegraphics[width=.158\linewidth]{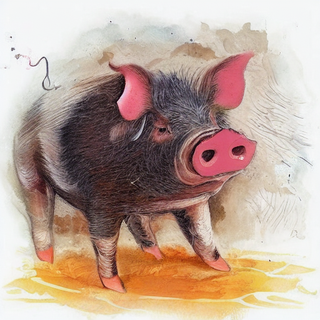}
    }
    
    \subfloat[]{
\resizebox{.16\linewidth}{.16\linewidth}{
\begin{tikzpicture}
\path (0,0) to (0,1);
\end{tikzpicture}
}
    }\hspace*{-6pt}
    \subfloat[{$[0.2,0.2]$}]{
    \includegraphics[width=.158\linewidth]{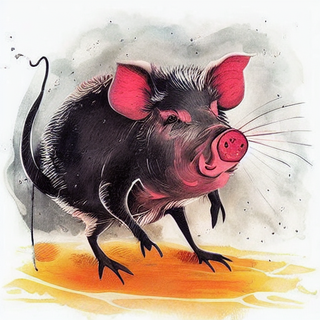}
    }\hspace*{-6pt}
    \subfloat[{$[0.4,0.2]$}]{
    \includegraphics[width=.158\linewidth]{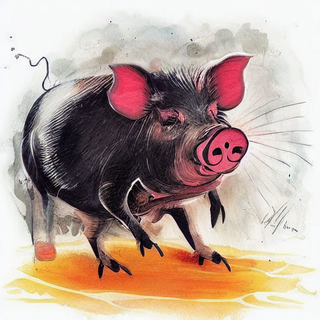}
    }\hspace*{-6pt}
    \subfloat[{$[0.6,0.2]$}]{
    \includegraphics[width=.158\linewidth]{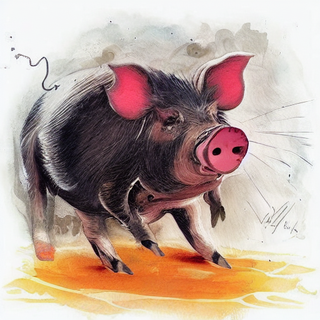}
    }\hspace*{-6pt}
    \subfloat[{$[0.8,0.2]$}]{
    \includegraphics[width=.158\linewidth]{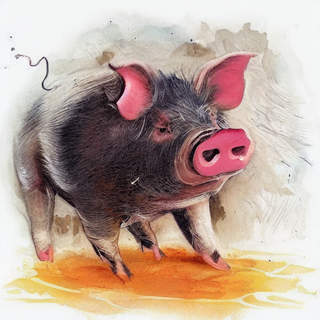}
    }\hspace*{-6pt}
    \subfloat[{$[1.0,0.2]$}]{
    \includegraphics[width=.158\linewidth]{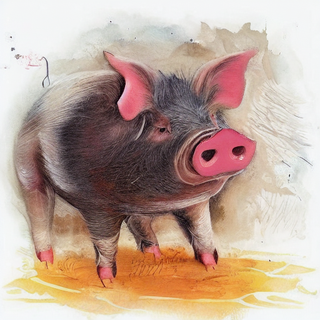}
    }
    
    \subfloat[Source]{
    \includegraphics[width=.158\linewidth]{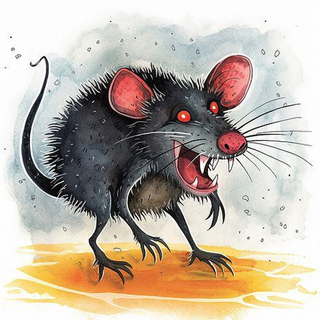}
    } \hspace*{-6pt}
    \subfloat[]{
\resizebox{.16\linewidth}{.16\linewidth}{
\begin{tikzpicture}
\path (0,0) to (0,1);
\end{tikzpicture}
}
    }\hspace*{-6pt}
    \subfloat[{$[0.4,0.4]$}]{
    \includegraphics[width=.158\linewidth]{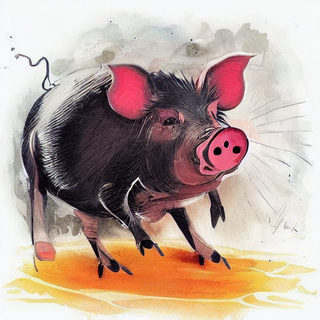}
    }\hspace*{-6pt}
    \subfloat[{$[0.6,0.4]$}]{
    \includegraphics[width=.158\linewidth]{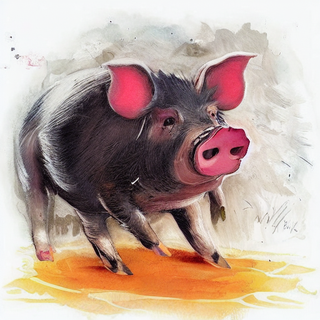}
    }\hspace*{-6pt}
    \subfloat[{$[0.8,0.4]$}]{
    \includegraphics[width=.158\linewidth]{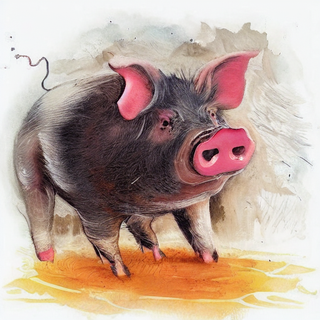}
    }\hspace*{-6pt}
    \subfloat[{$[1.0,0.4]$}]{
    \includegraphics[width=.158\linewidth]{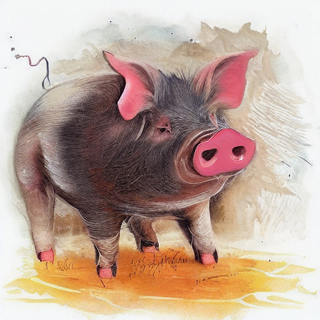}
    }
    
    \caption{Impact of $\eta$ on image editing. We use a different linear $\eta$ function for each generation by linearly interpolating $\eta$ on the interval $[\eta(T), \eta(0)]$. Increasing $\eta$ leads to better target prompt alignment while sacrificing background similarity. We disabled masking for demonstration purposes.}
    \label{fig:rat_pig}
    \end{figure*}

\end{document}